\documentclass[11pt,a4paper]{article}
\usepackage{mathrsfs}
\usepackage{amsmath}
\usepackage{amsfonts}
\usepackage{amsthm}
\usepackage[dvipsnames]{xcolor}
\usepackage{geometry}
\usepackage{enumitem} 
\usepackage{hyperref}
\usepackage{graphicx}
\DeclareMathOperator{\Var}{Var}
\DeclareMathOperator{\Cov}{Cov}
\newcommand{\btheta}{\boldsymbol\theta}
\newcommand{\indep}{\perp \!\!\! \perp}
\geometry{hmargin=2.3cm,vmargin=3.6cm}
\newtheorem{theorem}{Theorem}
\newtheorem{lemma}{Lemma}
\newtheorem{definition}{Definition}
\newtheorem{proposition}{Proposition}
\newtheorem{remark}{Remark}

\newcounter{savecntr}

\date{\today}
\newcommand{\di}{\mathrm{d}} 
\title{Central Limit Theorem for Bayesian Neural Network trained with Variational Inference}

\author{Arnaud Descours\setcounter{savecntr}{\value{footnote}}\thanks{INRIA Lille, Magnet Team, Lille, France. E-mail: arnaud.descours@inria.fr},\, Tom Huix\setcounter{savecntr}{\value{footnote}}\thanks{Centre de Mathématiques Appliquées, UMR 7641, \'Ecole polytechnique, Palaiseau, France. E-mail: \{tom.huix,eric.moulines\}@polytechnique.edu }, \, Arnaud Guillin\setcounter{savecntr}{\value{footnote}}\thanks{Laboratoire de Math\'ematiques Blaise Pascal UMR 6620, Université Clermont-Auvergne, Aubière, France,  and Institut Universitaire de France. E-mail:  arnaud.guillin@uca.fr}, \,  Manon Michel\thanks{CNRS, Laboratoire de Math\'ematiques Blaise Pascal UMR 6620, Université Clermont-Auvergne, Aubière, France. E-mail:  manon.michel@uca.fr},\\\ \'Eric Moulines\footnotemark[2]   and Boris Nectoux\thanks{Laboratoire de Math\'ematiques Blaise Pascal UMR 6620, Université Clermont-Auvergne, Aubière, France. E-mail: boris.nectoux@uca.fr}} 

\begin{document}
\maketitle

\begin{abstract} 
  In this paper, we rigorously derive Central Limit Theorems (CLT) for
  Bayesian two-layer neural networks in the infinite-width limit and
  trained by variational inference on a regression task. The different
  networks are trained via different maximization schemes of the
  regularized evidence lower bound: (i) the idealized case with exact
  estimation of a multiple Gaussian integral from the
  reparametrization trick, (ii) a minibatch scheme using Monte Carlo
  sampling, commonly known as \emph{Bayes-by-Backprop}, and (iii) a
  computationally cheaper algorithm named \emph{Minimal VI}. The
  latter was recently introduced by leveraging the information
  obtained at the level of the mean-field limit. Laws of large numbers
  are already rigorously proven for the three schemes that admits the same
  asymptotic limit. By deriving CLT, this work shows
  that the idealized and Bayes-by-Backprop schemes have similar
  fluctuation behavior, that is different from the Minimal VI
  one. Numerical experiments then illustrate that the Minimal VI
  scheme is still more efficient, in spite of bigger variances, thanks
  to its important gain in computational complexity.
  \end{abstract}
  \tableofcontents
  
  \section{Introduction}

  Neural networks (NN), especially with a deep learning architecture, are
  one of the most powerful function approximators, in particular in a
  regime of abundant data. Their flexibility may however lead to some
  overfitting issues, which justify the introduction of a
  regularization term in the loss. Therefore, Bayesian Neural Networks
  (BNN) are an interesting alternative. Thanks to a full probabilistic
  approach, they directly model the uncertainty on the learnt weights
  through the introduction of a prior distribution, which acts as some
  natural regularization. Thus, BNN combine the expressivity power of
  NN, while showing more robustness, in particular when
  dealing with small datasets, and providing predictive uncertainty
  \cite{blundell2015weight,michelmore2020uncertainty,mcallister2017concrete,filos2019systematic}.
  During training, the probabilistic modelling however requires to
  compute integrals over the posterior distribution. This can be
  computationally demanding, as these integrals are most of the time
  not tractable. Alternative techniques as Markov-chain Monte Carlo
  methods and variational inference are most commonly used
  instead. The convergence time of the former may prove too
  prohibitively long in large-dimensional cases
  \cite{cobb2021scaling}. Therefore variational inference
  \cite{Hinton93keepingneural,mackay1995probable,mackay1995ensemble}
  comes often as the most efficient alternative, especially while
  using the reparametrization trick and the Bayes-by-backprop (BbB) approach. The
  variational approach relies on an approximation of the posterior
  distribution by the closest realization of a parametric one,
  according to a Kullback-Leibler (KL) divergence. Using a
  generalisation of the reparametrization trick \cite{kingma2014}, the
  \emph{Bayes-by-Backprop} approach \cite{blundell2015weight} leads to
  an unbiased estimator of the gradient of the ELBO, which enables
  training by stochastic gradient descent (SGD).

  There are now many successful applications of this approach, e.g.
  \cite{gal2016dropout,louizos2017multiplicative,khan2018fast}. This
  comes in contrast with the lack of analytical understanding of the
  behavior of BNN trained with variational inference, especially
  regarding their overparametrized limit. For instance, it was but
  only recently shown in \cite{colt} what is the appropriate balance
  in the ELBO of the integrated log-likelihood term and of the KL
  regularizer, in order to avoid a trivial Bayesian posterior
  \cite{izmailov2021bayesian}.  To achieve such results, a proper
  limiting theory was rigorously derived \cite{colt}. Such mean-field
  analysis, as done in
  \cite{Vanden2,chizat2018global,mei2018mean,sirignano2020lln,jmlr},
  enables the determination of the limiting nonlinear evolution of the
  weights of the NN, trained by a gradient descent or some
  variants. It then allows the derivation of a Law of Large Numbers
  (LLN) and a Central Limit Theorem (CLT). The main practical goal of
  such asymptotic analysis is to show convergence towards some global
  minimizer, it however remains an open and highly-challenging
  question. Nevertheless, such asymptotic analysis can still be of
  direct and practical relevance. On top of the proper balance in
  ELBO, it was recently shown in \cite{colt} for BNN on a regression
  task that the mean-field limit can be leveraged to develop a new SGD
  training scheme, named \emph{Minimal VI} (MiVI). Indeed, in this
  limit, the microcospic correlations between each pair of neurons can
  be shown to be equivalent to some averaged effect of the whole
  system. Therefore, the \emph{Minimal VI} scheme, which
  backpropagates only these average fields, is proven to follow the
  same LLN as standard SGD schemes, but only requires a fraction of
  the previously needed computations to recover the same limit
  behavior. Furthermore, numerical experiments showed that the
  convergence to the mean-field limit arises quite fast with the
  number of neurons ($N=300$ \cite{colt}). The \emph{Minimal VI}
  scheme would emerge as a genuinely competitive alternative under
  these conditions. However, unsurprisingly, numerical experiments
  also showed a larger variance for the \emph{Minimal VI} scheme,
  compared to others. Therefore the work presented here directly deals
  with a precise study of the fluctuation behaviors present at finite
  width $N$, as done in \cite{jmlr} for a two-layer NN, but here for
  the different variational training schemes of a BNN. Independently
  from the question of scheme comparison, the issue of quantifying the
  deviations of finite-width BNN from their infinite-width limit is of
  direct and fundamental relevance.
  
  In more details, we push on the analytical effort to further
  characterize the limiting behaviors of the three schemes and derive
  CLT. By framing the fluctuation behaviors of the different schemes,
  this work is thus of practical and direct relevance for a robust and
  efficient variational inference framework. More specifically, we
  consider a two-layer BNN trained by variational inference on a
  regression task and our contributions are as follows:
\begin{itemize}
\item We derive a CLT for the idealized SGD algorithm, where the
  variational expectations of the derivative of the loss from the
  reparametrization trick of \cite{blundell2015weight} are computed
  exactly. More precisely, we prove that with the number of neurons
  $N\to +\infty$, the sequence of trajectories of the scaled centered
  empirical distributions of the parameters satisfies a CLT, namely
  the limit satisfies a stochastic partial differential equation
  (SPDE) whose leading process is a $\mathfrak G$-process with known
  covariance structure (see Definition~\ref{d-G-process}). This is the
  first purpose of Theorem \ref{thm-clt-ideal}.
\item We derive the exact same CLT for the Bayes-by-Backprop (BbB) SGD, i.e. when the
  integrals of the idealized case are obtained by a Monte Carlo approximation, see
  \cite{blundell2015weight}. This justifies even further than at the
  LLN level the use of such an approximation procedure.
\item We derive a CLT with a  $\mathfrak G$-process of a different covariance
  structure, for the \emph{Minimal VI} (MiVI) scheme. This is the second
  purpose of Theorem \ref{thm-clt-ideal}. In comparison to the BbB
  scheme, which requires $O(N)$ Gaussian random variables and can
  become prohibitively expensive, the MiVI scheme only requires two
  Gaussian random variables and achieves the same first order
  limit. Considering scalar test function, one can show that the
  variance of the  $\mathfrak G$-process for the MiVI is greater than the one of
  the BbB.
  \item We numerically investigate the fluctuations of the three methods
  on a toy example. We observe that the scheme MiVI is
  still more efficient, as the gain in computational complexity
  outweights the increase in the observed variances.
\end{itemize}

The paper is organized as follows: Section~\ref{sec:setting} presents
the BNN setting as well as the different training algorithms,
i.e. idealized, BbB and MiVI, as well as recalls the LLN derived in
\cite{colt}, that shows their asymptotic equivalence at first order. Then, in
Section~\ref{sec:clt}, we prove for each algorithm a CLT for the
rescaled and centered empirical measure with identified covariance
based on non trivial extensions of \cite{jmlr}. Whereas, covariances
of the  $\mathfrak G$-process driving the limit SPDE may be compared, the
asymptotic variances of the rescaled centered empirical process are
not easily comparable. Therefore we produce numerical experiments in
Section \ref{sec:numerics} showing the good performance of MiVI
needing few additional neurons to get comparable variances with less
complexity. The proofs for CLT can be found in the supplementary
material.

\paragraph{Related works.}
The derivation of LLN and CLT for mean-field interacting particle
systems have garnered significant attention; refer to, for
instance,~\cite{hitsuda1986tightness,sznitman_topics_1991,
  fernandez1997hilbertian, jourdainAIHP,delarue, delmoral,
  kurtz2004stochastic} and references therein. The use of such
approaches to study the asymptotic limit of two-layer NN were
introduced in \cite{mei2018mean} (see also \cite{mei2}), which
establishes a LLN on the empirical measure of the weights at fixed
times. Formal arguments in \cite{Vanden2} led to conditions to achieve
a global convergence of Gradient Descent for exact mean-square loss
and online SGD with mini-batches. Regarding fluctuation behaviors,
they observe with increasing mini-batch size in the SGD the reduction
of the variance of the process leading the fluctuations of the
empirical measure of the weights (see \cite{Vanden2} (Arxiv-V2. Sec
3.3)). See also \cite{Vanden1} for a dynamical CLT and  \cite{durmus-neural} on
  propagation of chaos for SGD on a two-layer NN with different
  step-size schemes, however limited to finite time horizon. In
\cite{jmlr}, a LLN and CLT for the entire trajectory, and not only at
fixed times, of the empirical measure of a two-layer NN are rigorously
derived, especially when proving the uniqueness of the limit
PDE. These results are obtained for a large class of variants of SGD
(minibatches, noise), that extend in addition to rigorize the work
done in \cite{sirignano2020lln} and \cite{sirignano2020clt}. Regarding
the fluctuation behavior, the results in \cite{jmlr} agree with the
observations of \cite{Vanden2} on the minibatch impact and further
exhibit a possible particular fluctuation behavior in a large noise
regime. Finally, regarding BNN, \cite{colt} rigorously prove a LLN for
the entire trajectory for a two-layer BNN trained on a regression task
with three different schemes (idealized, BbB, MiVI).

We rigorously prove a CLT for the entire trajectory of the empirical
measure of the weights of a two-layer BNN trained by three different
maximization schemes (idealized, BbB, MiVI) of a regularized version
of ELBO. Remark that a trajectorial CLT is necessary
to understand the evolution of the variance of the scaled centered
covariance.

  \section{Setting and proven mean-field limit}\label{sec:setting}
  
   \subsection{Variational Inference and  Evidence Lower Bound} 
   In this section, we first recall the setting of Bayesian neural
   networks as well as the minimization problem in Variational
   Inference. We then introduce the three maximization algorithms of
   the ELBO and recall the respective Law of Large Numbers which were derived in
   \cite{colt}, which are the starting points of this work.

  \paragraph{The Evidence Lower Bound} 
  Let $\mathsf X$ and $\mathsf Y$ be subsets of $\mathbf R^n$
  ($n\ge 1$) and $\mathbf R$ respectively.  For $N\ge1$ and
  $\boldsymbol{w}=(w^1,\dots,w^N)\in(\mathbf R^d)^N$, we consider the
  following two-layer neural network
  $f_{\boldsymbol{w}}^N: \mathsf X\to \mathbf R$ defined by:
  \begin{equation*}
  f_{\boldsymbol{w}}^N(x):=\frac 1N\sum_{i=1}^Ns(w_i,x)\in\mathbf R,
  \end{equation*}
  where $x\in\mathsf X$ and
  $s:\mathbf R^d\times \mathsf X\to \mathbf R$ is the so-called
  activation function.  In a Bayesian setting, one needs to be able to
  efficiently sample according to the posterior distribution
  $\mathfrak P^N$ of the latent variable $\boldsymbol{w}$
  ($\boldsymbol{w}$ are the weights of the neural network). The
  classical issue in Bayesian inference over complex models is that
  the posterior distribution $\mathfrak P^N$ is quite hard to
  sample. For that reason, in variational inference, one looks for the
  closest distribution to $\mathfrak P^N$ in a family of distributions
  $\mathcal Q^N=\{ q_{\boldsymbol\theta}^N, \boldsymbol\theta\in
  \Xi^N\}$ which are much easier to sample than $\mathfrak P^N$. Here,
  $\Xi$ is the parameter space. To measure the distance between
  $ q\in \mathcal Q^N$ and $\mathfrak P^N$, one typically considers
  the KL divergence distance, denoted
  by $\mathscr D_{{\rm KL}}$ in the following.  In other words, this
  minimization problem writes:
  $${\rm argmin}_{q  \in \mathcal Q^N}\mathscr D_{{\rm KL}}(q |\mathfrak P^N) .$$ 
  \begin{sloppypar}
  \noindent
  This minimization problem is hard to solve since the KL is not
  easily computable in practice.  A routine computation shows that the
  above minimization problem, which also writes
  ${\rm argmin}_{\boldsymbol \theta\in \Xi^N}\mathscr D_{{\rm
      KL}}(q_{\boldsymbol\theta}^N|\mathfrak P^N) $, is equivalent to
  the maximization of the Evidence Lower Bound over $\theta\in
  \Xi^N$. In practice $N\gg 1$, and in this regime, it has been shown
  in \cite{coker2021wide} and \cite{huix} that optimizing the ELBO
  leads to the collapse of the variational posterior to the prior.  It
  has been suggested in \cite{huix} to rather consider a regularized
  version of the ELBO, which consists in multiplying the KL term by a
  parameter which is scaled by the inverse of the number of neurons:
  \begin{align}
  \nonumber
  \mathrm{E}_{{\rm lbo}}^N(\boldsymbol\theta,x,y) &=- \int_{(\mathbf R^d)^N}\!\!\mathfrak L\big(y,f_{\boldsymbol{w}}^N(x)\big) q_{\btheta}^N(\di w)    -\frac 1N \mathscr D_{{\rm KL}}(q_{\btheta}^N|P_0^N).
  \end{align}
  In conclusion, the maximization problem we will consider in this work is 
  $${\rm argmax}_{ \boldsymbol \theta\in \Xi^N}\mathrm{E}_{{\rm lbo}}^N(\boldsymbol\theta,x,y).$$
  \end{sloppypar}
  

  \paragraph{Loss function and prior distribution} 
  The variational family $\mathcal Q^N$ we consider is a Gaussian
  family of distributions. More precisely, it is assumed throughout
  this work that for any $\btheta=(\theta^1,\dots,\theta^N)\in\Xi^N$,
  the variational distribution $q_{\btheta}^N$ factorizes over the
  neurons: for all $\boldsymbol{w}=(w^1,\dots,w^N)\in(\mathbf R^d)^N$,
  $q_{\btheta}^N(\boldsymbol{w})=\prod_{i=1}^Nq^1_{\theta^i}(w^i)$,
  where $\theta^i=(m^i,\rho^i)\in\Xi:=\mathbf R^d\times\mathbf R$ and
  $q^1_{\theta^i}$ is the probability density function (pdf) of
  $\mathfrak N(m^i,g(\rho^i)^2 I_d)$, with
  $g(\rho)=\log(1+e^{\rho}), \ \rho \in \mathbf R$.  Let us simply
  write $\mathbf R^{d+1}$ for $\mathbf R^d\times\mathbf R$.  Following
  the reparameterisation trick of \cite{blundell2015weight},
  $q^1_\theta(w) \di w$ is the pushforward of a reference probability
  measure with density $\gamma$ by $\Psi_\theta$ (see Assumption {\rm
    \textbf{A1}}).  In practice, $\gamma$ is the pdf of
  $\mathfrak N(0,I_d)$ and $\Psi_\theta(z)=m+g(\rho)z$.  In addition,
  in all this work, we consider the regression problem, i.e.
  $\mathfrak L$ is the Mean Square Loss: for
  $\mathfrak a,\mathfrak b\in\mathbf R$,
  $\mathfrak L(\mathfrak a,\mathfrak b)=\frac 12|\mathfrak a-\mathfrak
  b|^2.$
  
  Set $\phi:(\theta,z,x)\in \mathbf R^{d+1} \times\mathbf R^d\times\mathsf X\mapsto s(\Psi_\theta(z),x)$. 
  Throughout this work, we assume that the  prior distribution $P_0^N$ is  the function defined by:
  \begin{equation}\label{eq.PriorN}
  \forall \boldsymbol{w}\in(\mathbf R^d)^N, \ P_0^N(\boldsymbol{w})=\prod_{i=1}^NP_0^1(w^j),
  \end{equation}
   where $P_0^1:\mathbf R^d\to\mathbf R_+$  is the pdf of $\mathfrak N(m_0,\sigma^2_0I_d)$, and $\sigma_0>0$. 
   With all these assumptions and notations,  we have:
  \begin{align}\label{eq.ELL}
  & \quad \mathrm{E}_{{\rm lbo}}^N(\boldsymbol\theta,x,y)= \nonumber\\
&  -\frac 12 \int \Big|y-\frac 1N\sum_{i=1}^Ns(\Psi_{\theta^i}(z^i),x)\Big|^2 \gamma(z^1)\dots\gamma(z^N)\di z_1\dots\di z_N 
   -\frac 1N \sum_{i=1}^N\mathscr D_{{\rm KL}}(q_{\theta^i}^1|P_0^1).
  \end{align}
   \begin{remark}\label{re.KL}
    We recall that \eqref{eq.PriorN} implies that $\mathscr D_{{\rm KL}}(q_{\btheta}^N|P_0^N)$ has a rather nice expression, given by:  
    $\mathscr D_{{\rm KL}}(q_{\btheta}^N|P_0^N)=\sum_{i=1}^N\mathscr D_{{\rm KL}}(q_{\theta^i}^1|P_0^1)$ and, for $\theta=(m,\rho)\in \mathbf R^{d+1}$,
  \begin{align*}
  \mathscr D_{{\rm KL}}(q_\theta^1|P_0^1)=\int_{\mathbf R^d} q^1_\theta(x) \log(q^1_\theta(x)/P_0^1(x))\di x
  =\frac{\|m-m_0\|_2^2}{2\sigma_0^2}+\frac d2\Big(\frac{g(\rho)^2}{\sigma_0^2}-1\Big)
  +\frac d2\log\Big(\frac{\sigma_0^2}{g(\rho)^2}\Big).
  \end{align*}
  We also note that $\mathscr D_{{\rm KL}}$ has at most a
  quadratic growth in $m$ and $\rho$. In addition, for
  $\theta \in \mathbf R^{d+1}$, we have
  \begin{align}\label{eq.kl_1}
  \nabla_{\theta}\mathscr D_{{\rm KL}}(q_{\theta}^1|P_0^1)=
  \begin{pmatrix}
    \nabla_{m}\mathscr D_{{\rm KL}}(q_{\theta}^1|P_0^1)    \\
      \partial_{\rho}\mathscr D_{{\rm KL}}(q_{\theta}^1|P_0^1)
  \end{pmatrix}
    =
  \begin{pmatrix}
    \frac{1}{\sigma_0^2}(m-m_0)    \\
        \frac{d}{\sigma_0^2}g'(\rho)g(\rho)-d\frac{g'(\rho)}{g(\rho)}
  \end{pmatrix}.
  \end{align}
   \end{remark}
   We assume here a Gaussian prior to get an explicit
   expression of the Kullback-Leibler divergence. Most arguments
   extend to sufficiently regular densities and are essentially the
   same for exponential families, using conjugate families for the
   variational approximation.

  \subsection{Stochastic Gradient Descent  and maximization algorithms}
  
  In this section, we present the three different maximization
  algorithms of the ELBO we are going to consider. In what
  follows, $(\Omega, \mathcal F,\mathbf P)$ is a probability space and
  we write $\langle U,\nu \rangle=\int_{\mathbf R^q}U(z)\nu(\di z) $
  for any integrable function $U:\mathbf R^q\to \mathbf R$ w.r.t. a
  measure~$\nu$ (with a slight abuse of notation, we denote by
  $\gamma$ the measure $\gamma(z)\di z$). Also we define the
  $\sigma$-algebra
  $\mathcal F_0^N=\boldsymbol\sigma(\theta_{0}^i, 1\le i\le N)$.
  
  \paragraph{Idealized SGD}
  Consider a data set $\{(x_k,y_k)\}_{k\ge 0}$ i.i.d.
  w.r.t. $\pi\in\mathcal{P}(\mathsf X\times\mathsf Y)$, the space of
  probability measures over $\mathsf X\times\mathsf Y$. For $N\ge1$
  and given a learning rate $\kappa>0$, the maximization of
  $\theta\in \mathbf R^{d+1}\mapsto \mathrm{E}_{{\rm
      lbo}}^N(\boldsymbol\theta,x,y)$ with a SGD algorithm writes as
  follows: for $k\ge 0$,
  \begin{equation}\begin{cases}\label{eq.sgd}
   &\!\!\!\!\!\!\boldsymbol\theta_{k+1}=\boldsymbol\theta_k+ \kappa \nabla_{\boldsymbol\theta}\mathrm{E}_{{\rm lbo}}^N(\boldsymbol\theta_k,x_k,y_k) \\
   &\!\!\!\!\!\!\boldsymbol\theta_0 \sim \mu_0^{\otimes N},
  \end{cases}\end{equation}
   where $\mu_0\in \mathcal P(\mathbf R^{d+1})$ (the space of probability measures over $\mathbf R^{d+1}$)  and $\boldsymbol\theta_k=(\theta^1_k,\ldots, \theta^N_k)$.
  Using the computation of $\nabla_{\boldsymbol\theta}\mathrm{E}_{{\rm lbo}}^N(\boldsymbol\theta_k,x_k,y_k) $ performed in \cite{colt}, \eqref{eq.sgd} writes: for $k\ge 0$ and $i\in\{1,\dots,N\}$, 
  \begin{equation}\begin{cases}\label{eq.algo-ideal}
   &\!\!\!\!\!\!\theta_{k+1}^i=\theta_{k}^i-\frac{\kappa}{N^2}\sum_{j=1,j\neq i}^N\Big(\langle\phi(\theta_k^j,\cdot,x_k),\gamma\rangle-y_k\Big)\langle\nabla_\theta\phi(\theta_k^i,\cdot,x_k),\gamma\rangle \\
  & \quad \quad -\frac{\kappa}{N^2}\Big\langle(\phi(\theta_k^i,\cdot,x_k)-y_k)\nabla_\theta\phi(\theta_k^i,\cdot,x_k),\gamma\Big\rangle -\frac{\kappa}{N}\nabla_{\theta}\mathscr D_{{\rm KL}}(q_{\theta^i_k}^1|P_0^1),\\
   &\!\!\!\!\!\!\theta_{0}^i \sim \mu_0.
  \end{cases}\end{equation}
We shall call this algorithm \emph{idealised} SGD because it contains
an intractable term given by the integral w.r.t. the probability
distribution $\gamma$. This has motivated the development of methods
where this integral is replaced by an unbiased Monte Carlo estimator
(see \cite{blundell2015weight}) as detailed below with the BbB SGD
scheme.  For the Idealized SGD, and for later purposes, we set for
$N\ge1$ and $k\ge1$:
  \begin{align}\label{eq.Fk1} 
  \mathcal F_k^N=\boldsymbol\sigma(\theta_{0}^i,   (x_q,y_q),1\le i\le N, 0\le q\le k-1)
  \end{align}
  \paragraph{\textit{Bayes-by-Backprop} (BbB) SGD}
  For $N\ge 1$, given a dataset $(x_k,y_k)_{k\ge0}$, the
  maximization of
  $\theta\in \mathbf R^{d+1}\mapsto \mathrm{E}_{{\rm
      lbo}}^N(\boldsymbol\theta,x,y)$ with a
  BbB SGD algorithm is the following: for
  $k\ge 0$ and $i\in\{1,\dots,N\}$,
  \begin{equation}\begin{cases}\label{eq.algo-batch}
  &\!\!\!\!\!\!\theta_{k+1}^i=\theta_k^i -\frac{\kappa}{N^2}\sum_{j=1}^N\big (\phi(\theta_k^j,\mathsf Z^{j}_{k},x_k)-y_k\big )\nabla_\theta\phi(\theta_k^i,\mathsf Z^{i}_k,x_k)
 -\frac{\kappa}{N}\nabla_\theta \mathscr D_{{\rm KL}}(q_{\theta^i_k}^1|P_0^1),\\
   &\!\!\!\!\!\!\theta_{0}^i=(m_{0}^i,\rho_{0}^i)\sim \mu_0,
  \end{cases}\end{equation}
where $(\mathsf Z^{j}_k, 1\le j\le N, k\ge 0)$ is a i.i.d sequence of
random variables distributed according to~$\gamma$.  We recall that
this algorithm is based on the Monte Carlo approximation, for
$i\in\{1,\dots,N\}$, of the term
  $$\int_{(\mathbf R^d)^N}(y-\phi(\theta^j,z^j,x))\nabla_{\theta}\phi(\theta^i,z^i,x)\gamma(z^1)\dots\gamma(z^N)\di z^1\dots\di z^N$$
  which is the gradient w.r.t. to $\theta^i$ of the integral term in
  the left-hand-side of \eqref{eq.ELL}. We mention that we consider
  here in \eqref{eq.algo-batch} the BbB SGD with a batch size of $1$,
  corresponding to $|B|=1$ in \cite{colt}.
    
    For the BbB SGD, we set for $N\ge1$ and $k\ge1$: 
  \begin{equation}\label{eq.Fk2}
  \mathcal F_k^N=\boldsymbol\sigma \Big (\theta_{0}^i ,   \mathsf Z^{j}_q,(x_q,y_q),  1\leq i,j\leq N,   0\le q\le k-1\big \} \Big ).
  \end{equation}

  
  
  \paragraph{\textit{Minimal VI} (MiVI) SGD}
  The last algorithm studied, denoted MiVI SGD, was proposed in
  \cite{colt} as an efficient alternative to the first two algorithm
  above. It is the following: for $k\ge 0$ and
  $i\in\{1,\dots,N\}$,
  \begin{align}\begin{cases}\label{eq.algo-z1z2}
  &\!\!\!\!\!\! \theta_{k+1}^i=\theta_k^i -\frac{\kappa}{N^2}\sum_{j=1}^N \big (\phi(\theta_k^j,\mathsf Z^{1}_{k},x_k)-y_k\big )\nabla_\theta\phi(\theta_k^i,\mathsf Z^{2}_k,x_k)
   -\frac{\kappa}{N}\nabla_\theta \mathscr D_{{\rm KL}}(q_{\theta^i_k}^1|P_0^1)\\
   &\!\!\!\!\!\!\theta_{0}^i=(m_{0}^i,\rho_{0}^i)\sim \mu_0,
  \end{cases}\end{align}
where $(\mathsf Z^{p}_k, p\in \{1,2\}, k\ge 0)$ is a i.i.d sequence of
random variables distributed according to $\gamma^{\otimes2}$. Thus,
the MiVI descent backpropagates through two common Gaussian variables
$(\mathsf Z^{1}_k, \mathsf Z^{2}_k)$ to all neurons, instead of a
 different Gaussian random variable $\mathsf Z^{\cdot}_k$ for each
neuron.

  We finally set 
   for $N,k\ge 1$:
  \begin{align}
  &\mathcal F_k^N =  \boldsymbol\sigma \Big (\theta_{0}^i ,   \mathsf Z^{p}_q,(x_q,y_q),  i \in [1, N], p\in \{1,2\}, q \in [0, k-1] \Big ). \label{eq.Fk3}
  \end{align}

  \subsection{Mean-field limit and Law of Large Numbers}

  \paragraph{Empirical distributions  and assumptions} 
  We introduce the empirical distribution $\nu_k^N$ of the parameters
  $\{\theta^i_k, i\in \{1,\ldots,N\}\}$ at iteration $k\ge 0$ (where
  the $\theta^i_k$'s are generated either by the algorithm
  \eqref{eq.algo-ideal}, \eqref{eq.algo-batch}, or by
  \eqref{eq.algo-z1z2}) as well as its scaled version $\mu_t^N$, which
  are defined by:
  \begin{equation}\label{empirical_distrib}
  \nu_k^N:=\frac 1N\sum_{i=1}^N\delta_{\theta_k^i} \ \ \text{and} \ \ \mu_t^N:=\nu_{\lfloor Nt\rfloor}^N.
  \end{equation}
  Note that for all $N\ge1$, $\mu^N:=\{\mu_t^N, t\ge 0\}$ is a random
  element of the Skorokhod space $\mathcal D(\mathbf R_+,\mathcal P(\mathbf R^{d+1}))$,
  when $\mathcal P(\mathbf R^{d+1})$ is endowed with the weak
  convergence topology. Let us recall that for $q\ge 0$, the
  Wasserstein spaces $\mathcal P_q(\mathbf R^{d+1})$ are defined by
  $ \mathcal P_q(\mathbf R^{d+1})= \{ \mu \in \mathcal P(\mathbf
  R^{d+1}), \int_{\mathbf R^{d+1}} |\theta|^q \mu (\di \theta)<+\infty
  \}$.  The space $\mathcal P_q(\mathbf R^{d+1})$ is endowed with the
  standard Wasserstein metric $\mathsf W_q$.
  Note that for all  $q\ge 0$,  $(\mu^N)_{N\ge1}$ is also a random sequence of elements in $\mathcal D(\mathbf R_+,\mathcal P_q(\mathbf R^{d+1}))$.
  We denote by $\mathcal C^\infty_b(\mathbf R^d\times \mathsf X)$ the space of smooth functions over $\mathbf R^d\times\mathsf X$ whose derivatives of all order are bounded.
  
  We now introduce the assumptions \cite{colt} we will work with in this work:
  \begin{enumerate}[label=\textbf{A\arabic*.}]
  \item[\textbf{A1}.]
    There exists a pdf $\gamma:\mathbf R^d\to\mathbf R_+$ such that for all $\theta\in \mathbf R^{d+1}$, $q^1_\theta\di x=\Psi_\theta\#\gamma\di x$, where $\{\Psi_\theta, \theta\in\mathbf R^{d+1}\}$ is a family of $\mathcal C^1$-diffeomorphisms over $\mathbf R^d$ such that for all $z\in\mathbf R^d$, $\theta\in\mathbf R^{d+1}\mapsto \Psi_\theta(z)$ is of class $\mathcal C^\infty$.
  Finally, there exists   $\mathfrak p_0\in \mathbf N^*$   such that for all multi-index $\alpha \in \mathbf N^{d+1}$ with $|\alpha|\ge 1$, there exists $C_\alpha>0$, for all $z\in\mathbf R^d$ and $  \theta=(\theta_1,\ldots,\theta_{d+1})\in \mathbf R^{d+1}$,
  \begin{equation}\label{jac_T_bounded}
  \big| \partial_{\alpha}\Psi_\theta(z)\big|  \leq C_{\alpha} \mathfrak b(z) \ \ \text{ with } \forall q\ge 1, \  \langle  \mathfrak b^q, \gamma\rangle <+\infty,
  \end{equation}
   where $\partial_\alpha= \partial_{\theta_1}^{\alpha_1}\ldots \partial_{\theta_{d+1}}^{\alpha_{d+1}}$ and $\partial_{\theta_j}^{\alpha_j}$  is the partial derivatives of order $\alpha_j$ w.r.t. to $\theta_j$, and $ \mathfrak b(z)=1+|z|^{\mathfrak p_0}$.  
  \item[\textbf{A2}.]
  The sequence $\{(x_k,y_k)\}_{k\ge 0}$  is  i.i.d.   w.r.t. $  \pi\in\mathcal{P}(\mathsf X\times\mathsf Y)$.
  The set $\mathsf X\times\mathsf Y\subset \mathbf R^d\times \mathbf R$ is compact. For all $k\ge0$, $(x_k,y_k)\indep \mathcal F_{k}^N$ (where, depending on the considered algorithms, $\mathcal F_{k}^N$ is defined by \eqref{eq.Fk1}, \eqref{eq.Fk2}, or \eqref{eq.Fk3}).
  \item[\textbf{A3}.]
   The  (activation) function $s:\mathbf R^d\times \mathsf X\to\mathbf R$ belongs to  $\mathcal C^\infty_b(\mathbf R^d\times \mathsf X)$.
   
  \item[\textbf{A4}.]
   The initial parameters $(\theta_{0}^i)_{i=1}^N$ are i.i.d. w.r.t. $\mu_0\in \mathcal P(\mathbf R^{d+1})$. Furthermore, $\mu_0$  has compact support.  
  \end{enumerate}
  We moreover assume when considering the BbB algorithm
  \eqref{eq.algo-batch} (resp. the MiVI algorithm \eqref{eq.algo-z1z2}):
  \begin{enumerate} 
  \item[\textbf{A5}.]
  The sequences $(\mathsf Z^{j}_k,1\leq j\leq N, k\ge 0)$ (resp. $(\mathsf Z^{p}_k, p\in \{1,2\},  k\ge 0)$) and $((x_k,y_k), k\ge 0)$ are independent. For $k\ge 0$, $\big((x_k,y_k),\mathsf Z^{j}_k, 1\leq j\leq N\big)\indep  \mathcal F_k^N$, see \eqref{eq.Fk2} (resp. $\big((x_k,y_k),\mathsf Z^{p}_k, p\in \{1,2\}\big)\indep  \mathcal F_k^N$, see \eqref{eq.Fk3}).  
  \end{enumerate}
   In the following we simply denote all the above assumptions by {\rm \textbf{A}}.  Let us remark that \textbf{A3} may seem restrictive, see however Remark 4 in \cite{jmlr} to consider a more general setting.

  \paragraph{Law of Large Numbers for  the sequence of rescaled empirical distribution}
  As already explained, the starting points to derive Central Limit Theorems for the sequence $(\mu^N)_{N\ge1}$ defined in~\eqref{empirical_distrib} for  the three algorithms introduced above are  the Law of Large Numbers obtained in  \cite{colt} (see more precisely Theorems 1, 2, and 3 there), that we now recall.

  \begin{theorem}[{\cite{colt}}]\label{th.LLN}
    Let $\gamma_0> 1+ \frac{d+1}{2}$.  Assume {\rm \textbf{A}}.  Let
    the $\{\theta^i_k, k\ge 0, i\in \{1,\ldots,N\}\}$'s be generated
    either by the algorithm \eqref{eq.algo-ideal},
    \eqref{eq.algo-batch}, or \eqref{eq.algo-z1z2}.  Then,
    $(\mu^N)_{N\ge1}$ (see~\eqref{empirical_distrib}) converges in
    $\mathbf P$-probability in
    $\mathcal D(\mathbf R_+,\mathcal P_{\gamma_0}(\mathbf R^{d+1}))$
    to a deterministic element
    $\bar\mu\in \mathcal D(\mathbf R_+,\mathcal P_{\gamma_0}(\mathbf
    R^{d+1}))$. In addition,
    $\bar\mu\in \mathcal C(\mathbf R_+,\mathcal P_{1}(\mathbf
    R^{d+1}))$ and it is the unique solution in
    $\mathcal C(\mathbf R_+,\mathcal P_{1}(\mathbf R^{d+1}))$ to the
    following measure-valued evolution equation:
    $ \forall f\in \mathcal C^\infty_b(\mathbf R^{d+1})$ and
    $\forall t\in \mathbf R_+$:
  \begin{align}
  \label{eq.P2}
  \langle f,\bar\mu_t\rangle-\langle f,\mu_0\rangle=&- \kappa\int_{0}^t\int_{\mathsf X\times\mathsf Y}\big \langle\phi(\cdot,\cdot,x)-y,\bar\mu_s\otimes\gamma\big \rangle
   \big \langle\nabla_\theta f\cdot\nabla_\theta\phi( \cdot ,\cdot,x),\bar\mu_s\otimes\gamma\big \rangle  \pi(\di x,\di y)\di s\nonumber\\
  &\quad- \kappa\int_0^t\big \langle\nabla_\theta f\cdot \nabla_\theta \mathscr D_{{\rm KL}}(q_{\,_\cdot }^1|P_0^1),\bar\mu_s\big \rangle\di s.
  \end{align}
  \end{theorem}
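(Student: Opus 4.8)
The natural route is the classical martingale-and-compactness method for mean-field limits, adapted simultaneously to the three rescaled dynamics. The first step is a set of a priori moment bounds. Using \textbf{A3}--\textbf{A4} (so that $\phi$ and $\nabla_\theta\phi$ are controlled by $C\,\mathfrak b(z)$, which has all $\gamma$-moments by \textbf{A1}) together with the at most linear growth in $\theta$ of $\nabla_\theta\mathscr D_{\rm KL}(q_\theta^1|P_0^1)$ (Remark~\ref{re.KL}), a discrete Gronwall argument on the recursions \eqref{eq.algo-ideal}, \eqref{eq.algo-batch}, \eqref{eq.algo-z1z2} gives $\sup_{N\ge1}\sup_{0\le k\le \lfloor NT\rfloor}\mathbf E[\,|\theta_k^1|^{q}\,]<+\infty$ for every $q\ge1$ and $T>0$, and hence $\sup_N\mathbf E\big[\sup_{t\le T}\langle |\theta|^{\gamma_0},\mu_t^N\rangle\big]<+\infty$. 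This controls the tails uniformly in $N$ and yields the compact-containment condition needed to work in $\mathcal P_{\gamma_0}(\mathbf R^{d+1})$ rather than merely in the weak topology.

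Second, for a fixed $f\in\mathcal C_b^\infty(\mathbf R^{d+1})$ I would write the semimartingale decomposition of $t\mapsto\langle f,\mu_t^N\rangle$. A second-order Taylor expansion of $f$ along $\theta_{k+1}^i-\theta_k^i$, which is $O(1/N)$ in $L^q$ by the moment bounds, makes the quadratic remainder contribute $O(1/N)$ over the $\lfloor Nt\rfloor$ steps. For the first-order term one uses $\frac1{N^3}\sum_{i,j}a_ib_j=\frac1N\langle a,\nu_k^N\rangle\langle b,\nu_k^N\rangle$ and discards the diagonal $i=j$ (size $O(1/N^2)$ per step); then, conditioning on $\mathcal F_k^N$ and averaging over $(x_k,y_k)$ and, for BbB and MiVI, over the variables $\mathsf Z$ under \textbf{A5}, reproduces \emph{exactly the same} drift for all three schemes --- for BbB because $\mathsf Z_k^i\indep\mathsf Z_k^j$ when $i\neq j$, for MiVI because $\mathsf Z_k^1\indep\mathsf Z_k^2$. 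One obtains $\langle f,\mu_t^N\rangle=\langle f,\nu_0^N\rangle+D_t^N(f)+M_t^N(f)$, where $D^N(f)$ is a Riemann-sum approximation of the right-hand side of \eqref{eq.P2} evaluated along $\mu^N$, and $M^N(f)$ is a martingale with predictable quadratic variation $O(1/N)$, hence $M^N(f)\to0$ in $L^2$ uniformly on $[0,T]$; moreover $\langle f,\nu_0^N\rangle\to\langle f,\mu_0\rangle$ a.s. by the strong law of large numbers (\textbf{A4}).

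Third, tightness: the decomposition plus the moment bounds bound the Aldous--Rebolledo modulus of continuity of $\langle f,\mu^N\rangle$ for $f$ in a countable convergence-determining family, and together with compact containment this gives tightness of $(\mu^N)_N$ in $\mathcal D(\mathbf R_+,\mathcal P_{\gamma_0}(\mathbf R^{d+1}))$ via a standard criterion (e.g. Jakubowski's). Along a converging subsequence, $D^N(f)$ converges to the right-hand side of \eqref{eq.P2} by continuity of the coefficients (boundedness of $\phi,\nabla_\theta\phi$, compactness of $\mathsf X\times\mathsf Y$) and the $\gamma_0$-moment control, while $M^N(f)\to0$; so every limit point is a.s. a solution of \eqref{eq.P2} with trajectories in $\mathcal C(\mathbf R_+,\mathcal P_1(\mathbf R^{d+1}))$ (continuity in $\mathsf W_1$ follows from the uniform increment bound).

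Finally, uniqueness. Given two solutions $\bar\mu,\tilde\mu$ in $\mathcal C(\mathbf R_+,\mathcal P_1)$, I would represent each via the characteristic flow $\Theta_t[\cdot]$ solving the associated nonautonomous ODE with velocity field $-\kappa\int_{\mathsf X\times\mathsf Y}\langle\phi(\cdot,\cdot,x)-y,\mu_s\otimes\gamma\rangle\,\nabla_\theta\phi(\theta,\cdot,x)\,\pi(\di x,\di y)-\kappa\nabla_\theta\mathscr D_{\rm KL}(q_\theta^1|P_0^1)$. This field is Lipschitz in $\theta$ and, because the nonlocal factors $\langle\phi(\cdot,\cdot,x)-y,\mu_s\otimes\gamma\rangle$ and $\langle\nabla_\theta\phi,\mu_s\otimes\gamma\rangle$ are bounded and $\mathsf W_1$-Lipschitz in the measure argument (by \textbf{A3}), Lipschitz in $\mu$ for $\mathsf W_1$; coupling the two flows through the common initial law $\mu_0$ and Gronwall give $\mathsf W_1(\bar\mu_t,\tilde\mu_t)\le C\int_0^t\mathsf W_1(\bar\mu_s,\tilde\mu_s)\di s$, hence $\bar\mu=\tilde\mu$. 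Uniqueness of the limit point upgrades subsequential convergence to convergence of the whole sequence in $\mathbf P$-probability. The main obstacle is the pair of technical estimates at the heart of steps one and four: establishing moment bounds uniform in $N$ and over $[0,NT]$ for dynamics whose increments involve the polynomially $z$-growing $\nabla_\theta\phi$ and the unbounded $\nabla_\theta\mathscr D_{\rm KL}$, and then converting the nonlocal coefficients of \eqref{eq.P2} into genuine $\mathsf W_1$-Lipschitz bounds so that Gronwall closes; the Taylor expansion, the martingale estimate and the tightness argument are comparatively routine.
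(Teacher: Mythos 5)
Theorem \ref{th.LLN} is not proved in this paper at all—it is recalled from \cite{colt}—so there is no in-paper proof to compare against; judged on its own, your proposal is correct and follows essentially the same route as the proof in \cite{colt}, whose ingredients this paper re-uses in the appendix: uniform moment bounds up to iteration $\lfloor NT\rfloor$ (Lemma \ref{le.Bounds}), the Taylor/martingale pre-limit decomposition of $\langle f,\mu_t^N\rangle$ (cf.\ \eqref{eq.pre_limitz1zN}), tightness and identification of limit points with \eqref{eq.P2}, and uniqueness via the characteristic flow of the associated McKean--Vlasov ODE with a $\mathsf W_1$-Gronwall coupling. The one step you assert without justification—that every weak solution of \eqref{eq.P2} in $\mathcal C(\mathbf R_+,\mathcal P_1(\mathbf R^{d+1}))$ is the pushforward of $\mu_0$ by that flow—is precisely what Lemma \ref{le.PS} supplies through uniqueness for the linear continuity equation with Lipschitz velocity field (Th.~5.34 in \cite{villani2021topics}), so your argument is complete once that standard citation is added.
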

  Let us mention that the statement of Theorem~\ref{th.LLN} differs
  slightly from the one of Th. 2 in \cite{colt} when the Idealized
  SGD~\eqref{eq.algo-ideal} is concerned. Since this was possible, we
  have decided here to work in
  $\mathcal P(\Theta)$ (with $\Theta\subset \mathbf R^{d+1}$ compact)
  instead of $\mathcal P_{\gamma_0}(\mathbf R^{d+1})$. Nevertheless,
  Theorem 3 in \cite{colt}, by following its proof, also holds for the
  scaled empirical measure $\mu^N$ of the parameters $\theta_k^i$'s
  generated by the Idealized SGD~\eqref{eq.algo-ideal}.
  
  %
  %
  %
  %
  %

  \section{Main results: Central Limit Theorems} \label{sec:clt}

  For $ \mathfrak J\in\textbf{N}$ and $\mathfrak j\geq0$, let  $\mathcal H^{\mathfrak J,\mathfrak  j}(\mathbf R^{d+1})$ be the closure of the set
  $\mathcal C_c^\infty(\mathbf R^{d+1})$ for the norm $\|f\|_{\mathcal H^{\mathfrak J, \mathfrak  j}}$ defined by 
  $$\|f\|_{\mathcal H^{\mathfrak J,\mathfrak  j}}^2= \sum_{|k|\leq \mathfrak J}\int_{\mathbf R^{d+1}}\frac{|\partial_kf(\theta)|^2}{1+|\theta|^{2\mathfrak j}}\di \theta .$$
   The space
  $\mathcal H^{\mathfrak J,\mathfrak  j}(\mathbf R^{d+1})$ was introduced  e.g.  in \cite{fernandez1997hilbertian,jourdainAIHP}. It  is a separable  Hilbert space. Its dual space is denoted by 
  $\mathcal H^{-\mathfrak J,\mathfrak  j}(\mathbf R^{d+1})$.
  The associated
  scalar product on $\mathcal H^{\mathfrak J,\mathfrak j}(\mathbf R^{d+1})$ will be denoted
  by $\langle\cdot,\cdot\rangle_{\mathcal H^{\mathfrak J,\mathfrak j}}$. For
  $\Phi \in \mathcal H^{-\mathfrak J,\mathfrak j}(\mathbf R^{d+1})$, we use the notation $\langle f,\Phi\rangle_{\mathfrak J,\mathfrak j}= \Phi[f], \ f\in \mathcal H^{\mathfrak J,\mathfrak j}(\mathbf R^{d+1})$. 
  We will simply 
  denote $\langle f,\Phi\rangle_{J,\beta}$ by $\langle f,\Phi\rangle$ when no confusion is possible.
  The set $\mathcal C^{\mathfrak J,\mathfrak j}(\mathbf R^{d+1})$ is defined as the space of
  functions $f:\mathbf R^{d+1}\rightarrow\mathbf{R}$ which have continuous
  partial derivatives up to the order $\mathfrak J\in\textbf{N}$ and satisfy, for all $|k|\le \mathfrak J$,   $\frac{|\partial_kf(\theta)|}{1+|\theta|^{\mathfrak j}}\to 0$ as  $|\theta|\to +\infty$. It is endowed with the norm 
   $\|f\|_{\mathcal C^{\mathfrak J,\mathfrak j}}:=\sum_{|k|\leq \mathfrak J}\ \sup_{\theta\in\mathbf R^{d+1}}\frac{|\partial_kf(\theta)|}{1+|\theta|^{\mathfrak j}}<+\infty$. 
   We denote by    $x\mapsto  \lceil x\rceil$ the ceiling function and we finally set:  $$\mathfrak j_3= \lceil \frac{d+1}{2}\rceil +1\text{ and } \mathfrak J_3 = 4\lceil \frac{d+1}{2}\rceil+8.$$ 

  \noindent
  The fluctuation process is defined by 
  \begin{equation}\label{def:eta}
    \eta^N: t \in \mathbf R_+\mapsto  \sqrt N(\mu^N_t-\bar\mu_t) \; ,
  \end{equation}
  where $\mu^N$ is defined in~\eqref{empirical_distrib}  and $\bar\mu_t$ is its limiting process, see  Theorem \ref{th.LLN}. We will show below that the three  fluctuation processes converge in law  to a limiting process which is the unique (weak) solution an equation (namely Equation  {\rm \textbf{(EqL)}} below). The  equation  {\rm \textbf{(EqL)}}  is fully characterizes  by the covariance structure of a so-called \textit{$\mathfrak G$-process}, a process we introduce now.

  \begin{definition}\label{d-G-process}
  We say that a $ \mathcal C(\mathbf R_+,\mathcal H^{-\mathfrak J_3,\mathfrak j_3}(\mathbf R^{d+1}))$-valued  process $\mathscr G$ is a $\mathfrak G$-process if for all $k\ge1$ and all  $f_1,\dots, f_k\in\mathcal H^{\mathfrak J_3,\mathfrak j_3}(\mathbf R^{d+1})$, $\{t\in\mathbf R_+\mapsto(\mathscr G_t[f_1],\dots,\mathscr G_t[f_k])^T\} $ is a $\mathcal C(\mathbf R_+,\mathbf R^k)$-valued  process with zero-mean, independent Gaussian increments (and thus a martingale) and with covariance structure prescribed by $\Cov(\mathscr G_t[f_i],\mathscr G_s[f_j])$, for $0\le s\le t$. 
  \end{definition}
  
  We mention that two $\mathfrak G$-processes are equal in law if and only if they have the same covariance structure (see~\cite{jmlr}). 
  For a  $\mathfrak G$-process  $\mathscr G\in  \mathcal C(\mathbf R_+,\mathcal H^{-\mathfrak J_3,\mathfrak j_3}(\mathbf R^{d+1}))$,  
  we say that a  $\mathcal C(\mathbf R_+,\mathcal H^{-\mathfrak J_3+1,\mathfrak j_3}(\mathbf R^{d+1}))$-valued process $\eta$ is a  solution of 
  {\rm \textbf{(EqL)}} if it satisfies a.s.  the equation:
  \begin{align*}
  &\forall  f\in\mathcal H^{-\mathfrak J_3,\mathfrak j_3-1}(\mathbf R^{d+1}),   \forall t\in\mathbf R_+, \\
  &\langle f,\eta_t\rangle-\langle f,\eta_0\rangle
  =-\kappa\int_0^t\int_{\mathsf X\times\mathsf Y}\langle\phi(\cdot,\cdot,x)-y,\bar\mu_s\otimes\gamma\rangle
   \langle\nabla_\theta f\cdot\nabla_\theta\phi(\cdot,\cdot,x),\eta_s\otimes\gamma\rangle\pi(\di x,\di y)\di s\nonumber\\
  &\ \quad\quad\quad\quad\quad\quad\quad -\kappa\int_0^t\int_{\mathsf X\times\mathsf Y}\langle\phi(\cdot,\cdot,x),\eta_s\otimes\gamma\rangle\langle\nabla_\theta f\cdot\nabla_\theta\phi(\cdot,\cdot,x),\bar\mu_s\otimes\gamma\rangle\pi(\di x,\di y)\di s  \quad\quad\quad\mathbf{(EqL)}\nonumber\\
  &\ \quad\quad\quad\quad\quad\quad\quad-\kappa\int_0^t\langle\nabla_\theta f\cdot\nabla_\theta\mathscr D_{\mathrm{KL}}(q^1_\cdot|P_0^1),\eta_s\rangle\di s +\mathscr G_t[f]. 
  \end{align*}
  
  We now define,   as in the classical theory of stochastic differential equations (see \cite{kallenberg2006foundations}), the notion of weak solution of {\rm \textbf{(EqL)}}.

  \begin{definition}
  Let $\nu$ be a $\mathcal H^{-\mathfrak J_3+1,\mathfrak j_3}(\mathbf R^{d+1})$-valued random variable.  We say that weak existence holds for {\rm \textbf{(EqL)}} with initial distribution $\nu$  if:  there exist  a probability space $\mathscr P$,  a  process $\eta\in \mathcal C(\mathbf R_+,\mathcal H^{-\mathfrak J_3+1,\mathfrak j_3}(\mathbf R^{d+1}))$  and a $\mathfrak G$-process $\mathscr G$ on $\mathscr P$ satisfying  {\rm \textbf{(EqL)}}   with in addition   $\eta_0=\nu$ in law. In this case, we will simply say that $\eta$ is a weak solution of {\rm \textbf{(EqL)}}.  In addition, we say that weak uniqueness holds if for any two weak solutions $\eta^\circ$ and $\eta^\star$ of {\rm \textbf{(EqL)}}  with the same initial distributions, it holds $\eta^\circ=\eta^\star$ in law. 
  \end{definition}
  
  We are now in position to state  the main theoretical result of this work:  Central Limit Theorems for the trajectory of the scaled empirical measures  $\mu^N$ of  the  $\{\theta^i_k, i\in \{1,\ldots,N\}\}$'s  generated either  by  the algorithm \eqref{eq.algo-ideal}, \eqref{eq.algo-batch}, or by \eqref{eq.algo-z1z2}.
  

  \begin{theorem}\label{thm-clt-ideal} Assume  {\rm \textbf{A}}. Then, 
  \begin{enumerate}
  \item  The sequence $(\eta^N)_{N\ge1}$  converges in distribution in $\mathcal D(\mathbf R_+,\mathcal H^{-\mathfrak J_3+1,\mathfrak j_3}(\mathbf R^{d+1}))$  to a $\mathcal C(\mathbf R_+,\mathcal H^{-\mathfrak J_3+1,\mathfrak j_3}(\mathbf R^{d+1}))$-valued process $\eta^\star$. 
  \item  The process $\eta^\star$ is the unique weak solution of {\rm \textbf{(EqL)}} with initial distribution $\nu_0$, where $\nu_0$ is the unique (in distribution) $\mathcal H^{-\mathfrak J_3+1,\mathfrak j_3}(\mathbf R^{d+1})$-valued random variable such that for all $k\ge1$  and $f_1,\dots,f_k\in \mathcal H^{J_3-1,j_3}(\mathbf R^{d+1})$, $(\langle f_1,\nu_0\rangle,\dots,\langle f_k,\nu_0\rangle)^T\sim\mathfrak N(0,\mathfrak C(f_1,\dots,f_k))$, where $\mathfrak C(f_1,\dots,f_k)$ is the covariance matrix of  $(f_1(\theta_0^1),\dots,f_k(\theta_0^1))^T$.   Moreover, the $\mathfrak G$-process $\mathscr G$ has covariance structure given by, for all $ f,g\in\mathcal H^{\mathfrak J_3,\mathfrak j_3}(\mathbf R^{d+1})$ and all  $ 0\le s\le t$:
  \begin{itemize}
  \item When the $\{\theta^i_k, i\in \{1,\ldots,N\}\}$'s are generated
    by the idealized algorithm \eqref{eq.algo-ideal} or by the BbB
    algorithm~\eqref{eq.algo-batch},
  \begin{align*}
  &\Cov(\mathscr G_t[f],\mathscr G_s[g])=\eta^2\!\!\!\int_0^s\Cov(\mathscr Q[f](x,y,\bar\mu_v),\mathscr Q[g](x,y,\bar\mu_v))\di v,
  \end{align*}
  where $\mathscr Q[f](x,y,\bar\mu_v)=\langle\phi(\cdot,\cdot,x)-y,\bar\mu_v\otimes\gamma\rangle\langle\nabla_\theta f\cdot\nabla_\theta\phi(\cdot,\cdot,x),\bar\mu_v\otimes\gamma\rangle$.
  
\item When the $\{\theta^i_k, i\in \{1,\ldots,N\}\}$'s are generated
  by the MiVI algorithm \eqref{eq.algo-z1z2},
  \begin{align*}
  &\Cov(\mathscr G_t[f],\mathscr G_s[g])=\eta^2\!\!\!\int_0^s\!\!\!\Cov(\mathscr Q[f](x,y,z^1,z^2,\bar\mu_v), \mathscr Q[g](x,y,z^1,z^2,\bar\mu_v))\di v,
  \end{align*}
  where   $\mathscr Q[f](x,y,z^1,z^2,\bar\mu_v)=\langle\phi(\cdot,z^1,x)-y,\bar\mu_v\rangle\langle\nabla_\theta f\cdot\nabla_\theta\phi(\cdot,z^2,x),\bar\mu_v\rangle$.
  \end{itemize}
  \end{enumerate} 
  \end{theorem}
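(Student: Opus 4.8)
The plan is to follow the by-now classical three-step route for trajectorial CLTs of interacting particle systems (as in \cite{jmlr,sirignano2020clt}), adapted to the Bayesian setting and to the three different stochastic schemes. First I would establish a semimartingale decomposition of $\langle f,\eta^N_t\rangle$ for $f\in\mathcal H^{\mathfrak J_3,\mathfrak j_3}(\mathbf R^{d+1})$. Starting from the one-step update \eqref{eq.algo-ideal} (resp. \eqref{eq.algo-batch}, \eqref{eq.algo-z1z2}), a Taylor expansion of $f(\theta^i_{k+1})-f(\theta^i_k)$ to second order, summed over $i$ and over $k\le \lfloor Nt\rfloor$, yields an identity of the form
\begin{align*}
\langle f,\eta^N_t\rangle=\langle f,\eta^N_0\rangle + \int_0^t \Lambda^N_s[f]\,\di s + M^N_t[f] + R^N_t[f],
\end{align*}
where $M^N_t[f]$ is a martingale with respect to the filtration $(\mathcal F^N_{\lfloor Nt\rfloor})$, $\Lambda^N_s[f]$ is the drift obtained after replacing $\mu^N$-expectations by their $\eta^N$-linearizations around $\bar\mu$ (using the LLN of Theorem~\ref{th.LLN} to control the $\bar\mu$-dependent coefficients), and $R^N_t[f]$ collects the remainder: the second-order Taylor terms (each carrying a factor $\kappa^2/N^4$ per neuron pair, hence $O(N^{-1/2})$ after rescaling), the idealized-vs-Monte-Carlo discrepancy in the BbB/MiVI cases, and the diagonal $j=i$ correction. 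The regularity Assumptions \textbf{A1}–\textbf{A3} (smoothness and polynomial growth of $\Psi_\theta$, $s$, and $\mathscr D_{\rm KL}$) are exactly what is needed to bound all derivatives appearing here by integrable functions of $z$ against $\gamma$, and to make the embedding $\mathcal H^{\mathfrak J_3,\mathfrak j_3}\hookrightarrow \mathcal C^{\mathfrak J_3-1,\mathfrak j_3}$ usable; the choice $\mathfrak J_3=4\lceil\frac{d+1}{2}\rceil+8$ is dictated by needing enough Sobolev derivatives both for this embedding (twice: once for $\nabla f$, once because two-step differences appear in the tightness estimates) and for the duality pairings with $\eta^N$.

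Second, I would prove tightness of $(\eta^N)_{N\ge1}$ in $\mathcal D(\mathbf R_+,\mathcal H^{-\mathfrak J_3+1,\mathfrak j_3}(\mathbf R^{d+1}))$. The key a priori bound is a uniform moment estimate $\sup_N \mathbf E\big[\sup_{t\le T}\|\eta^N_t\|_{\mathcal H^{-\mathfrak J_3,\mathfrak j_3}}^2\big]<\infty$, obtained from the semimartingale decomposition by a Gronwall argument (the drift $\Lambda^N$ is linear in $\eta^N$ with bounded-in-$N$ coefficients, the martingale part is controlled by its predictable bracket — see below — and $R^N\to0$ in $L^1$). Here the compactness of $\mathsf X\times\mathsf Y$ and of $\mathrm{supp}\,\mu_0$ (Assumptions \textbf{A2}, \textbf{A4}) is used to keep the parameters $\theta^i_k$ in a set with controlled growth over the time window, and to get the needed moments of $\eta^N_0$ via a standard i.i.d. CLT in Sobolev spaces (this also identifies $\nu_0$: $(\langle f_1,\eta^N_0\rangle,\dots)$ is a normalized sum of i.i.d. mean-zero vectors, so converges to $\mathfrak N(0,\mathfrak C(f_1,\dots,f_k))$). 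Tightness in the Skorokhod space then follows from an Aldous-type criterion applied coordinate-wise to $\langle f,\eta^N\rangle$ for $f$ in a dense countable subset, plus a compact-containment argument exploiting the Hilbert–Schmidt embedding $\mathcal H^{-\mathfrak J_3+1,\mathfrak j_3}\hookrightarrow\mathcal H^{-\mathfrak J_3,\mathfrak j_3}$.

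Third, I would identify every subsequential limit $\eta^\star$ as a weak solution of \textbf{(EqL)} and invoke weak uniqueness to conclude convergence of the full sequence. Passing to the limit in the drift $\int_0^t\Lambda^N_s[f]\,\di s$ is a continuity/continuous-mapping argument using the LLN $\mu^N\to\bar\mu$ and the tightness-derived convergence $\eta^N\to\eta^\star$, together with the boundedness of the coefficient functions $\phi,\nabla_\theta\phi,\nabla_\theta\mathscr D_{\rm KL}$ on the relevant compact sets. The crux — and what I expect to be the main obstacle — is the martingale part: I must show $M^N_t[f]\Rightarrow \mathscr G_t[f]$ with the stated covariance. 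For this I would compute the predictable quadratic variation $\langle M^N[f]\rangle_t = \sum_{k=0}^{\lfloor Nt\rfloor-1}\mathbf E\big[(\Delta M^N_k[f])^2\mid \mathcal F^N_k\big]$ and show it converges to $\eta^2\int_0^t\Cov(\mathscr Q[f](\cdot),\mathscr Q[f](\cdot))\,\di v$ — this is precisely where the idealized/BbB and the MiVI schemes diverge: in the idealized case each increment's conditional variance comes only from the data $(x_k,y_k)$, and the BbB Monte Carlo noise contributes at a lower order because the $\mathsf Z^j_k$ are averaged over $j$ (so its variance is $O(N^{-1})$ smaller and vanishes after rescaling), giving the data-only covariance $\mathscr Q[f](x,y,\bar\mu_v)$; whereas in MiVI the two shared variables $\mathsf Z^1_k,\mathsf Z^2_k$ are not averaged out, so they contribute at the same order as the data, producing the larger covariance involving $\mathscr Q[f](x,y,z^1,z^2,\bar\mu_v)$. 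I would then check the conditional Lindeberg condition (increments are $O(N^{-1})$ uniformly, hence negligible) and apply a martingale CLT in the Hilbert space $\mathcal H^{-\mathfrak J_3,\mathfrak j_3}$ (Jakubowski/Whitt-type, or the approach of \cite{jmlr}) to get convergence of $M^N$ to a continuous Gaussian martingale with independent increments — i.e., a $\mathfrak G$-process in the sense of Definition~\ref{d-G-process}. Weak uniqueness of \textbf{(EqL)} is proved by a Gronwall estimate on the difference of two solutions driven by the same $\mathfrak G$-process (the equation is linear in $\eta$), after verifying that the $\mathfrak G$-process law is determined by its covariance; this closes the argument and pins down $\eta^\star$ uniquely.
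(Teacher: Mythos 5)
Your overall architecture (pre-limit semimartingale identity, tightness in weighted negative Sobolev spaces, martingale CLT with scheme-dependent bracket, identification of limit points as weak solutions of \textbf{(EqL)}, Gronwall uniqueness) matches the paper's, but there is a genuine gap at the technical heart of the argument: you never deal with the fact that $\nabla_\theta\mathscr D_{\mathrm{KL}}(q^1_\theta|P_0^1)$ is \emph{unbounded} in $\theta$ (it grows linearly, see \eqref{eq.kl_1}), and your explicit appeal to ``the boundedness of the coefficient functions $\phi,\nabla_\theta\phi,\nabla_\theta\mathscr D_{\rm KL}$ on the relevant compact sets'' is a step that would fail: the parameters are not confined to a compact set (only moment bounds as in Lemma~\ref{le.Bounds} are available), and $\eta^N,\eta^\star$ are elements of $\mathcal H^{-\mathfrak J_3+1,\mathfrak j_3}$, i.e.\ distributions, not measures, so there is no ``relevant compact set'' to restrict to. This unboundedness is precisely one of the two difficulties the paper singles out relative to \cite{jmlr}, and it bites exactly where you declare the argument routine: in the uniqueness step, the drift operators map $\mathcal H^{\mathfrak J,\mathfrak j}$ into $\mathcal H^{\mathfrak J-1,\mathfrak j+1}$ (they lose a derivative and gain a weight), so a naive ``linear equation, hence Gronwall'' estimate via Cauchy--Schwarz produces $\|\eta_s\|$ in a \emph{larger} norm than the one you are trying to close, and the induction does not close. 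The paper needs two adjoint-operator estimates to make Gronwall work both in the a priori moment bounds (summation over an orthonormal basis) and in the pathwise-uniqueness proof: the estimate $|\langle\eta,\mathbf T_x^*\eta\rangle_{\mathcal H^{-\mathfrak J,\mathfrak j}}|\le C\|\eta\|^2_{\mathcal H^{-\mathfrak J,\mathfrak j}}$ for the transport part (Lemma B.2 of \cite{jmlr}) and, new here, the weighted integration-by-parts estimate of Lemma~\ref{lem_B1_avec_poids} for the KL part, which exploits the specific structure $|\partial_\alpha\mathscr D_{\mathrm{KL}}|\le C(1+|\theta|)$ for $|\alpha|=1$ and $\le C$ for $|\alpha|\ge2$ against the polynomial weight. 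Your plan contains no substitute for these, so the uniqueness (and part of the tightness) argument as you describe it is incomplete.

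Two further deviations are worth noting. First, your tightness step asserts the drift is ``linear in $\eta^N$ with bounded-in-$N$ coefficients''; in the actual pre-limit equation \eqref{pre-lim-eta} there is a quadratic term in $\eta^N$ (scaled by $N^{-1/2}$) in addition to the unbounded KL term, and the paper does not run Gronwall on $\eta^N$ directly: it couples the particles to an auxiliary i.i.d.\ system $\bar X^i$ solving $\mathbf{(E_{\bar\mu})}$ and splits $\eta^N=\Upsilon^N+\Theta^N$, controlling $\Theta^N$ by i.i.d.\ arguments and $\Upsilon^N$ by a genuinely linear equation — a device you omit and would need to either reproduce or replace with careful handling of the quadratic and KL terms. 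Second, your treatment of the BbB bracket (conditioning on $\mathcal F^N_k$ so that the Monte Carlo noise enters at order $N^{-1}$ and washes out) is a legitimate and arguably simpler route to the stated covariance; the paper instead conditions on an enlarged $\sigma$-algebra containing the current $\mathsf Z^i_k$'s, which forces it to prove convergence of the joint empirical measure of $(\theta^i_k,\mathsf Z^i_k)$ to $\bar\mu\otimes\gamma$ (Lemma~\ref{prop-conv_rhoN'}); if you pursue your route you must still quantify the $O(1/N)$ remainders using the moment bounds of Lemma~\ref{le.Bounds}, since $\nabla_\theta f\cdot\nabla_\theta\phi$ is only polynomially bounded in $(\theta,z)$.
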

  Let us begin by the following remark: when $f=g$ it follows
  directly from Jensen's inequality that the variance of the
  $\mathfrak G$-process leading the limiting SPDE of the CLT of the
  \emph{Minimal VI} algorithm is greater than the corresponding
  variance of the \emph{BbB} algorithm. It is however not clear if
  this hierarchy is conserved through the SPDE. However numerical
  experiments presented in Section \ref{sec:numerics} tend to this
  conclusion.
   
    The strategy of the proof of Theorem \ref{thm-clt-ideal} is the same whenever one considers that the    $\{\theta^i_k, i\in \{1,\ldots,N\}\}$'s are generated by \eqref{eq.algo-ideal}, \eqref{eq.algo-batch} or \eqref{eq.algo-z1z2}, except for the convergence of the martingale   sequence $(\sqrt N\mathbf M^N)_{N\ge1}$ towards a $\mathfrak G$-process which requires more inlvolved   analysis (see more precisely Section \ref{sec-conv-G-process}).  Appendix \ref{sec-proof-clt} below is dedicated to the detailed  proof of the Central Limit Theorem when  the    $\{\theta^i_k, i\in \{1,\ldots,N\}\}$'s are generated by \eqref{eq.algo-batch}. The other two cases are treated very similarly except, as already mentioned,  the convergence of the martingale term towards a $\mathfrak G$-process, which  is therefore proved for each of the three algorithms  in Section \ref{sec-conv-G-process}. The  proof  of  Theorem \ref{thm-clt-ideal} is inspired by the one made for  Th. 2 in \cite{jmlr}. Nonetheless, two   difficulties arise in the proof of Theorem \ref{thm-clt-ideal} compared   to \cite{jmlr}. The first one comes from the fact that  the term $\nabla_\theta \mathscr D_{{\rm KL}}(q_{\theta}^1|P_0^1)$, appearing in all of the three algorithms, is not bounded in $\theta$ (see indeed \eqref{eq.kl_1}). 
    The second difficulty deals with the convergence of the martingale   sequence $(\sqrt N\mathbf M^N)_{N\ge1}$, defined in  \eqref{eq.MN},   when the $\{\theta^i_k, i\in \{1,\ldots,N\}\}$'s are generated by \eqref{eq.algo-batch}. In this case, we have to introduce and study the convergence of the empirical distribution of both the $\{\theta^i_k, i\in \{1,\ldots,N\}\}$'s and the  $\mathsf Z^i$'s (see  \eqref{new_empir_distrib} and Lemma \ref{prop-conv_rhoN'}).


\section{Numerical simulations} \label{sec:numerics}

In this section, we begin by illustrating Theorem \ref{thm-clt-ideal}
of this paper, followed by a comparative analysis between MiVI SGD
algorithm and its two counterparts, idealized (I-SGD) and BbB SGD.

For our experimental setup, we draw uniformly the input data $x \sim \mathcal{U}([-1, 1]^{d_{in}})$. Then, the output data is given by $y = \text{tanh}(\langle x, w_{in}^{\star} \rangle) \cdot w_{out}^{\star} + \gamma \cdot \epsilon$. Here, $\gamma \in \mathbf R$ represents the noise level and $\epsilon \sim \mathcal{N}(0, \mathrm{I}_{d_{in}})$ is the Gaussian noise. Therefore, we are trying to learn the noisy prediction of a two-layer Neural Network with an hyperbolic tangent activation function. The true parameters of this network are defined by $w_{in}^{\star} \in \mathbf R^{d_{in}}$ and $w_{out}^{\star} \in \mathbf R^{d_{out}}$. These true parameters are initialized randomly, sampled from a standard Gaussian distribution. 

We consider two distinct settings in our evaluation. The first is a noiseless and low-dimensional scenario with parameters set to $\gamma = 0$, $d_{in} = 10$, and $d_{out} = 1$. In contrast, the second setting is more complex, involving noise with $\gamma = 1$, and higher dimensions with $d_{in} = 50$ and $d_{out} = 10$. 

For all algorithms (MiVI-SGD, BbB-SGD, and I-SGD), the prior
distribution is
$P_0^N = \mathcal{N}(0, \mathrm{I}_{N \times (d_{in} +
  d_{out})})$. The variational parameters $\mathbf{\theta}$ are
randomly initialized, centered around the prior distribution. Since
the I-SGD cannot be implemented due to intractable integral
calculation, we approximate it using Monte Carlo with a mini-batch of
100. For the algorithm BbB-SGD, we set the number of Monte Carlo
samples to $1$. The number of gradient descent steps used by all
algorithms is set to $\lfloor t \cdot N \rfloor$, where $t = 10$ for
the simple setting. However, due to computational limitations, we set
$t = 3$ for the complex setting.  For all experiments, we consider
three different test functions. If $\theta = (m, \rho)$, we define
$f_{mean}(\theta) = \Vert m \Vert_2$,
$f_{std}(\theta) = |g(\rho)|$, and
$f_{pred}(\theta) = \hat{\mathbb{E}}_{x} \Big[ \hat{\mathbb{V}}_{w
  \sim q_{\theta}^1}[s(w,x)]^{\frac{1}{2}}\Big]$. Here,
$\hat{\mathbb{E}}$ and $\hat{\mathbb{V}}$ represent the empirical mean
and variance over 100 samples, respectively. These functions are used
to compute $\langle f ,\; \mu_t^N \rangle$ and
$\langle f,\; \eta_t^N \rangle$.

\paragraph{Illustration of Theorem \ref{thm-clt-ideal} :} Using the
definition of $\eta_t^N$ in equation \ref{def:eta}, and that
$\bar{\mu_t}$ is deterministic, then we deduce that
$\mathbb{V}[\langle f, \; \eta_t^N \rangle ] = N \cdot
\mathbb{V}[\langle f , \; \mu_t^N \rangle]$. Figure \ref{fig:eta}
displays the convergence of
$N \cdot \mathbb{V}[\langle f , \; \mu_t^N \rangle]$ in the simple and
complex setting. The variance is estimated using its empirical version
with 300 samples, and the 95\% confidence interval is calculated based
on 10 samples. These plots clearly show that the $\mathfrak G$-process
associated with the limiting fluctuation process $\eta_t$ derived from
BbB-SGD shares the same covariance as the one derived from I-SGD, but
differs from the covariance derived from MiVI-SGD, which exhibit
larger values. These plots clearly illustrates the main result of
Theorem \ref{thm-clt-ideal} and the following remark.

\paragraph{Comparison MiVI-SGD, BbB-SGD and I-SGD:} The objective of this paragraph is to compare, at a fixed number of neurons $N$, the performances of algorithms MiVI-SGD, BbB-SGD and I-SGD. 
Recall that algorithm BbB-SGD randomly samples $N$ Gaussian vectors of dimension $d_{in} + d_{out}$ at each training step. Consequently, during the full training, this algorithm samples $\lfloor t \cdot N \rfloor N$ Gaussian vectors. In contrast, MiVI-SGD samples only $2$ Gaussian vectors per training step, resulting in a total of $2 \lfloor t \cdot N \rfloor$ sampled Gaussian vectors. Therefore, algorithm MiVI-SGD becomes more suitable (in terms of the number of Gaussian vectors sampled) for $N \geq 2$.
Figure \ref{fig:mu} show the variance of $\langle f, \; \mu_t^N \rangle$ with respect to $N$, in the simple and complex setting. 
Similarly to the previous paragraph, the variance is estimated using 300 samples, and the 95\% confidence interval is computed based on 10 samples.

\begin{figure}
     \centering
         \includegraphics[scale=0.4]{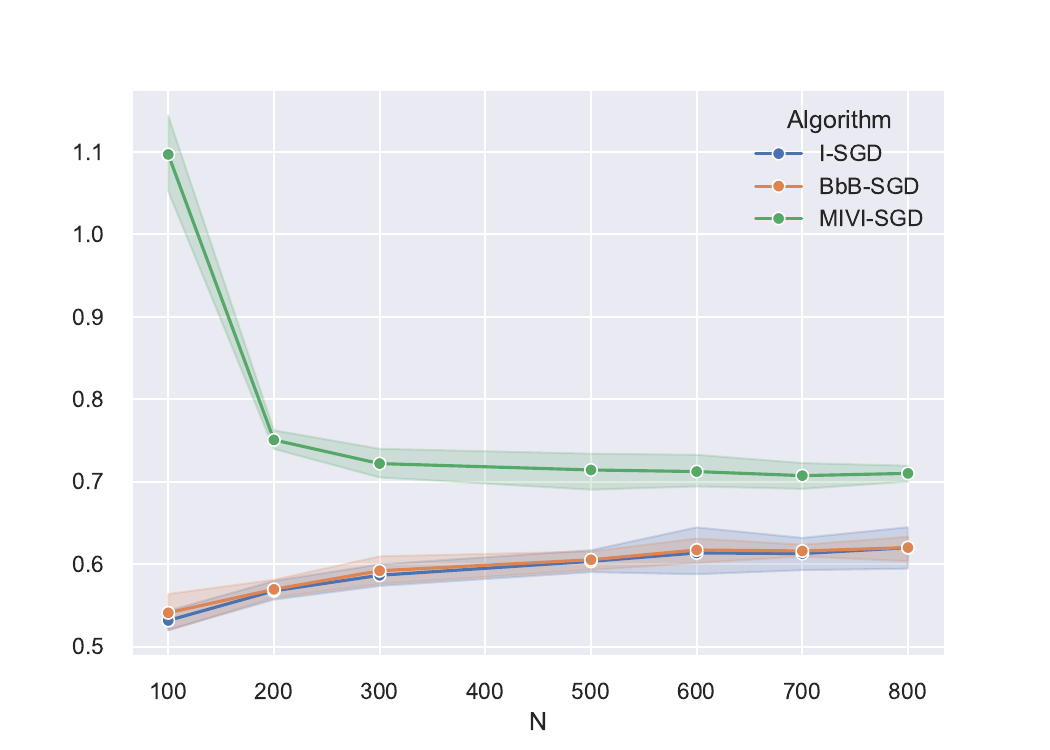}
         \includegraphics[scale=0.4]{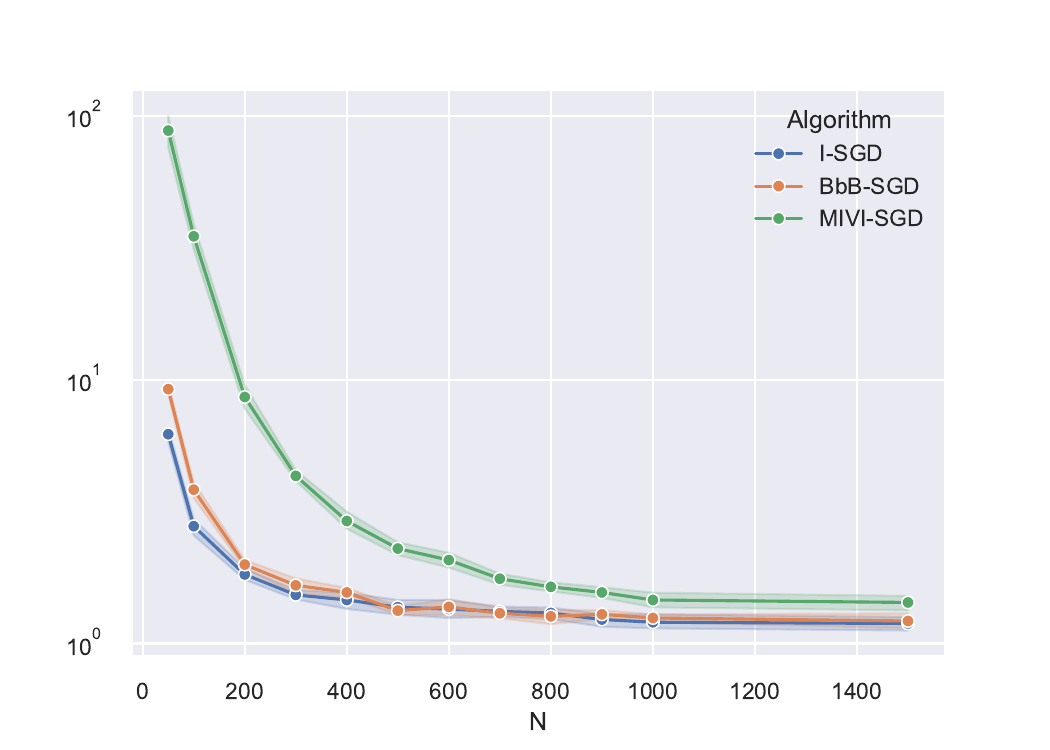}\\
         \includegraphics[scale=0.4]{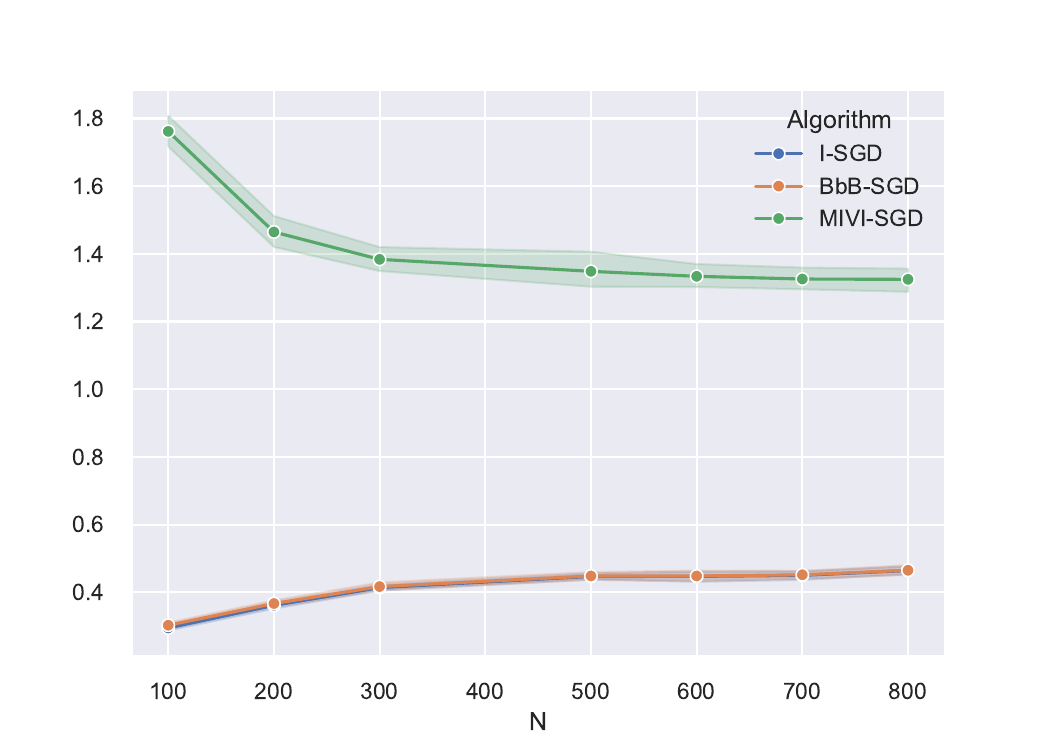}
         \includegraphics[scale=0.4]{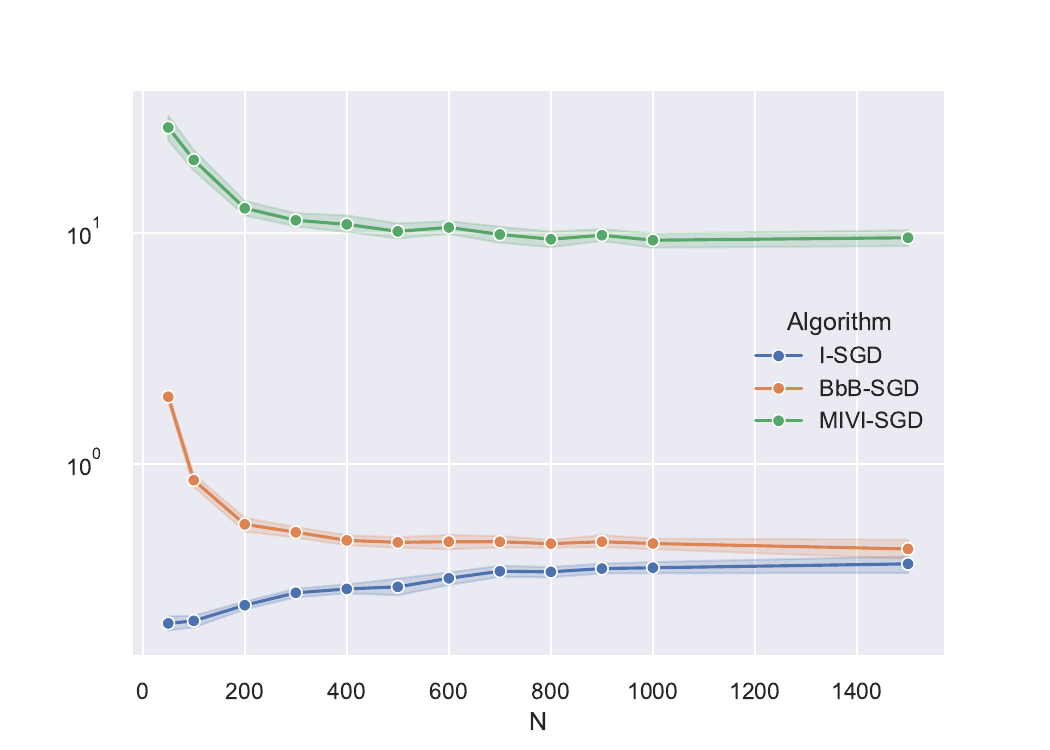}\\
         \includegraphics[scale=0.4]{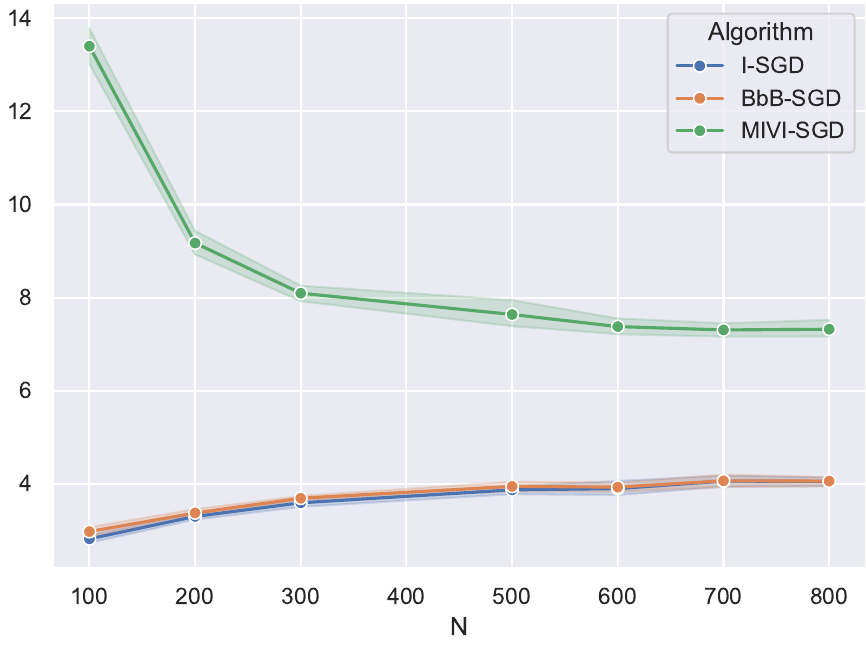}
         \includegraphics[scale=0.4]{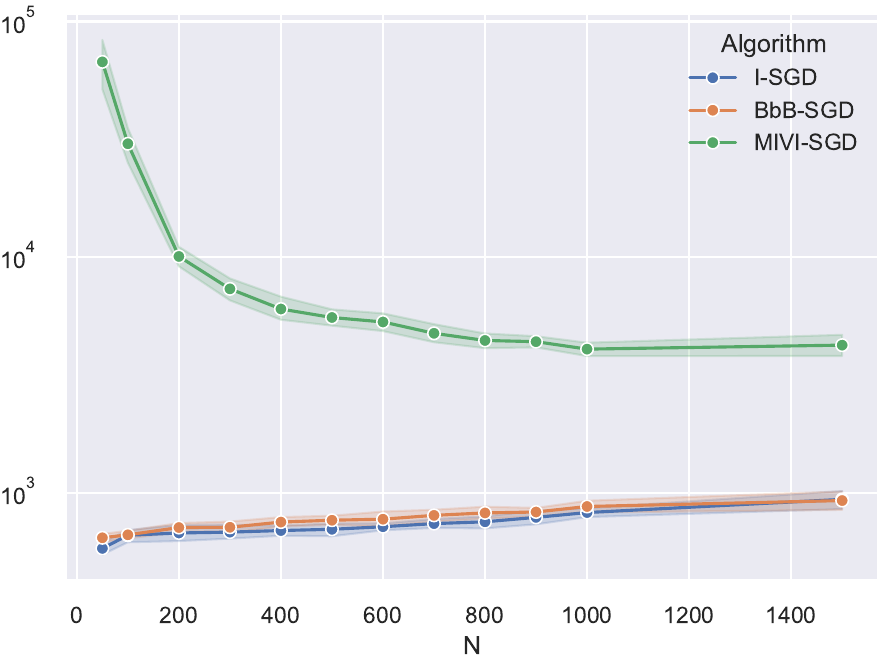}
        \caption{Convergence of $ \mathbb{V}[\langle f, \eta_t^N \rangle]$ in the simple (left column) and complex (right column) setting, for $f_{mean}$ ($1^{st}$ line), $f_{std}$ ($2^{nd}$ line) and $f_{pred}$ ($3^{rd}$ line).}
    \label{fig:eta}
\end{figure}

\begin{figure}
  \centering
      \includegraphics[scale=0.4]{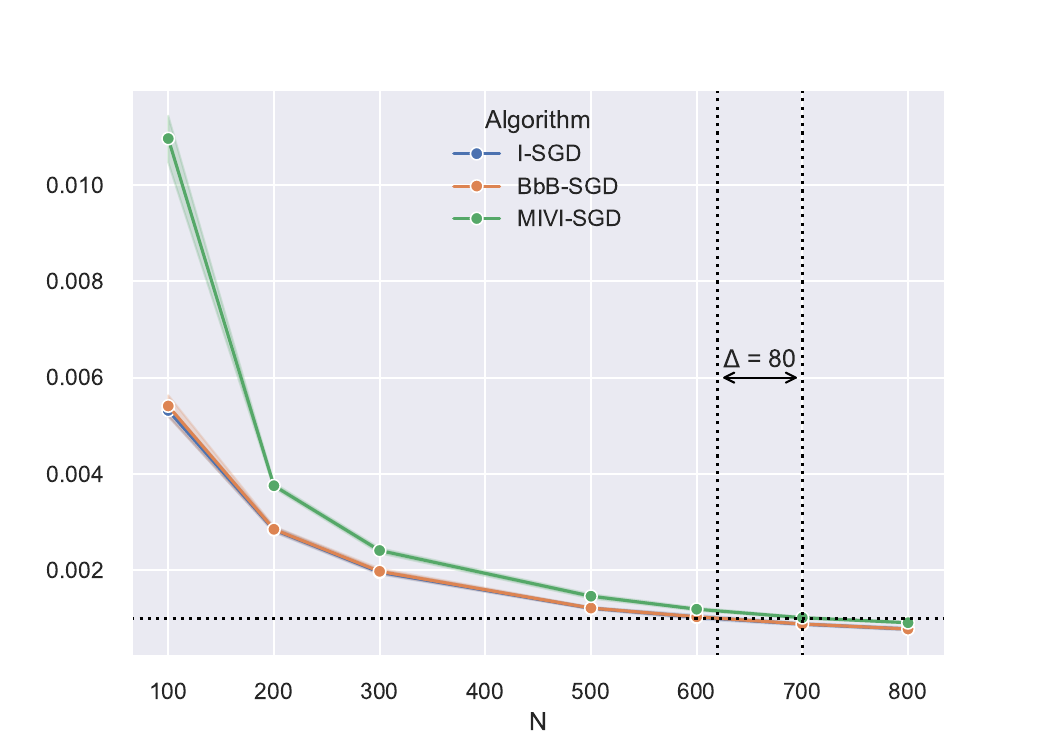}
      \includegraphics[scale=0.4]{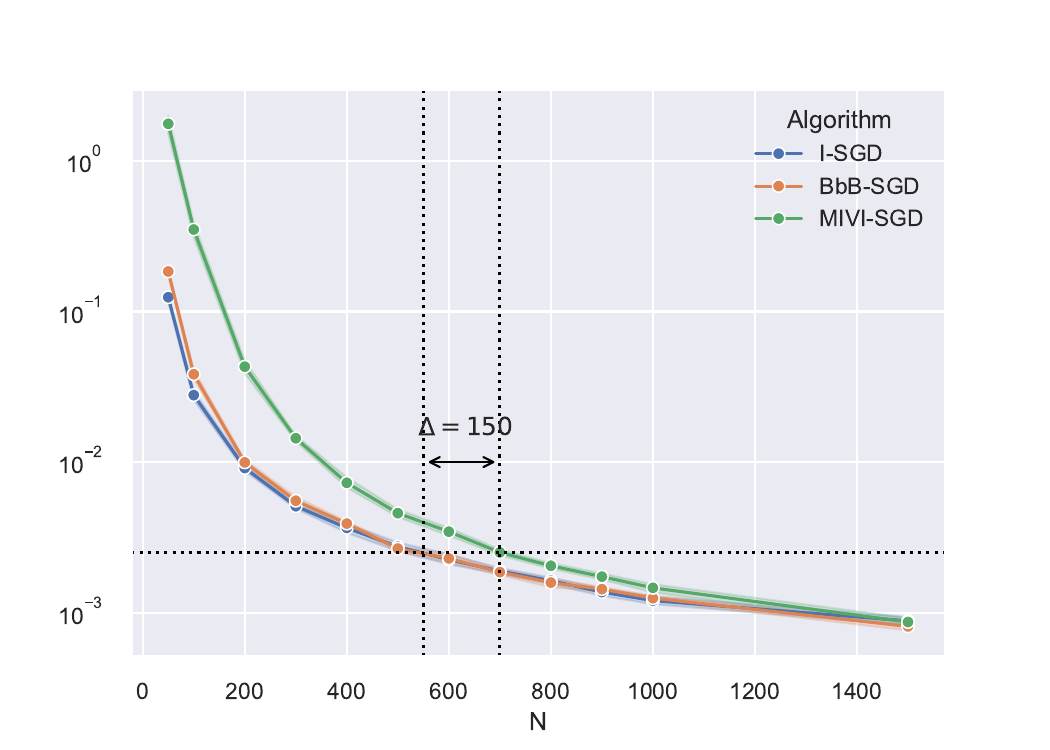}\\
      \includegraphics[scale=0.4]{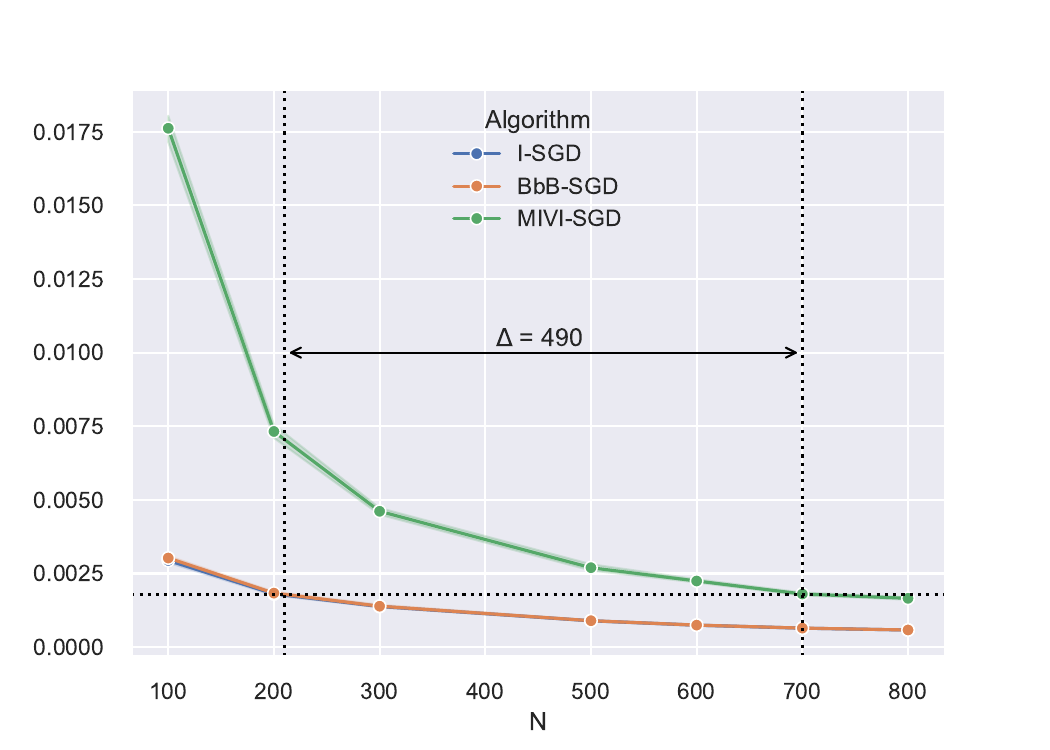}
      \includegraphics[scale=0.4]{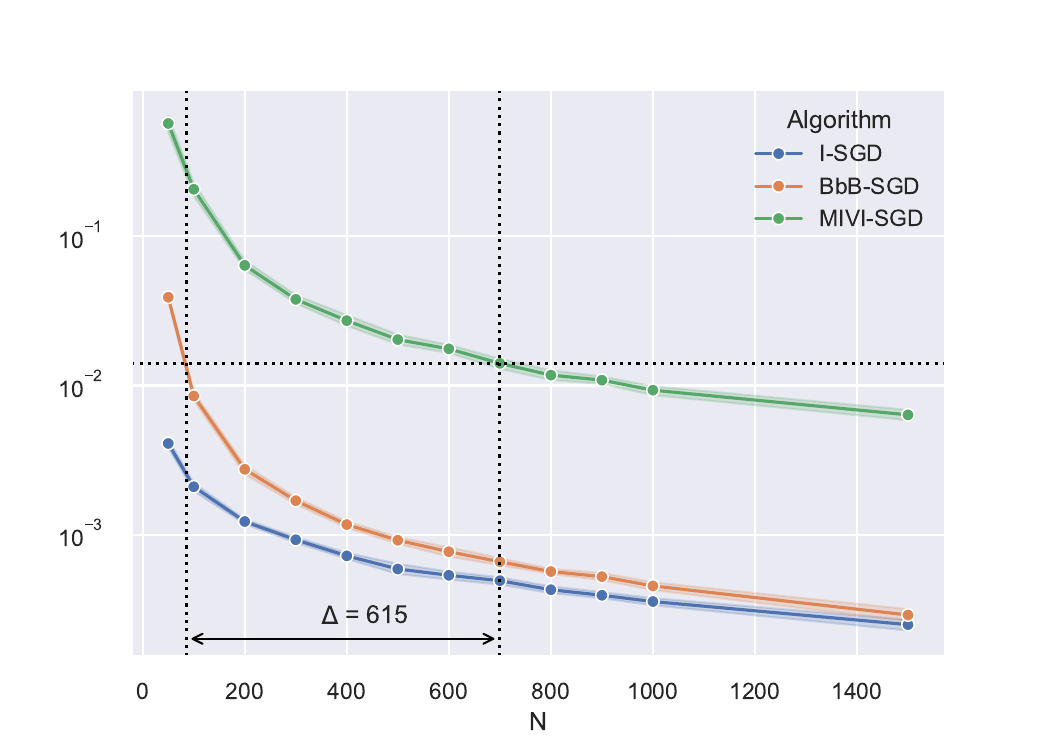}\\
      \includegraphics[scale=0.4]{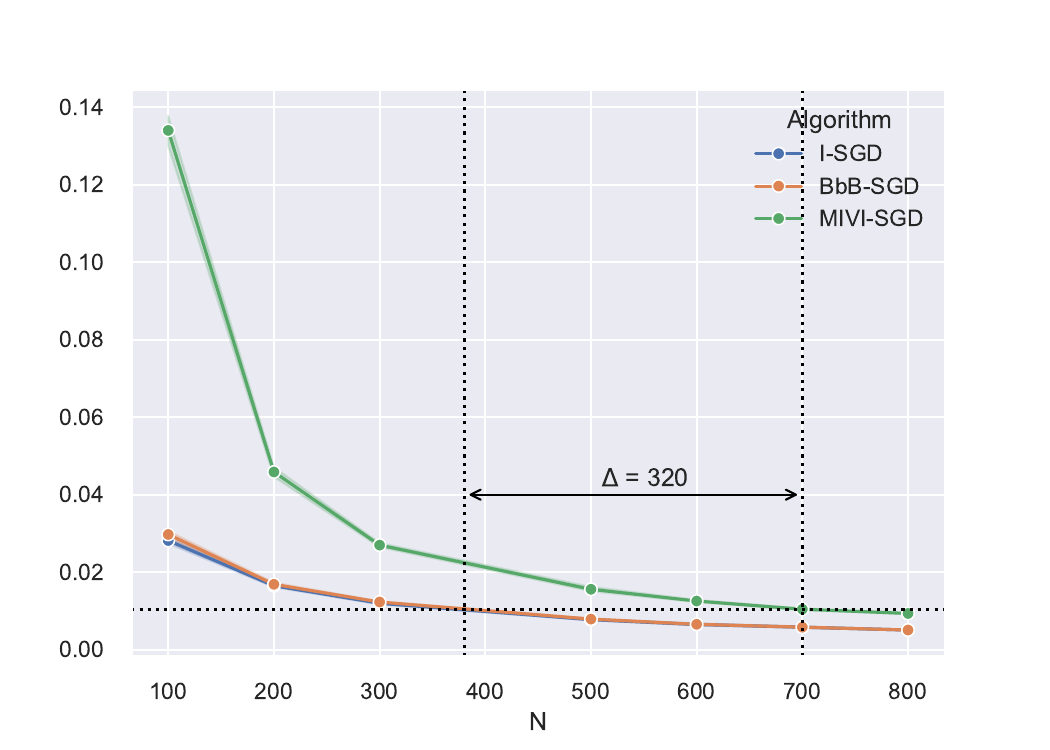}
      \includegraphics[scale=0.4]{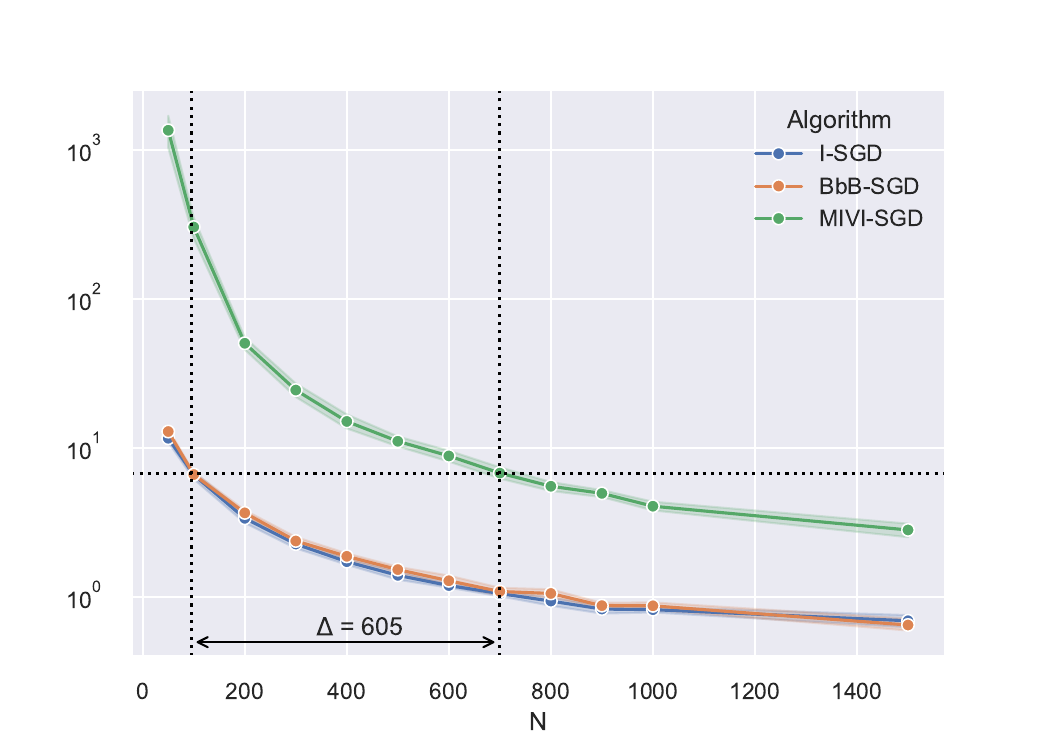}
     \caption{$\mathbb{V}[\langle f, \mu_t^N \rangle]$ with respect to $N$, in the simple (left column) and complex (right column) setting, for $f_{mean}$ ($1^{st}$ line), $f_{std}$ ($2^{nd}$ line) and $f_{pred}$ ($3^{rd}$ line).}
 \label{fig:mu}
\end{figure}

This figure shows that, in the simple setting MiVI-SGD with $N = 700$ obtains the same performance (in term of $\mathbb{V}[\langle f , \; \mu_t^N \rangle]$), than BbB-SGD and I-SGD with $N = 620$ for $f_{mean}$, $N = 210$ for $f_{std}$ and $N = 380$. 
Similarly, in the complex setting MiVI-SGD with $N = 700$ obtains the same performance (in term of $\mathbb{V}[\langle f , \; \mu_t^N \rangle]$), than BbB-SGD and I-SGD with $N = 550$ for $f_{mean}$, $N = 75$ for $f_{std}$ and $N = 95$. 

Consequently, in both settings, algorithm MiVI-SGD appears to be more efficient (in terms of the number of sampled vectors) than other algorithms for achieving the same value of $\mathbb{V}[\langle f , ; \mu_t^N \rangle]$.

\section{Conclusion}

In this work, we have rigorously shown CLT for a two-layer BNN trained
by variational inference with different SGD schemes. It appears that
the idealized SGD and the most-commonly used \emph{Bayes-by-Backprop}
SGD schemes have the same fluctuation behaviors. i.e. driven by a SPDE with a  $\mathfrak G$-process having
the same covariance structure, in addition to admitting the same
mean-field limit. Introduced in \cite{colt}, the less costly
\emph{Minimal VI} SGD scheme exhibits a different fluctuation
behavior, with a  $\mathfrak G$-process of different covariance structure, which
can be argued to lead to larger variances. Though, numerical
experiments show that the trade-off between computational complexity
and variance is still vastly in favour of the \emph{Minimal VI}
scheme. This opens the interesting perspective of exploring whether
additional practical improvements can be derived from the asymptotic
results at the mean-field level. This becomes even more intriguing and
a justified approach given that neural networks appear to reach such
limits rapidly.

\section*{Acknowledgements}
A.D. is grateful for the support received from the Agence Nationale de
la Recherche (ANR) of the French government through the program
"Investissements d'Avenir" (16-IDEX-0001 CAP 20-25) A.G. is supported
by the Institut Universtaire de France. M.M. acknowledges the support
of the the French ANR under the grant ANR-20-CE46-0007 (\emph{SuSa}
project). This work has been (partially) supported by the Project CONVIVIALITY ANR-23-CE40-0003 of the French National Research Agency (A.G, M.M.).  B.N. is supported by the grant IA20Nectoux from the Projet
I-SITE Clermont CAP 20-25. E.M. and T.H. acknowledge the support of
ANR-CHIA-002, "Statistics, computation and Artificial Intelligence";
Part of the work has been developed under the auspice of the Lagrange
Center for Mathematics and Calculus.

\bibliography{elbo_bib}
\bibliographystyle{alpha}

\newpage
\appendix

\section{Central Limit Theorem: proof of Theorem \ref{thm-clt-ideal}}
\label{sec-proof-clt}

 In all this section,  the $\{\theta^i_k, i\in \{1,\ldots,N\}\}$'s are  generated    by  the algorithm  \eqref{eq.algo-batch}, except in Section~\ref{sec-conv-G-process} which, we recall, is dedicated to the study of the convergence of the sequences of martingale  $(\sqrt N\mathbf M^N)_N$ (see \eqref{eq.MN}). Recall the definition of the $\sigma$-algebra $\mathcal F_k^N$ in \eqref{eq.Fk2}.  We also recall the following paramount result which aims at giving uniform bounds, see Lemma 17 in \cite{colt}   on the moments of the parameters $\{\theta^i_k, i\in \{1,\ldots,N\}\}$ up to iteration $\lfloor NT\rfloor$, for a fixed $T>0$.
\begin{lemma}\label{le.Bounds}
Assume  {\rm\textbf{A}}. Then, for all $T>0$ and all $p\ge 1$, there exists $C>0$ such that for all $N\ge1$, $i\in\{1,\dots,N\}$ and $0\leq k\leq \lfloor NT\rfloor$, $\mathbf E[|\theta_k^i|^p]\leq C$. 
\end{lemma}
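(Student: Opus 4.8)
The plan is to prove the bound by induction on $k$, exploiting the recursive structure of the update \eqref{eq.algo-batch} together with the crucial scaling factors $\kappa/N^2$ and $\kappa/N$ in front of the increments, which make each step a contraction-like perturbation of size $O(1/N)$. First I would fix $T>0$ and $p\ge 1$ (it suffices to treat even integers $p$, the general case following by Jensen), and write the update for a single neuron as $\theta_{k+1}^i = \theta_k^i + \frac{\kappa}{N} H_k^i$, where
\[
H_k^i = -\frac{1}{N}\sum_{j=1}^N\big(\phi(\theta_k^j,\mathsf Z^j_k,x_k)-y_k\big)\nabla_\theta\phi(\theta_k^i,\mathsf Z^i_k,x_k) - \nabla_\theta\mathscr D_{\mathrm{KL}}(q^1_{\theta^i_k}|P_0^1).
\]
Under \textbf{A2}--\textbf{A3}, $\phi$ and $\nabla_\theta\phi$ are bounded by $C(1+\mathfrak b(\mathsf Z^i_k))$ uniformly in $\theta$ and $x$ (using \textbf{A1}: $s$ is bounded with bounded derivatives, composed with $\Psi_\theta$ whose $\theta$-derivatives are controlled by $\mathfrak b$), so the first term of $H_k^i$ is bounded by $C(1+|y_k|)(1+\mathfrak b(\mathsf Z^i_k))$, which has all moments by \textbf{A1}--\textbf{A2}. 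The second term, by \eqref{eq.kl_1}, grows at most linearly: $|\nabla_\theta\mathscr D_{\mathrm{KL}}(q^1_\theta|P_0^1)| \le C(1+|\theta|)$ (the $\rho$-component is bounded since $g'/g$ and $g'g$ have at most linear growth). Hence $|H_k^i| \le C(1+|\theta_k^i|) + R_k^i$ where $R_k^i := C(1+|y_k|)(1+\mathfrak b(\mathsf Z^i_k))$ satisfies $\mathbf E[(R_k^i)^p]\le C_p$ uniformly, and $R_k^i$ is independent of $\mathcal F_k^N$ (hence of $\theta_k^i$) by \textbf{A2}, \textbf{A5}.

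Next I would set $u_k := \max_{1\le i\le N}\mathbf E[|\theta_k^i|^p]$ (finite for each $k$ by an easy induction since increments have all moments) and derive a Gronwall-type recursion. From $|\theta_{k+1}^i| \le |\theta_k^i| + \frac{\kappa}{N}\big(C(1+|\theta_k^i|)+R_k^i\big) = (1+\tfrac{C\kappa}{N})|\theta_k^i| + \tfrac{\kappa}{N}(C + R_k^i)$, raising to the power $p$ and taking expectations, using $(a+b)^p \le (1+\varepsilon)^{p-1}a^p + (1+\varepsilon^{-1})^{p-1}b^p$ with $\varepsilon = 1/N$, and using independence of $R_k^i$ from $\theta_k^i$, I get
\[
u_{k+1} \le \big(1+\tfrac{C\kappa}{N}\big)^p(1+\tfrac1N)^{p-1} u_k + \tfrac{C'}{N}.
\]
Since $\big(1+\tfrac{C\kappa}{N}\big)^p(1+\tfrac1N)^{p-1} \le 1 + \tfrac{C''}{N}$ for $N$ large, iterating gives $u_k \le e^{C'' k/N}(u_0 + C' k/N)$, and for $k\le\lfloor NT\rfloor$ this is bounded by $e^{C''T}(u_0 + C'T)$, which is the desired uniform bound (with $u_0 = \mathbf E[|\theta_0^1|^p] < \infty$ by \textbf{A4}, compact support).

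The main obstacle — and the reason one cannot simply invoke a standard SGD moment bound — is the combination of the unbounded gradient $\nabla_\theta\mathscr D_{\mathrm{KL}}$ (linear growth in $\theta$, noted explicitly in the paper as a source of difficulty) with the need for moments of \emph{all} orders $p$; one must check carefully that the cross-terms in the binomial expansion do not accumulate faster than $O(1/N)$ per step, which is exactly what the $(1+\varepsilon)$/$(1+\varepsilon^{-1})$ split with $\varepsilon\sim 1/N$ arranges. A secondary technical point is the uniform (in $\theta$, $x$, and in $N$) control of $\phi$ and $\nabla_\theta\phi$ through Assumption \textbf{A1}'s polynomial envelope $\mathfrak b$, and the verification that the resulting noise terms $R_k^i$ are genuinely independent of $\mathcal F_k^N$ so that the conditional expectation $\mathbf E[(R_k^i)^p\mid\mathcal F_k^N]$ is a deterministic constant — this is guaranteed by \textbf{A2} and \textbf{A5}. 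Everything else is routine Gronwall bookkeeping; I would refer to Lemma 17 of \cite{colt} for the analogous computation in the LLN setting, of which this is essentially a restatement.
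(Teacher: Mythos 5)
Your proposal is correct, and in fact the paper does not reprove this lemma at all: it simply recalls it from Lemma 17 of \cite{colt}, and your discrete Gronwall induction --- the linear-growth bound on $\nabla_\theta\mathscr D_{\mathrm{KL}}$ from \eqref{eq.kl_1}, the envelope $C\,\mathfrak b(\mathsf Z^i_k)$ for the data term from \textbf{I}, and the weighted power inequality with $\varepsilon=1/N$ keeping the per-step growth at $O(1/N)$ --- is exactly the standard argument behind that cited result. One minor remark: the independence of $R_k^i$ from $\mathcal F_k^N$ is not actually needed, since your splitting inequality holds pathwise and you only use the unconditional bound $\mathbf E[(R_k^i)^p]\le C_p$.
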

\begin{sloppypar}
Let us now recall some Sobolev embeddings which will be also used in the proof of  Theorem~\ref{thm-clt-ideal}. For $\mathfrak M,\mathfrak j>(d+1)/2$  and  $\mathfrak k,\mathfrak J \ge 0$, $\mathcal H^{\mathfrak M+\mathfrak J,\mathfrak k}(\mathbf R^{d+1})\hookrightarrow \mathcal C^{\mathfrak J ,\mathfrak k}(\mathbf R^{d+1})$ and $\mathcal H^{\mathfrak M+\mathfrak J ,\mathfrak k}(\mathbf R^{d+1})\hookrightarrow_{\mathrm{H.S.}}\mathcal H^{\mathfrak J ,\mathfrak k+\mathfrak j}(\mathbf R^{d+1})$ (see Section 2 in \cite{fernandez1997hilbertian}).  
Recall $\mathfrak J_3 = 4\lceil \frac{d+1}{2}\rceil+8$ and  $\mathfrak j_3= \lceil \frac{d+1}{2}\rceil +1$. Set $
\mathfrak J_0=\lceil \frac{d+1}{2}\rceil+3,\ \mathfrak J_1=2\lceil \frac{d+1}{2}\rceil+4, \ \mathfrak J_2=3\lceil \frac{d+1}{2}\rceil+6$, 
and 
$
  \mathfrak  j_2 = 2\lceil \frac{d+1}{2}\rceil +2,\ \mathfrak j_1 = 3\lceil \frac{d+1}{2}\rceil +4, \ \mathfrak j_0 = 4\lceil \frac{d+1}{2}\rceil +5$. 
Hence,  the following Hilbert-Schmidt embeddings hold: 
 $\mathcal H^{\mathfrak J_3-1\mathfrak ,\mathfrak j_3}(\mathbf R^{d+1})\hookrightarrow_{\mathrm{H.S.}}\mathcal H^{\mathfrak J_2,\mathfrak j_2}(\mathbf R^{d+1})$, $\mathcal H^{\mathfrak J_2,\mathfrak j_2}(\mathbf R^{d+1})\hookrightarrow_{\mathrm{H.S.}}\mathcal H^{\mathfrak J_1+1,\mathfrak j_1-1}(\mathbf R^{d+1})$, 
$\mathcal H^{\mathfrak J_1,\mathfrak j_1}(\mathbf R^{d+1})\hookrightarrow_{\mathrm{H.S.}}\mathcal H^{\mathfrak J_0,\mathfrak j_0}(\mathbf R^{d+1})$. 
One  also has the following continuous embeddings: 
$\mathcal H^{\mathfrak J_0,\mathfrak j_0}(\mathbf R^{d+1})\hookrightarrow \mathcal C^{2,\mathfrak j_0}(\mathbf R^{d+1})$ and $ \mathcal H^{\mathfrak M,\mathfrak j}(\mathbf R^{d+1})\hookrightarrow \mathcal H^{\mathfrak M,\mathfrak j+\mathfrak k}(\mathbf R^{d+1})$, where $ \mathfrak M,\mathfrak j,\mathfrak k\ge 0$. 
\end{sloppypar}

 We finally recall some useful inequality which will be used throughout this work (see the proof of  Lemma 1 in \cite{colt}) and which are direct consequences of \textbf{A}: for all $\theta\in \mathbf R^{d+1}$, $z\in\mathbf R^d$, and $(x,y)\in\mathsf X\times\mathsf Y$, it holds:
\begin{enumerate}
\item[\textbf I.] $|\phi(\theta,z,x)-y|\leq C$ and $|\nabla_\theta\phi(\theta,z,x)|\leq C|\mathrm J_\theta\Psi_\theta(z)|\leq   C\mathfrak b(z)$ (where $\mathrm J_\theta$ denotes  the Jacobian operator w.r.t.  $\theta$). 
 \end{enumerate}
 In addition,
 \begin{enumerate}
 \item[\textbf{II}.]  For all $x\in\mathsf X$, 
\begin{equation}\label{eq.Sigma}
 \mathfrak H(\cdot ,x): \theta \mapsto \int_{\mathbf R^d} \phi(\theta,z,x)\gamma(z)\di z=\langle \phi(\theta,\cdot,x),\gamma \rangle
\end{equation} 
 is smooth  and all its derivatives of non negative order   are uniformly bounded  over $\mathbf R^{d+1}$ w.r.t $x\in \mathsf X$. 
\end{enumerate}
Moreover,  for any multi-index  $\alpha\in\mathbf N^{d+1}$, (see Remark \ref{re.KL}), it holds for some $C>0$ and all $\theta\in \mathbf R^{d+1}$:
\begin{equation}\label{eq.boundKL}
|\partial_\alpha\mathscr D_{\mathrm{KL}}(q^1_\theta|P_0^1)|\le C(1+|\theta|) \text{ if } |\alpha|=1 \text{ and } |\partial_\alpha\mathscr D_{\mathrm{KL}}(q^1_\theta|P_0^1)|\le C \text{ for } |\alpha|\ge 2.
\end{equation}

\subsection{Relative compactness of the fluctuation  sequence $(\eta^N)_{N\ge1}$}
Recall that the fluctuation process is defined by $\eta^N: t \in \mathbf R_+\mapsto  \sqrt N(\mu^N_t-\bar\mu_t)$, $N\ge 1$. The aim of this section is to prove the following relative compactness result on the sequence $(\eta^N)_{N\ge1}$. 
\begin{proposition}\label{p-rc-eta}
Assume  {\rm\textbf{A}}. 
Then,  $(\eta^N)_{N\ge1}$ is relatively compact in $\mathcal D(\mathbf R_+,\mathcal H^{-\mathfrak J_3+1,\mathfrak j_3}(\mathbf R^{d+1}))$. 
\end{proposition}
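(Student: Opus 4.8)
The plan is to use the classical tightness criterion for c\`adl\`ag processes valued in a separable Hilbert space --- compact containment together with an Aldous--Rebolledo estimate on the modulus of continuity, as in \cite{jmlr,fernandez1997hilbertian} --- combined with the chain of Hilbert--Schmidt Sobolev embeddings recalled above, which provides the compact embedding needed to pass from a finer space in which $\eta^N$ is uniformly bounded to the target space $\mathcal H^{-\mathfrak J_3+1,\mathfrak j_3}(\mathbf R^{d+1})$. The starting point is a semimartingale decomposition of $t\mapsto\langle f,\eta^N_t\rangle$ for $f$ in the unit ball of $\mathcal H^{\mathfrak J_3,\mathfrak j_3}(\mathbf R^{d+1})$: summing the one-step increments of \eqref{eq.algo-batch}, Taylor-expanding $f$ to second order, subtracting the drift of the limit equation \eqref{eq.P2}, and splitting each step into its $\mathcal F^N_k$-conditional mean and a martingale difference, one obtains $\langle f,\eta^N_t\rangle=\langle f,\eta^N_0\rangle+\int_0^t\langle f,\Lambda^N_s\rangle\,\di s+\sqrt N\,\mathbf M^N_t[f]+\mathsf r^N_t[f]$, where $\Lambda^N_s$ depends \emph{affinely} on $\eta^N_s$ with coefficients built from $\phi$, $\nabla_\theta\phi$, $\nabla_\theta\mathscr D_{\mathrm{KL}}$, $\bar\mu_s$, $\mu^N_s$, $\pi$; $\sqrt N\,\mathbf M^N_t$ is the rescaled martingale of \eqref{eq.MN}; and $\mathsf r^N_t$ collects the Taylor remainders. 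Using items \textbf{I} and \textbf{II}, the growth bound \eqref{eq.boundKL}, Lemma \ref{le.Bounds}, and the continuous embedding $\mathcal H^{\mathfrak J_0,\mathfrak j_0}\hookrightarrow\mathcal C^{2,\mathfrak j_0}$, each term can be estimated in the relevant weighted-Sobolev dual norm uniformly over $f$ in the unit ball; in particular $\mathbf E\,\|\mathsf r^N_t\|^2_{\mathcal H^{-\mathfrak J_0,\mathfrak j_0}}\le C_T/N$, each per-neuron remainder being $O(|\Delta\theta^i_k|^2)=O(N^{-2})$ up to a weight $\mathfrak b(\mathsf Z^i_k)^2$, with $O(N)$ neurons, $O(N)$ steps, a prefactor $N^{-1}$ from $\mu^N$ and a factor $\sqrt N$ from the rescaling.

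The second step is to prove, for every $T>0$, the uniform bound $\sup_{N\ge1}\mathbf E\big[\sup_{t\le T}\|\eta^N_t\|^2_{\mathcal H^{-\mathfrak J_2,\mathfrak j_2}}\big]<+\infty$. For the initial term, $\eta^N_0=N^{-1/2}\sum_{i=1}^N(\delta_{\theta^i_0}-\mu_0)$ is a normalized sum of i.i.d.\ centered $\mathcal H^{-\mathfrak J_2,\mathfrak j_2}$-valued variables with finite second moment (by \textbf{A4} and the series representation of the dual norm), so $\mathbf E\|\eta^N_0\|^2_{\mathcal H^{-\mathfrak J_2,\mathfrak j_2}}=\mathbf E\|\delta_{\theta^1_0}-\mu_0\|^2_{\mathcal H^{-\mathfrak J_2,\mathfrak j_2}}$ is bounded uniformly in $N$. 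For the martingale term, Doob's maximal inequality together with the trace of its predictable bracket --- finite thanks to the Hilbert--Schmidt embedding $\mathcal H^{\mathfrak J_2,\mathfrak j_2}\hookrightarrow_{\mathrm{H.S.}}\mathcal H^{\mathfrak J_1+1,\mathfrak j_1-1}$ and the pointwise bounds above --- gives $\mathbf E\sup_{t\le T}\|\sqrt N\,\mathbf M^N_t\|^2_{\mathcal H^{-\mathfrak J_2,\mathfrak j_2}}\le C_T$. For the drift term, the operator norm of the integrand is bounded by $C_T(1+\|\eta^N_s\|_{\mathcal H^{-\mathfrak J_2,\mathfrak j_2}})$ uniformly in $N$, so a Gr\"onwall argument applied to $u\mapsto\mathbf E\sup_{s\le u}\|\eta^N_s\|^2_{\mathcal H^{-\mathfrak J_2,\mathfrak j_2}}$ closes the estimate.

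With this in hand, one verifies the tightness criterion in $\mathcal D(\mathbf R_+,\mathcal H^{-\mathfrak J_3+1,\mathfrak j_3}(\mathbf R^{d+1}))$. Compact containment follows from the previous bound and Markov's inequality, since bounded subsets of $\mathcal H^{-\mathfrak J_2,\mathfrak j_2}(\mathbf R^{d+1})$ are relatively compact in $\mathcal H^{-\mathfrak J_3+1,\mathfrak j_3}(\mathbf R^{d+1})$ --- dualizing $\mathcal H^{\mathfrak J_3-1,\mathfrak j_3}\hookrightarrow_{\mathrm{H.S.}}\mathcal H^{\mathfrak J_2,\mathfrak j_2}$ yields the Hilbert--Schmidt, hence compact, embedding $\mathcal H^{-\mathfrak J_2,\mathfrak j_2}\hookrightarrow_{\mathrm{H.S.}}\mathcal H^{-\mathfrak J_3+1,\mathfrak j_3}$. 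For the Aldous condition, given a bounded stopping time $\tau$ and $\delta>0$, the decomposition gives $\mathbf E\big[\|\eta^N_{(\tau+\delta)\wedge T}-\eta^N_{\tau\wedge T}\|^2_{\mathcal H^{-\mathfrak J_3+1,\mathfrak j_3}}\big]\le C_T(\delta^2+\delta+N^{-1})$: the drift contributes $O(\delta^2)$ by Cauchy--Schwarz in time and the uniform moment bound; the martingale contributes $O(\delta)$ via Doob and the fact that its bracket increment over $[\tau,\tau+\delta]$ is a sum of $\lfloor N\delta\rfloor$ conditional variances each $O(N^{-1})$; and the remainder contributes $O(N^{-1})$. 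Since moreover each neuron moves by $O(N^{-1})$ at each step, the jumps of $\eta^N$ are $O(N^{-1/2})$ uniformly in $t$, so $\sup_{t\le T}\|\eta^N_t-\eta^N_{t-}\|\to0$; the tightness theorem then yields relative compactness of $(\eta^N)_{N\ge1}$ and, in addition, that every limit point is supported on $\mathcal C(\mathbf R_+,\mathcal H^{-\mathfrak J_3+1,\mathfrak j_3}(\mathbf R^{d+1}))$.

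The main obstacle is the uniform moment bound of the second step, through the unbounded factor $\nabla_\theta\mathscr D_{\mathrm{KL}}(q^1_\cdot|P_0^1)$, which grows linearly in $\theta$ (see \eqref{eq.kl_1}). Compared with the bounded-coefficient setting of \cite{jmlr}, controlling the drift forces one to keep the Sobolev weight exponents large enough to absorb this growth and to invoke the uniform-in-$k$ moment estimates of Lemma \ref{le.Bounds} at each occurrence, while retaining enough regularity both for the Taylor expansion (the embedding into $\mathcal C^{2,\mathfrak j_0}$) and for the Hilbert--Schmidt embeddings that make the martingale bracket trace-class --- this is precisely why the chain $\mathfrak J_0<\mathfrak J_1<\mathfrak J_2<\mathfrak J_3$ with the coupled weights $\mathfrak j_0>\mathfrak j_1>\mathfrak j_2>\mathfrak j_3$ is introduced. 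A secondary delicate point is the bookkeeping of the remainder $\mathsf r^N_t$, which must remain negligible uniformly over $f$ in the Sobolev ball after the multiplication by $\sqrt N$.
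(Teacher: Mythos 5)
Your overall architecture (a uniform bound in a finer weighted-Sobolev dual, Hilbert--Schmidt/Schauder compactness to pass to $\mathcal H^{-\mathfrak J_3+1,\mathfrak j_3}(\mathbf R^{d+1})$, and an increment estimate giving tightness of the scalar projections) matches the paper's, which uses Th.~4.6 of Jakubowski rather than an Aldous criterion; that difference is immaterial. The genuine gap is in your second step, which is the heart of the proof. You assert that the drift in the pre-limit equation has ``operator norm bounded by $C_T(1+\|\eta^N_s\|_{\mathcal H^{-\mathfrak J_2,\mathfrak j_2}})$ uniformly in $N$'' and close with Gr\"onwall in that single norm. This is not true: testing the transport term requires pairing $\nabla_\theta f\cdot\mathfrak H(\cdot,x)$ with $\eta^N_s$, and this map loses one derivative (for $f\in\mathcal H^{\mathfrak J+1,\mathfrak j}$ one only gets $\nabla_\theta f\cdot\mathfrak H(\cdot,x)\in\mathcal H^{\mathfrak J,\mathfrak j}$, see \eqref{eq-nabla-f-nablaphi-H}), while the KL term loses a derivative and a weight ($\nabla_\theta f\cdot\nabla_\theta\mathscr D_{\mathrm{KL}}(q^1_\cdot|P_0^1)\in\mathcal H^{\mathfrak J-1,\mathfrak j+1}$, by \eqref{eq.boundKL}). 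Hence the drift maps $\mathcal H^{-\mathfrak J_2,\mathfrak j_2}$ into a strictly larger dual space, not into itself, and your Gr\"onwall loop does not close; ``keeping the weight exponents large enough'' does not repair this, because the loss recurs at every application of the estimate. A secondary problem is that you run Gr\"onwall on $\mathbf E\sup_{s\le u}\|\eta^N_s\|^2$, whereas the cancellation structure that makes the drift harmless is only available for $t\mapsto\mathbf E\|\cdot\|^2_t$ without the supremum.

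What actually closes the estimate in the paper is a two-stage argument on the decomposition $\eta^N=\Upsilon^N+\Theta^N$, where $\Theta^N=\sqrt N(\bar\mu^N-\bar\mu)$ is built from an auxiliary i.i.d.\ particle system solving {\rm $\mathbf{(S)}$} (Lemma \ref{le.PS}); the independence of these particles yields, e.g., \eqref{bound_<phi,barmu-barmu>}, which has no analogue in your direct decomposition. First, $\sup_{t\le T}\mathbf E[\|\Upsilon^N_t\|^2_{\mathcal H^{-\mathfrak J_1,\mathfrak j_1}}+\|\Theta^N_t\|^2_{\mathcal H^{-\mathfrak J_1,\mathfrak j_1}}]<+\infty$ (Lemma \ref{lem-E[eta]<infty}) is proved by expanding $\langle f_a,\Upsilon^N_t\rangle^2$ along the dynamics, summing over an orthonormal basis, and using same-norm quadratic-form estimates for the loss-of-derivative operators: $|\langle\Upsilon,\mathbf T_x^*\Upsilon\rangle_{\mathcal H^{-\mathfrak J_1,\mathfrak j_1}}|\le C\|\Upsilon\|^2_{\mathcal H^{-\mathfrak J_1,\mathfrak j_1}}$ (Lemma B.2 of \cite{jmlr}) and, for the unbounded KL drift, the integration-by-parts estimate of Lemma \ref{lem_B1_avec_poids}; only then is Gr\"onwall applicable. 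Second, the $\mathbf E\sup_{t\le T}$ bound (Lemma \ref{lem-cc-eta}) is obtained at the coarser level $(\mathfrak J_2,\mathfrak j_2)$ not by Gr\"onwall but by Jensen plus the already-established level-$(\mathfrak J_1,\mathfrak j_1)$ bound and the Hilbert--Schmidt embeddings, the one-derivative loss being absorbed because the test functions now range over a strictly smaller space. Your proposal contains neither the auxiliary system, nor the adjoint/integration-by-parts estimates, nor this two-level norm ladder, so the central moment bound --- which you yourself identify as the main obstacle --- is not actually established.
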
 
\noindent
 We mention that Proposition \ref{p-rc-eta} also holds when the $\{\theta^i_k, i\in \{1,\ldots,N\}\}$'s are  generated    by  the two other algorithms \eqref{eq.algo-ideal} and \eqref{eq.algo-z1z2}. Before starting the proof of Proposition \ref{p-rc-eta}, we need to introduce an auxiliary system of particles, this is the purpose of the next lemma.
 
 For any $\mu \in \mathcal P(\mathcal C(\mathbf R_+,\mathbf R^{d+1}))$, we consider  $\mathscr P_\mu \in \mathcal P(\mathcal C(\mathbf R_+,\mathbf R^{d+1}))$ defined as the law of the process  $(X_t)_{t\ge 0}$ solution to
$$
  { \rm  \mathbf{(E_\mu)} }
 \begin{cases}
  \di  X_t=-\kappa\int_{\mathsf X\times\mathsf Y}\langle\phi(\cdot,\cdot,x)-y, \mu_t\otimes\gamma\rangle\langle\nabla_\theta\phi( X_t,\cdot,x),\gamma\rangle\pi(\di x,\di y)\di t -\kappa\nabla_\theta\mathscr D_{\mathrm{KL}}(q^1_{X_t}|P_0^1) \di t,\\
 X_0\sim \mu_0.
  \end{cases}
$$
We then denote by $\mathscr F(\mu)$ the function   $t\in \mathbf R_+\mapsto (\mathscr P_\mu)_t= \mathscr P_\mu\circ \pi_t^{-1}$ the law $(X_s)_{s\ge 0}$ at time $t$, where $\pi_t$ is the natural projection from $\mathcal C(\mathbf R_+,\mathbf R^{d+1})$ to $\mathbf R$ define by $\pi_t(f)=f(t)$.

 \begin{lemma}\label{le.PS}
 Assume {\rm \textbf{A}}. Then, $\bar \mu=\mathscr F( \bar  \mu)$ (where $\bar \mu$ is given by Theorem \ref{th.LLN}), i.e. for the solution  $(\bar X_t)_{t\ge 0}$ of  {\rm $\mathbf{( E_{\bar \mu})}$}, it holds $\bar X_t\sim \bar\mu_t$ for all $t\ge 0$. 
\end{lemma}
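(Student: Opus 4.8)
The goal is to identify the law $\mathscr F(\bar\mu)$ of the McKean--Vlasov-type ODE $\mathbf{(E_{\bar\mu})}$ with the limiting measure $\bar\mu$ from Theorem \ref{th.LLN}. The natural strategy is to show that $t\mapsto (\mathscr P_{\bar\mu})_t$ solves the same measure-valued evolution equation \eqref{eq.P2} as $\bar\mu$, and then invoke the uniqueness statement already provided by Theorem \ref{th.LLN} (uniqueness in $\mathcal C(\mathbf R_+,\mathcal P_1(\mathbf R^{d+1}))$). So the proof has three ingredients: (i) $\mathbf{(E_{\bar\mu})}$ has a well-defined solution whose time-marginals form a continuous $\mathcal P_1$-valued curve; (ii) these marginals satisfy \eqref{eq.P2}; (iii) conclude by uniqueness.

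For (i): existence and uniqueness of the solution $(\bar X_t)_{t\ge0}$ to $\mathbf{(E_{\bar\mu})}$ follows from a standard Cauchy--Lipschitz / Picard argument once one checks that the drift is locally Lipschitz and has at most linear growth in the state variable. The drift is the sum of two terms: the integral term $-\kappa\int_{\mathsf X\times\mathsf Y}\langle\phi(\cdot,\cdot,x)-y,\bar\mu_t\otimes\gamma\rangle\langle\nabla_\theta\phi(X_t,\cdot,x),\gamma\rangle\pi(\di x,\di y)$, which by inequality \textbf{I} (with $|\phi-y|\le C$ and $|\nabla_\theta\phi(\theta,z,x)|\le C\mathfrak b(z)$, $\langle\mathfrak b,\gamma\rangle<\infty$) together with Assumption \textbf{A1} (smoothness of $\theta\mapsto\Psi_\theta(z)$ and the bounds \eqref{jac_T_bounded}) is bounded with bounded derivatives in $\theta$, uniformly in the measure argument; and the KL term $-\kappa\nabla_\theta\mathscr D_{\mathrm{KL}}(q^1_{X_t}|P_0^1)$, which by \eqref{eq.boundKL} grows at most linearly in $|X_t|$ and is smooth (in fact affine in $m$, with a smooth bounded-derivative $\rho$-component). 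Linear growth plus local Lipschitzness gives global existence, uniqueness, non-explosion, and continuity of $t\mapsto \bar X_t$; moment bounds on $\bar X_t$ (hence $t\mapsto(\mathscr P_{\bar\mu})_t\in\mathcal C(\mathbf R_+,\mathcal P_1)$) follow from Grönwall applied to $\mathbf E|\bar X_t|^2$ or $\mathbf E|\bar X_t|$, using $\mu_0$ compactly supported (Assumption \textbf{A4}).

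For (ii): apply Itô's formula (here just the chain rule, since $\mathbf{(E_{\bar\mu})}$ is a deterministic-coefficient ODE driven by no noise) to $f(\bar X_t)$ for $f\in\mathcal C^\infty_b(\mathbf R^{d+1})$, take expectations, and use Fubini (justified by the integrability bounds from (i) and inequality \textbf{I}) to get
\begin{align*}
\mathbf E[f(\bar X_t)]-\mathbf E[f(\bar X_0)]
=&-\kappa\int_0^t\int_{\mathsf X\times\mathsf Y}\langle\phi(\cdot,\cdot,x)-y,\bar\mu_s\otimes\gamma\rangle\,\mathbf E\big[\nabla_\theta f(\bar X_s)\cdot\langle\nabla_\theta\phi(\bar X_s,\cdot,x),\gamma\rangle\big]\pi(\di x,\di y)\di s\\
&-\kappa\int_0^t\mathbf E\big[\nabla_\theta f(\bar X_s)\cdot\nabla_\theta\mathscr D_{\mathrm{KL}}(q^1_{\bar X_s}|P_0^1)\big]\di s.
\end{align*}
Writing $\nu_t:=(\mathscr P_{\bar\mu})_t = \mathrm{Law}(\bar X_t)$ and noting $\nu_0=\mu_0$ (Assumption \textbf{A4}), this is exactly \eqref{eq.P2} with $\bar\mu$ replaced by $\nu$ in the role of the "test" marginal, while the coefficient measures still read $\bar\mu_s$ — i.e. $\nu$ solves the linear (in $\nu$) equation obtained from \eqref{eq.P2} by freezing the nonlinearity at $\bar\mu$. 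But $\bar\mu$ itself solves \eqref{eq.P2}, hence $\bar\mu$ solves this same frozen linear equation. So both $\nu$ and $\bar\mu$ are $\mathcal C(\mathbf R_+,\mathcal P_1)$-solutions of one and the same linear measure-valued ODE with the same initial datum $\mu_0$.

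For (iii): conclude $\nu=\bar\mu$. The cleanest route is to observe that the uniqueness part of Theorem \ref{th.LLN} is proved for the nonlinear equation \eqref{eq.P2}; its proof (a Grönwall estimate in a suitable Wasserstein or dual-Sobolev distance) applies verbatim — indeed more easily — to the frozen linear version, giving uniqueness of $\mathcal C(\mathbf R_+,\mathcal P_1)$-solutions there. Since $\bar\mu$ is such a solution, we get $\nu_t=\bar\mu_t$ for all $t$, i.e. $\bar X_t\sim\bar\mu_t$, which is the claim $\bar\mu=\mathscr F(\bar\mu)$.

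\textbf{Main obstacle.} The one genuinely non-routine point is handling the unbounded KL drift $\nabla_\theta\mathscr D_{\mathrm{KL}}(q^1_\theta|P_0^1)$: it grows linearly in $\theta$ (and in the $\rho$-variable involves $g'(\rho)g(\rho)$, which is also at most linear but must be controlled near $\rho\to-\infty$ where $g(\rho)/g'(\rho)$ could a priori misbehave — here one uses that $g(\rho)=\log(1+e^\rho)$ so $g'/g$ stays bounded). This unboundedness is precisely the first of the "two difficulties" flagged after Theorem \ref{thm-clt-ideal}; for Lemma \ref{le.PS} it only affects steps (i)–(ii), where one must make sure the moment bounds and the Fubini/dominated-convergence arguments go through despite the linear-growth term, which they do by Grönwall once the a priori moment bound of (i) is in hand. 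Everything else is a standard well-posedness-plus-uniqueness argument, leaning on Theorem \ref{th.LLN} for the final identification.
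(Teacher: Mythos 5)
Your proposal is correct and follows essentially the same route as the paper: establish a priori bounds giving that the marginals of the solution to $\mathbf{(E_{\bar\mu})}$ form a curve in $\mathcal C(\mathbf R_+,\mathcal P_1(\mathbf R^{d+1}))$, show via the chain rule that this curve weakly solves the continuity equation with drift frozen at $\bar\mu$ and initial datum $\mu_0$, and conclude by uniqueness of $\mathcal C(\mathbf R_+,\mathcal P_1)$-solutions of this linear equation, of which $\bar\mu$ is also a solution. The only minor difference is how that linear uniqueness is justified: the paper cites the analysis of Section B.3.2 in \cite{colt} (based on Th. 5.34 in \cite{villani2021topics}), whereas you propose to rerun the Gronwall-type argument behind Theorem \ref{th.LLN} on the frozen equation, which is an equally legitimate way to close the argument.
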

\begin{proof}
 We claim that $\mathscr F(\mu)\in  \mathcal C(\mathbf R_+,\mathcal P_1(\mathbf R^{d+1}))$, for all $\mu \in \mathcal P(\mathcal C(\mathbf R_+,\mathbf R^{d+1}))$. Let us prove this claim. 
 Let  $ (X_t)_{t\ge 0}$ be the  solution of {\rm  $\mathbf{(E_\mu)}$}. Then, by \textbf{I},  \textbf{II},  and   \textbf{A}, together with~\eqref{eq.boundKL},  there exists $c_0>0$ such that a.s. for all  $t\ge0$, 
$$| X_t |\le c_0(1+t)+c_0 \int_0^t| X_s|\di s.$$
Therefore, a.s., for all $T>0$ and $0\le t\le T$, by  Gronwall lemma, one has  $| X_t|\le c_0 (1+T)e^{c_0T}$. 
With this bound, one deduces  that there exists $c_1>0$ such that a.s.  for all $0\le s\le t\le T$, $| X_t- X_s|\le c_1(1+T)e^{c_1T}(t-s)$, which proves  the claim.

 Let $\mu\in\mathcal P(\mathbf R^{d+1})$. Define $\mathscr V[\mu]:\mathbf R^{d+1}\to\mathbf R^{d+1}$ by:
\begin{align}
\label{def V[mu]}
&\mathscr V[\mu](\theta)=
-\kappa\int_{\mathsf X\times\mathsf Y}\langle\phi(\cdot,\cdot,x)-y,\mu\otimes\gamma\rangle\langle\nabla_\theta\phi(\theta,\cdot,x),\gamma\rangle\pi(\di x,\di y)-\kappa \nabla_\theta \mathscr D_{{\rm KL}}(q_{\theta}^1|P_0^1).
\end{align}
By  the analysis carried out in Section B.3.2 in \cite{colt} (based on Th. 5.34 in~\cite{villani2021topics}),   $\bar \mu$  is the unique  \textit{weak solution}\footnote{See Section 4.1.2 in \cite{santambrogio2015optimal} for the definition.}   in $ \mathcal C(\mathbf R_+,\mathcal P_1(\mathbf R^{d+1}))$ of the measure-valued equation
\begin{align}\label{eq.measure}
\begin{cases}
\partial_t \mu^*_t=\mathrm{div}(\mathscr V[\bar\mu_t] \mu^*_t)\\
 \mu^*_0=\mu_0.
\end{cases}
\end{align}
 On the other hand, using  the equality $g( X_t)-g(X_0)= \int_0^t \nabla g( X_u) \cdot \frac{\di }{\di t} X_u \di u$ valid for   any $\mathcal C^1$ function $g$ with compact support,  together with {\rm  $\mathbf{(E_\mu)}$}, we deduce  that $\mathscr F( \bar  \mu)$ is a weak solution of  \eqref{eq.measure}. By uniqueness, $\bar \mu=\mathscr F( \bar  \mu)$. The proof is complete.  
\end{proof}

 Let us now  introduce $N$ independent processes $\bar X^i$, $i\in \{1,\ldots,N\}$, solution to {\rm $\mathbf{( E_{\bar \mu})}$}. It then  holds thanks to Lemma \ref{le.PS}, for all $i\in \{1,\ldots,N\}$ and $t\ge 0$:
$$
  { \rm  \mathbf{(S)} }
 \begin{cases}
 \di \bar X^i_t=-\kappa\int_{\mathsf X\times\mathsf Y}\langle\phi(\cdot,\cdot,x)-y,\bar\mu_t\otimes\gamma\rangle\langle\nabla_\theta\phi(\bar X^i_t,\cdot,x),\gamma\rangle\pi(\di x,\di y)\di t -\kappa\nabla_\theta\mathscr D_{\mathrm{KL}}(q^1_{\bar X^i_t}|P_0^1) \di t,\\
 \bar X^i_0\sim \mu_0, \ \bar X^i_t\sim \bar\mu_t.
 \end{cases}
$$ 
 Their empirical distribution is denoted by $\bar\mu_t^N=\frac 1N\sum_{i=1}^N\delta_{\bar X_t^i}$, for  $N\ge 1$ and $t\in \mathbf R_+$. 
Recall   that from the proof of Lemma \ref{le.PS}, there exists $c_1>0$ such that a.s.  for all $0\le s\le t\le T$ and all $i\in \{1,\dots, N\}$: 
\begin{equation}\label{eq.Xt}
|  \bar X^i_t|\le c_1(1+T)e^{c_1T} \text{and } | \bar X^i_t-  \bar X^i_s|\le c_1(1+T)e^{c_1T}(t-s). 
\end{equation}

 We now decompose $\eta^N$ using the  following two processes: 
\begin{equation}
\Upsilon^N:=\sqrt N(\mu^N-\bar\mu^N)\ \text{and}\ \Theta^N:=\sqrt N(\bar\mu^N-\bar\mu). 
\end{equation}
We denote by $\mathcal C^{\mathfrak J,\mathfrak j}(\mathbf R^{d+1})^*$  the dual space of $\mathcal C^{\mathfrak J,\mathfrak j}(\mathbf R^{d+1})$ ($\mathfrak J,\mathfrak j\ge 0$).  One the one hand, $\bar\mu^N\in\mathcal C(\mathbf R_+,\mathcal C^{1,\mathfrak j}(\mathbf R^{d+1})^*)$, $\mathfrak j\ge 0$. This  is indeed a direct consequence of   \eqref{eq.Xt}.  On the other hand,  for any $\mathfrak j\ge0$, $\mu^N\in\mathcal D(\mathbf R_+,\mathcal C^{0,\mathfrak j}(\mathbf R^{d+1})^*)$.  Hence, it holds for all $\mathfrak j\ge 0$ a.s. 
\begin{equation}\label{eq.up-app}
\Upsilon^N\in \mathcal D(\mathbf R_+,\mathcal C^{1,\mathfrak j}(\mathbf R^{d+1})^*).
\end{equation} 
Concerning $\Theta^N$, we have the following result. 


 \begin{lemma}\label{Theta^N-continu}
 Assume {\rm \textbf{A}}.  Then, for any $\mathfrak J>1+ (d+1)/2$ and $ \mathfrak k \ge 0$, $\bar\mu^N, \bar\mu \in \mathcal C(\mathbf R_+,\mathcal H^{-\mathfrak J,\mathfrak k}(\mathbf R^{d+1}))$.  Therefore, a.s. $\Theta^N \in \mathcal C(\mathbf R_+,\mathcal H^{-\mathfrak J, \mathfrak k}(\mathbf R^{d+1}))$. Finally, \eqref{eq.P2} also holds for any test function $f\in \mathcal H^{\mathfrak J,\mathfrak k}(\mathbf R^{d+1})$ ($\mathfrak J>1+ (d+1)/2$ and $ \mathfrak k \ge 0$).
 \end{lemma}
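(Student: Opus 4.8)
The plan is to derive all the assertions from two ingredients already available: the continuous Sobolev embeddings recalled above, namely $\mathcal H^{\mathfrak J,\mathfrak k}(\mathbf R^{d+1})\hookrightarrow\mathcal C^{0,\mathfrak k}(\mathbf R^{d+1})$ for $\mathfrak J>(d+1)/2$ and $\mathcal H^{\mathfrak J,\mathfrak k}(\mathbf R^{d+1})\hookrightarrow\mathcal C^{1,\mathfrak k}(\mathbf R^{d+1})$ for $\mathfrak J>1+(d+1)/2$; and the deterministic a priori bounds \eqref{eq.Xt}, which say that on any $[0,T]$ each trajectory $\bar X^i$ stays in a fixed ball and is Lipschitz in $t$ there, with constants depending only on $T$. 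First I would fix $t$ and view $\bar\mu^N_t$ as the linear functional $f\mapsto\langle f,\bar\mu^N_t\rangle=\tfrac1N\sum_{i=1}^N f(\bar X^i_t)$ on $\mathcal H^{\mathfrak J,\mathfrak k}(\mathbf R^{d+1})$: the $\mathcal C^{0,\mathfrak k}$-embedding gives $|f(\theta)|\le C\|f\|_{\mathcal H^{\mathfrak J,\mathfrak k}}(1+|\theta|^{\mathfrak k})$, so by \eqref{eq.Xt} this functional has norm bounded uniformly for $t\in[0,T]$, whence $\bar\mu^N_t\in\mathcal H^{-\mathfrak J,\mathfrak k}(\mathbf R^{d+1})$. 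The same estimate applied to a single copy $\bar X:=\bar X^1$ shows $\langle f,\bar\mu_t\rangle=\mathbf E[f(\bar X_t)]$ is a well-defined bounded functional of $f$, i.e. $\bar\mu_t\in\mathcal H^{-\mathfrak J,\mathfrak k}(\mathbf R^{d+1})$ (the same applies to $\mu_0$, which is compactly supported by \textbf{A4}).

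Next I would establish the time-continuity. For $0\le s\le t\le T$, write $\langle f,\bar\mu^N_t\rangle-\langle f,\bar\mu^N_s\rangle=\tfrac1N\sum_i\big(f(\bar X^i_t)-f(\bar X^i_s)\big)$ and apply the mean value theorem on the segment joining $\bar X^i_s$ and $\bar X^i_t$, which stays in a fixed ball by \eqref{eq.Xt}; bounding $|\nabla f|$ on that ball by the $\mathcal C^{1,\mathfrak k}$-embedding and then $|\bar X^i_t-\bar X^i_s|$ by $C_T(t-s)$ via \eqref{eq.Xt}, and taking the supremum over $\|f\|_{\mathcal H^{\mathfrak J,\mathfrak k}}\le1$, gives $\|\bar\mu^N_t-\bar\mu^N_s\|_{\mathcal H^{-\mathfrak J,\mathfrak k}}\le C_T(t-s)$. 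Hence $t\mapsto\bar\mu^N_t$ is locally Lipschitz, thus in $\mathcal C(\mathbf R_+,\mathcal H^{-\mathfrak J,\mathfrak k}(\mathbf R^{d+1}))$; the same computation carried out inside an expectation (again legitimate by \eqref{eq.Xt}) yields $\bar\mu\in\mathcal C(\mathbf R_+,\mathcal H^{-\mathfrak J,\mathfrak k}(\mathbf R^{d+1}))$. Consequently $\Theta^N=\sqrt N(\bar\mu^N-\bar\mu)$ is a difference of two $\mathcal C(\mathbf R_+,\mathcal H^{-\mathfrak J,\mathfrak k})$-valued maps, hence lies a.s. in that space.

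For the last claim, I would extend \eqref{eq.P2} from $\mathcal C^\infty_b(\mathbf R^{d+1})$ to $f\in\mathcal H^{\mathfrak J,\mathfrak k}(\mathbf R^{d+1})$ by density: since $\mathcal H^{\mathfrak J,\mathfrak k}$ is the $\|\cdot\|_{\mathcal H^{\mathfrak J,\mathfrak k}}$-closure of $\mathcal C^\infty_c(\mathbf R^{d+1})\subset\mathcal C^\infty_b(\mathbf R^{d+1})$, it is enough to check that every term of \eqref{eq.P2}, as a function of $f$, is a bounded affine functional on $\mathcal H^{\mathfrak J,\mathfrak k}$ and then take $f_n\to f$ with $f_n\in\mathcal C^\infty_c$. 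The left-hand side is covered by the previous paragraphs. For the first double integral, I would bound $|\langle\phi(\cdot,\cdot,x)-y,\bar\mu_s\otimes\gamma\rangle|\le C$ (item \textbf{I}) and $|\langle\nabla_\theta f\cdot\nabla_\theta\phi(\cdot,\cdot,x),\bar\mu_s\otimes\gamma\rangle|\le C\|f\|_{\mathcal H^{\mathfrak J,\mathfrak k}}\langle1+|\theta|^{\mathfrak k},\bar\mu_s\rangle\langle\mathfrak b,\gamma\rangle$ (using the $\mathcal C^{1,\mathfrak k}$-embedding and $|\nabla_\theta\phi(\theta,z,x)|\le C\mathfrak b(z)$ from \textbf{I}), then integrate over the compact set $\mathsf X\times\mathsf Y$ and over $s\in[0,t]$, using that $\bar\mu_s$ is supported in a fixed ball for $s\le t$ by \eqref{eq.Xt}. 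For the Kullback--Leibler term, I would combine the $\mathcal C^{1,\mathfrak k}$-embedding with \eqref{eq.boundKL} ($|\nabla_\theta\mathscr D_{\mathrm{KL}}(q^1_\theta|P_0^1)|\le C(1+|\theta|)$) to bound the integrand by $C\|f\|_{\mathcal H^{\mathfrak J,\mathfrak k}}\langle(1+|\theta|^{\mathfrak k})(1+|\theta|),\bar\mu_s\rangle$, finite and bounded on $[0,t]$. The density argument then concludes.

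I do not expect a deep obstacle here: the statement is essentially a matching of the Sobolev embeddings against the a priori estimates \eqref{eq.Xt}. The one point requiring care is that all the measures involved ($\bar\mu^N_t$, $\bar\mu_t$, $\mu_0$) are in fact \emph{compactly supported}, with radius depending only on the time horizon --- this is what makes $\langle\cdot,\bar\mu_s\rangle$ act continuously on $\mathcal H^{\mathfrak J,\mathfrak k}$ for every weight $\mathfrak k\ge0$, despite the polynomial growth allowed in that space, and it comes from the Gronwall bound in the proof of Lemma \ref{le.PS}. Relatedly, one must check that the two unbounded quantities in \eqref{eq.P2}, namely $\nabla_\theta\mathscr D_{\mathrm{KL}}$ (linear growth in $\theta$) and $\nabla_\theta\phi$ (growth $\mathfrak b(z)$ in $z$, integrable against $\gamma$ to all orders by \textbf{A1}), are tame enough to be absorbed by the above bounds.
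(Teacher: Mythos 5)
Your proof is correct, and for the empirical part $\bar\mu^N$ it is essentially the paper's argument: both reduce to the embedding $\mathcal H^{\mathfrak J,\mathfrak k}(\mathbf R^{d+1})\hookrightarrow\mathcal C^{1,\mathfrak k}(\mathbf R^{d+1})$ combined with the pathwise bounds \eqref{eq.Xt}. For the deterministic limit $\bar\mu$, however, you take a genuinely different route: you exploit the representation $\bar\mu_t=\mathrm{Law}(\bar X_t)$ furnished by Lemma \ref{le.PS}, together with \eqref{eq.Xt} and the compact support of $\mu_0$ from \textbf{A4}, so that $\bar\mu_t$ is supported in a fixed ball for $t\in[0,T]$ and both $\bar\mu_t\in\mathcal H^{-\mathfrak J,\mathfrak k}(\mathbf R^{d+1})$ and the time-Lipschitz estimate follow from the mean value theorem applied under the expectation. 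The paper instead works directly with the evolution equation \eqref{eq.P2}: it establishes $|\langle f,\bar\mu_t\rangle-\langle f,\bar\mu_s\rangle|\le C|t-s|\,\|f\|_{\mathcal H^{\mathfrak J,\mathfrak k}}\sup_{u\in[0,T]}\langle 1+|\cdot|^{\gamma_0},\bar\mu_u\rangle$, the supremum being finite because $\bar\mu\in\mathcal D(\mathbf R_+,\mathcal P_{\gamma_0}(\mathbf R^{d+1}))$ by Theorem \ref{th.LLN} together with Th.~6.9 of \cite{villani2009optimal}, and it treats $\bar\mu_0=\mu_0$ separately via its compact support. Your variant is more elementary, bypasses the auxiliary exponent $\gamma_0$ and the moment-continuity argument, and in fact yields the stronger conclusion that $\bar\mu_t$ is compactly supported uniformly on compact time intervals; the price is that it leans on Lemma \ref{le.PS} and on \textbf{A4}, whereas the paper's route only needs the $\gamma_0$-moment control delivered by the LLN and would survive a weakening of \textbf{A4} to a moment assumption. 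The concluding density argument extending \eqref{eq.P2} to $f\in\mathcal H^{\mathfrak J,\mathfrak k}(\mathbf R^{d+1})$ is common to both proofs; your explicit bounds on the drift and Kullback--Leibler terms are precisely what the paper's brief appeal to ``a density argument'' and the embedding into $\mathcal C^{1,\gamma_0-1}(\mathbf R^{d+1})$ is meant to cover.
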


\begin{proof}
  Let $\mathfrak J>1+ (d+1)/2$ and $  \mathfrak k\ge 0$. It then holds  $\mathcal H^{\mathfrak J,\mathfrak k}(\mathbf R^{d+1})\hookrightarrow \mathcal C^{1,\mathfrak k}(\mathbf R^{d+1})$. 
This implies that $\mathcal C^{1,\mathfrak k}(\mathbf R^{d+1})^*\hookrightarrow \mathcal H^{-\mathfrak J,\mathfrak k}(\mathbf R^{d+1})$, and consequently, $\bar\mu^N\in   \mathcal C(\mathbf R_+,\mathcal H^{-\mathfrak J,\mathfrak k}(\mathbf R^{d+1}))$.

 Let us now prove that $\bar\mu\in \mathcal C(\mathbf R_+,\mathcal H^{-\mathfrak J,\mathfrak k}(\mathbf R^{d+1}))$ for $ \mathfrak k\ge 0$. Set $\mathfrak j=\mathfrak k+1$.  Recall that one can choose any   $\gamma_0> 1+ \frac{d+1}{2}$ in Theorem \ref{th.LLN}. Pick thus such a $\gamma_0$ such that $  \mathfrak j \le \gamma_0$. 
We then have $\mathcal H^{\mathfrak J,\mathfrak j-1}(\mathbf R^{d+1})\hookrightarrow \mathcal C^{1,\mathfrak j-1}(\mathbf R^{d+1}) \hookrightarrow \mathcal C^{1,\gamma_0-1}(\mathbf R^{d+1})\hookrightarrow \mathcal C^{0,\gamma_0-1}(\mathbf R^{d+1})$. 
 Since $\mu_0$ has compact support, $\mu_0\in \mathcal C^{0,\gamma_0-1}(\mathbf R^d)^* \hookrightarrow \mathcal H^{-\mathfrak J,\mathfrak j-1}(\mathbf{R}^d)$. Let $f\in \mathcal C_c^\infty(\mathbf R^{d+1})$ and $0\le s\le t\le T$.  
Thanks to \eqref{eq.boundKL} and Assumption \textbf{A}, we deduce that:
\begin{align*}
|\langle f,\bar \mu_t\rangle-\langle f,\bar \mu_s\rangle| \le &C|t-s| ( \Vert f \Vert_{\mathcal C^{1,\gamma_0}}+ \Vert f \Vert_{\mathcal C^{1,\gamma_0-1}}) \sup_{u\in [0,T]} |\langle 1+|\cdot |^{ \gamma_0},\bar \mu_u\rangle|\\
&\le C|t-s|   \Vert f \Vert_{\mathcal C^{1,\gamma_0-1}} \sup_{u\in [0,T]} |\langle 1+|\cdot |^{ \gamma_0},\bar \mu_u\rangle|\\
&\le C|t-s|   \Vert f \Vert_{\mathcal H^{\mathfrak J,\mathfrak j-1}} \sup_{u\in [0,T]} |\langle 1+|\cdot |^{ \gamma_0},\bar \mu_u\rangle|.
\end{align*} 
 We have that $\sup_{u\in [0,T]} |\langle 1+|\cdot |^{\gamma_0},\bar \mu_u\rangle|<+\infty$ since $u\ge 0\mapsto \langle 1+|\cdot |^{\gamma_0},\bar \mu_u\rangle\in  \mathcal D(\mathbf R_+, \mathbf R)$ (this follows from the fact that $\bar \mu \in \mathcal D(\mathbf R_+, \mathcal P_{\gamma_0}(\mathbf{R}^d))$ together with~ Th. 6.9 in \cite{villani2009optimal}).  We have thus proved that  $\bar \mu_t\in   \mathcal H^{-\mathfrak J,\mathfrak j-1}(\mathbf{R}^d)$ and  $|\langle f,\bar \mu_t\rangle-\langle f,\bar \mu_s\rangle| \le C|t-s| \Vert f \Vert_{\mathcal H^{\mathfrak J,\mathfrak j-1}}$. This proves that $\bar\mu\in \mathcal C(\mathbf R_+,\mathcal H^{-\mathfrak J,\mathfrak j-1}(\mathbf R^{d+1}))$.  The last claim  is obtained by a density argument and the fact that $\mathcal H^{\mathfrak J,\mathfrak j-1}(\mathbf R^{d+1})  \hookrightarrow\mathcal C^{1,\gamma_0-1}(\mathbf R^{d+1})$.
 \end{proof}


\begin{lemma}\label{lem-E[eta]<infty}
 Assume {\rm \textbf{A}}. 
For all $T>0$, we have 
$$\sup_{N\ge1}\sup_{t\in[0,T]}\mathbf E[\|\Theta_t^N\|_{\mathcal H^{-\mathfrak J_1,\mathfrak j_1}}^2+\|\Upsilon_t^N\|_{\mathcal H^{-\mathfrak J_1,\mathfrak j_1}}^2]<+\infty.$$
In particular, $\sup_{N\ge1}\sup_{t\in[0,T]}\mathbf E[\|\eta_t^N\|_{\mathcal H^{-\mathfrak J_1,\mathfrak j_1}}^2]<+\infty$.
\end{lemma}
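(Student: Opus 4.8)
The plan is to bound the two pieces $\Theta^N_t$ and $\Upsilon^N_t$ separately, then use $\eta^N=\Theta^N+\Upsilon^N$ and $(a+b)^2\le 2a^2+2b^2$. Since $\mathcal H^{\mathfrak J_1,\mathfrak j_1}(\mathbf R^{d+1})$ is a separable Hilbert space with some orthonormal basis $(\psi_\ell)_{\ell\ge1}$, I would write $\|\Theta_t^N\|_{\mathcal H^{-\mathfrak J_1,\mathfrak j_1}}^2=\sum_\ell \langle\psi_\ell,\Theta_t^N\rangle^2$ and take expectations term by term, and similarly for $\Upsilon^N$; the whole point is to get a bound on $\mathbf E[\langle f,\cdot\rangle^2]$ that is uniform in $t\in[0,T]$ and controlled by $\|f\|_{\mathcal H^{\mathfrak J_1,\mathfrak j_1}}^2$ (actually by a weaker norm, which then gives summability via the Hilbert–Schmidt embeddings recalled after Lemma \ref{le.Bounds}).

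For $\Theta^N=\sqrt N(\bar\mu^N-\bar\mu)$: here $\bar\mu^N=\frac1N\sum_{i=1}^N\delta_{\bar X^i_t}$ is the empirical measure of the \emph{i.i.d.} processes $\bar X^i$ solving {\rm $\mathbf{(S)}$}, each with law $\bar\mu_t$ at time $t$. Hence $\sqrt N\langle f,\bar\mu^N_t-\bar\mu_t\rangle=\frac1{\sqrt N}\sum_{i=1}^N\big(f(\bar X^i_t)-\langle f,\bar\mu_t\rangle\big)$ is a normalized sum of i.i.d. centered random variables, so $\mathbf E[\langle f,\Theta_t^N\rangle^2]=\Var(f(\bar X^1_t))\le \mathbf E[f(\bar X^1_t)^2]$. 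Using the embedding $\mathcal H^{\mathfrak J_1,\mathfrak j_1}(\mathbf R^{d+1})\hookrightarrow\mathcal C^{0,\mathfrak j_1}(\mathbf R^{d+1})$ we have $|f(\theta)|\le C\|f\|_{\mathcal H^{\mathfrak J_1,\mathfrak j_1}}(1+|\theta|^{\mathfrak j_1})$, and then the a priori bound \eqref{eq.Xt} (which gives $|\bar X^i_t|\le c_1(1+T)e^{c_1T}$ deterministically on $[0,T]$) yields $\mathbf E[f(\bar X^1_t)^2]\le C_T\|f\|_{\mathcal H^{\mathfrak J_1,\mathfrak j_1}}^2$ uniformly in $t\in[0,T]$. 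Summing over the orthonormal basis and invoking the Hilbert–Schmidt embedding of a slightly higher-order space into $\mathcal H^{\mathfrak J_1,\mathfrak j_1}$ (say $\mathcal H^{\mathfrak J_1+\mathfrak M,\mathfrak j_1-\mathfrak j}$ for suitable $\mathfrak M,\mathfrak j$ as in the recalled embeddings) makes the series $\sum_\ell\mathbf E[\langle\psi_\ell,\Theta^N_t\rangle^2]$ converge, with a bound independent of $N$ and of $t\in[0,T]$.

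For $\Upsilon^N=\sqrt N(\mu^N-\bar\mu^N)$: this is the genuinely dynamical piece and where the main work lies. I would write the evolution of $\langle f,\mu^N_t\rangle$ from the algorithm \eqref{eq.algo-batch} (expanding $f(\theta^i_{k+1})-f(\theta^i_k)$ by Taylor to second order, the first order giving a drift term plus a martingale increment $\Delta\mathbf M^N_k$, the second order a remainder controlled by Lemma \ref{le.Bounds} and bound \textbf{I}), and subtract the analogous exact evolution of $\langle f,\bar\mu^N_t\rangle$ coming from {\rm $\mathbf{(S)}$}. This gives $\langle f,\Upsilon^N_t\rangle = \sqrt N\mathbf M^N_t[f] + \text{(drift terms)} + \text{(remainder)}$, where the drift terms are, up to the Lipschitz-in-measure estimates on $\mathscr V[\cdot]$, bounded by $\kappa C\int_0^t\|\Upsilon^N_s\|\,\mathrm ds$ in the appropriate negative Sobolev norm (using $\mathcal H^{\mathfrak J_1,\mathfrak j_1}\hookrightarrow\mathcal C^{1,\mathfrak j_1}$ to control $\nabla_\theta f$ and \eqref{eq.boundKL} for the KL gradient). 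Taking squares, expectations, and the orthonormal-basis sum, one obtains $\mathbf E[\|\Upsilon^N_t\|_{\mathcal H^{-\mathfrak J_1,\mathfrak j_1}}^2]\le C_T\big(\mathbf E[\|\sqrt N\mathbf M^N_t\|^2] + \text{remainder}\big) + \kappa^2 C_T\int_0^t\mathbf E[\|\Upsilon^N_s\|_{\mathcal H^{-\mathfrak J_1,\mathfrak j_1}}^2]\,\mathrm ds$, and Grönwall's lemma closes the estimate provided the martingale term is $O(1)$ uniformly in $N,t$. The hard part will be exactly this last input: controlling $\sup_{t\le T}\mathbf E[\|\sqrt N\mathbf M^N_t\|_{\mathcal H^{-\mathfrak J_1,\mathfrak j_1}}^2]$ uniformly in $N$ — one computes the predictable quadratic variation of $\sqrt N\mathbf M^N$, which involves the conditional variances of the per-step increments; the $\frac{\kappa}{N^2}$ scaling against $\lfloor Nt\rfloor$ steps and the $\sqrt N$ prefactor combine to give an $O(1)$ quantity, but one must carefully use boundedness of $\phi-y$ (bound \textbf{I}), the moment bounds of Lemma \ref{le.Bounds} to absorb the $\mathfrak b(z)$-type growth of $\nabla_\theta\phi$, and the unboundedness of $\nabla_\theta\mathscr D_{\mathrm{KL}}$ handled via \eqref{eq.boundKL} and Lemma \ref{le.Bounds}. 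The extra subtlety for \eqref{eq.algo-batch} (not present in \cite{jmlr}) is that the martingale increments depend on the auxiliary Gaussian variables $\mathsf Z^i_k$ as well, so the conditional second moments must be taken with respect to the enlarged filtration \eqref{eq.Fk2} and the $\langle\cdot,\gamma\rangle$ integrations appear only after this conditioning.
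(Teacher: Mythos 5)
Your treatment of $\Theta^N$ matches the paper's: i.i.d.\ centered variables from the system {\rm $\mathbf{(S)}$}, the deterministic bound \eqref{eq.Xt}, a $\mathcal C^{0,\cdot}$-embedding, and summation over an orthonormal basis through a Hilbert--Schmidt embedding. The gap is in the $\Upsilon^N$ part, and it is precisely the step you pass over with ``taking squares, expectations, and the orthonormal-basis sum, \dots\ the drift terms are \dots\ bounded by $\kappa C\int_0^t\|\Upsilon^N_s\|\,\di s$ in the appropriate negative Sobolev norm (using $\mathcal H^{\mathfrak J_1,\mathfrak j_1}\hookrightarrow\mathcal C^{1,\mathfrak j_1}$ to control $\nabla_\theta f$)''. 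The drift terms acting on $\Upsilon^N$ itself have the form $\langle\nabla_\theta f\cdot\mathfrak H(\cdot,x),\Upsilon_s^N\rangle$ and $\langle\nabla_\theta f\cdot\nabla_\theta\mathscr D_{\mathrm{KL}}(q^1_\cdot|P_0^1),\Upsilon_s^N\rangle$: the map $f\mapsto\nabla_\theta f\cdot\mathfrak H(\cdot,x)$ loses one derivative (and the KL term additionally loses a weight, since $\nabla_\theta\mathscr D_{\mathrm{KL}}$ grows linearly). So for a basis function $f_a$ of $\mathcal H^{\mathfrak J_1,\mathfrak j_1}(\mathbf R^{d+1})$ the dual pairing can only be bounded either by $\|f_a\|_{\mathcal H^{\mathfrak J_1+1,\mathfrak j_1}}\|\Upsilon_s^N\|_{\mathcal H^{-\mathfrak J_1,\mathfrak j_1}}$, whose squares are not summable over an orthonormal basis of $\mathcal H^{\mathfrak J_1,\mathfrak j_1}$, or by $C\|\Upsilon_s^N\|_{\mathcal H^{-\mathfrak J_1+1,\mathfrak j_1}}$, which is a \emph{stronger} norm than the one you are trying to propagate; either way the Gr\"onwall loop does not close in $\mathcal H^{-\mathfrak J_1,\mathfrak j_1}$.

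The paper's proof avoids squaring the drift: it uses the inequality $\langle f,\Upsilon_t^N\rangle^2\le\mathbf A_t^N[f]+\mathbf B_t^N[f]$ (Lemma B.3 in \cite{jmlr}), which keeps the bilinear structure $\langle f_a,\Upsilon_s^N\rangle\,\langle\mathbf T_x f_a,\Upsilon_s^N\rangle$, and then sums over the basis to recognize $\langle\Upsilon_s^N,\mathbf T_x^*\Upsilon_s^N\rangle_{\mathcal H^{-\mathfrak J_1,\mathfrak j_1}}$. The key inputs are the adjoint (integration-by-parts) estimates, Lemma B.2 of \cite{jmlr} for $\mathbf T_x$ and the new weighted Lemma \ref{lem_B1_avec_poids} for the unbounded KL-gradient operator $\mathbf T$, which give $|\langle\Upsilon,\mathbf T^*\Upsilon\rangle_{\mathcal H^{-\mathfrak J_1,\mathfrak j_1}}|\le C\|\Upsilon\|_{\mathcal H^{-\mathfrak J_1,\mathfrak j_1}}^2$ despite the loss of a derivative and of a weight at the level of the operator --- the paper explicitly notes this is ``much better than Cauchy--Schwarz''. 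Without this symmetrization idea (or an equivalent substitute), your plan stalls exactly at the Gr\"onwall closure; the remaining ingredients you list (the martingale estimate $\mathbf E[\mathbf M_k^N[f]^2]\le C\|f\|^2_{\mathcal C^{1,\mathfrak j_0}}/N^2$ with the enlarged filtration \eqref{eq.Fk2}, the Taylor remainder, and the KL bound \eqref{eq.boundKL}) are consistent with what the paper does.
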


\begin{proof}
\begin{sloppypar}
Let $T>0$. Pick $t\in [0,T]$, $N\ge 1$, and $f\in \mathcal H^{\mathfrak J_1,\mathfrak j_1}(\mathbf R^d)$. 
On the one hand, since  $(f(\bar{X}_t^j)-\langle f,\bar \mu_t\rangle)_{j=1,\ldots,N}$ are independent  centered random variables,  one deduces that $\textbf{E}[\langle f,\Theta_t^N\rangle^2]\le \frac{2}{N}\sum_{i=1}^N(\textbf{E}\big[|f(\bar{X}_t^i)|^2\big]+|\langle f,\bar \mu_t\rangle|^2) \le  C_T\|f\|_{\mathcal H^{\mathfrak J_0,\mathfrak j_0}}^2$, where the last inequality is a consequence of   \eqref{eq.Xt} together with  $\mathcal H^{\mathfrak J_0,\mathfrak j_0}(\mathbf R^{d+1})\hookrightarrow \mathcal C^{0,\mathfrak j_0}(\mathbf R^{d+1}) $ and $\bar\mu \in \mathcal C(\mathbf R_+,\mathcal H^{-\mathfrak J_0,\mathfrak j_0}(\mathbf R^{d+1}))$ (see Lemma \ref{Theta^N-continu}). 
Using also  the embedding $\mathcal H^{\mathfrak J_1,\mathfrak j_1}(\mathbf R^{d+1})\hookrightarrow_{\mathrm{H.S.}}\mathcal H^{\mathfrak J_0,\mathfrak j_0}(\mathbf R^{d+1})$ and considering an orthonormal basis  of $\mathcal H^{\mathfrak J_1,\mathfrak j_1}(\mathbf R^{d+1})$, one deduces the desired upper bound on $\Theta^N$. 
\end{sloppypar}

 Let us now derive  the bound on the second order moment of $\Upsilon^N$. 
To this end, introduce  an orthonormal basis $(f_a)_{a\ge1}$ of $\mathcal H^{\mathfrak J_1,\mathfrak j_1}(\mathbf R^{d+1})$. One then has: 
\begin{equation}\label{norme_Upsilon_def}
\|\Upsilon_t^N\|_{\mathcal H^{-\mathfrak J_1,\mathfrak j_1}}^2=\sum_{a\ge1}\langle f_a,\Upsilon_t^N\rangle^2. 
\end{equation}
Recall $\mathcal H^{\mathfrak J_0,\mathfrak j_0}(\mathbf R^{d+1})\hookrightarrow \mathcal C^{2,\mathfrak j_0}(\mathbf R^{d+1})$. 
  We have, by  \textbf{(S)} and the fact that $f\in \mathcal C^{2,\mathfrak j_0}(\mathbf R^{d+1})$,
\begin{align}\label{eq-<f,barmu_t^N}
\langle f,\bar\mu_t^N\rangle=\langle f,\bar\mu_0^N\rangle
&-\kappa\int_0^t\int_{\mathsf X\times\mathsf Y}\langle\phi(\cdot,\cdot,x)-y,\bar\mu_s\otimes\gamma\rangle\langle\nabla f\cdot\nabla_{\theta}\phi(\cdot,\cdot,x),\bar\mu_s^N\otimes\gamma\rangle\pi(\di x,\di y)\di s\nonumber\\
&-\kappa\int_0^t\langle\nabla f\cdot\nabla_{\theta}\mathscr D_{\mathrm{KL}}(q_\cdot^1|P_0^1),\bar\mu_s^N\rangle\di s.
\end{align}
We now set  for $k\ge 0$ and $g\in \mathcal C^{2,\mathfrak j }(\mathbf R^{d+1})$ ($\mathfrak j \ge 0$):
\begin{enumerate}
\item[\textbf 1.]  $
\mathbf D_{k}^N[g]:=-\frac{\kappa}{N^3}\sum_{i=1}^N\sum_{j=1,j\neq i}^N\int_{\mathsf X\times\mathsf Y}\big (\big \langle\phi(\theta_k^j,\cdot,x),\gamma\big \rangle-y\big )\big \langle\nabla_\theta g(\theta_k^i)\cdot\nabla_\theta\phi(\theta_k^i,\cdot,x),\gamma\big \rangle\pi(\di x,\di y) -\frac{\kappa}{N^2}\int_{\mathsf X\times\mathsf Y}\big \langle(\phi(\cdot,\cdot,x)-y)\nabla_\theta g\cdot\nabla_\theta\phi(\cdot,\cdot,x),\nu_k^N\otimes\gamma\big \rangle\pi(\di x,\di y)$.
\item[\textbf 2.] $\mathbf M_{k}^N[g]= -\frac{\kappa}{N^3}\sum_{i,j=1}^N   (\phi(\theta_k^j,\mathsf  Z_k^{j},x_k)-y_k)\nabla_\theta g(\theta_k^i)\cdot\nabla_\theta \phi(\theta_k^i,\mathsf Z_k^{i},x_k)-\mathbf D_{k}^N[g]$. 
\item[\textbf 3.]  $\mathbf R_k^N[g]:=\frac{1}{2N}\sum_{i=1}^N(\theta_{k+1}^i-\theta_k^i)^T\nabla^2g(\widehat{\theta_k^i})(\theta_{k+1}^i-\theta_k^i)$  is the rest of the second order Taylor expansion of $\frac 1N \sum_{k=1}^N f(\theta_{k+1}^i)-f(\theta_{k}^i)$ (the point $\widehat{\theta_k^i}$ lies in $[\theta_{k+1}^i,\theta_k^i]$). 
\end{enumerate}
Note that $\mathbf D_{k}^N[g]$ and $\mathbf M_{k}^N[g]$ are well defined for $g\in \mathcal C^{1,\mathfrak j}(\mathbf R^{d+1})$ ($\mathfrak j \ge 0$). For   $t\ge 0$, we also define:
\begin{equation}\label{eq.MN}
\mathbf R_t^N[g]:=\sum_{k=0}^{\lfloor Nt\rfloor-1}\mathbf R_k^N[g]  \ \ \text{and}\ \ \mathbf  M_t^{N}[g]:=\sum_{k=0}^{\lfloor Nt\rfloor-1}\mathbf M_{k}^{N}[g].
\end{equation}
Let $t\ge 0$. 
With these definitions, we recall that from Eq. (53) in \cite{colt}, there exist $\widehat{\theta_k^i}$ ($i=1,\ldots,N$ and $k=0, \ldots,\lfloor Nt\rfloor-1$)  such that for $g\in \mathcal C^{2,\mathfrak j_0}(\mathbf R^{d+1})$:
\begin{align}\label{eq.pre_limitz1zN}
\langle g,\mu_t^N\rangle-\langle g,\mu_0^N\rangle&=-\kappa\int_{0}^t\int_{\mathsf X\times\mathsf Y}\langle\phi(\cdot,\cdot,x)-y,\mu_s^N\otimes\gamma\rangle\langle\nabla_\theta g\cdot\nabla_\theta \phi(\cdot,\cdot,x),\mu_s^N\otimes\gamma\rangle  \pi(\di x,\di y)\di s\nonumber\\
&\quad -\kappa \int_0^t \big \langle\nabla_\theta g\cdot \nabla_\theta \mathscr D_{{\rm KL}}(q_{\,_\cdot }^1|P_0^1),\mu_s^N\big \rangle\di s  \nonumber\\
&\quad +\frac{\kappa}{N}\int_{0}^t\int_{\mathsf X\times\mathsf Y} \Big\langle\langle\phi(\cdot,\cdot,x)-y,\gamma\rangle\langle\nabla_\theta  g\cdot\nabla_\theta \phi(\cdot,\cdot,x),\gamma\rangle,\mu_s^N\Big\rangle \pi(\di x,\di y)\di s\nonumber\\
&\quad -\frac{\kappa}{N}\int_{0}^t\int_{\mathsf X\times\mathsf Y} \Big\langle(\phi(\cdot,\cdot,x)-y)\nabla_\theta g\cdot\nabla_\theta \phi(\cdot,\cdot,x),\mu_s^N\otimes\gamma\Big\rangle \pi(\di x,\di y)\di s\nonumber\\
&\quad +  \mathbf M_t^{N}[g] +\mathbf W_t^{N}[g]+ \mathbf R_t^N[g],
\end{align}
 where $
 \mathbf  W_t^{N}[f]:=-  \mathbf  V_t^{N}[f] + \kappa \int^t_{\frac{\lfloor Nt\rfloor}{N}}\big \langle\nabla_\theta f\cdot \nabla_\theta \mathscr D_{{\rm KL}}(q_{\,_\cdot }^1|P_0^1),\mu_s^N\big \rangle\di s$ 
 and 
 \begin{align*}
\mathbf V_t^{N}[f]&:=-\kappa\int^{t}_{\frac{\lfloor Nt\rfloor}{N}}\int_{\mathsf X\times\mathsf Y}\langle\phi(\cdot,\cdot,x)-y,\mu_s^N\otimes\gamma\rangle\langle\nabla_\theta f\cdot\nabla_\theta \phi(\cdot,\cdot,x),\mu_s^N\otimes\gamma\rangle  \pi(\di x,\di y)\di s\\
&\quad +\frac{\kappa}{N}\int^{t}_{\frac{\lfloor Nt\rfloor}{N}}\int_{\mathsf X\times\mathsf Y} \Big\langle\langle\phi(\cdot,\cdot,x)-y,\gamma\rangle\langle\nabla_\theta f\cdot\nabla_\theta \phi(\cdot,\cdot,x),\gamma\rangle,\mu_s^N\Big\rangle \pi(\di x,\di y)\di s\\
&\quad -\frac{\kappa}{N}\int^{t}_{\frac{\lfloor Nt\rfloor}{N}}\int_{\mathsf X\times\mathsf Y} \Big\langle(\phi(\cdot,\cdot,x)-y)\nabla_\theta f\cdot\nabla_\theta \phi(\cdot,\cdot,x),\mu_s^N\otimes\gamma\Big\rangle \pi(\di x,\di y)\di s.
\end{align*}
Hence, since by definition $\Upsilon^N=\sqrt N(\mu^N-\bar\mu^N)$, one has for all $t\in \mathbf R_+$, using \eqref{eq-<f,barmu_t^N} and \eqref{eq.pre_limitz1zN} together with the fact that $\langle f,\mu_0^N\rangle=\langle f,\bar \mu_0^N\rangle$:
\begin{align}\label{def<f,Upsilon>}
\langle f,\Upsilon_t^N\rangle&=
 -\kappa\int_0^t\int_{\mathsf X\times\mathsf Y}\langle\phi(\cdot,\cdot,x)-y,\mu_s^N\otimes\gamma\rangle\langle\nabla_\theta f\cdot\nabla_\theta\phi(\cdot,\cdot,x),\Upsilon_s^N\otimes\gamma\rangle\pi(\di x,\di y)\di s\nonumber\\
 &\quad -\kappa\int_0^t\int_{\mathsf X\times\mathsf Y}\langle\phi(\cdot,\cdot,x),\Upsilon_s^N\otimes\gamma\rangle\langle\nabla_\theta f\cdot\nabla_\theta\phi(\cdot,\cdot,x),\bar\mu_s^N\otimes\gamma\rangle\pi(\di x,\di y)\di s\nonumber\\
&\quad -\kappa\int_0^t\int_{\mathsf X\times\mathsf Y}\langle\phi(\cdot,\cdot,x),\sqrt N(\bar\mu_s^N-\bar\mu_s)\otimes\gamma\rangle\langle\nabla_\theta f\cdot\nabla_\theta\phi(\cdot,\cdot,x),\bar\mu_s^N\otimes\gamma\rangle\pi(\di x,\di y)\di s\nonumber\\
&\quad -\kappa\int_0^t\langle\nabla_\theta f\cdot\nabla_\theta\mathscr D_{\mathrm{KL}}(q^1_\cdot|P_0^1),\Upsilon_s^N\rangle\di s\nonumber\\
&\quad +\frac{\kappa}{\sqrt N}\int_0^t\int_{\mathsf X\times\mathsf Y}\Big\langle\langle\phi(\cdot,\cdot,x)-y,\gamma\rangle\langle\nabla_\theta f\cdot\nabla_\theta\phi(\cdot,\cdot,x),\gamma\rangle,\mu_s^N\Big\rangle\pi(\di x,\di y)\di s\nonumber\\
&\quad -\frac{\kappa}{\sqrt N}\int_0^t\int_{\mathsf X\times\mathsf Y}\Big\langle (\phi(\cdot,\cdot,x)-y)\nabla_\theta f\cdot\nabla_\theta\phi(\cdot,\cdot,x),\mu_s^N\otimes\gamma\Big\rangle\pi(\di x,\di y)\di s\nonumber\\
&\quad +\sqrt N\mathbf M_t^N[f]+\sqrt N\mathbf W_t^N[f]+\sqrt N\mathbf R_t^N[f].
\end{align}
 Using \textbf{II},  when $\mathfrak j>\frac{d+1}{2}$, one has $\mathfrak H(\cdot ,x)\in \mathcal H^{\mathfrak J,\mathfrak j}(\mathbf R^{d+1})$ for all $\mathfrak J\ge 0$, and it holds:
\begin{equation}\label{phi-normeH}
\sup_{x\in\mathsf X}\big\|\mathfrak H(.,x)\big\|_{\mathcal H^{\mathfrak J,\mathfrak j}}<+\infty. 
\end{equation}  
By Lemma B.3 in \cite{jmlr}, one has, for all $t\in\mathbf R_+$, 
\begin{equation}\label{UpsleqA+B}
\langle f,\Upsilon_t^N\rangle^2\le \mathbf A_t^N[f]+\mathbf B_t^N[f], 
\end{equation}
where  
\begin{align}\label{def_A^N}
\mathbf A_t^N[f]&=
 -2\kappa\int_0^t\int_{\mathsf X\times\mathsf Y}\langle f,\Upsilon_s^N\rangle \langle\phi(\cdot,\cdot,x)-y,\mu_s^N\otimes\gamma\rangle\langle\nabla_\theta f\cdot\nabla_\theta\phi(\cdot,\cdot,x),\Upsilon_s^N\otimes\gamma\rangle\pi(\di x,\di y)\di s\nonumber\\
 &\quad -2\kappa\int_0^t\int_{\mathsf X\times\mathsf Y}\langle f,\Upsilon_s^N\rangle\langle\phi(\cdot,\cdot,x),\Upsilon_s^N\otimes\gamma\rangle\langle\nabla_\theta f\cdot\nabla_\theta\phi(\cdot,\cdot,x),\bar\mu_s^N\otimes\gamma\rangle\pi(\di x,\di y)\di s\nonumber\\
&\quad -2\kappa\int_0^t\int_{\mathsf X\times\mathsf Y}\langle f,\Upsilon_s^N\rangle\langle\phi(\cdot,\cdot,x),\sqrt N(\bar\mu_s^N-\bar\mu_s)\otimes\gamma\rangle\langle\nabla_\theta f\cdot\nabla_\theta\phi(\cdot,\cdot,x),\bar\mu_s^N\otimes\gamma\rangle\pi(\di x,\di y)\di s\nonumber\\
&\quad -2\kappa\int_0^t\langle f,\Upsilon_s^N\rangle\langle\nabla_\theta f\cdot\nabla_\theta\mathscr D_{\mathrm{KL}}(q^1_\cdot|P_0^1),\Upsilon_s^N\rangle\di s\nonumber\\
&\quad +\frac{2\kappa}{\sqrt N}\int_0^t\int_{\mathsf X\times\mathsf Y}\langle f,\Upsilon_s^N\rangle\Big\langle\langle\phi(\cdot,\cdot,x)-y,\gamma\rangle\langle\nabla_\theta f\cdot\nabla_\theta\phi(\cdot,\cdot,x),\gamma\rangle,\mu_s^N\Big\rangle\pi(\di x,\di y)\di s\nonumber\\
&\quad -\frac{2\kappa}{\sqrt N}\int_0^t\int_{\mathsf X\times\mathsf Y}\langle f,\Upsilon_s^N\rangle\Big\langle (\phi(\cdot,\cdot,x)-y)\nabla_\theta f\cdot\nabla_\theta\phi(\cdot,\cdot,x),\mu_s^N\otimes\gamma\Big\rangle\pi(\di x,\di y)\di s
\end{align}
and  
\begin{align*}
\mathbf B_t^N[f] &= \sum_{k=0}^{\lfloor Nt\rfloor-1}\Big[2\langle f,\Upsilon_{\frac{k+1}{N}^-}^N\rangle\sqrt N\mathbf R_k^N[f]+3N\mathbf R_k^N[f]^2\Big]+\sum_{k=0}^{\lfloor Nt\rfloor-1}\Big[2\langle f,\Upsilon_{\frac{k+1}{N}^-}^N\rangle\sqrt N\mathbf M_k^N[f]+3N\mathbf M_k^N[f]^2\Big]\\
&\quad +\sum_{k=0}^{\lfloor Nt\rfloor-1}\Big[2\langle f,\Upsilon_{\frac{k+1}{N}^-}^N\rangle\mathbf a_k^N[f]+3\mathbf a_k^N[f]^2\Big]-2\sqrt N\int_0^t\langle f,\Upsilon_s^N\rangle\mathbf L_s^N[f]\di s,
\end{align*}
with, for $s\in[0,t]$, 
\begin{align*}
\mathbf L_s^N[f]&=-\kappa\int_{\mathsf X\times\mathsf Y}\langle\phi(\cdot,\cdot,x)-y,\mu_s^N\otimes\gamma\rangle\langle\nabla_\theta f\cdot\nabla_\theta\phi(\cdot,\cdot,x),\mu_s^N\otimes\gamma\rangle\pi(\di x,\di y)\\
&+\frac{\kappa}{N}\int_{\mathsf X\times\mathsf Y}\Big\langle\langle\phi(\cdot,\cdot,x)-y,\gamma\rangle\langle\nabla_\theta f\cdot\nabla_\theta\phi(\cdot,\cdot,x),\gamma\rangle,\mu_s^N\Big\rangle\pi(\di x,\di y)  \\
&-\frac{\kappa}{N}\int_{\mathsf X\times\mathsf Y}\Big\langle(\phi(\cdot,\cdot,x)-y)\nabla_\theta f\cdot\nabla_\theta\phi(\cdot,\cdot,x),\mu_s^N\otimes\gamma\Big\rangle\pi(\di x,\di y) \\
&-\kappa\langle\nabla_\theta f\cdot\nabla_\theta\mathscr D_{\mathrm{KL}}(q^1_{\cdot}|P_0^1),\mu_s^N\rangle
\end{align*}
and, for $0\le k<\lfloor Nt\rfloor$,  $
\mathbf a_k^N[f]=\sqrt N\int_{\frac{k}{N}}^{\frac{k+1}{N}}\mathbf L_s^N[f]\di s$. 
By \eqref{norme_Upsilon_def} and \eqref{UpsleqA+B},  
\begin{equation}\label{Upsilon_t^N^2le}
\|\Upsilon_t^N\|_{\mathcal H^{-\mathfrak J_1,\mathfrak j_1}}^2\le \sum_{a\ge1}\mathbf A_t^N[f_a]+\mathbf B_t^N[f_a].
\end{equation}
Using Lemma \ref{lem_borneA_et_B}, one deduces that:
\begin{equation}\label{sumE[A+B}
\sum_{a\ge1}\mathbf E[\mathbf A_t^N[f_a]+\mathbf B_t^N[f_a]] \le C_T+C_T\int_0^t\mathbf E[\|\Upsilon_s^N\|_{\mathcal H^{-\mathfrak J_1,\mathfrak j_1}}^2]\di s
\end{equation}
Hence, by \eqref{Upsilon_t^N^2le} and \eqref{sumE[A+B},
\begin{equation}
\mathbf E[\|\Upsilon_t^N\|_{\mathcal H^{-\mathfrak J_1,\mathfrak j_1}}^2]\le C_T+C_T\int_0^t\mathbf E[\|\Upsilon_s^N\|_{\mathcal H^{-\mathfrak J_1,\mathfrak j_1}}^2]\di s.
\end{equation}
Using  Gronwall's lemma yields the desired moment estimate on $\Upsilon^N$. 
\end{proof}

The following lemma provides the compact containment condition we need to prove that $(\eta^N)_{N\ge1}$ is relatively compact in $\mathcal D(\mathbf R_+,\mathcal H^{-\mathfrak J_3+1,\mathfrak j_3}(\mathbf R^{d+1}))$.
\begin{lemma}\label{lem-cc-eta}
Assume {\rm \textbf{A}}.
Then, for all $T>0$, $
\sup_{N\ge1}\mathbf E[\sup_{t\in[0,T]}\|\eta_t^N\|_{\mathcal H^{-\mathfrak J_2,\mathfrak j_2}}^2]<+\infty$. 
\end{lemma}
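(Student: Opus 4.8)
The plan is to split $\eta^N=\Upsilon^N+\Theta^N$ as above, bound $\mathbf E[\sup_{t\le T}\|\Upsilon^N_t\|^2_{\mathcal H^{-\mathfrak J_2,\mathfrak j_2}}]$ and $\mathbf E[\sup_{t\le T}\|\Theta^N_t\|^2_{\mathcal H^{-\mathfrak J_2,\mathfrak j_2}}]$ separately, and in each case reduce to a pairing estimate against a single test function. Fixing an orthonormal basis $(f_a)_{a\ge1}$ of $\mathcal H^{\mathfrak J_2,\mathfrak j_2}(\mathbf R^{d+1})$ and using $\sup_{t\le T}\sum_a\langle f_a,\cdot\rangle^2\le\sum_a\sup_{t\le T}\langle f_a,\cdot\rangle^2$ together with monotone convergence, it suffices to prove, uniformly in $N$, a bound of the form $\mathbf E[\sup_{t\le T}\langle f,\Upsilon^N_t\rangle^2]+\mathbf E[\sup_{t\le T}\langle f,\Theta^N_t\rangle^2]\le C_T\,\|f\|^2_{\mathcal H^{\mathfrak J_1+1,\mathfrak j_1-1}}$, and then to sum it over $a$: the stated Hilbert--Schmidt embedding $\mathcal H^{\mathfrak J_2,\mathfrak j_2}(\mathbf R^{d+1})\hookrightarrow_{\mathrm{H.S.}}\mathcal H^{\mathfrak J_1+1,\mathfrak j_1-1}(\mathbf R^{d+1})$ gives $\sum_a\|f_a\|^2_{\mathcal H^{\mathfrak J_1+1,\mathfrak j_1-1}}<+\infty$. (This is precisely why $\mathfrak J_2,\mathfrak j_2$ were chosen with more derivatives / weight than $\mathfrak J_1,\mathfrak j_1$.)

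For $\Theta^N=\sqrt N(\bar\mu^N-\bar\mu)$, I would use that the auxiliary particles $\bar X^i$ solving $\mathbf{(S)}$ are i.i.d. with $\bar X^i_t\sim\bar\mu_t$, so that $\langle f,\Theta^N_t\rangle=\frac1{\sqrt N}\sum_{i=1}^N(f(\bar X^i_t)-\langle f,\bar\mu_t\rangle)$ is a normalized sum of i.i.d. centered variables, and that by \eqref{eq.Xt} each trajectory $s\mapsto\bar X^i_s$ stays, on $[0,T]$, in a fixed ball and is Lipschitz with a deterministic constant. Bounding $\sup_{t\le T}|\langle f,\Theta^N_t\rangle|\le|\langle f,\Theta^N_0\rangle|+\int_0^T|\frac{\di}{\di s}\langle f,\Theta^N_s\rangle|\di s$, applying Cauchy--Schwarz in $s$, and observing (via the identity used in the proof of Lemma \ref{le.PS}) that $\frac{\di}{\di s}\langle f,\Theta^N_s\rangle=\frac1{\sqrt N}\sum_i(\nabla f(\bar X^i_s)\cdot\dot{\bar X}^i_s-\mathbf E[\nabla f(\bar X^1_s)\cdot\dot{\bar X}^1_s])$ is again a normalized sum of i.i.d. centered variables, taking expectations yields $\mathbf E[\sup_{t\le T}\langle f,\Theta^N_t\rangle^2]\le C_T\|f\|^2_{\mathcal C^{1,\mathfrak j_0}}\le C_T\|f\|^2_{\mathcal H^{\mathfrak J_0,\mathfrak j_0}}$, which is more than enough.

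For $\Upsilon^N$, I would start from the explicit identity \eqref{def<f,Upsilon>} and take $\sup_{t\le T}$ followed by $\mathbf E$ of each summand. The three drift integrals plus the $\mathscr D_{\mathrm{KL}}$ integral are handled by Cauchy--Schwarz in $s$, the pointwise bounds \textbf{I}--\textbf{II} and \eqref{eq.boundKL}, the uniform moment bounds of Lemma \ref{le.Bounds}, and finally the a priori estimate $\sup_{s\le T}\mathbf E[\|\Upsilon^N_s\|^2_{\mathcal H^{-\mathfrak J_1,\mathfrak j_1}}+\|\Theta^N_s\|^2_{\mathcal H^{-\mathfrak J_1,\mathfrak j_1}}]<\infty$ of Lemma \ref{lem-E[eta]<infty}; the key point here is that pairing $f$ with $\nabla_\theta\phi$ or $\nabla_\theta\mathscr D_{\mathrm{KL}}$ costs one derivative and, because of the linear growth \eqref{eq.kl_1}, one power of the weight, so the relevant norm of $f$ is $\|f\|_{\mathcal H^{\mathfrak J_1+1,\mathfrak j_1-1}}$. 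The two $1/\sqrt N$ correction terms and the boundary term $\sqrt N\mathbf W^N_t[f]$ are controlled crudely: single steps of \eqref{eq.algo-batch} are $O(1/N)$ in size with all moments bounded, combining Lemma \ref{le.Bounds}, its $\sup$-in-$k$ version (obtained from the recursion $|\theta^i_{k+1}|\le(1+C/N)|\theta^i_k|+\frac CN(1+\mathfrak b(\mathsf Z^i_k))$ and the compact support of $\mu_0$) and $\langle\mathfrak b^q,\gamma\rangle<\infty$, so after multiplying by $\sqrt N$ these have expected suprema of order $\|f\|^2/N$; likewise the Taylor remainder $\sqrt N\mathbf R^N_t[f]$, being $\sqrt N$ times a sum of $O(N)$ terms each $O(\|f\|_{\mathcal C^2}/N^3)$, has expected supremum $O(\|f\|^2/N^2)$. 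Finally, $\sqrt N\mathbf M^N_t[f]=\sqrt N\sum_{k<\lfloor Nt\rfloor}\mathbf M^N_k[f]$ is treated by Doob's $L^2$ maximal inequality and the $\mathcal F^N_k$-orthogonality of increments, giving $\mathbf E[\sup_{t\le T}(\sqrt N\mathbf M^N_t[f])^2]\le 4N\sum_{k<\lfloor NT\rfloor}\mathbf E[\mathbf M^N_k[f]^2]$, and, using \textbf{A5}, $|\phi-y|\le C$, $|\nabla_\theta\phi(\theta,z,x)|\le C\mathfrak b(z)$ and Lemma \ref{le.Bounds}, one gets $\mathbf E[\mathbf M^N_k[f]^2]\le C_T\|f\|^2_{\mathcal H^{\mathfrak J_1,\mathfrak j_1}}/N^2$, whence the sum is $O(T\|f\|^2_{\mathcal H^{\mathfrak J_1,\mathfrak j_1}})$. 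Collecting these contributions closes the bound $\mathbf E[\sup_{t\le T}\langle f,\Upsilon^N_t\rangle^2]\le C_T\|f\|^2_{\mathcal H^{\mathfrak J_1+1,\mathfrak j_1-1}}$.

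The main obstacle, compared with the corresponding step in \cite{jmlr}, is twofold, and both points already show up in milder form in Lemma \ref{lem-E[eta]<infty}: first, the linear growth of $\nabla_\theta\mathscr D_{\mathrm{KL}}(q^1_\theta|P_0^1)$ forces every pairing against $\Upsilon^N_s$ and $\mu^N_s$ to be carried out in the weighted Sobolev spaces $\mathcal H^{\cdot,\mathfrak j}$ with $\mathfrak j\ge1$ and to be closed via the uniform (and $\sup$-in-$k$) moment bounds of Lemma \ref{le.Bounds}; second, one must track, term by term in \eqref{def<f,Upsilon>}, exactly how many derivatives and weight powers of $f$ are consumed, so that the final summation over the orthonormal basis of $\mathcal H^{\mathfrak J_2,\mathfrak j_2}(\mathbf R^{d+1})$ converges through the Hilbert--Schmidt embeddings — this bookkeeping is what dictates the precise values $\mathfrak J_0<\mathfrak J_1<\mathfrak J_2$ and $\mathfrak j_0>\mathfrak j_1>\mathfrak j_2$. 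The extraction of the supremum in time is by contrast routine: it enters only through Doob's inequality on $\sqrt N\mathbf M^N_t[f]$, while for every other summand $\sup_{t\le T}$ is absorbed into a time integral by Cauchy--Schwarz.
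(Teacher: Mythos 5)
Your proof is correct and follows essentially the same route as the paper: decompose $\eta^N=\Upsilon^N+\Theta^N$, bound $\mathbf E[\sup_{t\le T}\langle f,\cdot\rangle^2]$ for a fixed test function starting from \eqref{def<f,Upsilon>}, using \textbf{I}--\textbf{II}, \eqref{eq.boundKL}, Lemma \ref{le.Bounds}, the fixed-time estimate of Lemma \ref{lem-E[eta]<infty} for the drift terms (including the $\sqrt N(\bar\mu^N-\bar\mu)$ term, which is exactly a pairing against $\Theta^N_s$), and Doob's maximal inequality with the $O(1/N^2)$ per-step variance for $\sqrt N\mathbf M^N_t[f]$, and then sum over an orthonormal basis of $\mathcal H^{\mathfrak J_2,\mathfrak j_2}(\mathbf R^{d+1})$ through the Hilbert--Schmidt embedding into $\mathcal H^{\mathfrak J_1+1,\mathfrak j_1-1}(\mathbf R^{d+1})$; your bookkeeping of derivatives and weights matches the paper's, since all the norms appearing there are dominated by $\|f\|_{\mathcal H^{\mathfrak J_1+1,\mathfrak j_1-1}}$.

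The one place where you take a genuinely different (sub)route is $\Theta^N$: the paper writes the evolution equation \eqref{eq-<f,Theta_t^N} (from \eqref{eq-<f,barmu_t^N} and \eqref{eq.P2}) and closes the estimate with Lemma \ref{lem-E[eta]<infty}, whereas you exploit directly that $\langle f,\Theta^N_t\rangle$ and its time derivative are normalized sums of i.i.d.\ centered variables, uniformly bounded on $[0,T]$ thanks to the deterministic bounds \eqref{eq.Xt} on the auxiliary particles and their velocities. Both arguments are valid and yield a $C_T\|f\|^2_{\mathcal C^{1,\mathfrak j_0}}$-type bound; yours is slightly more elementary (it bypasses the measure-valued equation for $\bar\mu$), while the paper's keeps the treatment of $\Theta^N$ parallel to that of $\Upsilon^N$. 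The remaining differences are cosmetic: you rederive the per-step bounds on $\mathbf M^N_k[f]$, $\mathbf W^N_t[f]$, $\mathbf R^N_t[f]$ (and a sup-in-$k$ moment bound) that the paper imports from \cite{colt}.
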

\begin{proof}
Let $T>0$ and $N\ge1$. Consider an orthonormal basis  $(f_a)_{a\ge1}$  of $\mathcal H^{\mathfrak J_2,\mathfrak j_2}(\mathbf R^{d+1})$ and $f\in\mathcal H^{\mathfrak J_2,\mathfrak j_2}(\mathbf R^{d+1})$. 
From \eqref{def<f,Upsilon>} and using Jensen's inequality, 
\begin{align}\label{def<f,Upsilon>carre}
\sup_{t\in[0,T]}\langle f,\Upsilon_t^N\rangle^2&\le
C\int_0^T\int_{\mathsf X\times\mathsf Y}|\langle\phi(\cdot,\cdot,x)-y,\mu_s^N\otimes\gamma\rangle\langle\nabla_\theta f\cdot\nabla_\theta\phi(\cdot,\cdot,x),\Upsilon_s^N\otimes\gamma\rangle|^2\pi(\di x,\di y)\di s\nonumber\\
 &\quad +C\int_0^T\int_{\mathsf X\times\mathsf Y}|\langle\phi(\cdot,\cdot,x),\Upsilon_s^N\otimes\gamma\rangle\langle\nabla_\theta f\cdot\nabla_\theta\phi(\cdot,\cdot,x),\bar\mu_s^N\otimes\gamma\rangle|^2\pi(\di x,\di y)\di s\nonumber\\
&\quad +C\int_0^T\int_{\mathsf X\times\mathsf Y}|\langle\phi(\cdot,\cdot,x),\sqrt N(\bar\mu_s^N-\bar\mu_s)\otimes\gamma\rangle\langle\nabla_\theta f\cdot\nabla_\theta\phi(\cdot,\cdot,x),\bar\mu_s^N\otimes\gamma\rangle|^2\pi(\di x,\di y)\di s\nonumber\\
&\quad +C\int_0^T|\langle\nabla_\theta f\cdot\nabla_\theta\mathscr D_{\mathrm{KL}}(q^1_\cdot|P_0^1),\Upsilon_s^N\rangle|^2\di s\nonumber\\
&\quad +\frac{C}{N}\int_0^T\int_{\mathsf X\times\mathsf Y}\Big\langle\langle\phi(\cdot,\cdot,x)-y,\gamma\rangle\langle\nabla_\theta f\cdot\nabla_\theta\phi(\cdot,\cdot,x),\gamma\rangle,\mu_s^N\Big\rangle^2\pi(\di x,\di y)\di s\nonumber\\
&\quad +\frac{C}{N}\int_0^T\int_{\mathsf X\times\mathsf Y}\Big\langle (\phi(\cdot,\cdot,x)-y)\nabla_\theta f\cdot\nabla_\theta\phi(\cdot,\cdot,x),\mu_s^N\otimes\gamma\Big\rangle^2\pi(\di x,\di y)\di s\nonumber\\
&\quad + N\sup_{t\in[0,T]}\mathbf M_t^N[f]^2+ N\sup_{t\in[0,T]}\mathbf W_t^N[f]^2+ N\sup_{t\in[0,T]}\mathbf R_t^N[f]^2.
\end{align}
Let us now provide upper bounds on each term appearing in the right-hand side of \eqref{def<f,Upsilon>carre}.
Let us consider the first term in the right-hand side of \eqref{def<f,Upsilon>carre}. By \textbf{II}, for all $\mathfrak J\ge1$ and $\mathfrak  j\ge 0$, 
\begin{equation}\label{eq-nabla-f-nablaphi-H}
\sup_{g\in \mathcal H^{\mathfrak J,\mathfrak j}(\mathbf R^{d+1}), \|g\|_{\mathcal H^{\mathfrak J,\mathfrak j}}=1} \ \sup_{x\in\mathsf X}\big\| \nabla_\theta g\cdot \mathfrak H(\cdot ,x)\big\|_{\mathcal H^{\mathfrak J-1,\mathfrak j}} <+\infty.
\end{equation}
 By \eqref{bound-phi-y}, \eqref{eq-nabla-f-nablaphi-H}, the embedding $\mathcal H^{\mathfrak J_2,\mathfrak j_2}(\mathbf R^{d+1})\hookrightarrow \mathcal H^{\mathfrak J_1+1,\mathfrak j_1}(\mathbf R^{d+1})$ together with  Lemma \ref{lem-E[eta]<infty}, we have, for all $s\in[0,T]$,  
\begin{align}\label{l-cc1}
&\mathbf E\Big[\int_{\mathsf X\times\mathsf Y}|\langle\phi(\cdot,\cdot,x)-y,\mu_s^N\otimes\gamma\rangle\langle\nabla_\theta f\cdot\nabla_\theta\phi(\cdot,\cdot,x),\Upsilon_s^N\otimes\gamma\rangle|^2\pi(\di x,\di y)\Big]\nonumber\\
&\le C\mathbf E\Big[\int_{\mathsf X\times\mathsf Y} \big\|\nabla_\theta f\cdot\mathfrak H(\cdot ,x) \big\|_{\mathcal H^{\mathfrak J_1,\mathfrak j_1}}^2\|\Upsilon_s^N\|_{\mathcal H^{-\mathfrak J_1,\mathfrak j_1}}^2\pi(\di x,\di y)\Big]\nonumber\\
&\le C\|f\|_{\mathcal H^{\mathfrak J_1+1,\mathfrak j_1}}^2\mathbf E[\|\Upsilon_s^N\|_{\mathcal H^{-\mathfrak J_1,\mathfrak j_1}}^2]\le  C\|f\|_{\mathcal H^{\mathfrak J_1+1,\mathfrak j_1}}^2.
\end{align}

Let us now deal with the second term in the right hand side of \eqref{def<f,Upsilon>carre}. Using \eqref{phi-normeH}, Lemma \ref{lem-E[eta]<infty} and~\eqref{bound_nablafnablaphi,barmu}, and  Sobolev embeddings, we have, for all $s\in[0,T]$,  
\begin{align*}
\mathbf E[|\langle\phi(\cdot,\cdot,x),\Upsilon_s^N\otimes\gamma\rangle\langle\nabla_\theta f\cdot\nabla_\theta\phi(\cdot,\cdot,x),\bar\mu_s^N\otimes\gamma\rangle|^2] \le C\|f\|^2_{\mathcal H^{\mathfrak J_1,\mathfrak j_1}}\mathbf E[\|\Upsilon_s^N\|_{\mathcal H^{-\mathfrak J_1,\mathfrak j_1}}^2]\le C\|f\|^2_{\mathcal H^{\mathfrak J_1,\mathfrak j_1}},
\end{align*}
which provides the required upper bound. 

We now consider the t third term in the r.h.s. of   \eqref{def<f,Upsilon>carre}. 
We have, using    \eqref{bound_nablafnablaphi,barmu} and  \eqref{bound_<phi,barmu-barmu>}, together with the embedding $\mathcal H^{\mathfrak J_0,\mathfrak j_0}(\mathbf R^{d+1})\hookrightarrow\mathcal C^{1,\mathfrak j_0}(\mathbf R^{d+1})$, for all $s\in[0,T]$,  
\begin{align*}
\mathbf E[ |\langle\phi(\cdot,\cdot,x),\sqrt N(\bar\mu_s^N-\bar\mu_s)\otimes\gamma\rangle\langle\nabla_\theta f\cdot\nabla_\theta\phi(\cdot,\cdot,x),\bar\mu_s^N\otimes\gamma\rangle|^2]\le C\|f\|_{\mathcal H^{\mathfrak J_0,\mathfrak j_0}}^2.
\end{align*}
We now turn to the fourth term in \eqref{def<f,Upsilon>carre}. Note first that by \eqref{eq.boundKL}, we have that $\nabla_\theta g\cdot\nabla_\theta\mathscr D_{\mathrm{KL}}(q^1_\cdot|P_0^1)\in \mathcal H^{\mathfrak J-1,\mathfrak j+1}(\mathbf R^{d+1})$ for all $g\in\mathcal H^{\mathfrak J,\mathfrak j}(\mathbf R^{d+1})$, $\mathfrak J\ge 1$, $\mathfrak j\ge0$.
Moreover, we have 
\begin{equation}\label{nablafnablaKLnormeHfini}
\sup_{g\in\mathcal H^{\mathfrak J,\mathfrak j}(\mathbf R^{d+1}), \|g\|_{\mathcal H^{\mathfrak J,\mathfrak j}}=1 }\  \|\nabla_\theta g\cdot\nabla_\theta\mathscr D_{\mathrm{KL}}(q^1_\cdot|P_0^1)\|_{\mathcal H^{\mathfrak J-1,\mathfrak j+1}}<+\infty.
\end{equation}
  Hence, using the embedding $\mathcal H^{\mathfrak J_2,\mathfrak j_2}(\mathbf R^{d+1})\hookrightarrow\mathcal H^{\mathfrak J_1+1,\mathfrak j_1-1}(\mathbf R^{d+1})$ (see the beginning of Section \ref{sec-proof-clt}) and Lemma \ref{lem-E[eta]<infty},  we obtain, for all $s\in[0,T]$, 
\begin{align*}
\mathbf E[|\langle\nabla_\theta f\cdot\nabla_\theta\mathscr D_{\mathrm{KL}}(q^1_\cdot|P_0^1),\Upsilon_s^N\rangle|^2]\le \mathbf E[\|\langle\nabla_\theta f\cdot\nabla_\theta\mathscr D_{\mathrm{KL}}(q^1_\cdot|P_0^1)\|_{\mathcal H^{\mathfrak J_1,\mathfrak j_1}}^2\|\Upsilon_s^N\|_{\mathcal H^{-\mathfrak J_1,\mathfrak j_1}}^2]\le C\|f\|_{\mathcal H^{\mathfrak J_1+1,\mathfrak j_1-1}}^2.
\end{align*}
 By \textbf{A} and  Lemma \ref{le.Bounds},  the fifth and sixth terms in the r.h.s. of  \eqref{def<f,Upsilon>carre} are bounded by $C\|f\|_{\mathcal C^{1,\mathfrak  j_0}}^2$ and thus by $C\|f\|_{\mathcal H^{\mathfrak J_0,\mathfrak  j_0}}^2$.

We now turn to the three last terms of \eqref{def<f,Upsilon>carre}. Note first that $t\in\mathbf R_+ \mapsto\mathbf M_t^N[f]$ is a $\mathfrak F_t^N$-martingale, where $\mathfrak F_t^N=\mathcal F_{\lfloor Nt\rfloor}^N$ (to see this, use the same computations as those used in the proof of Lemma 3.2 in \cite{jmlr}). Now, using Equations (65), (61) and (62) in  \cite{colt}, we obtain, using Doob's inequality and Sobolev embeddings,  
\begin{align}
\mathbf E[\sup_{t\in[0,T]}\mathbf M_t^N[f]^2] =\mathbf E[\mathbf M_T^N[f]^2]\le C\|f\|_{\mathcal H^{\mathfrak J_0,\mathfrak j_0}}^2/N, \label{E[sup_M_t^N}\\
\mathbf E[\sup_{t\in[0,T]}\mathbf W_t^N[f]^2] \le C\|f\|_{\mathcal H^{\mathfrak J_0,\mathfrak j_0}}^2/N,\nonumber\\
\mathbf E[\sup_{t\in[0,T]}\mathbf R_t^N[f]^2]  \le C\|f\|_{\mathcal H^{\mathfrak J_0,\mathfrak j_0}}^2/N^2.\label{E[sup R_t^2]}
\end{align}
Collecting these bounds, we obtain
\begin{equation}
\mathbf E\Big[\sup_{t\in[0,T]}\langle f,\Upsilon_t^N\rangle^2\Big]\le C(\|f\|_{\mathcal H^{\mathfrak J_1+1,\mathfrak j_1}}^2+\|f\|_{\mathcal H^{\mathfrak J_1+1,\mathfrak j_1-1}}^2+\|f\|_{\mathcal H^{\mathfrak J_1,\mathfrak j_1}}^2+\|f\|_{\mathcal H^{\mathfrak J_0,\mathfrak j_0}}^2).
\end{equation}
Hence, by Sobolev  embeddings together with  the embedding $\mathcal H^{\mathfrak J_2,\mathfrak j_2}(\mathbf R^{d+1})\hookrightarrow_{\mathrm{H.S.}}\mathcal H^{\mathfrak J_1+1,\mathfrak j_1-1}(\mathbf R^{d+1})$,  one deduces that:
\begin{equation}\label{EsupUpsilon}
\mathbf E\Big[\sup_{t\in[0,T]}\|\Upsilon_t^N\|_{\mathcal H^{-\mathfrak J_2,\mathfrak j_2}}^2\Big]\le C.
\end{equation}
We now turn to the study of $\mathbf E[\sup_{t\in[0,T]}\langle f,\Theta_t^N\rangle^2]$. Recall that $\Theta_t^N=\sqrt N(\bar\mu_t^N-\bar\mu_t)$. Using \eqref{eq-<f,barmu_t^N}  and \eqref{eq.P2}  (recall that by Lemma \ref{Theta^N-continu}, one can use test functions $f\in \mathcal H^{\mathfrak J_0,\mathfrak j_0}(\mathbf R^{d+1})$ in \eqref{eq.P2}), one has:
\begin{align}\label{eq-<f,Theta_t^N}
\langle f,\Theta_t^N\rangle=\langle f,\Theta_0^N\rangle
&-\kappa\int_0^t\int_{\mathsf X\times\mathsf Y}\langle\phi(\cdot,\cdot,x)-y,\bar\mu_s\otimes\gamma\rangle\langle\nabla f\cdot\nabla_{\theta}\phi(\cdot,\cdot,x),\Theta_s^N\otimes\gamma\rangle\pi(\di x,\di y)\di s\nonumber\\
&-\kappa\int_0^t\langle\nabla f\cdot\nabla_{\theta}\mathscr D_{\mathrm{KL}}(q_\cdot^1|P_0^1),\Theta_s^N\rangle\di s.
\end{align}
By Jensen's inequality, together with \eqref{bound-phi-y} and Lemma \ref{lem-E[eta]<infty}, we obtain 
\begin{align*}
&\mathbf E\Big[\sup_{t\in[0,T]}\langle f,\Theta_t^N\rangle^2\Big]\\
&\le C\mathbf E[\langle f,\Theta_0^N\rangle^2] + C\mathbf E\Big[\int_0^T\int_{\mathsf X\times\mathsf Y}\langle\nabla f\cdot\nabla_{\theta}\phi(\cdot,\cdot,x),\Theta_s^N\otimes\gamma\rangle^2\pi(\di x,\di y)\di s\Big]\\
&\quad+C \mathbf E\Big[ \int_0^T\langle\nabla f\cdot\nabla_{\theta}\mathscr D_{\mathrm{KL}}(q_\cdot^1|P_0^1),\Theta_s^N\rangle^2\di s\Big]\\
&\le C\|f\|_{\mathcal H^{\mathfrak J_1,\mathfrak j_1}}^2+ C\|f\|_{\mathcal H^{\mathfrak J_1+1,\mathfrak j_1}}^2\int_0^T\mathbf E[\|\Theta_s^N\|_{\mathcal H^{-\mathfrak J_1,\mathfrak j_1}}^2]\di s +C\|f\|_{\mathcal H^{\mathfrak J_1+1,\mathfrak j_1-1}}^2\int_0^T\mathbf E[\|\Theta_s^N\|_{\mathcal H^{-\mathfrak J_1,\mathfrak j_1}}^2]\di s\\
&\le C (\|f\|_{\mathcal H^{\mathfrak J_1,\mathfrak j_1}}^2+\|f\|_{\mathcal H^{\mathfrak J_1+1,\mathfrak j_1}}^2+\|f\|_{\mathcal H^{\mathfrak J_1+1,\mathfrak j_1-1}}^2).
\end{align*}
Hence, by Sobolev embeddings (see the very beginning of Section \ref{sec-proof-clt}), we deduce that:
\begin{equation}
\mathbf E\Big[\sup_{t\in[0,T]}\|\Theta_t^N\|^2_{ \mathcal H^{-\mathfrak J_2,\mathfrak j_2}}\Big]\le C.
\end{equation}
Together with \eqref{EsupUpsilon}, this completes the proof of the lemma. 
\end{proof}

The following lemma provides the regularity condition needed to prove that the sequence of  fluctuation processes $(\eta^N)_{N\ge1}$ is relatively compact in the space  $\mathcal D(\mathbf R_+,\mathcal H^{-\mathfrak J_3+1,\mathfrak j_3}(\mathbf R^{d+1}))$.

\begin{lemma}\label{lem-reg-cond-eta}
Assume {\rm \textbf{A}}.
For all $T>0$, there exist $C>0$ such that for all $N\ge1$, $\delta>0$, $0\le r<t\le T$ with $t-r\le \delta$ and $f\in\mathcal C^\infty_c(\mathbf R^{d+1})$,   $ 
\mathbf E[|\langle f,\eta_t^N\rangle-\langle f,\eta_r^N\rangle|]\le C(\sqrt \delta + \delta + (1+  \delta)/\sqrt N) \|f\|_{\mathcal H^{\mathfrak J_1+1,\mathfrak j_1-1}}$. 
\end{lemma}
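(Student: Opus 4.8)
The plan is to use the decomposition $\eta^N=\Upsilon^N+\Theta^N$ of Section~\ref{sec-proof-clt} and to bound separately the increments on $[r,t]$ of $\langle f,\Upsilon^N_\cdot\rangle$ and $\langle f,\Theta^N_\cdot\rangle$, starting from the exact identities \eqref{def<f,Upsilon>} and \eqref{eq-<f,Theta_t^N}, which hold for $f\in\mathcal C^\infty_c(\mathbf R^{d+1})$. Subtracting each identity at times $t$ and $r$, every drift term becomes an integral over $[r,t]$. Bounding the pairings involving $\phi$ and $\nabla_\theta\phi$ by \textbf{I}, \textbf{II}, \eqref{phi-normeH} and \eqref{eq-nabla-f-nablaphi-H}, and the Kullback--Leibler pairing by \eqref{eq.boundKL} and \eqref{nablafnablaKLnormeHfini}, each such integral is controlled, after taking expectations and using Jensen's inequality, by $C(t-r)\,\|f\|_{\mathcal H^{\mathfrak J_1+1,\mathfrak j_1-1}}$ times $\sup_{s\in[0,T]}\big(\mathbf E[\|\Upsilon_s^N\|_{\mathcal H^{-\mathfrak J_1,\mathfrak j_1}}^2]^{1/2}+\mathbf E[\|\Theta_s^N\|_{\mathcal H^{-\mathfrak J_1,\mathfrak j_1}}^2]^{1/2}\big)$, which is finite and uniform in $N$ by Lemma~\ref{lem-E[eta]<infty}; the weight index $\mathfrak j_1-1$ (rather than $\mathfrak j_1$) is dictated by the KL term, whose gradient has linear growth. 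The same estimate covers the term of \eqref{def<f,Upsilon>} carrying the factor $\sqrt N(\bar\mu_s^N-\bar\mu_s)=\Theta_s^N$, and also the increment $\langle f,\Theta_t^N\rangle-\langle f,\Theta_r^N\rangle$, which by \eqref{eq-<f,Theta_t^N} is a pure drift integral over $[r,t]$. All these contributions are therefore $O(\delta)\,\|f\|_{\mathcal H^{\mathfrak J_1+1,\mathfrak j_1-1}}$, while the two $\frac{\kappa}{\sqrt N}\int_r^t(\cdots)\,\di s$ correction integrals of \eqref{def<f,Upsilon>}, whose integrands are bounded by \textbf{I}, \textbf{II} and Lemma~\ref{le.Bounds}, give a contribution $O(\delta/\sqrt N)\,\|f\|_{\mathcal H^{\mathfrak J_0,\mathfrak j_0}}$.

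It then remains to treat the three non-drift terms $\sqrt N(\mathbf M_t^N-\mathbf M_r^N)[f]$, $\sqrt N(\mathbf W_t^N-\mathbf W_r^N)[f]$ and $\sqrt N(\mathbf R_t^N-\mathbf R_r^N)[f]$. Since $t\mapsto\mathbf M_t^N[f]$ is a martingale for the filtration $(\mathcal F^N_{\lfloor Nt\rfloor})_{t\ge0}$ (as in the proof of Lemma~\ref{lem-cc-eta}), orthogonality of its increments gives $\mathbf E[(\sqrt N(\mathbf M_t^N-\mathbf M_r^N)[f])^2]=N\sum_{\lfloor Nr\rfloor\le k<\lfloor Nt\rfloor}\mathbf E[\mathbf M_k^N[f]^2]$, and the per-step estimate behind \eqref{E[sup_M_t^N} (Eq.~(65) in \cite{colt}) yields $\mathbf E[\mathbf M_k^N[f]^2]\le C\|f\|_{\mathcal H^{\mathfrak J_0,\mathfrak j_0}}^2/N^2$ uniformly in $k$; since there are at most $N\delta+1$ such indices, this is $\le C(\delta+1/N)\|f\|_{\mathcal H^{\mathfrak J_0,\mathfrak j_0}}^2$, whence by Cauchy--Schwarz the $L^1$-norm of this term is $\le C(\sqrt\delta+1/\sqrt N)\|f\|_{\mathcal H^{\mathfrak J_0,\mathfrak j_0}}$. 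This is the only source of the $\sqrt\delta$ in the statement. For the two remainder terms, one uses that $\mathbf W_t^N[f]$ is an integral over an interval of length $<1/N$ whose integrand is bounded in $L^1$ by $C\|f\|_{\mathcal C^{1,\mathfrak j_0}}(1+\max_{i,\,k\le\lfloor NT\rfloor}\mathbf E[|\theta^i_k|])$ (by \textbf{I}, \textbf{II}, \eqref{eq.boundKL} and Lemma~\ref{le.Bounds}), so that $\mathbf E[|\mathbf W_t^N[f]-\mathbf W_r^N[f]|]\le C\|f\|_{\mathcal H^{\mathfrak J_0,\mathfrak j_0}}/N$; and that $\mathbf R_t^N[f]-\mathbf R_r^N[f]=\sum_{\lfloor Nr\rfloor\le k<\lfloor Nt\rfloor}\mathbf R_k^N[f]$ has at most $N\delta+1$ terms, each of order $1/N^2$ in $L^1$ (the increments $\theta^i_{k+1}-\theta^i_k$ of \eqref{eq.algo-batch} being of order $1/N$ up to moment factors controlled by Lemma~\ref{le.Bounds}), so that $\mathbf E[|\mathbf R_t^N[f]-\mathbf R_r^N[f]|]\le C(1+\delta)\|f\|_{\mathcal H^{\mathfrak J_0,\mathfrak j_0}}/N$. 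Multiplying these two bounds by $\sqrt N$ gives the $C(1+\delta)/\sqrt N$ contribution.

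Collecting all the pieces and reducing every norm that appears to $\|f\|_{\mathcal H^{\mathfrak J_1+1,\mathfrak j_1-1}}$ via the Sobolev embeddings recalled at the beginning of Section~\ref{sec-proof-clt} then yields the claimed inequality. I expect the main difficulty to be bookkeeping rather than any isolated hard estimate: one must (i) actually sum the $O(N\delta)$ conditional variances of the martingale differences so that the rescaled martingale increment is $O(\sqrt\delta)$ and not $O(1)$, and (ii) control $\mathbf W^N$ and $\mathbf R^N$ through their intrinsic pathwise size $O(1/N)$, rather than through the coarser time-supremum bounds used for the compact containment in Lemma~\ref{lem-cc-eta}, so that after multiplying by $\sqrt N$ the announced $1/\sqrt N$ decay survives. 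The remaining ingredients are direct reuses of the estimates established for Lemmas~\ref{le.Bounds}, \ref{lem-E[eta]<infty} and \ref{lem-cc-eta}.
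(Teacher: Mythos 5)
Your proposal is correct and follows essentially the same route as the paper: decompose $\eta^N=\Upsilon^N+\Theta^N$, bound the drift integrals over $[r,t]$ by $C\delta\|f\|$ via Lemma~\ref{lem-E[eta]<infty} and the estimates of Lemma~\ref{lem-cc-eta}, control the rescaled martingale increment through its second moment (giving the $\sqrt\delta+1/\sqrt N$ term, which the paper obtains by citing Lemma 21 of \cite{colt}), and treat $\mathbf W^N$ and $\mathbf R^N$ through their intrinsic $O(1/N)$ size — including the sharpened direct bound on $\mathbf W^N$ that the paper itself points out is needed — before reducing all norms to $\|f\|_{\mathcal H^{\mathfrak J_1+1,\mathfrak j_1-1}}$ by the Sobolev embeddings.
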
 

\begin{proof}
From \eqref{def<f,Upsilon>},  $\langle f,\Upsilon_t^N\rangle-\langle f,\Upsilon_r^N\rangle$ is equal to:
\begin{align}\label{<f,Upsilon_t>-<fUpsilon_r}
&
 -\kappa\int_r^t\int_{\mathsf X\times\mathsf Y}\langle\phi(\cdot,\cdot,x)-y,\mu_s^N\otimes\gamma\rangle\langle\nabla_\theta f\cdot\nabla_\theta\phi(\cdot,\cdot,x),\Upsilon_s^N\otimes\gamma\rangle\pi(\di x,\di y)\di s\nonumber\\
 &\quad -\kappa\int_r^t\int_{\mathsf X\times\mathsf Y}\langle\phi(\cdot,\cdot,x),\Upsilon_s^N\otimes\gamma\rangle\langle\nabla_\theta f\cdot\nabla_\theta\phi(\cdot,\cdot,x),\bar\mu_s^N\otimes\gamma\rangle\pi(\di x,\di y)\di s\nonumber\\
&\quad -\kappa\int_r^t\int_{\mathsf X\times\mathsf Y}\langle\phi(\cdot,\cdot,x),\sqrt N(\bar\mu_s^N-\bar\mu_s)\otimes\gamma\rangle\langle\nabla_\theta f\cdot\nabla_\theta\phi(\cdot,\cdot,x),\bar\mu_s^N\otimes\gamma\rangle\pi(\di x,\di y)\di s\nonumber\\
&\quad -\kappa\int_r^t\langle\nabla_\theta f\cdot\nabla_\theta\mathscr D_{\mathrm{KL}}(q^1_\cdot|P_0^1),\Upsilon_s^N\rangle\di s\nonumber\\
&\quad +\frac{\kappa}{\sqrt N}\int_r^t\int_{\mathsf X\times\mathsf Y}\Big\langle\langle\phi(\cdot,\cdot,x)-y,\gamma\rangle\langle\nabla_\theta f\cdot\nabla_\theta\phi(\cdot,\cdot,x),\gamma\rangle,\mu_s^N\Big\rangle\pi(\di x,\di y)\di s\nonumber\\
\nonumber
&\quad -\frac{\kappa}{\sqrt N}\int_r^t\int_{\mathsf X\times\mathsf Y}\Big\langle (\phi(\cdot,\cdot,x)-y)\nabla_\theta f\cdot\nabla_\theta\phi(\cdot,\cdot,x),\mu_s^N\otimes\gamma\Big\rangle\pi(\di x,\di y)\di s\nonumber\\
&\quad +\sqrt N(\mathbf M_t^N[f]-\mathbf M_r^N[f])+\sqrt N(\mathbf W_t^N[f]-\mathbf W_r^N[f]) +\sqrt N(\mathbf R_t^N[f]-\mathbf R_r^N[f]).
\end{align}
Using similar techniques as those used in the proof of Lemma \ref{lem-cc-eta}, we obtain the following bounds:  
\begin{align*}
&\mathbf E\Big[\Big|\int_r^t\int_{\mathsf X\times\mathsf Y}\langle\phi(\cdot,\cdot,x)-y,\mu_s^N\otimes\gamma\rangle\langle\nabla_\theta f\cdot\nabla_\theta\phi(\cdot,\cdot,x),\Upsilon_s^N\otimes\gamma\rangle \pi(\di x,\di y)\di s\Big|\Big]\le C\|f\|_{\mathcal H^{\mathfrak J_1+1,\mathfrak j_1}}(t-r), \\
&\mathbf E\Big[\Big|\int_r^t\int_{\mathsf X\times\mathsf Y}\langle\phi(\cdot,\cdot,x),\Upsilon_s^N\otimes\gamma\rangle\langle\nabla_\theta f\cdot\nabla_\theta\phi(\cdot,\cdot,x),\bar\mu_s^N\otimes\gamma\rangle\pi(\di x,\di y)\di s\Big|\Big]\le  C\|f\|_{\mathcal H^{\mathfrak J_1,\mathfrak j_1}}(t-r), \\
&\mathbf E\Big[\Big|\int_r^t\int_{\mathsf X\times\mathsf Y}\langle\phi(\cdot,\cdot,x),\sqrt N(\bar\mu_s^N-\bar\mu_s)\otimes\gamma\rangle\langle\nabla_\theta f\cdot\nabla_\theta\phi(\cdot,\cdot,x),\bar\mu_s^N\otimes\gamma\rangle\pi(\di x,\di y)\di s\Big|\Big] \le C\|f\|_{\mathcal H^{\mathfrak J_0,\mathfrak j_0}}(t-r), \\
&\mathbf E\Big[\Big| \int_r^t\langle\nabla_\theta f\cdot\nabla_\theta\mathscr D_{\mathrm{KL}}(q^1_\cdot|P_0^1),\Upsilon_s^N\rangle\di s\Big|\Big]\le C\|f\|_{\mathcal H^{\mathfrak J_1+1,\mathfrak j_1-1}}(t-r), \\
&\mathbf E\Big[\Big|\frac{1}{\sqrt N}\int_r^t\int_{\mathsf X\times\mathsf Y}\Big\langle\langle\phi(\cdot,\cdot,x)-y,\gamma\rangle\langle\nabla_\theta f\cdot\nabla_\theta\phi(\cdot,\cdot,x),\gamma\rangle,\mu_s^N\Big\rangle\pi(\di x,\di y)\di s\Big|\Big] \le  C\frac{\|f\|_{\mathcal H^{\mathfrak J_0,\mathfrak j_0}}}{\sqrt N}(t-r),\\
&\mathbf E\Big[\Big| \frac{1}{\sqrt N}\int_r^t\int_{\mathsf X\times\mathsf Y}\Big\langle (\phi(\cdot,\cdot,x)-y)\nabla_\theta f\cdot\nabla_\theta\phi(\cdot,\cdot,x),\mu_s^N\otimes\gamma\Big\rangle\pi(\di x,\di y)\di s \Big|\Big]\le C\frac{\|f\|_{\mathcal H^{\mathfrak J_0,\mathfrak j_0}}}{\sqrt N}(t-r).
\end{align*}
Let us now treat the three last terms appearing at the last line of Equation~\eqref{<f,Upsilon_t>-<fUpsilon_r}.  
From the proof of  Lemma 21 in \cite{colt}, we have:
\begin{align*}
\mathbf E[|\mathbf M_t^N[f]-\mathbf M_r^N[f]|]\le C\frac{\sqrt{N\delta+1}}{N}\|f\|_{\mathcal C^{1,\mathfrak j_0}} \text{ and } \mathbf E[|\mathbf R_t^N[f]-\mathbf R_r^N[f]|]\le C\frac{\|f\|_{\mathcal C^{2,\mathfrak j_0}}}{N}.
\end{align*}
Let us mention that  the upper bound on $\mathbf E[|\mathbf W_t^N[f]-\mathbf W_r^N[f]|]$ provided in the proof of Lemma 21 in\cite{colt} (which we recall implies that this term is control by $1/\sqrt N$) is not sharp enough.  With  straightforward computations, from the definition  of $\mathbf W_t^N[f]$, we  actually have: 
\begin{align*}
\mathbf E[|\mathbf W_t^N[f]-\mathbf W_r^N[f]|]\le \mathbf E[|\mathbf W_t^N[f]|]+\mathbf E[|\mathbf W_t^N[f]|]\le C\frac{\|f\|_{\mathcal C^{1,j_0}}}{N}.
\end{align*} 
In conclusion,  using  Sobolev embeddings (see the very beginning of Section \ref{sec-proof-clt}), we obtain 
\begin{equation}\label{EUt-Ur}
\mathbf E[|\langle f,\Upsilon_t^N\rangle-\langle f,\Upsilon_r^N\rangle|]\le  C(\sqrt \delta + \delta + (1+  \delta)/\sqrt N)\|f\|_{\mathcal H^{\mathfrak J_1+1,\mathfrak j_1-1}}.
\end{equation}
Let us now consider $\Theta^N_t-\Theta^N_r$. 
By \eqref{eq-<f,Theta_t^N}, one has:
\begin{align}
\langle f,\Theta_t^N\rangle-\langle f,\Theta_r^N\rangle=
&-\kappa\int_r^t\int_{\mathsf X\times\mathsf Y}\langle\phi(\cdot,\cdot,x)-y,\bar\mu_s\otimes\gamma\rangle\langle\nabla f\cdot\nabla_{\theta}\phi(\cdot,\cdot,x),\Theta_s^N\otimes\gamma\rangle\pi(\di x,\di y)\di s\nonumber\\
&-\kappa\int_r^t\langle\nabla f\cdot\nabla_{\theta}\mathscr D_{\mathrm{KL}}(q_\cdot^1|P_0^1),\Theta_s^N\rangle\di s.
\end{align}
By \eqref{bound-phi-y}, \eqref{eq-nabla-f-nablaphi-H} and  \eqref{nablafnablaKLnormeHfini}, together with Lemma \ref{lem-E[eta]<infty}, it then holds: 
\begin{align}
\nonumber
\mathbf E\big[|\langle f,\Theta_t^N\rangle-\langle f,\Theta_r^N\rangle|\big]&\le C\|f\|_{\mathcal H^{\mathfrak J_1+1,\mathfrak j_1}}\int_r^t\mathbf E[\|\Theta_s^N\|_{ \mathcal H^{-\mathfrak J_1,\mathfrak j_1}}]\di s+ C\|f\|_{\mathcal H^{\mathfrak J_1+1,\mathfrak j_1-1}}\int_r^t\mathbf E[\|\Theta_s^N\|_{ \mathcal H^{-\mathfrak J_1,\mathfrak j_1}}]\di s\\
\label{E[Theotat-Thetar}
&\le C\delta(\|f\|_{\mathcal H^{\mathfrak J_1+1,\mathfrak j_1}}+\|f\|_{\mathcal H^{\mathfrak J_1+1,\mathfrak j_1-1}}).
\end{align}
Hence, by \eqref{EUt-Ur} and \eqref{E[Theotat-Thetar}, and recalling that $\eta^N=\Upsilon^N+\Theta^N$, we get that   $\mathbf E[|\langle f,\eta_t^N\rangle-\langle f,\eta_r^N\rangle|]\le  C(\sqrt \delta + \delta + (1+  \delta)/\sqrt N) \|f\|_{\mathcal H^{\mathfrak J_1+1,\mathfrak j_1-1}}$.  
\end{proof}

\begin{lemma}\label{lem_borneA_et_B}
Assume {\rm \textbf{A}}.
Let $(f_a)_{a\ge1}$ be an orthonormal basis of $\mathcal H^{\mathfrak J_1,\mathfrak j_1}(\mathbf R^{d+1})$. Then, for all $T> 0$, there exists $C>0$ such that for all $0\le t\le T$,  
\begin{enumerate}[label=\textit{(\roman*)}]
\item\label{lem1_it1} \begin{align*}
&\sum_{a\ge 1}\mathbf E\Big[-2\kappa\int_0^t\int_{\mathsf X\times\mathsf Y}\langle f_a,\Upsilon_s^N\rangle \langle\phi(\cdot,\cdot,x)-y,\mu_s^N\otimes\gamma\rangle\langle\nabla_\theta f_a\cdot\nabla_\theta\phi(\cdot,\cdot,x),\Upsilon_s^N\otimes\gamma\rangle\pi(\di x,\di y)\di s\Big]\\
 &\leq C\int_0^t\mathbf E\Big[ \|\Upsilon_s^N\|_{\mathcal H^{-\mathfrak J_1,\mathfrak j_1}}^2\Big]\di s. 
\end{align*}

\item\label{lem1_it2} \begin{align*}
\sum_{a\ge1}\mathbf E\Big[ -2\kappa\int_0^t\langle f_a,\Upsilon_s^N\rangle\langle\nabla_\theta f_a\cdot\nabla_\theta\mathscr D_{\mathrm{KL}}(q^1_\cdot|P_0^1),\Upsilon_s^N\rangle\di s \Big] \le C\int_0^t\mathbf E\Big[ \|\Upsilon_s^N\|_{\mathcal H^{-\mathfrak J_1,\mathfrak j_1}}^2\Big]\di s.
\end{align*}

\item\label{lem1_it3} \begin{align*}
&\sum_{a\ge1}\mathbf E\Big[ -2\kappa\int_0^t\int_{\mathsf X\times\mathsf Y}\langle f_a,\Upsilon_s^N\rangle\langle\phi(\cdot,\cdot,x),\Upsilon_s^N\otimes\gamma\rangle\langle\nabla_\theta f_a\cdot\nabla_\theta\phi(\cdot,\cdot,x),\bar\mu_s^N\otimes\gamma\rangle\pi(\di x,\di y)\di s\nonumber\\
&\quad\quad -2\kappa\int_0^t\int_{\mathsf X\times\mathsf Y}\langle f_a,\Upsilon_s^N\rangle\langle\phi(\cdot,\cdot,x),\sqrt N(\bar\mu_s^N-\bar\mu_s)\otimes\gamma\rangle\langle\nabla_\theta f_a\cdot\nabla_\theta\phi(\cdot,\cdot,x),\bar\mu_s^N\otimes\gamma\rangle\pi(\di x,\di y)\di s\Big]\\
&\le C+C \int_0^t\mathbf E[\|\Upsilon_s^N\|_{\mathcal H^{-\mathfrak J_1,\mathfrak j_1}}^2]\di s.  
\end{align*}
\item\label{lem1_it4}\begin{align*}
\sum_{a\ge1}\mathbf E&\Big[\frac{2\kappa}{\sqrt N}\int_0^t\int_{\mathsf X\times\mathsf Y}\langle f_a,\Upsilon_s^N\rangle\Big\langle\langle\phi(\cdot,\cdot,x)-y,\gamma\rangle\langle\nabla_\theta f_a\cdot\nabla_\theta\phi(\cdot,\cdot,x),\gamma\rangle,\mu_s^N\Big\rangle\pi(\di x,\di y)\di s\nonumber\\
&\quad -\frac{2\kappa}{\sqrt N}\int_0^t\int_{\mathsf X\times\mathsf Y}\langle f_a,\Upsilon_s^N\rangle\Big\langle (\phi(\cdot,\cdot,x)-y)\nabla_\theta f_a\cdot\nabla_\theta\phi(\cdot,\cdot,x),\mu_s^N\otimes\gamma\Big\rangle\pi(\di x,\di y)\di s\Big]\\
&\le C+C\int_0^t\mathbf E\Big[|\|\Upsilon_s^N\|_{\mathcal H^{-\mathfrak J_1,\mathfrak j_1}}^2\Big]\di s.
\end{align*}
\item \label{lem1_it5} \begin{equation*}
\sum_{a\ge1}\mathbf E\Big[\sum_{k=0}^{\lfloor Nt\rfloor-1}\Big[2\langle f_a,\Upsilon_{\frac{k+1}{N}^-}^N\rangle\sqrt N\mathbf M_k^N[f_a]+3N\mathbf M_k^N[f_a]^2\Big]\Big]\le C.
\end{equation*}
\item \label{lem1_it6}\begin{align*}
\sum_{a\ge1}\mathbf E\Big[\sum_{k=0}^{\lfloor Nt\rfloor-1}\Big[2\langle f_a,\Upsilon_{\frac{k+1}{N}^-}^N\rangle\sqrt N\mathbf R_k^N[f_a]+3N\mathbf R_k^N[f_a]^2\Big]\Big] \le C+ \int_0^t\mathbf E\Big[\|\Upsilon_s^N\|_{\mathcal H^{-\mathfrak J_1,\mathfrak j_1}}^2\Big]\di s.
\end{align*}

\item\label{lem1_it7} \begin{align*}
\sum_{a\ge1}\mathbf E\Big[\sum_{k=0}^{\lfloor Nt\rfloor-1}\Big[2\langle f_a,\Upsilon_{\frac{k+1}{N}^-}^N\rangle\mathbf a_k^N[f_a]+3\mathbf a_k^N[f_a]^2\Big]-2\sqrt N\int_0^t\langle f_a,\Upsilon_s^N\rangle\mathbf L_s^N[f]\di s\Big]\le C.
\end{align*}
\end{enumerate}
\end{lemma}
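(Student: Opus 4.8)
The plan is to prove all seven estimates by one common scheme and then to single out the two points that genuinely require care. Throughout, the only admissible ingredients are: the boundedness of $\phi$, $\nabla_\theta\phi$ and of $\mathfrak H(\cdot,x)=\langle\phi(\cdot,\cdot,x),\gamma\rangle$ together with all its derivatives (items \textbf{I}--\textbf{II}); the operator bounds \eqref{phi-normeH}, \eqref{eq-nabla-f-nablaphi-H} and \eqref{nablafnablaKLnormeHfini}; the growth bound \eqref{eq.boundKL} on $\nabla_\theta\mathscr D_{\mathrm{KL}}(q_\theta^1|P_0^1)$; the uniform moment estimate of Lemma~\ref{le.Bounds} on the $\theta_k^i$'s (valid only for $0\le k\le\lfloor NT\rfloor$, which forces every bound to be stated for $t\le T$); the deterministic bounds \eqref{eq.Xt} on the $\bar X^i$'s; and the chain of continuous and Hilbert--Schmidt embeddings between the spaces $\mathcal H^{\mathfrak J_\ell,\mathfrak j_\ell}$ recalled at the beginning of Section~\ref{sec-proof-clt}, in particular $\mathcal H^{\mathfrak J_1,\mathfrak j_1}\hookrightarrow_{\mathrm{H.S.}}\mathcal H^{\mathfrak J_0,\mathfrak j_0}\hookrightarrow\mathcal C^{2,\mathfrak j_0}$, so that $\sum_{a\ge1}\|f_a\|_{\mathcal H^{\mathfrak J_0,\mathfrak j_0}}^2<+\infty$. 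Lemma~\ref{lem-E[eta]<infty} itself may \emph{not} be used (it is what this lemma feeds), so the only $\Upsilon$-dependent quantity allowed on the right-hand sides is $\int_0^t\mathbf E[\|\Upsilon_s^N\|_{\mathcal H^{-\mathfrak J_1,\mathfrak j_1}}^2]\di s$; for $\Theta^N=\sqrt N(\bar\mu^N-\bar\mu)$ we instead use the elementary i.i.d.\ inequality $\mathbf E[\langle g,\Theta_s^N\rangle^2]=\Var(g(\bar X_s^1))\le C\|g\|_{\mathcal C^{0,\mathfrak j_0}}^2$, which follows from \eqref{eq.Xt} and is independent of the Gronwall loop.

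For items \ref{lem1_it1}--\ref{lem1_it4} (the drift contributions) I would, for fixed $s$, factor each integrand into a \emph{scalar} factor and two pairings against $\Upsilon_s^N$. The scalar factor is $\langle\phi(\cdot,\cdot,x)-y,\mu_s^N\otimes\gamma\rangle$, bounded by $C$ via \textbf{I} and compactness; or $\langle\nabla_\theta f_a\cdot\nabla_\theta\phi(\cdot,\cdot,x),\bar\mu_s^N\otimes\gamma\rangle$, bounded by $C\|f_a\|_{\mathcal C^{1,\mathfrak j_0}}$ via Lemma~\ref{le.Bounds}; or $\langle\phi(\cdot,\cdot,x),\Theta_s^N\otimes\gamma\rangle=\langle\mathfrak H(\cdot,x),\Theta_s^N\rangle$, bounded by $C\|\Theta_s^N\|_{\mathcal H^{-\mathfrak J_1,\mathfrak j_1}}$ via \eqref{phi-normeH}. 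The two pairings are $\langle f_a,\Upsilon_s^N\rangle$ and either $\langle\nabla_\theta f_a\cdot\nabla_\theta\mathfrak H(\cdot,x),\Upsilon_s^N\rangle$ (using \textbf{II} to collapse the $\gamma$-integral to $\nabla_\theta\mathfrak H$) or $\langle\nabla_\theta f_a\cdot\nabla_\theta\mathscr D_{\mathrm{KL}}(q^1_\cdot|P_0^1),\Upsilon_s^N\rangle$. Summing over $a$ and applying Cauchy--Schwarz in $a$ converts $\sum_a\langle f_a,\Upsilon_s^N\rangle\langle T f_a,\Upsilon_s^N\rangle$ into a constant times $\|\Upsilon_s^N\|_{\mathcal H^{-\mathfrak J_1,\mathfrak j_1}}^2$, the constant being finite because $f\mapsto\nabla_\theta f\cdot\nabla_\theta\mathfrak H(\cdot,x)$ and $f\mapsto\nabla_\theta f\cdot\nabla_\theta\mathscr D_{\mathrm{KL}}(q^1_\cdot|P_0^1)$ are bounded from $\mathcal H^{\mathfrak J_1,\mathfrak j_1}$ into $\mathcal H^{\mathfrak J_1-1,\mathfrak j_1}$ and $\mathcal H^{\mathfrak J_1-1,\mathfrak j_1+1}$ (by \eqref{eq-nabla-f-nablaphi-H} and \eqref{nablafnablaKLnormeHfini}), with the remaining loss of a derivative/weight absorbed by the Hilbert--Schmidt embeddings of Section~\ref{sec-proof-clt}; this is exactly the functional-analytic mechanism of Lemma~B.3 and the arguments of Section~3 in \cite{jmlr}. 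Integrating in $s$ gives the $\int_0^t\mathbf E[\|\Upsilon_s^N\|^2]\di s$ appearing in \ref{lem1_it1}--\ref{lem1_it2}; in \ref{lem1_it3} the $\Theta^N$-factor produces the additive constant through the i.i.d.\ bound above; in \ref{lem1_it4} the inner bracket is bounded by $C\|f_a\|_{\mathcal C^{1,\mathfrak j_0}}$ by \textbf{I}--\textbf{II} and Lemma~\ref{le.Bounds}, so one Young inequality combined with the $1/\sqrt N$ prefactor and $\sum_a\|f_a\|_{\mathcal C^{1,\mathfrak j_0}}^2<\infty$ yields $C+C\int_0^t\mathbf E[\|\Upsilon_s^N\|_{\mathcal H^{-\mathfrak J_1,\mathfrak j_1}}^2]\di s$.

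For the martingale term \ref{lem1_it5}, $t\mapsto\mathbf M_t^N[f_a]$ is a martingale for the filtration $(\mathcal F_{\lfloor Nt\rfloor}^N)$ augmented by $\sigma(\bar X^i,1\le i\le N)$ --- legitimate since the fresh randomness $((x_k,y_k),\mathsf Z_k^j)$ at step $k$ is independent of that augmented $\sigma$-algebra (by \textbf{A2}, \textbf{A5}) while $\langle f_a,\Upsilon_{(k+1)/N^-}^N\rangle$ is measurable with respect to it; hence $\mathbf E[\langle f_a,\Upsilon_{(k+1)/N^-}^N\rangle\mathbf M_k^N[f_a]]=0$ and only $3N\sum_k\mathbf E[\mathbf M_k^N[f_a]^2]$ survives. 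From the definitions of $\mathbf M_k^N$ and $\mathbf D_k^N$, item \textbf{I}, Cauchy--Schwarz over $i$, independence of the $\mathsf Z_k^i$'s and \eqref{jac_T_bounded}, one gets $\mathbf E[\mathbf M_k^N[f_a]^2]\le\frac{C}{N^2}\|f_a\|_{\mathcal C^{1,\mathfrak j_0}}^2$ by Lemma~\ref{le.Bounds}; multiplying by $N$ and summing over $k\le\lfloor Nt\rfloor$ leaves $C\|f_a\|_{\mathcal C^{1,\mathfrak j_0}}^2$, summable in $a$. For the Taylor remainder \ref{lem1_it6} and the time-discretisation term \ref{lem1_it7} the key input is the one-step bound $|\theta_{k+1}^i-\theta_k^i|\le\frac{C}{N}(\mathfrak b(\mathsf Z_k^i)+1+|\theta_k^i|)$, read off \eqref{eq.algo-batch} and \eqref{eq.boundKL}; it gives $N\mathbf E[\mathbf R_k^N[f_a]^2]\le\frac{C}{N^3}\|f_a\|_{\mathcal C^{2,\mathfrak j_0}}^2$ and, via $\mathbf a_k^N[f]=\sqrt N\int_{k/N}^{(k+1)/N}\mathbf L_s^N[f]\di s$, Jensen, and $\mathbf E[\mathbf L_s^N[f_a]^2]\le C\|f_a\|_{\mathcal C^{1,\mathfrak j_0}}^2$ (same estimates), $\sum_k\mathbf E[\mathbf a_k^N[f_a]^2]\le C\|f_a\|_{\mathcal C^{1,\mathfrak j_0}}^2$. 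The cross terms are handled by Young's inequality and a Riemann-sum comparison: in \ref{lem1_it6} one controls $\sum_k\langle f_a,\Upsilon_{(k+1)/N^-}^N\rangle^2$ by $2N\int_0^{\lfloor Nt\rfloor/N}\langle f_a,\Upsilon_s^N\rangle^2\di s$ plus $O(\|f_a\|^2)$ (the $O(1/N)$ within-interval variation of $\bar\mu^N$ by \eqref{eq.Xt}), which after Cauchy--Schwarz with the $\mathbf R_k^N$-bound produces the allowed $\int\mathbf E[\|\Upsilon_s^N\|^2]$; in \ref{lem1_it7} the near-cancellation $2\sum_k\langle f_a,\Upsilon_{(k+1)/N^-}^N\rangle\mathbf a_k^N[f_a]-2\sqrt N\int_0^t\langle f_a,\Upsilon_s^N\rangle\mathbf L_s^N[f_a]\di s$ leaves only a boundary term on $[\lfloor Nt\rfloor/N,t]$ and the same within-interval variation, both controlled using the crude a priori bound $\mathbf E[\langle f_a,\Upsilon_s^N\rangle^2]\le CN\|f_a\|_{\mathcal C^{0,\mathfrak j_0}}^2$ (from Lemma~\ref{le.Bounds} and \eqref{eq.Xt} applied separately to $\mu^N$ and $\bar\mu^N$), so that the net contribution is $O(\|f_a\|_{\mathcal C^{1,\mathfrak j_0}}^2)$, again summable in $a$.

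The two steps I expect to be the real obstacles are exactly the two difficulties flagged in the main text. First, the \emph{unboundedness} of $\nabla_\theta\mathscr D_{\mathrm{KL}}(q_\theta^1|P_0^1)$: since it grows like $1+|\theta|$, every occurrence costs one unit of polynomial weight (the passage from $\mathfrak j_1$ to $\mathfrak j_1-1$ visible in \eqref{nablafnablaKLnormeHfini} and in the target spaces of \ref{lem1_it2}, \ref{lem1_it4}, \ref{lem1_it6}) and makes Lemma~\ref{le.Bounds} indispensable, forcing the restriction $t\le T$ everywhere. Second, the bookkeeping required to turn the sums over the orthonormal basis $(f_a)$ of $\mathcal H^{\mathfrak J_1,\mathfrak j_1}$ into clean $\|\Upsilon_s^N\|_{\mathcal H^{-\mathfrak J_1,\mathfrak j_1}}^2$ (or constant) bounds without ever meeting a norm of $\Upsilon_s^N$ of index larger than $\mathfrak J_1$: this is where the precise gaps between $(\mathfrak J_0,\mathfrak j_0)$, $(\mathfrak J_1,\mathfrak j_1)$ and $(\mathfrak J_2,\mathfrak j_2)$ and the Hilbert--Schmidt embeddings of Section~\ref{sec-proof-clt} are used, along the lines of the Hilbertian-distribution arguments of \cite{fernandez1997hilbertian,jourdainAIHP} and \cite{jmlr}.
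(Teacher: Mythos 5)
Your treatment of items \textit{(iii)}--\textit{(vii)} follows essentially the paper's route (Young's inequality and the $1/\sqrt N$ prefactor for \textit{(iv)}, the conditional-centering argument and $\mathbf E[\mathbf M_k^N[f]^2]\le C\|f\|^2_{\mathcal C^{1,\mathfrak j_0}}/N^2$ for \textit{(v)}, the $\frac1N a^2+N^2b^2$ splitting with $\mathbf E[\mathbf R_k^N[f]^2]\le C/N^4$ for \textit{(vi)}, and the within-interval variation of $\bar\mu^N$ plus the crude bound $\mathbf E[\langle f,\Upsilon_s^N\rangle^2]\le CN\|f\|^2$ for \textit{(vii)}), and those parts are sound. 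The gap is in items \textit{(i)} and \textit{(ii)}, which you propose to close by ``Cauchy--Schwarz in $a$'' with the loss of a derivative ``absorbed by the Hilbert--Schmidt embeddings.'' This step fails. Cauchy--Schwarz in $a$ gives
\begin{equation*}
\Big|\sum_{a\ge1}\langle f_a,\Upsilon_s^N\rangle\langle \mathbf T_x f_a,\Upsilon_s^N\rangle\Big|\le \|\Upsilon_s^N\|_{\mathcal H^{-\mathfrak J_1,\mathfrak j_1}}\,\|\mathbf T_x^*\Upsilon_s^N\|_{\mathcal H^{-\mathfrak J_1,\mathfrak j_1}},
\end{equation*}
and since $\mathbf T_x$ (resp. $\mathbf T$) is a first-order operator mapping $\mathcal H^{\mathfrak J_1,\mathfrak j_1}$ only into $\mathcal H^{\mathfrak J_1-1,\mathfrak j_1}$ (resp. $\mathcal H^{\mathfrak J_1-1,\mathfrak j_1+1}$), the second factor is controlled only by the \emph{stronger} dual norm $\|\Upsilon_s^N\|_{\mathcal H^{-\mathfrak J_1+1,\mathfrak j_1}}$. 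The Hilbert--Schmidt embeddings cannot repair this: at the dual level they give $\mathcal H^{-\mathfrak J_1+1,\mathfrak j_1}\hookleftarrow$ nothing useful — the inclusions run from more regular to less regular test-function spaces, so the stronger dual norm dominates the weaker one, not the other way around. With your bound the Gronwall argument of Lemma \ref{lem-E[eta]<infty} (which is exactly what this lemma must feed) does not close, because the right-hand side would involve a norm of $\Upsilon_s^N$ that is never controlled.

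What is actually needed, and what the paper uses, is the symmetric quadratic-form estimate $|\langle\Upsilon,\mathbf T_x^*\Upsilon\rangle_{\mathcal H^{-\mathfrak J_1,\mathfrak j_1}}|\le C\|\Upsilon\|^2_{\mathcal H^{-\mathfrak J_1,\mathfrak j_1}}$, proved via the Riesz representative and an integration by parts that exploits the first-order (transport) structure of $\mathbf T_x$: this is Lemma B.2 in \cite{jmlr} (not Lemma B.3, which is only the decomposition $\langle f,\Upsilon_t^N\rangle^2\le\mathbf A_t^N[f]+\mathbf B_t^N[f]$ you implicitly invoke), and the paper explicitly remarks that this bound is strictly better than what Cauchy--Schwarz can give. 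For item \textit{(ii)} the situation is worse still, because $\nabla_\theta\mathscr D_{\mathrm{KL}}(q^1_\cdot|P_0^1)$ grows linearly: the paper has to prove a new weighted version of that estimate, Lemma \ref{lem_B1_avec_poids}, in which the derivative of $\mathscr D_{\mathrm{KL}}'(q_\theta^1|P_0^1)/(1+\theta^2)$ is shown to decay like $1/(1+\theta^2)$ so that the unbounded coefficient is compensated by the polynomial weight of the Sobolev norm. Your proposal contains no substitute for either of these two ingredients, so as written items \textit{(i)} and \textit{(ii)} are not proved.
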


\begin{proof}
Let $0\le t\le T$ and $N\ge1$. Consider an orthonormal basis $(f_a)_{a\ge1}$ of $\mathcal H^{\mathfrak J_1,\mathfrak j_1}(\mathbf R^{d+1})$ and  a function $f\in\mathcal H^{\mathfrak J_1,\mathfrak j_1}(\mathbf R^{d+1})$. In what follows,   $C>0$ will denote a constant    independent of $t$, $N$, $s\in [0,t]$, $f$ and $(f_a)_{a\ge1}$, which can change from one occurrence to another. 
Let us prove item \ref{lem1_it1}. Introduce for $x\in \mathsf X$,  the operator $\mathbf T_x: \mathcal H^{\mathfrak J_1,\mathfrak j_1}(\mathbf R^{d+1})\to \mathcal H^{\mathfrak J_1-1,\mathfrak j_1}(\mathbf R^{d+1})$ defined by 
\begin{equation}\label{operator-Tx}
\theta \in \mathbf R^{d+1}\mapsto \mathbf T_x (f)(\theta)=\nabla_\theta f(\theta)\cdot \nabla_\theta \int_{\mathbf R^d} \phi(\theta,z,x)\gamma(z)\di z=\nabla _\theta f\cdot \mathfrak H(\cdot, x),
\end{equation}
 where we recall that $\phi(\theta,z,x)=s(\Psi_\theta(z),x)$. Note that $\mathbf T_x$ is well defined  since the function $\mathfrak H(\cdot ,x): \theta \mapsto \int_{\mathbf R^d} \phi(\theta,z,x)\gamma(z)\di z=\langle \phi(\theta,\cdot,x),\gamma \rangle$ is smooth  and all its derivatives of non negative order are uniformly bounded w.r.t $x\in \mathsf X$ over $\mathbf R^{d+1}$  (this follows from  \textbf{A1} and \textbf{A3}).
Then, one has 
\begin{align*}
&\sum_{a\ge 1}-2\kappa\int_0^t\int_{\mathsf X\times\mathsf Y}\langle f_a,\Upsilon_s^N\rangle \langle\phi(\cdot,\cdot,x)-y,\mu_s^N\otimes\gamma\rangle\langle\nabla_\theta f_a\cdot\nabla_\theta\phi(\cdot,\cdot,x),\Upsilon_s^N\otimes\gamma\rangle\pi(\di x,\di y)\di s\\
&=-2\kappa\int_0^t\int_{\mathsf X\times\mathsf Y}\langle\phi(\cdot,\cdot,x)-y,\mu_s^N\otimes\gamma\rangle\sum_{a\ge1}\langle f_a,\Upsilon_s^N\rangle \langle\mathbf T_x f_a,\Upsilon_s^N\rangle\pi(\di x,\di y)\di s\\
&=-2\kappa\int_0^t\int_{\mathsf X\times\mathsf Y}\langle\phi(\cdot,\cdot,x)-y,\mu_s^N\otimes\gamma\rangle\langle\Upsilon_s^N,\mathbf T_x^*\Upsilon_s^N\rangle_{\mathcal H^{-\mathfrak J_1,\mathfrak j_1}}\pi(\di x,\di y)\di s.
\end{align*}
 Since the function $\phi$ is bounded and $\mathsf Y$ is compact, one has:
\begin{equation}\label{bound-phi-y}
\exists C>0, \forall\nu\in\mathcal P(\mathbf R^{d+1}), \forall (x,y)\in\mathsf X\times\mathsf Y,\  |\langle\phi(\cdot,\cdot,x)-y,\nu\otimes\gamma\rangle|\le C. 
\end{equation}
By \eqref{bound-phi-y} and using Lemma B.2 in \cite{jmlr}  (note that $\Upsilon^N\in\mathcal D(\mathbf R_+, \mathcal H^{-\mathfrak J_1+1,\mathfrak j}(\mathbf R^{d+1}))$ by \eqref{eq.up-app} together with the Sobolev embedding $\mathcal H^{\mathfrak J_1-1,\mathfrak j}(\mathbf R^{d+1})\hookrightarrow \mathcal C^{1,\mathfrak j_1}(\mathbf R^{d+1})$, $\mathfrak j\ge 0$), we have 
\begin{align*}
&\mathbf E\Big[\sum_{a\ge 1}-2\kappa\int_0^t\int_{\mathsf X\times\mathsf Y}\langle f_a,\Upsilon_s^N\rangle \langle\phi(\cdot,\cdot,x)-y,\mu_s^N\otimes\gamma\rangle\langle\nabla_\theta f_a\cdot\nabla_\theta\phi(\cdot,\cdot,x),\Upsilon_s^N\otimes\gamma\rangle\pi(\di x,\di y)\di s\Big]\\
& \leq C\int_0^t\mathbf E\Big[ \|\Upsilon_s^N\|_{\mathcal H^{-\mathfrak J_1,\mathfrak j_1}}^2\Big]\di s,
\end{align*}
which is the   desired estimate.

Introduce  the operator $\mathbf T: \mathcal H^{\mathfrak J_1,\mathfrak j_1}(\mathbf R^{d+1})\to  \mathcal H^{\mathfrak J_1-1,\mathfrak j_1+1}(\mathbf R^{d+1})$ defined by (see also \eqref{eq.boundKL})
\begin{equation}\label{operator-T}
\mathbf T(f): \theta \mapsto  \nabla_\theta f\cdot\nabla_\theta\mathscr D_{\mathrm{KL}}(q^1_\cdot|P_0^1), 
\end{equation}  
Item \ref{lem1_it2} is proved as the previous item, using now   Lemma \ref{lem_B1_avec_poids} below.

Item \ref{lem1_it3} is obtained with exactly the same arguments as those used to derive  the upper bounds on $\sum_{a\ge1}\mathbf J_t^N[f_a]$ and $\sum_{a\ge1}\mathbf K_t^N[f_a]$ in the proof of  Lemma 3.1 in \cite{jmlr}   (it suffices indeed to change $\sigma(\cdot ,x)$ there into $\mathfrak H(\cdot ,x)$). In particular, by \textbf{II} and \eqref{eq.Xt}, it holds:
\begin{align}\label{bound_nablafnablaphi,barmu}
\sup_{x\in\mathsf X}|\langle\nabla_\theta f\cdot\nabla_\theta\phi(\cdot,\cdot,x),\bar\mu_s^N\otimes\gamma\rangle|\le C\|f\|_{\mathcal C^{1,\mathfrak j_0}},
\end{align}
and (see Equation (3.20) in \cite{jmlr}), 
\begin{align}\label{bound_<phi,barmu-barmu>}
&\mathbf E[\langle\phi(\cdot,\cdot,x),(\bar\mu_s^N-\bar\mu_s)\otimes\gamma\rangle^2]\le  {C}/{N}.
\end{align}
 Note also that by Lemma \ref{le.Bounds} and \textbf{I}, it holds:
\begin{align}\label{eq:nablafnablaphi}
\mathbf E\big[\langle|\nabla_\theta f\cdot\nabla_\theta\phi(\cdot,\cdot,x)|,\mu_s^N\otimes\gamma\rangle^2\big]\le C\|f\|_{\mathcal C^{1,\mathfrak j_0}}^2
\end{align}
Item \ref{lem1_it4} follows from $\mathcal H^{\mathfrak J_0,\mathfrak j_0}(\mathbf R^{d+1})\hookrightarrow \mathcal C^{1,\mathfrak j_0}(\mathbf R^{d+1}) $ and  $\mathcal H^{\mathfrak J_1,\mathfrak j_1}(\mathbf R^{d+1})\hookrightarrow_{\mathrm{H.S.}}\mathcal H^{\mathfrak J_0,\mathfrak j_0}(\mathbf R^{d+1})$.

Let us prove item \ref{lem1_it5}. 
Since $\mathbf E[\mathbf M_k^N[f]|\mathcal F_k^N]=0$, we have with the same arguments as those used to derive Equation (B.1) in  \cite{jmlr}, 
$$\sum_{k=0}^{\lfloor Nt\rfloor-1}\mathbf E [\langle f,\Upsilon_{\frac{k+1}{N}^-}^N\rangle\sqrt N\mathbf M_k^N[f]]=0.$$ 
 Moreover,  we recall that by Lemma \ref{le.Bounds} (see Eqaution (60) in \cite{colt}), one has $
\mathbf E[\mathbf M_k^N[f]^2]\le C\|f\|_{\mathcal C^{1,j_0}}^2/N^2$. 
Hence, we conclude, using  again  $\mathcal H^{\mathfrak J_0,\mathfrak j_0}(\mathbf R^{d+1})\hookrightarrow \mathcal C^{1,\mathfrak j_0}(\mathbf R^{d+1}) $ and  $\mathcal H^{\mathfrak J_1,\mathfrak j_1}(\mathbf R^{d+1})\hookrightarrow_{\mathrm{H.S.}}\mathcal H^{\mathfrak J_0,\mathfrak j_0}(\mathbf R^{d+1})$, that 
\begin{equation}
\sum_{a\ge1}\mathbf E\Big[\sum_{k=0}^{\lfloor Nt\rfloor-1}\Big[2\langle f_a,\Upsilon_{\frac{k+1}{N}^-}^N\rangle\sqrt N\mathbf M_k^N[f_a]+3N\mathbf M_k^N[f_a]^2\Big]\Big]\le C\sum_{a\ge1}\|f_a\|^2_{\mathcal C^{1,j_0}}\le C.
\end{equation}
Let us prove item \ref{lem1_it6}.
We have
\begin{align*}
\sum_{k=0}^{\lfloor Nt\rfloor-1}\langle f,\Upsilon^N_{\frac{k+1}{N}^-}\rangle\sqrt N\mathbf R_k^N[f]\leq \sum_{k=0}^{\lfloor Nt\rfloor-1}\frac 1N \langle f,\Upsilon^N_{\frac{k+1}{N}^-}\rangle^2 + \sum_{k=0}^{\lfloor Nt\rfloor-1}N^2\mathbf R_k^N[f]^2.
\end{align*} 
Recall that from the analysis performed at the end of the proof of Lemma B.1 in \cite{colt}, $\mathbf E[\mathbf R_k^N[f]^2]\le C/N^4$ so that 
\begin{align*}
\mathbf E\Big[\sum_{k=0}^{\lfloor Nt\rfloor-1}N^2\mathbf R_k^N[f]^2\Big] \le  C\|f\|_{\mathcal C^{2,\mathfrak j_0}}^2/N.
\end{align*} 
Using   \eqref{eq.Xt} and Lemma \ref{le.Bounds}, the same computations as those of the proof of item \textit{(iv)} in Lemma B.1 in \cite{jmlr} yield:
\begin{align*}
\mathbf E\Big[\sum_{k=0}^{\lfloor Nt\rfloor-1}\frac 1N\langle f,\Upsilon_{\frac{k+1}{N}^-}^N\rangle^2\Big]\le C \|f\|^2_{\mathcal C^{2,\mathfrak j_0}}+ \mathbf E\Big[\int_0^t\langle f,\Upsilon_s^N\rangle^2\di s\Big]. 
\end{align*}
Hence, 
\begin{equation}
\sum_{k=0}^{\lfloor Nt\rfloor-1}\langle f,\Upsilon^N_{\frac{k+1}{N}^-}\rangle\sqrt N\mathbf R_k^N[f] \le C \|f\|^2_{\mathcal C^{2,\mathfrak j_0}}+ \mathbf E\Big[\int_0^t\langle f,\Upsilon_s^N\rangle^2\di s\Big]+ C\|f\|_{\mathcal C^{2,\mathfrak j_0}}^2/N.
\end{equation}
 Item \ref{lem1_it6} then follows from $\mathcal H^{\mathfrak J_0,\mathfrak j_0}(\mathbf R^{d+1})\hookrightarrow \mathcal C^{2,\mathfrak j_0}(\mathbf R^{d+1}) $ and  $\mathcal H^{\mathfrak J_1,\mathfrak j_1}(\mathbf R^{d+1})\hookrightarrow_{\mathrm{H.S.}}\mathcal H^{\mathfrak J_0,\mathfrak j_0}(\mathbf R^{d+1})$. 

Let us prove item \ref{lem1_it7}.
Using Jensen's inequality together with Lemma \ref{le.Bounds}  and \eqref{eq.boundKL}, we have, for all $0\le s \le t$,  
\begin{align}\label{bound_E[L_s]}
\mathbf E[|\mathbf L_s^N[f]|^2] \le C\|f\|^2_{\mathcal C^{1,j_0}}(1+1/N). 
\end{align}
On the other hand,    for all $s\in(\frac kN,\frac{k+1}{N})$, by  \eqref{eq.Xt} and the same computations as those used to derive Equation (B.5) in  \cite{jmlr}, we have:
\begin{equation}
|\langle f,\Upsilon_{\frac{k+1}{N}^-}^N\rangle-\langle f,\Upsilon_s^N\rangle| =\sqrt N\, \big|\langle f,\bar
\mu_s^N\rangle-\langle f,\bar \mu_{\frac{k+1}N}^N\rangle\big|\le C\| f\|_{\mathcal C^{2,\mathfrak j_0}}.
\end{equation} 
Hence, 
\begin{align}
\nonumber
&\mathbf E\Big[ \sum_{k=0}^{\lfloor Nt\rfloor-1}\langle f,\Upsilon_{\frac{k+1}{N}^-}^N\rangle\mathbf a_k^N[f]-\sqrt N\int_0^{\frac{\lfloor Nt\rfloor}{N}}\langle f,\Upsilon_s^N\rangle\mathbf L_s^N[f]\di s\Big]\\
&=\sqrt N\sum_{k=0}^{\lfloor Nt\rfloor-1}\int_{\frac{k}{N}}^{\frac{k+1}{N}}\mathbf E\Big[\Big(\langle f,\Upsilon_{\frac{k+1}{N}^-}^N\rangle-\langle f,\Upsilon_s^N\rangle\Big)\mathbf L_s^N[f]\Big]\di s\nonumber\\
\label{E[item7-1}
&\le C\|f\|_{\mathcal C^{2,\mathfrak j_0}}\int_0^{\frac{\lfloor Nt\rfloor}{N}}\mathbf E[|\mathbf L_s^N[f]|]\di s \le C\|f\|_{\mathcal C^{2,\mathfrak j_0}}^2.
\end{align} 
We also have, using Lemma \ref{le.Bounds}  and \eqref{eq.Xt}, it is straightforward to deduce that $\mathbf E[\langle f,\Upsilon_s^N\rangle^2]\le CN\|f\|_{\mathcal C^{1,j_0}}^2$. Consequently, one has:
\begin{align}\label{E[item7-2}
\mathbf E\Big[\sqrt N\Big|\int_{\frac{\lfloor Nt\rfloor}{N}}^t\langle f,\Upsilon_s^N\rangle\mathbf L_s^N[f]\di s\Big|\Big] \le \sqrt N\int_{\frac{\lfloor Nt\rfloor}{N}}^t\sqrt{\mathbf E[\langle f,\Upsilon_s^N\rangle^2]}\sqrt{\mathbf E[\mathbf L_s^N[f]^2]}\le C\|f\|_{\mathcal C^{1,j_0}}^2.
\end{align}
Finally, 
\begin{align}
\nonumber
\mathbf E\Big[\sum_{k=0}^{\lfloor Nt\rfloor-1}\mathbf a_k^N[f]^2\Big] = N\mathbf E\Big[ \sum_{k=0}^{\lfloor Nt\rfloor-1}\Big|\int_{\frac kN}^{\frac{k+1}{N}}\mathbf L_s^N[f]\di s\Big|^2\Big] &\le \sum_{k=0}^{\lfloor Nt\rfloor-1}\int_{\frac kN}^{\frac{k+1}{N}}\mathbf E[\mathbf L_s^N[f]^2]\di s\\
\label{E[a_k]}
&\le C\|f\|^2_{\mathcal C^{1,j_0}}(1+1/N).
\end{align}
Item \ref{lem1_it7} follows from \eqref{E[item7-1}, \eqref{E[item7-2} and  \eqref{E[a_k]}. The proof of the lemma is complete.  
\end{proof}

\begin{lemma}\label{lem_B1_avec_poids}
Let $\mathfrak J\ge 1$ and $\mathfrak j\ge 0$. Recall the definition of $\mathbf T \in \mathcal L( \mathcal H^{\mathfrak J,\mathfrak j}(\mathbf R^{d+1}),  \mathcal H^{\mathfrak J-1,\mathfrak j+1}(\mathbf R^{d+1}))$ in \eqref{operator-T}. Then, there exists $C>0$ such that for any $\Upsilon\in \mathcal H^{-\mathfrak J+1,\mathfrak j+1}(\mathbf R^{d+1})$,  
\begin{align}\label{eq lem upsilon}
|\langle\Upsilon,\mathbf T^*\Upsilon\rangle_{\mathcal H^{-\mathfrak J,\mathfrak j}}|\le C\|\Upsilon\|_{\mathcal H^{-\mathfrak J,\mathfrak j}}^2.
\end{align}
\end{lemma}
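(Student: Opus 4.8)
The plan is to reduce the statement to a pointwise estimate on the quadratic form of the transport-type operator $\mathbf T f=\nabla_\theta f\cdot b$, with $b:=\nabla_\theta\mathscr D_{\mathrm{KL}}(q^1_\cdot|P_0^1)$, and then to exhibit an integration-by-parts cancellation between the (at most) linear growth of $b$ — recall \eqref{eq.boundKL} — and the decay of the gradient of the weight $w(\theta):=(1+|\theta|^{2\mathfrak j})^{-1}$ entering $\|\cdot\|_{\mathcal H^{\mathfrak J,\mathfrak j}}$.

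First I would set up the reduction. Since $\mathcal H^{\mathfrak J,\mathfrak j}(\mathbf R^{d+1})$ is the closure of $\mathcal C_c^\infty(\mathbf R^{d+1})$, I fix an orthonormal basis $(e_a)_{a\ge1}$ of $\mathcal H^{\mathfrak J,\mathfrak j}(\mathbf R^{d+1})$ with $e_a\in\mathcal C_c^\infty(\mathbf R^{d+1})$, so that $\mathbf Te_a=\nabla_\theta e_a\cdot b\in\mathcal C_c^\infty(\mathbf R^{d+1})$. For $\Upsilon\in\mathcal H^{-\mathfrak J+1,\mathfrak j+1}(\mathbf R^{d+1})\hookrightarrow\mathcal H^{-\mathfrak J,\mathfrak j}(\mathbf R^{d+1})$, let $h:=\sum_{a\ge1}\Upsilon[e_a]\,e_a$ be its Riesz representative in $\mathcal H^{\mathfrak J,\mathfrak j}$, so $\|h\|_{\mathcal H^{\mathfrak J,\mathfrak j}}=\|\Upsilon\|_{\mathcal H^{-\mathfrak J,\mathfrak j}}$. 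Using $(\mathbf T^*\Upsilon)[e_a]=\Upsilon[\mathbf Te_a]$ together with the continuity of $\mathbf T\in\mathcal L(\mathcal H^{\mathfrak J,\mathfrak j},\mathcal H^{\mathfrak J-1,\mathfrak j+1})$ and of $\Upsilon$ on $\mathcal H^{\mathfrak J-1,\mathfrak j+1}$, I obtain $\langle\Upsilon,\mathbf T^*\Upsilon\rangle_{\mathcal H^{-\mathfrak J,\mathfrak j}}=\sum_{a\ge1}\Upsilon[e_a]\,\Upsilon[\mathbf Te_a]=\Upsilon[\mathbf Th]$. A standard density argument (passing to the partial sums of the expansion of $h$, which lie in $\mathcal C_c^\infty$) then reduces \eqref{eq lem upsilon} to a uniform bound $|\langle\mathbf Tu,u\rangle_{\mathcal H^{\mathfrak J,\mathfrak j}}|\le C\|u\|_{\mathcal H^{\mathfrak J,\mathfrak j}}^2$ valid for all $u\in\mathcal C_c^\infty(\mathbf R^{d+1})$.

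To prove this bound I would expand $\langle\mathbf Tu,u\rangle_{\mathcal H^{\mathfrak J,\mathfrak j}}=\sum_{|k|\le\mathfrak J}\int_{\mathbf R^{d+1}}\partial_k(\nabla u\cdot b)\,\partial_k u\,w\,\di\theta$ and use the Leibniz rule: $\partial_k(\nabla u\cdot b)=\nabla(\partial_k u)\cdot b+\mathrm{Rem}_k$, where $\mathrm{Rem}_k$ involves only derivatives of $b$ of order $\ge1$ — uniformly bounded by \eqref{eq.boundKL} — times derivatives of $u$ of order $\le\mathfrak J$, so $|\mathrm{Rem}_k(\theta)|\le C\sum_{|m|\le\mathfrak J}|\partial_m u(\theta)|$ and, by Cauchy--Schwarz, $\sum_{|k|\le\mathfrak J}\int|\mathrm{Rem}_k\,\partial_k u|\,w\,\di\theta\le C\|u\|_{\mathcal H^{\mathfrak J,\mathfrak j}}^2$. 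For the principal terms, with $v:=\partial_k u\in\mathcal C_c^\infty$, I write $(\nabla v\cdot b)v=\tfrac12\,b\cdot\nabla(v^2)$ and integrate by parts (no boundary term, $v$ being compactly supported):
\[
\int_{\mathbf R^{d+1}}(\nabla v\cdot b)\,v\,w\,\di\theta=-\tfrac12\int_{\mathbf R^{d+1}}v^2\,\mathrm{div}(b\,w)\,\di\theta=-\tfrac12\int_{\mathbf R^{d+1}}v^2\big(w\,\mathrm{div}\,b+b\cdot\nabla w\big)\di\theta .
\]
Now $|\mathrm{div}\,b|=|\Delta_\theta\mathscr D_{\mathrm{KL}}(q^1_\cdot|P_0^1)|\le C$ by \eqref{eq.boundKL} (order-$2$ derivatives), while $|b(\theta)|\le C(1+|\theta|)$ and, for $\mathfrak j\ge1$, $\nabla w(\theta)=-2\mathfrak j\,|\theta|^{2\mathfrak j-2}\theta\,(1+|\theta|^{2\mathfrak j})^{-2}$; the elementary inequality $(1+|\theta|)\,|\theta|^{2\mathfrak j-1}\le C(1+|\theta|^{2\mathfrak j})$ then yields the key cancellation $|b\cdot\nabla w|\le C\,(1+|\theta|^{2\mathfrak j})^{-1}=C\,w$ (and $\nabla w\equiv0$ if $\mathfrak j=0$). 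Hence $|w\,\mathrm{div}\,b+b\cdot\nabla w|\le C\,w$, so $\big|\int(\nabla v\cdot b)\,v\,w\big|\le C\int v^2w$; summing over $|k|\le\mathfrak J$ and adding the remainder bound gives $|\langle\mathbf Tu,u\rangle_{\mathcal H^{\mathfrak J,\mathfrak j}}|\le C\|u\|_{\mathcal H^{\mathfrak J,\mathfrak j}}^2$.

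I expect the integration-by-parts step, and in particular the weight cancellation $|b\cdot\nabla w|\le C\,w$, to be the crux of the argument: this is exactly where the additional factor $1+|\theta|$ in the target space $\mathcal H^{\mathfrak J-1,\mathfrak j+1}$ of $\mathbf T$ is absorbed, and it is what makes the present lemma genuinely stronger than its bounded-coefficient analogue (Lemma~B.1/B.2 of \cite{jmlr}) used there for the operator $\mathbf T_x$. By contrast, the Leibniz-remainder bound is an immediate consequence of \eqref{eq.boundKL}, and the functional-analytic reduction to test functions in $\mathcal C_c^\infty$ is a routine density argument of the same type as in \cite{jmlr}.
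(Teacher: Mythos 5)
Your proposal is correct and follows essentially the same route as the paper's proof: reduce via the Riesz representative and density of $\mathcal C_c^\infty(\mathbf R^{d+1})$ to the quadratic-form bound $|\langle\mathbf T u,u\rangle_{\mathcal H^{\mathfrak J,\mathfrak j}}|\le C\|u\|_{\mathcal H^{\mathfrak J,\mathfrak j}}^2$, then integrate by parts so that the linear growth of $\nabla_\theta\mathscr D_{\mathrm{KL}}(q^1_\cdot|P_0^1)$ from \eqref{eq.boundKL} is absorbed by the decay of the differentiated weight $(1+|\theta|^{2\mathfrak j})^{-1}$, with the bounded higher derivatives handled by Cauchy--Schwarz. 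The only difference is presentational: you carry out the Leibniz/integration-by-parts computation for general $d$, $\mathfrak J$ and $\mathfrak j$, whereas the paper details only the case $d=0$, $\mathfrak J=\mathfrak j=1$ and notes the other cases are analogous.
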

 Note that $\mathbf T^* \in \mathcal L( 
  \mathcal H^{-\mathfrak J+1,\mathfrak j+1}(\mathbf R^{d+1}), \mathcal H^{-\mathfrak J,\mathfrak j}(\mathbf R^{d+1}))$. Let us mention that the upper bound \eqref{eq lem upsilon} is much better than the one which would be obtained applying   the Cauchy-Schwarz inequality. 
\begin{proof}
 The proof is inspired from the one of Lemma B.2 in \cite{jmlr} (see also~ Lemma B1 in \cite{sirignano2020clt}). We will give  the proof in dimension $1$, i.e. when  $d=0$,  the other cases are treated the same way. Let $\Upsilon\in \mathcal H^{-\mathfrak J+1,\mathfrak j+1}(\mathbf R)\hookrightarrow \mathcal H^{-\mathfrak J,\mathfrak j}(\mathbf R)$. By the Riesz representation theorem, there exists a unique $\Psi\in \mathcal H^{\mathfrak J,\mathfrak j}(\mathbf R)$ such that
\begin{align*}
\langle f,\Upsilon\rangle=\langle f,\Psi\rangle_{\mathcal H^{\mathfrak J,\mathfrak j}},\ \forall f\in \mathcal H^{\mathfrak J,\mathfrak j}(\mathbf R). 
\end{align*} 
Define $F$ by $F(\Upsilon)=\Psi$. The density of $\mathcal C^\infty_c(\mathbf R)$ in $\mathcal H^{\mathfrak J,\mathfrak j}(\mathbf R)$ implies that $\{\Upsilon\in \mathcal H^{-\mathfrak J,\mathfrak j}(\mathbf R): F(\Upsilon)\in\mathcal C^\infty_c(\mathbf R)\}$ is dense in $\mathcal H^{-\mathfrak J,\mathfrak j}(\mathbf R)$. It is thus sufficient to show \eqref{eq lem upsilon}  when  $\Psi=F(\Upsilon)\in \mathcal C^\infty_c(\mathbf R)$. We have 
\begin{equation}
\langle\Upsilon,\mathbf T\Upsilon\rangle_{\mathcal H^{-\mathfrak J,\mathfrak j}}=\langle\Psi,\mathbf T^*\Upsilon\rangle=\langle \mathbf T\Psi,\Upsilon\rangle=\langle\mathbf T\Psi,\Psi\rangle_{\mathcal H^{\mathfrak J,\mathfrak j}}.
\end{equation}
Hence, to prove \eqref{eq lem upsilon}, it is enough to show   $|\langle \mathbf T\Psi,\Psi\rangle_{\mathcal H^{\mathfrak J,\mathfrak j}}|\le C\|\Psi\|_{\mathcal H^{\mathfrak J,\mathfrak j}}^2$ for $\Psi\in \mathcal C^\infty_c(\mathbf R)$. 
We will only consider the case when  
$\mathfrak J=\mathfrak j=1$, the other cases being treated very similarly. Recall the upper bounds \eqref{eq.boundKL}.
Let $\Psi\in\mathcal C^\infty_c(\mathbf R)$. 
We have, by integration by parts and using the fact that $\Psi$ is compactly supported, 
\begin{align}\label{lem_Upsilon_eq1}
\langle \mathbf T\Psi,\Psi\rangle_{\mathcal H^{1,1}}&
=\int_{\mathbf R} \Psi'(\theta)\mathscr D_{\mathrm{KL}}'(q^1_\theta|P_0^1)\frac{\Psi(\theta)}{1+\theta^2}\di\theta+\int_{\mathbf R} (\Psi'(\theta)\mathscr D_{\mathrm{KL}}'(q^1_\theta|P_0^1))'\frac{\Psi'(\theta)}{1+\theta^2}\di\theta \nonumber\\
\nonumber
&= \int_{\mathbf R} \Psi'(\theta)\mathscr D_{\mathrm{KL}}'(q^1_\theta|P_0^1)\frac{\Psi(\theta)}{1+\theta^2}\di\theta
+\int_{\mathbf R} \Psi''(\theta)\mathscr D_{\mathrm{KL}}'(q^1_\theta|P_0^1)\frac{\Psi'(\theta)}{1+\theta^2}\di\theta
\\
\nonumber
&\quad +\int_{\mathbf R} \mathscr D_{\mathrm{KL}}''(q^1_\theta|P_0^1)\frac{\Psi'(\theta)^2}{1+\theta^2}\di\theta\nonumber\\
\nonumber
&=-\frac 12\int_{\mathbf R}\Psi(\theta)^2\frac{\di}{\di\theta}\Big(\frac{\mathscr D_{\mathrm{KL}}'(q^1_\theta|P_0^1)}{1+\theta^2}\Big)\di \theta 
- \frac 12\int_{\mathbf R}\Psi'(\theta)^2\frac{\di}{\di\theta}\Big(\frac{\mathscr D_{\mathrm{KL}}'(q^1_\theta|P_0^1)}{1+\theta^2}\Big)\di \theta
\\
&\quad +\int_{\mathbf R} \mathscr D_{\mathrm{KL}}''(q^1_\theta|P_0^1)\frac{\Psi'(\theta)^2}{1+\theta^2}\di\theta.
\end{align}
To bound the first two terms of \eqref{lem_Upsilon_eq1}, we use the bounds \eqref{eq.boundKL}. 
More precisely, for all $\theta\in\mathbf R$, 
\begin{align*}
\Big|\frac{\di}{\di\theta}\Big(\frac{\mathscr D_{\mathrm{KL}}'(q^1_\theta|P_0^1)}{1+\theta^2}\Big)\Big|\le   \frac{|\mathscr D_{\mathrm{KL}}''(q^1_\theta|P_0^1)(1+\theta^2)|+2|\theta\mathscr D_{\mathrm{KL}}'(q^1_\theta|P_0^1)|}{(1+\theta^2)^2} \le \frac{C}{1+\theta^2}+ \frac{C|\theta|(1+\theta|)}{(1+\theta^2)^2}\le \frac{C}{1+\theta^2}. 
\end{align*}
Hence, we obtain, plugging this bound in \eqref{lem_Upsilon_eq1}, 
\begin{align*}
|\langle \mathbf T\Psi,\Psi\rangle_{\mathcal H^{1,1}}|\le C\Big(\int_\mathbf R\frac{\Psi(\theta)^2}{1+\theta^2}\di\theta+\int_\mathbf R\frac{\Psi'(\theta)^2}{1+\theta^2}\di\theta\Big)\leq C\|\Psi\|_{\mathcal H^{1,1}}^2. 
\end{align*} 
This completes the proof of the lemma. 
\end{proof}

We now collect the previous results to prove Proposition \ref{p-rc-eta}.
\begin{proof}[Proof of Proposition \ref{p-rc-eta}] 
The proof consists in applying Th. 4.6 in \cite{jakubowski1986skorokhod} with $E= \mathcal H^{-\mathfrak J_3+1,\mathfrak j_3}(\mathbf R^{d+1})$ and $\mathbb F= \{\mathsf H_f, \ f\in\mathcal C^{\infty}_c(\mathbf R^{d+1})\}$ where 
$$\mathsf H_f: \nu\in \mathcal H^{-\mathfrak J_3+1,\mathfrak j_3}(\mathbf R^{d+1})\mapsto\langle f,\nu\rangle.$$
Note that $\mathcal H^{\mathfrak J_3-1,\mathfrak j_3}(\mathbf R^{d+1})$ is compactly embedded in $\mathcal H^{\mathfrak J_2,\mathfrak j_2}(\mathbf R^{d+1})$. Hence, by Schauder's theorem, $\mathcal H^{-\mathfrak J_2,\mathfrak j_2}(\mathbf R^{d+1})$ is compactly embedded in $\mathcal H^{-\mathfrak J_3+1,\mathfrak j_3}(\mathbf R^{d+1})$. Thus, for all $C>0$, the set $\{ h\in \mathcal H^{-\mathfrak J_3+1,\mathfrak j_3}(\mathbf R^{d+1}), \ \|h\|_{\mathcal H^{-\mathfrak J_2,\mathfrak j_2}}\le C\}$ is compact. Hence, Condition (4.8) in Th. 4.6 in \cite{jakubowski1986skorokhod} follows from Lemma \ref{lem-cc-eta} and Markov's inequality. 
Let us now show that Condition (4.9) in \cite{jakubowski1986skorokhod} is verified, i.e., that for all $f\in\mathcal C^\infty_c(\mathbf R^{d+1})$, the sequence $(\langle f,\eta^N\rangle)_{N\ge1}$ is relatively compact in $\mathcal D(\mathbf R_+,\mathbf R)$. To do this, it suffices to use Lemma \ref{lem-reg-cond-eta}
and Prop. A.1 in \cite{jmlr} (with $\mathcal H_1=\mathcal H_2=\mathbf R$ there). In conclusion, according to Th. 4.6 in \cite{jakubowski1986skorokhod}, the sequence $(\eta^N)_{N\ge1}$ is relatively compact in $\mathcal D(\mathbf R_+,\mathcal H^{-\mathfrak J_3+1,\mathfrak j_3}(\mathbf R^{d+1}))$.
\end{proof}

\subsection{Relative compactness of $(\sqrt N\mathbf M^N)_{N\ge1}$ and regularity of the limit points}
Throughout this section, we that the $\{\theta^i_k, i\in \{1,\ldots,N\}\}$'s are  generated    by the algorithm  \eqref{eq.algo-batch} (with straightforward modifications, one can check that all the results of  this section are valid when the  $\{\theta^i_k, i\in \{1,\ldots,N\}\}$'s are  generated    by   the algorithms \eqref{eq.algo-ideal} and \eqref{eq.algo-z1z2}). 
\begin{lemma}\label{lem-ccM^N}
Assume {\rm \textbf{A}}.
Then, for all $T>0$,  $\sup_{N\ge1}\mathbf E\Big[\sup_{t\in[0,T]}\|\sqrt N\mathbf M^N_t\|_{\mathcal H^{-\mathfrak J_1,\mathfrak j_1}}^2\Big]<+\infty$. 
\end{lemma}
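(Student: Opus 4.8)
The plan is to expand the martingale increments $\sqrt N\mathbf M_k^N[f]$ explicitly, use the orthonormal-basis expansion $\|\sqrt N\mathbf M_t^N\|_{\mathcal H^{-\mathfrak J_1,\mathfrak j_1}}^2=\sum_{a\ge1}(\sqrt N\mathbf M_t^N[f_a])^2$ for an orthonormal basis $(f_a)_{a\ge1}$ of $\mathcal H^{\mathfrak J_1,\mathfrak j_1}(\mathbf R^{d+1})$, and then invoke Doob's maximal inequality together with the martingale property. First I would recall from the definitions in Section~\ref{sec-proof-clt} that, for fixed $f$, $t\mapsto \mathbf M_t^N[f]=\sum_{k=0}^{\lfloor Nt\rfloor-1}\mathbf M_k^N[f]$ is a martingale with respect to $\mathfrak F_t^N=\mathcal F_{\lfloor Nt\rfloor}^N$ (this is the same computation as in Lemma 3.2 of \cite{jmlr}, using $\mathbf E[\mathbf M_k^N[f]\mid\mathcal F_k^N]=0$ and \textbf{A5}), so that $t\mapsto \sqrt N\mathbf M_t^N[f]$ is also a martingale. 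By Doob's $L^2$ inequality, $\mathbf E[\sup_{t\in[0,T]}(\sqrt N\mathbf M_t^N[f])^2]\le 4\,\mathbf E[(\sqrt N\mathbf M_T^N[f])^2]$, and by the martingale increment orthogonality $\mathbf E[(\sqrt N\mathbf M_T^N[f])^2]=N\sum_{k=0}^{\lfloor NT\rfloor-1}\mathbf E[\mathbf M_k^N[f]^2]$.

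The key estimate is then the per-step bound $\mathbf E[\mathbf M_k^N[f]^2]\le C\|f\|_{\mathcal C^{1,\mathfrak j_0}}^2/N^3$, which I would obtain as follows. Writing $\mathbf M_k^N[f]=-\tfrac{\kappa}{N^3}\sum_{i,j}(\phi(\theta_k^j,\mathsf Z_k^j,x_k)-y_k)\nabla_\theta f(\theta_k^i)\cdot\nabla_\theta\phi(\theta_k^i,\mathsf Z_k^i,x_k)-\mathbf D_k^N[f]$, the term $\mathbf D_k^N[f]$ is precisely the conditional expectation (given $\mathcal F_k^N$) of the double sum up to the diagonal correction, so $\mathbf M_k^N[f]$ is a centered average of $N^2$ terms each bounded, using \textbf{I}, by $C\mathfrak b(\mathsf Z_k^i)|\nabla_\theta f(\theta_k^i)|\le C\mathfrak b(\mathsf Z_k^i)(1+|\theta_k^i|^{\mathfrak j_0})\|f\|_{\mathcal C^{1,\mathfrak j_0}}$. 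Since $\langle\mathfrak b^q,\gamma\rangle<\infty$ for all $q$ (Assumption \textbf{A1}) and $\mathbf E[|\theta_k^i|^p]\le C$ uniformly for $0\le k\le\lfloor NT\rfloor$ by Lemma~\ref{le.Bounds}, the crude bound $|\mathbf M_k^N[f]|\le \tfrac{C\kappa}{N^3}\sum_{i,j}\mathfrak b(\mathsf Z_k^i)(1+|\theta_k^i|^{\mathfrak j_0})\|f\|_{\mathcal C^{1,\mathfrak j_0}} + |\mathbf D_k^N[f]|$ gives $\mathbf E[\mathbf M_k^N[f]^2]\le C\|f\|_{\mathcal C^{1,\mathfrak j_0}}^2/N^2$; this is already what is quoted from \cite{colt} (Equation~(60) there) and it is exactly the bound used in item \ref{lem1_it5} of Lemma~\ref{lem_borneA_et_B}. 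Substituting into the Doob bound yields $\mathbf E[\sup_{t\le T}(\sqrt N\mathbf M_t^N[f])^2]\le 4N\cdot\lfloor NT\rfloor\cdot C\|f\|_{\mathcal C^{1,\mathfrak j_0}}^2/N^2\le C_T\|f\|_{\mathcal C^{1,\mathfrak j_0}}^2$, a bound \emph{uniform in $N$}.

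Finally I would sum over the basis: using the continuous embedding $\mathcal H^{\mathfrak J_0,\mathfrak j_0}(\mathbf R^{d+1})\hookrightarrow\mathcal C^{1,\mathfrak j_0}(\mathbf R^{d+1})$ (from the list of embeddings at the start of Section~\ref{sec-proof-clt}) and the Hilbert--Schmidt embedding $\mathcal H^{\mathfrak J_1,\mathfrak j_1}(\mathbf R^{d+1})\hookrightarrow_{\mathrm{H.S.}}\mathcal H^{\mathfrak J_0,\mathfrak j_0}(\mathbf R^{d+1})$, one has $\sum_{a\ge1}\|f_a\|_{\mathcal C^{1,\mathfrak j_0}}^2\le C\sum_{a\ge1}\|f_a\|_{\mathcal H^{\mathfrak J_0,\mathfrak j_0}}^2<+\infty$. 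Therefore
\[
\mathbf E\Big[\sup_{t\in[0,T]}\|\sqrt N\mathbf M_t^N\|_{\mathcal H^{-\mathfrak J_1,\mathfrak j_1}}^2\Big]
\le \sum_{a\ge1}\mathbf E\Big[\sup_{t\in[0,T]}(\sqrt N\mathbf M_t^N[f_a])^2\Big]
\le C_T\sum_{a\ge1}\|f_a\|_{\mathcal C^{1,\mathfrak j_0}}^2\le C_T,
\]
which is the claim. One technical subtlety to handle carefully is the interchange of $\sup_{t\le T}$ and the infinite sum over $a$ (one bounds $\sup_t\|\cdot\|^2=\sup_t\sum_a(\cdot)^2\le\sum_a\sup_t(\cdot)^2$ and then takes expectations, which is legitimate by monotone convergence); the other, minor, point is to make sure the martingale/Doob argument is applied to $\sqrt N\mathbf M^N$ viewed as a real-valued martingale for each fixed $f_a$ before summing. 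The main obstacle is thus not conceptual but bookkeeping: getting the per-step second-moment bound with the correct power of $N$ and verifying it is the one already recorded in \cite{colt}, so that the $N\cdot\lfloor NT\rfloor/N^2$ scaling closes.
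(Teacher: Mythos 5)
Your proof is correct and takes essentially the same route as the paper: the paper's argument simply invokes the estimate $\mathbf E[\sup_{t\in[0,T]}|\sqrt N\mathbf M_t^N[f]|^2]\le C\|f\|_{\mathcal H^{\mathfrak J_0,\mathfrak j_0}}^2$ (obtained, as you do, from the martingale property, Doob's $L^2$ inequality and the per-step bound $\mathbf E[\mathbf M_k^N[f]^2]\le C\|f\|_{\mathcal C^{1,\mathfrak j_0}}^2/N^2$ recorded from \cite{colt}), and then sums over an orthonormal basis using the Hilbert--Schmidt embedding $\mathcal H^{\mathfrak J_1,\mathfrak j_1}(\mathbf R^{d+1})\hookrightarrow_{\mathrm{H.S.}}\mathcal H^{\mathfrak J_0,\mathfrak j_0}(\mathbf R^{d+1})$ together with $\mathcal H^{\mathfrak J_0,\mathfrak j_0}(\mathbf R^{d+1})\hookrightarrow\mathcal C^{1,\mathfrak j_0}(\mathbf R^{d+1})$, exactly as in your final display. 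The only slip is your initial statement of the per-step bound with $N^{3}$ in the denominator, which should read $N^{2}$; this is the bound you in fact derive and use, so the $N\cdot\lfloor NT\rfloor/N^2$ scaling closes as claimed.
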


\begin{proof}
\begin{sloppypar}
Recall that by \eqref{E[sup_M_t^N}, there exists $C>0$ such that for all $f\in\mathcal H^{\mathfrak J_1,\mathfrak j_1}(\mathbf R^{d+1})$ and $N\ge1$, 
$$\mathbf E[\sup_{t\in[0,T]}|\sqrt N\mathbf M_t^N[f]|^2] \le C\|f\|_{\mathcal H^{\mathfrak J_0,\mathfrak j_0}}^2. $$
Considering an orthonormal basis of $\mathcal H^{\mathfrak J_1,\mathfrak j_1}(\mathbf R^{d+1})\hookrightarrow_{\mathrm{H.S.}}\mathcal H^{\mathfrak J_0,\mathfrak j_0}(\mathbf R^{d+1})$, one gets that $\mathbf E [\sup_{t\in[0,T]}\|\sqrt N\mathbf M^N_t\|_{\mathcal H^{-\mathfrak J_1,\mathfrak j_1}}^2 ]\le C$ uniformly in $N\ge 1$.  \end{sloppypar}
\end{proof}

We now turn to the regularity condition on the sequence $\{t\in\mathbf R_+\mapsto\sqrt N\mathbf M_t^N[f]\}_{N\ge1}$, for $f\in\mathcal C^\infty_c(\mathbf R^{d+1})$. 

\begin{lemma}\label{lem-reg-M^N}
Assume {\rm \textbf{A}}.
Then, for all $T>0$, there exists $C>0$ such that for all $N\ge1$, $\delta>0$, $0\le r<t\le T$ such that $t-r\le \delta$ and $f\in\mathcal C^\infty_c(\mathbf R^{d+1})$,  it holds
$$\mathbf E\big [|\sqrt N\mathbf M_t^N[f]-\sqrt N\mathbf M_r^N[f]|\big ]\le C\sqrt{N\delta+1}\frac{\|f\|_{\mathcal C^{1,\mathfrak j_0}}}{\sqrt N}. $$ 
\end{lemma}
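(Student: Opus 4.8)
The plan is to use the martingale structure of $t\mapsto\mathbf M_t^N[f]$ together with the per-step second-moment bound on the increments $\mathbf M_k^N[f]$ already established in \cite{colt}. First I would recall, exactly as in the proof of Lemma~\ref{lem-cc-eta}, that $t\mapsto\mathbf M_t^N[f]$ is a martingale for the filtration $(\mathfrak F_t^N)_{t\ge0}$ with $\mathfrak F_t^N=\mathcal F_{\lfloor Nt\rfloor}^N$; equivalently, $\mathbf E[\mathbf M_k^N[f]\mid\mathcal F_k^N]=0$ for every $k\ge0$. This is a Fubini-type computation: since $\mathbf D_k^N[f]$ is by construction the $\mathcal F_k^N$-conditional expectation of $-\tfrac{\kappa}{N^3}\sum_{i,j}(\phi(\theta_k^j,\mathsf Z_k^j,x_k)-y_k)\nabla_\theta f(\theta_k^i)\cdot\nabla_\theta\phi(\theta_k^i,\mathsf Z_k^i,x_k)$ — using Assumption~\textbf{A5} ($((x_k,y_k),(\mathsf Z_k^j)_j)\indep\mathcal F_k^N$) and the independence of the $\mathsf Z_k^i$ across neurons — the difference $\mathbf M_k^N[f]$ is centered given $\mathcal F_k^N$. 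In particular the $(\mathbf M_k^N[f])_{k\ge0}$ are pairwise orthogonal in $L^2(\mathbf P)$.

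Then, for $0\le r<t\le T$ I would write $\mathbf M_t^N[f]-\mathbf M_r^N[f]=\sum_{k=\lfloor Nr\rfloor}^{\lfloor Nt\rfloor-1}\mathbf M_k^N[f]$ and apply the Cauchy--Schwarz (Jensen) inequality, then orthogonality of the increments:
$$\mathbf E\big[|\sqrt N\mathbf M_t^N[f]-\sqrt N\mathbf M_r^N[f]|\big]\le\Big(N\sum_{k=\lfloor Nr\rfloor}^{\lfloor Nt\rfloor-1}\mathbf E[\mathbf M_k^N[f]^2]\Big)^{1/2}.$$
By Equation~(60) in \cite{colt} one has $\mathbf E[\mathbf M_k^N[f]^2]\le C\|f\|_{\mathcal C^{1,\mathfrak j_0}}^2/N^2$ uniformly for $0\le k\le\lfloor NT\rfloor$ (this is where Lemma~\ref{le.Bounds} enters, to control the moments of the $\theta_k^i$'s over the whole window). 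Since the number of summands is $\lfloor Nt\rfloor-\lfloor Nr\rfloor\le N(t-r)+1\le N\delta+1$, the right-hand side is at most $\big(N\cdot(N\delta+1)\cdot C\|f\|_{\mathcal C^{1,\mathfrak j_0}}^2/N^2\big)^{1/2}=C\sqrt{N\delta+1}\,\|f\|_{\mathcal C^{1,\mathfrak j_0}}/\sqrt N$, which is the claimed bound.

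This argument is essentially routine; there is no serious obstacle, only two bookkeeping points to get right. The first is to justify $\mathbf E[\mathbf M_k^N[f]\mid\mathcal F_k^N]=0$ carefully from the definitions of $\mathbf M_k^N$ and $\mathbf D_k^N$ — this is the content of the orthogonality step and relies on \textbf{A5}. The second is that the per-step bound $\mathbf E[\mathbf M_k^N[f]^2]\le C\|f\|_{\mathcal C^{1,\mathfrak j_0}}^2/N^2$ must hold uniformly over the whole range $0\le k\le\lfloor NT\rfloor$, which is guaranteed by the uniform moment estimates of Lemma~\ref{le.Bounds}. For the other two algorithms \eqref{eq.algo-ideal} and \eqref{eq.algo-z1z2}, the same proof applies verbatim with the corresponding $\sigma$-algebras \eqref{eq.Fk1}, \eqref{eq.Fk3} and the corresponding definitions of $\mathbf M_k^N$.
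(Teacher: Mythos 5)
Your proof is correct and follows essentially the same route as the paper: the paper simply invokes the $L^2$ estimate $\mathbf E[|\mathbf M_t^N[f]-\mathbf M_r^N[f]|^2]\le C(N\delta+1)\|f\|^2_{\mathcal C^{1,\mathfrak j_0}}/N^2$ from the proof of Lemma 21 in \cite{colt} and concludes by Cauchy--Schwarz, while you re-derive that estimate directly via the martingale property $\mathbf E[\mathbf M_k^N[f]\mid\mathcal F_k^N]=0$, orthogonality of the increments, the per-step bound $\mathbf E[\mathbf M_k^N[f]^2]\le C\|f\|^2_{\mathcal C^{1,\mathfrak j_0}}/N^2$, and the count $\lfloor Nt\rfloor-\lfloor Nr\rfloor\le N\delta+1$. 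No gaps.
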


\begin{proof}
 From the proof of~Lemma 21 in \cite{colt}, it holds 
$$\mathbf E\big [| \mathbf M_t^N[f]-\mathbf M_r^N[f]|^2\big ]\le C {(N\delta+1)}\frac{\|f\|^2_{\mathcal C^{1,\mathfrak j_0}}}{N^2}.$$ 
This leads the desired result. 
\end{proof}

\begin{proposition}\label{p-rc-m^N}
Assume {\rm \textbf{A}}.
Then, the sequence $\{t\in\mathbf R_+\mapsto\sqrt N\mathbf M_t^N\}_{N\ge1}$ is relatively compact in $\mathcal D(\mathbf R_+,\mathcal H^{-\mathfrak J_3,\mathfrak j_3}(\mathbf R^{d+1})$. 
\end{proposition}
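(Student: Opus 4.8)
The plan is to apply the same relative-compactness criterion already used for $(\eta^N)_{N\ge1}$, namely Theorem~4.6 in \cite{jakubowski1986skorokhod}, now with the Hilbert space $E=\mathcal H^{-\mathfrak J_3,\mathfrak j_3}(\mathbf R^{d+1})$ and the separating family $\mathbb F=\{\mathsf H_f,\ f\in\mathcal C^\infty_c(\mathbf R^{d+1})\}$, where $\mathsf H_f:\nu\mapsto\langle f,\nu\rangle$. There are two conditions to verify: the compact containment condition (4.8) and the relative compactness of the one-dimensional projections (4.9).

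For the compact containment condition, I would use the Hilbert--Schmidt embedding $\mathcal H^{\mathfrak J_3,\mathfrak j_3}(\mathbf R^{d+1})\hookrightarrow_{\mathrm{H.S.}}\mathcal H^{\mathfrak J_1,\mathfrak j_1}(\mathbf R^{d+1})$ (which follows from the chain of embeddings recalled at the beginning of Section~\ref{sec-proof-clt} together with $\mathfrak J_3-1>\mathfrak J_1$ and the monotonicity of the spaces in the weight), so that by Schauder's theorem the inclusion $\mathcal H^{-\mathfrak J_1,\mathfrak j_1}(\mathbf R^{d+1})\hookrightarrow\mathcal H^{-\mathfrak J_3,\mathfrak j_3}(\mathbf R^{d+1})$ is compact; hence the balls $\{h:\|h\|_{\mathcal H^{-\mathfrak J_1,\mathfrak j_1}}\le C\}$ are compact in $E$. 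Combining Lemma~\ref{lem-ccM^N} with Markov's inequality then yields (4.8).

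For condition (4.9), I need that for each fixed $f\in\mathcal C^\infty_c(\mathbf R^{d+1})$ the real-valued sequence $(\{t\mapsto\sqrt N\mathbf M_t^N[f]\})_{N\ge1}$ is relatively compact in $\mathcal D(\mathbf R_+,\mathbf R)$. This is exactly the combination of the uniform second-moment bound from Lemma~\ref{lem-ccM^N} (applied coordinatewise, or directly the bound $\mathbf E[\sup_{t\le T}|\sqrt N\mathbf M_t^N[f]|^2]\le C\|f\|_{\mathcal H^{\mathfrak J_0,\mathfrak j_0}}^2$ behind it) and the modulus-of-continuity estimate of Lemma~\ref{lem-reg-M^N}, which gives $\mathbf E[|\sqrt N\mathbf M_t^N[f]-\sqrt N\mathbf M_r^N[f]|]\le C\sqrt{N\delta+1}\,\|f\|_{\mathcal C^{1,\mathfrak j_0}}/\sqrt N$. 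Feeding these two estimates into Proposition~A.1 in \cite{jmlr} (with $\mathcal H_1=\mathcal H_2=\mathbf R$) gives the desired relative compactness of the projections.

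Putting the two verified conditions together, Theorem~4.6 in \cite{jakubowski1986skorokhod} yields that $\{t\mapsto\sqrt N\mathbf M_t^N\}_{N\ge1}$ is relatively compact in $\mathcal D(\mathbf R_+,\mathcal H^{-\mathfrak J_3,\mathfrak j_3}(\mathbf R^{d+1}))$. The only mildly delicate point is bookkeeping of the Sobolev indices: one must make sure the gap between $\mathfrak J_3$ and $\mathfrak J_1$ (and between $\mathfrak j_3$ and $\mathfrak j_1$) is wide enough for the relevant embedding to be Hilbert--Schmidt, so that the compact containment ball in the finer space $\mathcal H^{-\mathfrak J_1,\mathfrak j_1}$ is compact in the coarser space $\mathcal H^{-\mathfrak J_3,\mathfrak j_3}$; this is guaranteed by the explicit choices $\mathfrak J_3=4\lceil\frac{d+1}{2}\rceil+8$, $\mathfrak j_3=\lceil\frac{d+1}{2}\rceil+1$ and the definitions of the intermediate indices. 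I do not expect any genuine obstacle here, since all the analytic work has been front-loaded into Lemmas~\ref{lem-ccM^N} and~\ref{lem-reg-M^N}; the proof is a short assembly.
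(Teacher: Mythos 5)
Your proposal is correct and follows essentially the same route as the paper: the paper's proof of Proposition~\ref{p-rc-m^N} consists precisely of invoking Theorem~4.6 of \cite{jakubowski1986skorokhod} as in Proposition~\ref{p-rc-eta}, with the Hilbert--Schmidt embedding $\mathcal H^{\mathfrak J_3,\mathfrak j_3}(\mathbf R^{d+1})\hookrightarrow_{\mathrm{H.S.}}\mathcal H^{\mathfrak J_1,\mathfrak j_1}(\mathbf R^{d+1})$ (hence, by Schauder, compactness of $\mathcal H^{-\mathfrak J_1,\mathfrak j_1}$-balls in $\mathcal H^{-\mathfrak J_3,\mathfrak j_3}$), Lemma~\ref{lem-ccM^N} plus Markov's inequality for the compact containment condition, and Lemma~\ref{lem-reg-M^N} together with Prop.~A.1 of \cite{jmlr} for the one-dimensional projections. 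Your index bookkeeping matches the embeddings recalled at the start of Section~\ref{sec-proof-clt}, so there is nothing to add.
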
 
 \begin{proof} 
 Recall  that $\mathcal H^{\mathfrak J_3,\mathfrak j_3}(\mathbf R^{d+1})\hookrightarrow_{\mathrm{H.S.}}\mathcal H^{\mathfrak J_1,\mathfrak j_1}(\mathbf R^{d+1})$. The same arguments as those used to prove Proposition \ref{p-rc-eta} together with   Lemmata \ref{lem-ccM^N}  and \ref{lem-reg-M^N}  imply the result.
\end{proof}

 We now turn to the regularity of the limit points of the sequence $(\eta^N)_{N\ge1}$.


\begin{lemma}\label{lem-reg-LP}
\begin{sloppypar}
Assume {\rm \textbf{A}}.
Then, for all $T>0$,  
 \begin{equation}\label{eq1-lem-reg}
\lim_{N\to\infty}\mathbf E\Big[\sup_{t\in[0,T]}\|\eta_t^N-\eta_{t^-}^N\|_{\mathcal H^{-\mathfrak J_3+1 ,\mathfrak j_3}}^2\Big] +   \mathbf E\Big[\sup_{t\in[0,T]}\|\sqrt N \mathbf M_t^N -\sqrt N \mathbf M_{t^-}^N\|_{\mathcal H^{-\mathfrak J_3 ,\mathfrak j_3}}^2\Big]=0. 
\end{equation}
Any limit point of $(\eta^N)_{N\ge1}$ (resp. of $(\sqrt N\mathbf M^N)_{N\ge1}$) in $\mathcal D(\mathbf R_+,\mathcal H^{-\mathfrak J_3+1,\mathfrak j_3}(\mathbf R^{d+1}))$ (resp.  in $\mathcal D(\mathbf R_+,\mathcal H^{-\mathfrak J_3,\mathfrak j_3}(\mathbf R^{d+1}))$) belongs a.s.   to $\mathcal C(\mathbf R_+,\mathcal H^{-\mathfrak J_3+1 ,\mathfrak j_3}(\mathbf R^{d+1}))$ (resp. to $\mathcal C(\mathbf R_+,\mathcal H^{-\mathfrak J_3,\mathfrak j_3}(\mathbf R^{d+1}))$). \end{sloppypar}
\end{lemma}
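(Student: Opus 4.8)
The plan is to prove the continuity of the limit points by showing that the jumps of the prelimit processes vanish uniformly in $L^2$ as $N\to\infty$, and then invoking the standard fact that a sequence in $\mathcal D(\mathbf R_+, E)$ converging in law to a process with a.s. continuous sample paths must have its limit points in $\mathcal C(\mathbf R_+, E)$ whenever the maximal jump size converges to $0$ (e.g.\ the criterion from \cite{jakubowski1986skorokhod} or Billingsley). So the real content is the two displayed limits in \eqref{eq1-lem-reg}.

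First I would bound the jumps of $\eta^N$. Since $\bar\mu$ is continuous in the relevant negative Sobolev norm (Lemma \ref{Theta^N-continu}), a jump of $\eta^N$ at time $t$ can only come from a jump of $\sqrt N\mu^N$, i.e.\ from a single step $k=\lfloor Nt\rfloor$ of the algorithm: $\eta_t^N-\eta_{t^-}^N = \sqrt N(\nu_{k}^N-\nu_{k-1}^N)$. For $f\in\mathcal C_c^\infty(\mathbf R^{d+1})$ one has $\langle f,\nu_k^N-\nu_{k-1}^N\rangle = \frac1N\sum_{i=1}^N(f(\theta_k^i)-f(\theta_{k-1}^i))$, and by a first-order Taylor expansion this is controlled by $\frac1N\sum_i |\nabla f(\widehat{\theta})|\,|\theta_k^i-\theta_{k-1}^i|$. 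From the update rule \eqref{eq.algo-batch}, $|\theta_k^i-\theta_{k-1}^i|\le \frac{C}{N}(1+|\theta_{k-1}^i|)\mathfrak b(\mathsf Z_{k-1}^i)$ using bounds \textbf{I} and \eqref{eq.boundKL}, so each jump of $\sqrt N\langle f,\mu^N\rangle$ is $O(N^{-1/2})$ times $\|f\|_{\mathcal C^{1,\mathfrak j_0}}$ (times moments of $\theta$ and $\mathfrak b(\mathsf Z)$, which are uniformly bounded by Lemma \ref{le.Bounds} and \textbf A). Taking an orthonormal basis of $\mathcal H^{\mathfrak J_3-1,\mathfrak j_3}(\mathbf R^{d+1})$ and using the Hilbert--Schmidt embedding $\mathcal H^{\mathfrak J_3-1,\mathfrak j_3}\hookrightarrow_{\mathrm{H.S.}}\mathcal H^{\mathfrak J_0,\mathfrak j_0}$ together with $\mathcal H^{\mathfrak J_0,\mathfrak j_0}\hookrightarrow\mathcal C^{1,\mathfrak j_0}$, one converts the pointwise-in-$f$ jump bound into a bound on $\|\eta_t^N-\eta_{t^-}^N\|_{\mathcal H^{-\mathfrak J_3+1,\mathfrak j_3}}$; then $\mathbf E[\sup_{t\in[0,T]}\|\eta_t^N-\eta_{t^-}^N\|^2_{\mathcal H^{-\mathfrak J_3+1,\mathfrak j_3}}]$ is bounded by summing the squared per-step increments over $k\le \lfloor NT\rfloor$, which gives $O(1/N)\to 0$. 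The same argument applied to $\mathbf M_t^N[f]=\sum_{k<\lfloor Nt\rfloor}\mathbf M_k^N[f]$, whose single-step increment $\mathbf M_k^N[f]$ satisfies $\mathbf E[\mathbf M_k^N[f]^2]\le C\|f\|_{\mathcal C^{1,\mathfrak j_0}}^2/N^2$ (Lemma \ref{le.Bounds}, Eq.\ (60) of \cite{colt}), yields $\mathbf E[\sup_{t\le T}\|\sqrt N\mathbf M_t^N-\sqrt N\mathbf M_{t^-}^N\|^2_{\mathcal H^{-\mathfrak J_3,\mathfrak j_3}}]\le C/N\to 0$ after the analogous basis/Hilbert--Schmidt argument.

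Finally, from \eqref{eq1-lem-reg} and relative compactness (Propositions \ref{p-rc-eta}, \ref{p-rc-m^N}), I would conclude as follows: if $\eta^{N_k}\Rightarrow\eta^\infty$ in $\mathcal D(\mathbf R_+,\mathcal H^{-\mathfrak J_3+1,\mathfrak j_3}(\mathbf R^{d+1}))$, then by the Skorokhod representation theorem we may assume the convergence is almost sure, and the functional $\omega\mapsto \sup_{t\in[0,T]}\|\omega(t)-\omega(t^-)\|$ is lower semicontinuous on Skorokhod space, so $\mathbf E[\sup_{t\le T}\|\eta^\infty_t-\eta^\infty_{t^-}\|^2]\le\liminf_k\mathbf E[\sup_{t\le T}\|\eta^{N_k}_t-\eta^{N_k}_{t^-}\|^2]=0$; hence $\eta^\infty$ has no jumps on $[0,T]$ for every $T$, i.e.\ $\eta^\infty\in\mathcal C(\mathbf R_+,\mathcal H^{-\mathfrak J_3+1,\mathfrak j_3}(\mathbf R^{d+1}))$ a.s. The identical reasoning applies to $\sqrt N\mathbf M^N$. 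The main obstacle is the bookkeeping of Sobolev indices: one must make sure that the jump estimate, which naturally lives in a $\mathcal C^{1,\mathfrak j_0}$ norm, upgrades to the dual Sobolev norm on the \emph{coarser} space $\mathcal H^{-\mathfrak J_3+1,\mathfrak j_3}$ via the chain of Hilbert--Schmidt and continuous embeddings recorded at the start of Section \ref{sec-proof-clt} — and that the unboundedness of $\nabla_\theta\mathscr D_{\mathrm{KL}}$ in $\theta$ is absorbed by the uniform moment bounds of Lemma \ref{le.Bounds}, exactly as in the other lemmas of this section.
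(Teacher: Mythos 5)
Your overall architecture is the right one (isolate the jumps of $\sqrt N\mu^N$ since $\bar\mu$, hence $\Theta^N$, is continuous; bound the per-step jump; upgrade from test functions to the dual Sobolev norm by a Hilbert--Schmidt basis argument; conclude continuity of limit points from the vanishing of the maximal jump, which the paper does via Prop.\ 3.26/Condition 3.28 in Jacod--Shiryaev and you do via Skorokhod representation plus lower semicontinuity of the maximal-jump functional -- both acceptable). But the quantitative heart of the lemma has a genuine gap. Your per-step jump of $\sqrt N\langle f,\mu^N\rangle$ is of size $O(N^{-1/2})$ in $L^2$, i.e.\ $\mathbf E[J_k^2]\le C/N$ where $J_k=\sqrt N\langle f,\nu_k^N-\nu_{k-1}^N\rangle$ (the drift and KL parts of the update are systematic shifts of order $1/N$ per particle, with no cancellation over $i$, so this rate cannot be improved). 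Bounding $\mathbf E[\max_{k\le \lfloor NT\rfloor}J_k^2]$ by $\sum_{k\le\lfloor NT\rfloor}\mathbf E[J_k^2]$ then gives $\lfloor NT\rfloor\cdot C/N=O(1)$, not the $O(1/N)$ you claim; the bound does not vanish, and the same arithmetic defeats your martingale estimate (there $\mathbf E[(\sqrt N\mathbf M_k^N[f])^2]\le C/N$, and summing $\lfloor NT\rfloor$ such terms is again $O(1)$). So as written, the two limits in \eqref{eq1-lem-reg} are not established.

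The missing ingredient is a higher-moment-per-step argument. The paper bounds the $k$-th jump of $\langle f,\mu^N\rangle$ by a quantity $\boldsymbol\delta_k^N[f]$ and uses \emph{fourth} moments, $\mathbf E[\boldsymbol\delta_k^N[f]^4]\le C\|f\|^4_{\mathcal H^{\mathfrak J_0,\mathfrak j_0}}/N^4$ (and $\mathbf E[|\mathbf M_k^N[f]|^4]\le C/N^4$, see \eqref{bound-E[M_k^4]}), combined with the elementary inequality $\mathbf E[\max_k a_k]\le(\sum_k\mathbf E[a_k^2])^{1/2}$ applied to $a_k=\boldsymbol\delta_k^N[f]^2$: this gives $\mathbf E[\max_k\boldsymbol\delta_k^N[f]^2]\le C/N^{3/2}$, hence after multiplying by $N$ the bound $C\|f\|^2_{\mathcal H^{\mathfrak J_0,\mathfrak j_0}}/\sqrt N\to 0$, and only then does the Hilbert--Schmidt summation over a basis of $\mathcal H^{\mathfrak J_3-1,\mathfrak j_3}(\mathbf R^{d+1})$ close the argument. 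If you replace your ``sum of squared increments'' step by this fourth-moment/max inequality (your own pointwise bound $|\theta_k^i-\theta_{k-1}^i|\le\frac CN(1+|\theta_{k-1}^i|+\mathfrak b(\mathsf Z_{k-1}^i))$ together with Lemma \ref{le.Bounds} does deliver the required fourth moments), the rest of your proof goes through.
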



\begin{proof} 
 Let $T>0$.  Let us first consider the sequence $(\eta^N)_{N\ge1}$.  
In what follows, $C>0$ is a constant independent of $N\ge 1$, $k\in\{1,\dots,\lfloor NT\rfloor\}$,  and  $f\in \mathcal H^{\mathfrak J_3-1,\mathfrak j_3}(\mathbf R^{d+1})$.  We have 
\begin{equation}
\sup_{t\in[0,T]}\|\eta_t^N-\eta_{t^-}^N\|_{\mathcal H^{-\mathfrak J_3+1,\mathfrak j_3}}^2\le 2\sup_{t\in[0,T]}\|\Upsilon_t^N-\Upsilon_{t^-}^N\|_{\mathcal H^{-\mathfrak J_3+1,\mathfrak j_3}}^2+2\sup_{t\in[0,T]}\|\Theta_t^N-\Theta_{t^-}^N\|_{\mathcal H^{-\mathfrak J_3+1,\mathfrak j_3}}^2.
\end{equation}
According to Lemma \ref{Theta^N-continu}, one has, for all $t\in\mathbf R_+$ and $N\ge1$, $\|\Theta_t^N-\Theta_{t^-}^N\|_{\mathcal H^{-\mathfrak J_3+1,\mathfrak j_3}}=0$.
In addition, since a.s. $\bar\mu^N\in\mathcal C(\mathbf R_+,\mathcal H^{-\mathfrak J_0,\mathfrak j_0}(\mathbf R^{d+1}))$, it follows, by definition of $\Upsilon^N$, that a.s. for all $N\ge 1$,  
\begin{equation}
\sup_{t\in[0,T]}\langle f,\Upsilon_t^N-\Upsilon_{t^-}^N\rangle^2=N\sup_{t\in[0,T]}\langle f,\mu_t^N-\mu_{t^-}^N\rangle^2. 
\end{equation}
 The function $t\in[0,T]\mapsto\langle f,\mu_t^N\rangle$ has exactly $\lfloor NT\rfloor$ discontinuities located at times $t_k= k/N$ ($k\in \{1,\ldots, \lfloor NT\rfloor\}$).
In addition, from \eqref{def<f,Upsilon>}, for  $k\in\{1,\dots,\lfloor NT\rfloor\}$, its  $k$-th discontinuity   is bounded by
\begin{align*}
\boldsymbol \delta_k^N[f]&:= |\mathbf M_{k-1}^N[f]| +|\mathbf R_{k-1}^N[f]|\\
&\quad + \kappa\Big|\int_{\frac{k-1}{N}}^{\frac{k}{N}}\int_{\mathsf X\times\mathsf Y}\langle\phi(\cdot,\cdot,x)-y,\mu_s^N\otimes\gamma\rangle\langle\nabla_\theta f\cdot\nabla_\theta\phi(\cdot,\cdot,x),\mu_s^N\otimes\gamma\rangle\pi(\di x,\di y)\di s\Big|  \\
&\quad + \frac{\kappa}{N}\Big|\int_{\frac{k-1}{N}}^{\frac{k}{N}}\int_{\mathsf X\times\mathsf Y}\Big\langle\langle\phi(\cdot,\cdot,x)-y,\gamma\rangle\langle\nabla_\theta f\cdot\nabla_\theta\phi(\cdot,\cdot,x),\gamma\rangle,\mu_s^N\Big\rangle\pi(\di x,\di y)\di s\Big|   \\
&\quad + \frac{\kappa}{N}\Big|\int_{\frac{k-1}{N}}^{\frac{k}{N}}\int_{\mathsf X\times\mathsf Y}\Big\langle(\phi(\cdot,\cdot,x)-y)\nabla_\theta f\cdot\nabla_\theta\phi(\cdot,\cdot,x),\mu_s^N\otimes\gamma\Big\rangle\pi(\di x,\di y)\di s\Big| \\
&\quad + \kappa\Big|\int_{\frac{k-1}{N}}^{\frac{k}{N}}\langle\nabla_\theta f\cdot\nabla_\theta\mathscr D_{\mathrm{KL}}(q^1_\cdot|P_0^1),\mu_s^N\rangle\di s\Big|.
\end{align*}
Thus,
\begin{equation}
\sup_{t\in[0,T]}\langle f,\mu_t^N-\mu_{t^-}^N\rangle^2\le \max\{|\boldsymbol \delta_{k+1}^N[f]|^2,\ 0\le k<\lfloor NT\rfloor\}. 
\end{equation}
Using the bounds provided by the proof of Lemma 19 in \cite{colt}, we obtain, for $0\le k<\lfloor NT\rfloor$, 
\begin{equation}\label{bound-E[M_k^4]}
\mathbf E[|\mathbf M_k^N[f]|^4]\le C\frac{\|f\|^4_{\mathcal C^{1,\mathfrak j_0}}}{N^4}\le C\frac{\|f\|^4_{\mathcal H^{\mathfrak J_0,\mathfrak j_0}}}{N^4}, \ \ \ 
\mathbf E[|\mathbf R_k^N[f]|^4]\le C\frac{\|f\|^4_{\mathcal H^{\mathfrak J_0,\mathfrak j_0}}}{N^8},
\end{equation}
and 
\begin{align*}
&\mathbf E\Big[\Big|\int_{\frac{k }{N}}^{\frac{k+1}{N}}\int_{\mathsf X\times\mathsf Y}\langle\phi(\cdot,\cdot,x)-y,\mu_s^N\otimes\gamma\rangle\langle\nabla_\theta f\cdot\nabla_\theta\phi(\cdot,\cdot,x),\mu_s^N\otimes\gamma\rangle\pi(\di x,\di y)\di s\Big|^4\\
&+ \frac{1}{N}\Big| \int_{\frac{k }{N}}^{\frac{k+1}{N}}\int_{\mathsf X\times\mathsf Y}\Big\langle\langle\phi(\cdot,\cdot,x)-y,\gamma\rangle\langle\nabla_\theta f\cdot\nabla_\theta\phi(\cdot,\cdot,x),\gamma\rangle,\mu_s^N\Big\rangle\pi(\di x,\di y)\di s\Big|^4   \\
&\quad + \frac{1}{N}\Big|\int_{\frac{k }{N}}^{\frac{k+1}{N}}\int_{\mathsf X\times\mathsf Y}\Big\langle(\phi(\cdot,\cdot,x)-y)\nabla_\theta f\cdot\nabla_\theta\phi(\cdot,\cdot,x),\mu_s^N\otimes\gamma\Big\rangle\pi(\di x,\di y)\di s\Big|^4\Big] \le C\frac{\|f\|^4_{\mathcal H^{\mathfrak J_0,\mathfrak j_0}}}{ N^4}.
\end{align*}
 In addition, one also has (see Equation (57) in \cite{colt}): 
\begin{align*}
\mathbf E\Big[\Big|\int_{\frac{k-1}{N}}^{\frac{k}{N}}\langle\nabla_\theta f\cdot\nabla_\theta\mathscr D_{\mathrm{KL}}(q^1_\cdot|P_0^1),\mu_s^N\rangle\di s\Big|^4\Big]\le C\frac{\|f\|_{\mathcal H^{\mathfrak J_0,\mathfrak j_0}}^4}{ N^4}.
\end{align*}
Consequently, it holds:
\begin{align*}
\mathbf E[ \max\{|\boldsymbol \delta_{k+1}^N[f]^2,\ 0\le k<\lfloor NT\rfloor\}] \le \Big|\sum_{k=0}^{\lfloor NT\rfloor-1}\mathbf E[\boldsymbol \delta_k^N[f]^4]\Big|^{1/2}\le C\frac{\|f\|_{\mathcal H^{\mathfrak J_0,\mathfrak j_0}}^2}{N^{3/2}}.
\end{align*}
Hence
\begin{equation}
\mathbf E\Big[N\sup_{t\in[0,T]}\langle f,\mu_t^N-\mu_{t^-}^N\rangle^2\Big]\le C\frac{\|f\|_{\mathcal H^{\mathfrak J_0,\mathfrak j_0}}^2}{\sqrt N}. 
\end{equation}
 Since  $\mathcal H^{\mathfrak J_3-1,\mathfrak j_3}(\mathbf R^{d+1})\hookrightarrow_{\mathrm{H.S.}}\mathcal H^{\mathfrak J_0,\mathfrak j_0}(\mathbf R^{d+1})$, one deduces that $ \mathbf E [\sup_{t\in[0,T]}\|\eta_t^N-\eta_{t^-}^N\|_{\mathcal H^{-\mathfrak J_3+1 ,\mathfrak j_3}}^2 ]\to 0$ as $N\to +\infty$. The fact that any limit points of $(\eta^N)_{N\ge1}$ is a.s. continuous follows from Condition 3.28 in Proposition 3.26 of \cite{jacod2003skorokhod}.

 The case of the sequence $(\sqrt N\mathbf M^N)_{N\ge1}$ is treated very similarly. The proof of the lemma is complete.
\end{proof}

\subsection{Convergence of $(\sqrt N\mathbf M^N)_{N\ge1}$ to a $\mathfrak G$-process}
\label{sec-conv-G-process}

 In this section, we prove that the sequence $(\sqrt N\mathbf M^N)_{N\ge1}$ converges towards   a $\mathfrak G$-process (see Definition \ref{d-G-process}), see   Proposition \ref{p-conv-to-g-process}. The case when the $\{\theta^i_k, i\in \{1,\ldots,N\}\}$'s are  generated   by the algorithm 
\eqref{eq.algo-batch} requires extra analysis compared to the cases when the $\{\theta^i_k, i\in \{1,\ldots,N\}\}$'s are  generated   by the algorithms  \eqref{eq.algo-ideal} or  \eqref{eq.algo-z1z2} (see indeed the second part of the proof of  Proposition \ref{p-conv-to-g-process} and Lemma \ref{prop-conv_rhoN'} below).

\begin{proposition}\label{prop-conv_G-proc-f}
 Assume that the $\{\theta^i_k, i\in \{1,\ldots,N\}\}$'s are  generated  either  by the algorithm \eqref{eq.algo-ideal} or  by the algorithm \eqref{eq.algo-batch}. Then, for every $f\in \mathcal C^{1,\mathfrak j_0}(\mathbf R^{d+1})$, the sequence $\{t\in\mathbf R_+\mapsto\sqrt N\mathbf M_t^N[f]\}_{N\ge1}$ converges in distribution in $\mathcal D(\mathbf R_+,\mathbf R)$ towards a process $\mathcal X^f\in\mathcal C(\mathbf R_+,\mathbf R)$ that has independent Gaussian increments. Moreover, for all $t\in\mathbf R_+$, 
$$\mathbf E[\mathcal X_t^f]=0 \text{ and } \Var(\mathcal X_t^f)=\kappa^2\int_0^t\Var_\pi(\mathscr Q[f](x,y,\bar\mu_s))\di s,  $$
where we recall $\mathscr Q[f](x,y,\bar\mu_v)=\langle\phi(\cdot,\cdot,x)-y,\bar\mu_v\otimes\gamma\rangle\langle\nabla_\theta f\cdot\nabla_\theta\phi(\cdot,\cdot,x),\bar\mu_v\otimes\gamma\rangle$  (see Theorem \ref{thm-clt-ideal}).
\end{proposition}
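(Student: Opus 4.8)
The plan is to deduce the statement from a functional martingale central limit theorem for triangular arrays of martingale differences (of the type used for Th.~2 in \cite{jmlr}), applied to the continuous-time embedded martingale $\sqrt N\mathbf M_t^N[f]=\sum_{k=0}^{\lfloor Nt\rfloor-1}\sqrt N\mathbf M_k^N[f]$, which is a $\mathfrak F_t^N$-martingale with $\mathfrak F_t^N=\mathcal F_{\lfloor Nt\rfloor}^N$ (the martingale property was already used in Section~\ref{sec-proof-clt}). By construction $\mathbf M_k^N[f]$ is, conditionally on $\mathcal F_k^N$, centered: inspecting the definitions, $\mathbf D_k^N[f]$ is precisely the $\mathcal F_k^N$-conditional expectation of the stochastic increment $-\tfrac{\kappa}{N^3}\sum_{i,j}(\phi(\theta_k^j,\mathsf Z_k^j,x_k)-y_k)\nabla_\theta f(\theta_k^i)\cdot\nabla_\theta\phi(\theta_k^i,\mathsf Z_k^i,x_k)$ (respectively its idealized analogue, where integration against $\gamma$ plays the role of the Monte-Carlo samples). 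Hence $\mathbf M_k^N[f]=-\tfrac{\kappa}{N}\big(A_k^N-\mathbf E[A_k^N\mid\mathcal F_k^N]\big)$ with $A_k^N=P_k^NQ_k^N$, $P_k^N=\tfrac1N\sum_j(\phi(\theta_k^j,\mathsf Z_k^j,x_k)-y_k)$ and $Q_k^N=\tfrac1N\sum_i\nabla_\theta f(\theta_k^i)\cdot\nabla_\theta\phi(\theta_k^i,\mathsf Z_k^i,x_k)$, so that the predictable bracket is $\langle\sqrt N\mathbf M^N[f]\rangle_t=\tfrac{\kappa^2}{N}\sum_{k=0}^{\lfloor Nt\rfloor-1}\Var(A_k^N\mid\mathcal F_k^N)$.

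The theorem needs two inputs. First, the negligible-jump (conditional Lindeberg) condition: by the fourth-moment bound \eqref{bound-E[M_k^4]}, $\mathbf E[|\sqrt N\mathbf M_k^N[f]|^4]=N^2\mathbf E[|\mathbf M_k^N[f]|^4]\le C\|f\|_{\mathcal C^{1,\mathfrak j_0}}^4/N^2$, whence $\sum_{k=0}^{\lfloor Nt\rfloor-1}\mathbf E[|\sqrt N\mathbf M_k^N[f]|^4]\le C\|f\|^4t/N\to 0$; this Lyapunov bound yields the Lindeberg condition (and, as a byproduct of the martingale CLT machinery, tightness of $(\sqrt N\mathbf M^N[f])_{N\ge1}$ in $\mathcal D(\mathbf R_+,\mathbf R)$). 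Second, the convergence in $\mathbf P$-probability of $\langle\sqrt N\mathbf M^N[f]\rangle_t$ towards the continuous deterministic function $t\mapsto\kappa^2\int_0^t\Var_\pi(\mathscr Q[f](x,y,\bar\mu_s))\di s$. For this I apply the law of total variance in $(x_k,y_k)$, writing $\Var(A_k^N\mid\mathcal F_k^N)=\Var_{(x,y)}\big(\mathbf E_{\mathsf Z}[A_k^N\mid x,y,\mathcal F_k^N]\big)+\mathbf E_{(x,y)}\big[\Var_{\mathsf Z}(A_k^N\mid x,y,\mathcal F_k^N)\big]$. Using the independence of the $\mathsf Z_k^j$, Assumption~\textbf{A5} and the uniform moment bounds of Lemma~\ref{le.Bounds}, one shows that, uniformly over $k\le\lfloor Nt\rfloor$ and over $(x,y)\in\mathsf X\times\mathsf Y$, $\mathbf E_{\mathsf Z}[A_k^N\mid x,y,\mathcal F_k^N]=\mathscr Q[f](x,y,\nu_k^N)+O(1/N)$ (the diagonal $i=j$ terms being of order $1/N$) and $\Var_{\mathsf Z}(A_k^N\mid x,y,\mathcal F_k^N)=O(1/N)$ (since $P_k^N,Q_k^N$ fluctuate by $O(N^{-1/2})$ around their $\gamma$-averages; for the idealized scheme this term is identically zero). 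Hence $\langle\sqrt N\mathbf M^N[f]\rangle_t=\tfrac{\kappa^2}{N}\sum_{k=0}^{\lfloor Nt\rfloor-1}\Var_\pi(\mathscr Q[f](x,y,\nu_k^N))+O(1/N)=\kappa^2\int_0^{\lfloor Nt\rfloor/N}\Var_\pi(\mathscr Q[f](x,y,\mu_s^N))\di s+O(1/N)$, a Riemann-type integral that converges in probability to $\kappa^2\int_0^t\Var_\pi(\mathscr Q[f](x,y,\bar\mu_s))\di s$ by Theorem~\ref{th.LLN} (which gives $\mu^N\to\bar\mu$ in $\mathcal D(\mathbf R_+,\mathcal P_{\gamma_0})$, hence uniformly on $[0,t]$ since $\bar\mu$ is continuous), together with the continuity of $\nu\mapsto\Var_\pi(\mathscr Q[f](x,y,\nu))$ and, again, the moment bounds of Lemma~\ref{le.Bounds}.

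With these two inputs, the martingale CLT gives that $\sqrt N\mathbf M^N[f]$ converges in distribution in $\mathcal D(\mathbf R_+,\mathbf R)$ to the continuous Gaussian martingale $\mathcal X^f$ with deterministic bracket $\langle\mathcal X^f\rangle_t=\kappa^2\int_0^t\Var_\pi(\mathscr Q[f](x,y,\bar\mu_s))\di s$. Being a continuous martingale with deterministic quadratic variation, $\mathcal X^f$ is a time-changed Brownian motion, hence $\mathcal X^f\in\mathcal C(\mathbf R_+,\mathbf R)$, $\mathbf E[\mathcal X_t^f]=0$, $\mathcal X^f$ has independent Gaussian increments, and $\Var(\mathcal X_t^f)=\langle\mathcal X^f\rangle_t$, which is the announced formula.

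The main obstacle is the convergence of the predictable bracket, and within it the two nested approximations: first, showing that the Monte-Carlo fluctuations of the products $P_k^NQ_k^N$ are of order $1/N$ and therefore do not contribute to the limiting bracket — this is exactly why the idealized and \emph{Bayes-by-Backprop} schemes share the same $\mathfrak G$-process; and second, justifying the Riemann-sum/law-of-large-numbers passage for a functional of the empirical measure that is only continuous, not bounded, in $\theta$, where Lemma~\ref{le.Bounds} is essential to control the polynomial growth of $\nabla_\theta f$ and of $\nabla_\theta\phi$ uniformly over $k\le\lfloor Nt\rfloor$.
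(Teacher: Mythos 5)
Your argument is correct and reaches the same limit through the same core ingredients (a functional martingale CLT of the Ethier--Kurtz type, the fourth-moment bound \eqref{bound-E[M_k^4]} for the jump condition, and Theorem \ref{th.LLN} plus continuity and domination for the Riemann-sum passage), but the way you identify the bracket is genuinely different from the paper in the BbB case. The paper works with the optional quadratic variation $\mathfrak a_t^N[f]=N\sum_k\mathbf M_k^N[f]^2$ (Remark 7.1.5 in \cite{ethier2009markov}) and, for the algorithm \eqref{eq.algo-batch}, replaces it by conditional second moments with respect to the \emph{enlarged} $\sigma$-algebra $\Sigma_k^N$ containing the current samples $\mathsf Z_k^j$; the resulting conditional variances are then variances over $(x,y)\sim\pi$ of $\mathfrak Q[f](x,y,\mathbf r_k^N)$, a functional of the joint empirical measure $\mathbf r_k^N$ of $(\theta_k^i,\mathsf Z_k^i)$, which is why the paper needs the extra Lemma \ref{prop-conv_rhoN'} ($\rho_s^N\to\bar\mu_s\otimes\gamma$ in $\mathcal P_{\mathfrak q_0}$ along a.s.-convergent subsequences) together with a subsequence-of-subsequence argument. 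You instead take the predictable bracket with respect to $\mathcal F_k^N$, i.e.\ you also average over the current $\mathsf Z$'s, and split by the law of total variance: the $\mathsf Z$-conditional variance of $P_k^NQ_k^N$ and the diagonal ($i=j$) bias in its $\mathsf Z$-mean are of order $1/N$, so the bracket reduces to $\frac{\kappa^2}{N}\sum_k\Var_\pi(\mathscr Q[f](x,y,\nu_k^N))$, exactly as in the idealized case \eqref{eq.algo-ideal}. This bypasses the joint empirical measure and Lemma \ref{prop-conv_rhoN'} entirely, avoids the orthogonality step the paper uses to pass from $N\sum_k\mathbf M_k^N[f]^2$ to conditional variances, and makes transparent why the idealized and BbB schemes share the same $\mathfrak G$-process; the paper's route, in exchange, keeps the conditional variance computation exact (no $O(1/N)$ bookkeeping) and isolates in Lemma \ref{prop-conv_rhoN'} a convergence statement of independent use. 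Two small points to tighten when writing your version in full: your ``uniformly over $k\le\lfloor Nt\rfloor$ and $(x,y)$'' $O(1/N)$ claims should be stated in $L^1$, since the implicit constants involve empirical moments of $|\theta_k^i|$ that are controlled only in expectation by Lemma \ref{le.Bounds} (and, when expanding $\Var_\pi$ of $\mathscr Q[f](x,y,\nu_k^N)+O(1/N)$, you also need the second moment of $\mathscr Q[f](x,y,\nu_k^N)$ from the same lemma); and the martingale CLT with the predictable bracket additionally requires the jumps of the bracket to vanish, which follows from the same fourth-moment estimates you already invoke. Neither point affects the validity of your approach.
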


\begin{proof}
We treat separately the two cases when the $\{\theta^i_k, i\in \{1,\ldots,N\}\}$'s are  generated   by the algorithm \eqref{eq.algo-ideal} or by the algorithm~\eqref{eq.algo-batch}. Let~$f\in \mathcal C^{1,\mathfrak j_0}(\mathbf R^{d+1})$.

\paragraph{The case of the Idealized algorithm \eqref{eq.algo-ideal}.}  Let us assume that the $\{\theta^i_k, i\in \{1,\ldots,N\}\}$'s are  generated   by the algorithm \eqref{eq.algo-ideal}.
To prove the desired result, we apply the martingale central limit theorem 5.1.4 in \cite{ethier2009markov} to the sequence $\{t\in\mathbf R_+\mapsto\sqrt N\mathbf M_t^N[f]\}_{N\ge1}$.  Let us first show that Condition (a) in Th. 7.1.4 in  \cite{ethier2009markov} holds. First of all, by Remark 7.1.5 in \cite{ethier2009markov}, the covariation matrix of $\sqrt N\mathbf M_t^N[f]$  is 
\begin{equation}\label{def_a}
 \mathfrak a_t^N[f]=N\sum_{k=0}^{\lfloor Nt\rfloor-1}\mathbf M_k^N[f]^2
\end{equation} 
In particular,  $\mathfrak a_t^N[f]-\mathfrak a_s[f]\ge 0$ when $t\ge s$.  On the other hand, by \eqref{bound-E[M_k^4]} (which, we recall, also holds when the $\{\theta^i_k, i\in \{1,\ldots,N\}\}$'s are  generated   by the algorithm \eqref{eq.algo-ideal}), we have 
 for all  $T\ge 0$:
\begin{equation}
\lim_{N\to+\infty}\mathbf E\Big[\sup_{t\in[0,T]}|\sqrt N\mathbf M_t^N[f]-\sqrt N\mathbf M_{t^-}^N[f]|\Big]=0.
\end{equation}
Thus Condition (a) in Th. 7.1.4 in \cite{ethier2009markov} is satisfied. Let us prove the last required condition in Theorem 7.1.4 of \cite{ethier2009markov}, namely that for all $t\in\mathbf R_+$, $\lim_N\mathfrak a_t^N[f]=\mathfrak c_t[f]$ in $\mathbf P$-probability, where $\mathfrak c$ satisfies the assumptions of Th. 7.1.1 in  \cite{ethier2009markov} (i.e., $t\in\mathbf R_+\mapsto\mathfrak c_t[f]$ is continuous, $\mathfrak c_0[f]=0$, and $\mathfrak c_t[f]-\mathfrak c_s[f]\ge 0$ if $t\ge s$). 
 Let us consider and fix $t\ge 0$. 
We recall that  when the $\{\theta^i_k, i\in \{1,\ldots,N\}\}$'s are  generated   by the algorithm \eqref{eq.algo-ideal}, one has  that for $k\ge0$ (see Equation (21) in \cite{colt}), 
\begin{align*}
 \mathbf D_{k}^N[f]&=-\frac{\kappa}{N^3}\sum_{i=1}^N\sum_{j=1,j\neq i}^N\int_{\mathsf X\times\mathsf Y}\big \langle\phi(\theta_k^j,\cdot,x)-y,\gamma\big\rangle\big \langle\nabla_\theta f(\theta_k^i)\cdot\nabla_\theta\phi(\theta_k^i,\cdot,x),\gamma\big \rangle\pi(\di x,\di y)\nonumber\\
&\quad-\frac{\kappa}{N^2}\int_{\mathsf X\times\mathsf Y}\big \langle(\phi(\cdot,\cdot,x)-y)\nabla_\theta f\cdot\nabla_\theta\phi(\cdot,\cdot,x),\nu_k^N\otimes\gamma\big \rangle\pi(\di x,\di y)\\
&=-\frac{\kappa}{N^3}\sum_{i=1}^N\sum_{j=1}^N\int_{\mathsf X\times\mathsf Y}\big \langle\phi(\theta_k^j,\cdot,x)-y,\gamma\big \rangle\big \langle\nabla_\theta f(\theta_k^i)\cdot\nabla_\theta\phi(\theta_k^i,\cdot,x),\gamma\big \rangle\pi(\di x,\di y)\nonumber\\
&\quad +\frac{\kappa}{N^3}\sum_{i=1}^N\int_{\mathsf X\times\mathsf Y}\big \langle\phi(\theta_k^i,\cdot,x)-y,\gamma\big \rangle\big \langle\nabla_\theta f(\theta_k^i)\cdot\nabla_\theta\phi(\theta_k^i,\cdot,x),\gamma\big \rangle\pi(\di x,\di y)\nonumber\\
&\quad-\frac{\kappa}{N^2}\int_{\mathsf X\times\mathsf Y}\big \langle(\phi(\cdot,\cdot,x)-y)\nabla_\theta f\cdot\nabla_\theta\phi(\cdot,\cdot,x),\nu_k^N\otimes\gamma\big \rangle\pi(\di x,\di y),
\end{align*}
and 
\begin{align*}
 \mathbf  M_{k}^{N}[f]&:=-\frac{\kappa}{N^3}\sum_{i=1}^N\sum_{j=1,j\neq i}^N(\langle\phi(\theta_k^j,\cdot,x_k),\gamma\rangle-y_k)\langle\nabla_ \theta f(\theta_k^i)\cdot\nabla_\theta\phi(\theta_k^i,\cdot,x_k),\gamma\rangle\nonumber\\
&\quad-\frac{\kappa}{N^2}\langle(\phi(\cdot,\cdot,x_k)-y_k)\nabla_\theta f\cdot\nabla_\theta\phi(\cdot,\cdot,x_k),\nu_k^N\otimes\gamma\rangle-\mathbf D_{k}^N[f].
\end{align*}
Let us introduce, for any $\nu\in\mathcal H^{\mathfrak J_0,\mathfrak j_0}(\mathbf R^{d+1})$, 
\begin{align*}
 \mathfrak Q[f](\nu)=\int_{\mathsf X\times\mathsf Y}\big (\big \langle\phi(\cdot,\cdot,x),\nu\otimes\gamma\big \rangle-y\big )\big \langle\nabla_\theta f\cdot\nabla_\theta\phi(\cdot,\cdot,x),\nu\otimes\gamma\big \rangle\pi(\di x,\di y).  
\end{align*}
Let us also define for $k\ge0$ and $N\ge1$,  
\begin{align*}
\mathfrak R_k^N[f]&:= \frac{\kappa}{N^3}\sum_{i=1}^N(\langle\phi(\theta_k^i,\cdot,x_k),\gamma\rangle-y_k)\langle\nabla_\theta f(\theta_k^i)\cdot\nabla_\theta\phi(\theta_k^i,\cdot,x_k),\gamma\rangle\\
&\quad-\frac{\kappa}{N^2}\langle(\phi(\cdot,\cdot,x_k)-y_k)\nabla_\theta f\cdot\nabla_\theta\phi(\cdot,\cdot,x_k),\nu_k^N\otimes\gamma\rangle\\
&\quad- \frac{\kappa}{N^3}\sum_{i=1}^N\int_{\mathsf X\times\mathsf Y}\big (\big \langle\phi(\theta_k^i,\cdot,x),\gamma\big \rangle-y\big )\big \langle\nabla_\theta f(\theta_k^i)\cdot\nabla_\theta\phi(\theta_k^i,\cdot,x),\gamma\big \rangle\pi(\di x,\di y)\nonumber\\
&\quad+\frac{\kappa}{N^2}\int_{\mathsf X\times\mathsf Y}\big \langle(\phi(\cdot,\cdot,x)-y)\nabla_\theta f\cdot\nabla_\theta\phi(\cdot,\cdot,x),\nu_k^N\otimes\gamma\big \rangle\pi(\di x,\di y).
\end{align*}
It then holds for all $k\ge0$ and $N\ge1$:
\begin{align}\label{M_k_simple}
\mathbf M_k^N[f]&=-\frac\kappa N\mathscr Q[f](x_k,y_k,\nu_k^N)+\frac\kappa N \mathfrak Q[f](\nu_k^N)+\mathfrak R_k^N[f].
\end{align}
Hence, by \eqref{def_a}  and \eqref{M_k_simple}, for all $t\in\mathbf R_+$, 
\begin{align}\label{a_t_calcule}
\mathfrak a_t^N[f]&=\frac{\kappa^2}{N}\sum_{k=0}^{\lfloor Nt\rfloor-1}\big[\mathscr Q[f](x_k,y_k,\nu_k^N)- \mathfrak Q[f](\nu_k^N)\big]^2 + 2\kappa\sum_{k=0}^{\lfloor Nt\rfloor-1}\mathfrak R_k^N[f]\big [ \mathfrak Q[f](\nu_k^N)-\mathscr Q[f](x_k,y_k,\nu_k^N)\big ]\nonumber\\
&\quad + N\sum_{k=0}^{\lfloor Nt\rfloor-1}\mathfrak{R}_k^N[f]^2.
\end{align}
Fix $t\ge 0$. Recall that we want to identify the limit  of $(\mathfrak a_t^N[f])_{N\ge1}\in  \mathbf R^{\mathbf N^*}$ in $\mathbf P$-probability. 
Using the  following two upper bounds (which can be easily derived using \textbf A and Lemma \ref{le.Bounds})
$$\mathbf E\big [|\mathfrak R_k^N[f]|^2\big ]\le C\|f\|_{\mathcal C^{1,\mathfrak j_0}}^2/N^4\text{ and } \mathbf E\big [| \mathfrak Q[f](\nu_k^N)|^2\big ]+\mathbf E\big [|\mathscr Q[f](x_k,y_k,\nu_k^N)|^2\big ]\le C  \|f\|_{\mathcal C^{1,\mathfrak j_0}}^2,$$
 one deduces that the two last terms of \eqref{a_t_calcule} converge to zero in $L^1$. 
Therefore, one just needs to determine the limit in $\mathbf P$-probability of  
\begin{align}
\nonumber
\frac{\kappa^2}{N}\sum_{k=0}^{\lfloor Nt\rfloor-1}[\mathscr Q[f](x_k,y_k,\nu_k^N)- \mathfrak Q[f](\nu_k^N)]^2&=\frac{\kappa^2}{N}\sum_{k=0}^{\lfloor Nt\rfloor-1}\Var_\pi(\mathscr Q[f](x,y,\nu_k^N))\\
\nonumber
&\quad +\frac{\kappa^2}{N}\sum_{k=0}^{\lfloor Nt\rfloor-1}(\mathscr Q[f](x_k,y_k,\nu_k^N)- \mathfrak Q[f](\nu_k^N))^2\\
\label{eq.Res}
&\quad - \frac{\kappa^2}{N}\sum_{k=0}^{\lfloor Nt\rfloor-1}\Var_\pi(\mathscr Q[f](x,y,\nu_k^N)).
\end{align}
On the one hand,  using Theorem \ref{th.LLN} together with the continuous mapping theorem and the dominated convergence theorem, one deduces that for all $t\ge 0$\footnote{This is indeed the same proof as the one made just after  Eq. (3.63) in \cite{jmlr}, changing $\sigma$ there by  $\mathfrak H=\langle\nabla _\theta \phi, \gamma\rangle$.}:
\begin{align*}
\frac{\kappa^2}{N}\sum_{k=0}^{\lfloor Nt\rfloor-1}\Var_\pi(\mathscr Q[f](x,y,\nu_k^N))&= \kappa^2\sum_{k=0}^{\lfloor Nt\rfloor-1}\int_{\frac kN}^{\frac{k+1}{N}}\Var_\pi(\mathscr Q[f](x,y,\mu_s^N))\di s\\
&=\kappa^2\int_0^t \Var_\pi(\mathscr Q[f](x,y,\mu_s^N))\di s-\kappa^2 \int_{\frac{\lfloor Nt\rfloor}{N}}^t\Var_\pi(\mathscr Q[f](x,y,\mu_s^N))\di s\\
&\xrightarrow[N\to +\infty]{\mathbf P}\kappa^2\int_0^t \Var_\pi(\mathscr Q[f](x,y,\bar\mu_s))\di s.
\end{align*}
 Let us now deal with the two remainders terms in \eqref{eq.Res}. Denoting by  $\mathscr L_k^N= [\mathscr Q[f](x_k,y_k,\nu_k^N)- \mathfrak Q[f](\nu_k^N)]^2$, we notice that  $\Var_\pi(\mathscr Q[f](x,y,\nu_k^N))= \mathbf E_{(x,y)\sim \pi} [  \mathscr L_k^N]$. Moreover if $j<k$,  since $\mathscr L_j^N$ is $\mathcal F_k^N$-measurable (see \eqref{eq.Fk1}) as well as $\nu_k^N$, and $(x_k,y_k)\indep \mathcal F_k^N$, one has:
\begin{align*}
\mathbf E\Big[\Big(\mathscr L_k^N-\mathbf E_\pi [  \mathscr L_k^N] \Big)  \Big(\mathscr L_j^N-\mathbf E_\pi [\mathscr   L_j^N] \Big)\Big]&=\mathbf E\Big[\Big(\mathscr L_j^N-\mathbf E_\pi [  \mathscr L_j^N] \Big)  \mathbf E\Big[\Big(\mathscr L_k^N-\mathbf E_\pi [  \mathscr L_k^N] \Big)| \mathcal F_k^N\Big]\Big]\\
&=\mathbf E\Big[\Big(\mathscr L_j^N-\mathbf E_\pi [  \mathscr L_j^N] \Big)  \mathbf E_\pi\Big[\Big(\mathscr L_k^N-\mathbf E_\pi [  \mathscr L_k^N] \Big)\Big]\\
&=\mathbf E\Big[\Big(\mathscr L_j^N-\mathbf E_\pi [  \mathscr L_j^N] \Big) \times 0\Big]=0.
\end{align*}
Thus,   it holds:
\begin{align*}
&\mathbf E\Big[\Big|\frac{\kappa^2}{N}\sum_{k=0}^{\lfloor Nt\rfloor-1}[\mathscr Q[f](x_k,y_k,\nu_k^N)- \mathfrak Q[f](\nu_k^N)]^2 - \frac{\kappa^2}{N}\sum_{k=0}^{\lfloor Nt\rfloor-1}\Var_\pi(\mathscr Q[f](x,y,\nu_k^N))\Big|^2\Big]\\
&=\frac{\kappa^4}{N^2}\sum_{k=0}^{\lfloor Nt\rfloor-1}\mathbf E\Big[\Big|[\mathscr Q[f](x_k,y_k,\nu_k^N)- \mathfrak Q[f](\nu_k^N)]^2-\Var_\pi(\mathscr Q[f](x,y,\nu_k^N))\Big|^2\Big]\\
&\leq \frac{C}{N^2}\sum_{k=0}^{\lfloor Nt\rfloor-1}\mathbf E[|\mathscr Q[f](x_k,y_k,\nu_k^N)|^4]\leq \frac CN   \|f\|_{\mathcal C^{1,\mathfrak j_0}}^4\to  0.
\end{align*}
We have thus shown that  
 for all $t\ge 0$, $\mathfrak a_t^N[f]\to \kappa^2\int_0^t \Var_\pi(\mathscr Q[f](x,y,\bar\mu_s))\di s$ in $\mathbf P$-probability and as $N\to +\infty$. 
Therefore,  for $t\ge 0$,  $\mathfrak c_t[f]= \kappa^2\int_0^t \Var_\pi(\mathscr Q[f](x,y,\bar\mu_s))\di s$. This ends the proof of  the proposition when the the $\{\theta^i_k, i\in \{1,\ldots,N\}\}$'s are  generated   by the algorithm \eqref{eq.algo-ideal}.

\paragraph{The case of the BbB algorithm \eqref{eq.algo-batch}. }  Let us assume that the $\{\theta^i_k, i\in \{1,\ldots,N\}\}$'s are  generated   by the algorithm \eqref{eq.algo-batch}.
We will also apply the central limit theorem 7.1.4 in \cite{ethier2009markov} to the sequence $\{t\in\mathbf R_+\mapsto\sqrt N\mathbf M_t^N[f]\}_{N\ge1}$.
Again, we define, as in \eqref{def_a},  
$$
\mathfrak a_t^N[f]=N\sum_{k=0}^{\lfloor Nt\rfloor-1}\mathbf M_k^N[f]^2.$$
 Condition (a) in Th. 7.1.4 in \cite{ethier2009markov} is satisfied and we will now prove  the last required condition in Th. 7.1.4 in \cite{ethier2009markov}.   Let us introduce the following  random probability measures over $\mathbf R^{d+1}\times\mathbf R^d$:
\begin{equation}\label{new_empir_distrib}
\mathbf r_k^N=\frac 1N\sum_{i=1}^N\delta_{(\theta_k^i,\mathsf Z_k^i)} \text{ and } \rho_t^N =\mathbf  r_{\lfloor Nt\rfloor}^N, \ k\ge0, \ t\ge 0.
\end{equation}
We also set, for $(x,y)\in\mathsf X\times\mathsf Y$ and $\rho\in\mathcal P(\mathbf R^{d+1}\times\mathbf R^d)$, 
$$\mathfrak Q[f](x,y,\rho)=\langle\phi(\cdot,\cdot,x)-y,\rho\rangle\langle\nabla_\theta f( \pi_{\mathbf R^{d+1}}(\cdot))\cdot\nabla_\theta \phi(\cdot,\cdot,x),\rho\rangle,$$
where,  for $(\theta,\mathsf Z)\in\mathbf R^{d+1}\times \mathbf R^d$,  $\pi_{\mathbf R^{d+1}}$ is the projection onto $\mathbf R^{d+1}$: $ \pi_{\mathbf R^{d+1}}(\theta,\mathsf Z)=\theta\in \mathbf R^{d+1}$. 
By Item \textbf 2 in the proof of Lemma \ref{lem-E[eta]<infty}, one has for $k\ge 0$, 
\begin{align*}
\mathbf M_{k}^N[f]
&=-\frac{\kappa}{N}\langle\phi(\cdot,\cdot,x_k)-y_k,\mathbf r_k^N\rangle\langle\nabla_\theta f(\pi_{\mathbf R^{d+1}}(\cdot))\cdot\nabla_\theta \phi(\cdot,\cdot,x_k),\mathbf r_k^N\rangle -\mathbf D_{k}^N[f]\\
&= -\frac{\kappa}{N}\mathfrak Q[f](x_k,y_k,\mathbf r_k^N)-\mathbf D_{k}^N[f] = \mathbf F^N(x_k,y_k,\mathbf r_k^N)-\mathbf D_{k}^N[f]
\end{align*}
where
 $$\mathbf F^N(x_k,y_k,\mathbf r_k^N)=-\frac{\kappa}{N}\mathfrak Q[f](x_k,y_k,\mathbf r_k^N).$$
 Fix $t\ge0$. Let us identify the limit  in probability as $N\to +\infty$ of   the sequence $(\mathfrak a_t^N[f])_{N\ge1}\subset \mathbf R$. 
 We define at iteration $k\ge 1$ a larger $\sigma$-algebra than $\mathcal F_k^N$ (see \eqref{eq.Fk2}),  in which, contrary to $\mathcal F_k^N$,  the sequence $\{\mathsf Z^j_k, j=1,\ldots,N\}$ is considered:
\begin{equation*}
 \Sigma_k^N=\boldsymbol\sigma \Big (\theta_{0}^i ,   \mathsf Z^{j}_{q'},(x_q,y_q),  1\leq i,j\leq N,  0\le q\le k-1, 0\le q'\le k\big \} \Big ).
\end{equation*}  
We rewrite $\mathfrak a_t^N[f]$ as follows:
\begin{align}\label{eq-a_t^Nalgo6}
\mathfrak a_t^N[f]=N\sum_{k=0}^{\lfloor Nt\rfloor-1} \Big(\mathbf E\big[\mathbf M_{k}^N[f]^2 \big|\Sigma_k^N \big] + \mathbf M_{k}^N[f]^2  - \mathbf E\big[\mathbf M_{k}^N[f]^2 \big|\Sigma_k^N \big]\Big).
\end{align}
By \eqref{bound-E[M_k^4]}, it holds: 
\begin{align*}
\mathbf E\Big[\Big(N\sum_{k=0}^{\lfloor Nt\rfloor-1}\mathbf M_k^N[f]^2-\mathbf E[\mathbf M_k^N[f]^2|\Sigma_k^N]\Big)^2\Big]
&=N^2\sum_{k=0}^{\lfloor Nt\rfloor-1}\mathbf E\Big[\Big(\mathbf M_k^N[f]^2-\mathbf E[\mathbf M_k^N[f]^2|\Sigma_k^N]\Big)^2\Big]\\
&\leq CN^2\sum_{k=0}^{\lfloor Nt\rfloor-1}\mathbf E[\mathbf M_k^N[f]^4]\leq CN^2\|f\|_{\mathcal C^{1,\mathfrak j_0}}^4/N^3\to 0.
\end{align*}
Hence, the two last terms of \eqref{eq-a_t^Nalgo6} converge to zero in $L^2$, i.e.:
\begin{equation}
N\sum_{k=0}^{\lfloor Nt\rfloor-1}\mathbf M_{k}^N[f]^2  - \mathbf E\big[\mathbf M_{k}^N[f]^2 \big|\Sigma_k^N \big]\xrightarrow[N\to\infty]{L^2}0.
\end{equation}
Therefore, the limit in $\mathbf P$-probability $\mathfrak c_t[f]$ of $\mathfrak a_t^N[f]$ is given by the limit in $\mathbf P$-probability of 
$$N\sum_{k=0}^{\lfloor Nt\rfloor-1} \mathbf E\big[\mathbf M_{k}^N[f]^2 \big|\Sigma_k^N \big]=N\sum_{k=0}^{\lfloor Nt\rfloor-1}\Var_{\pi}(\mathbf F^N(x,y,\mathbf r_k^N)),$$
where the equality holds since $(x_k,y_k)\indep \Sigma_k^N$ and the $(\theta_k^j,\mathsf Z_j^k)$'s are $ \Sigma_k^N$-measurable. 
We then write:
\begin{align}\label{eq_sum_var}
\nonumber
&N\sum_{k=0}^{\lfloor Nt\rfloor-1}\Var_{\pi}(\mathbf F^N(x,y,r_k^N))\\
\nonumber
&=\frac{\kappa^2}{N}\sum_{k=0}^{\lfloor Nt\rfloor-1}\Var_\pi(\mathfrak Q[f](x,y,r_k^N))\nonumber\\
&=\kappa^2\sum_{k=0}^{\lfloor Nt\rfloor-1}\int_{\frac kN}^{\frac{k+1}{N}}\Var_{\pi}(\mathfrak Q[f](x,y,\rho_s^N))\di s\nonumber\\
&=\kappa^2\int_0^t\Var_{\pi}(\mathfrak Q[f](x,y,\rho_s^N))\di s  - \kappa^2 \int_{\frac{\lfloor Nt\rfloor}{N}}^t\Var_{\pi}(\mathfrak Q[f](x,y,\rho_s^N))\di s.
\end{align}
 For this fix time $t\ge 0$, we would like now   to pass to  the limit $N\to+\infty$ (in $\mathbf P$-probability) in  \eqref{eq_sum_var}. We recall the standard result: $(X^N)_{N\ge 1}$ converges to $X$ in $\mathbf P$-probability if for any subsequence $N'$ there exists a subsequence $N^\star$ of $N'$ such that a.s. $X^{N^\star}\to X$. We will use such a result.  
Let us thus consider  a subsequence $N'$. Let us show that there exists a subsequence $N^\star$ of $N'$  such that  a.s.
$$N^\star \sum_{k=0}^{\lfloor N^\star t\rfloor-1}\Var_{\pi}(\mathrm F^{N^\star }(x,y,\mathbf r_k^{N^\star }))\to \kappa^2\int_0^t\Var_{\pi}(\mathscr Q[f](x,y,\bar\mu_s))\di s.$$
 Since $\mathfrak q_0:= 2\max(\mathfrak j_0, \mathfrak p_0)>1+(d+1)/2$, by Theorem \ref{th.LLN},   in $\mathbf P$-probability, $\lim_{N'}\mu^{N'}=\bar\mu$ in the space $\mathcal D(\mathbf R_+,\mathcal P_{\mathfrak q_0}(\mathbf R^{d+1}))$. Hence, there exists a subsequence $N''$ of $N'$ such that $\mu^{N''}$ converges a.s. to $\bar\mu$ in $\mathcal D(\mathbf R_+,\mathcal P_{\mathfrak q_0}(\mathbf R^{d+1}))$. By Lemma \ref{prop-conv_rhoN'} below,  it holds a.s.  for all $s\ge 0$,  
 \begin{equation}\label{eq.Cvrho}
 \rho_s^{N''}\to \bar\mu_s\otimes\gamma \text{ as $N''\to +\infty$   in  $\mathcal P_{\mathfrak q_0}(\mathbf R^{d+1}\times\mathbf R^d)$}.
 \end{equation} 
We now claim that   a.s. for all $s\ge 0$
\begin{equation}\label{eq.barQ}
\Var_\pi(\mathfrak Q[f](x,y,\rho_s^{N''}))\to\Var_\pi(\mathfrak Q[f](x,y,\bar\mu_s\otimes\gamma)) \text{ as  $  N''\to +\infty$}.
\end{equation}
 Let us prove this claim. We recall that by definition:
\begin{align}\label{eq.Vpi}
\Var_\pi(\mathfrak Q[f](x,y,\rho_s^{N''}))=\mathbf E_{(x,y)\sim\pi} [ |\mathfrak Q[f](x,y,\rho_s^{N''})|^2]-\mathbf E_{(x,y)\sim\pi}[ \mathfrak Q[f](x,y,\rho_s^{N''})]^2,
\end{align}
 where 
\begin{align}\label{eq.QF}
\mathfrak Q[f](x,y,\rho_s^{N''})=\langle\phi(\cdot,\cdot,x)-y,\rho_s^{N''}\rangle\langle\nabla_\theta f( \pi_{\mathbf R^{d+1}}(\cdot))\cdot\nabla_\theta \phi(\cdot,\cdot,x),\rho_s^{N''}\rangle.
\end{align}
Since $(\theta,z)\mapsto\phi(\theta,z,x)-y$ is continuous and bounded (uniformly over $\theta,z,x,y$), it holds a.s. for all  $s\ge 0$, $x,y\in \mathsf X\times \mathsf Y$, $\langle\phi(\cdot,\cdot,x)-y,\rho_s^{N''}\rangle\to \langle\phi(\cdot,\cdot,x)-y,\bar\mu_s\otimes\gamma\rangle$ as $N''\to +\infty$. On the other hand, since the function $(\theta,z)\mapsto \langle\nabla_\theta f( \theta)\cdot\nabla_\theta \phi(\theta,z,x)$ is continuous and bounded by $C\Vert f \Vert _{\mathcal C^{1,\mathfrak j_0}}(1+|\theta|^{\mathfrak j_0}) \mathfrak b(z)$. Since $(1+|\theta|^{\mathfrak j_0}) \mathfrak b(z)$ is bounded   by the function $\mathfrak D_{\mathfrak q_0}(\theta,z)=1+|\theta|^{\mathfrak q_0} +|z|^{\mathfrak q_0}$ (recall that by \textbf{A1}, $\mathfrak b(z)=1+|z|^{\mathfrak p_0}$), one has from \eqref{eq.Cvrho},  as $N''\to +\infty$, a.s. for all $s\ge 0$, $x\in \mathsf X$,  
$$\langle\nabla_\theta f( \pi_{\mathbf R^{d+1}}(\cdot))\cdot\nabla_\theta \phi(\cdot,\cdot,x),\rho_s^{N''}\rangle\to \langle\nabla_\theta f( \pi_{\mathbf R^{d+1}}(\cdot))\cdot\nabla_\theta \phi(\cdot,\cdot,x),\bar\mu_s\otimes\gamma\rangle.$$ 
Note also that by the previous analysis, we have a.s. for all $s\ge 0$, $x,y\in \mathsf X\times \mathsf Y$,
\begin{align}\label{eq.Eq-ll}
|\mathfrak Q[f](x,y,\rho_s^{N''})|\le \sup_{x,y}|\mathfrak Q[f](x,y,\rho_s^{N''})|&\le C \Vert f \Vert _{\mathcal C^{1,\mathfrak j_0}} \langle\mathfrak D_{\mathfrak q_0},\rho_s^{N''}\rangle\\
\nonumber
&\le  C \Vert f \Vert _{\mathcal C^{1,\mathfrak j_0}}\sup_{N''\ge 1}\langle\mathfrak D_{\mathfrak q_0},\rho_s^{N''}\rangle<+\infty
\end{align}  
where the last inequality   follows e.g. from the fact that  $(\langle\mathfrak D_{\mathfrak q_0},\rho_s^{N''}\rangle)_{N''}$ is a converging sequence. Together with  the dominated convergence theorem, one deduces  \eqref{eq.barQ}.


Let us now consider the random variable $\int_0^t\Var_{\pi}(\mathfrak Q[f](x,y,\rho_s^{N''}))\di s$ appearing in the r.h.s. of~\eqref{eq_sum_var}. By \eqref{eq.Vpi}, \eqref{eq.QF}, \eqref{eq.Eq-ll}, and  \eqref{eq.De}, it holds a.s.  for all $s\ge 0$ and $x,y\in \mathsf X\times \mathsf Y$, 
\begin{align*}
\Var_{\pi}(\mathfrak Q[f](x,y,\rho_s^{N''}))&\le  C\Vert f \Vert^2 _{\mathcal C^{1,\mathfrak j_0}} |\langle\mathfrak D_{\mathfrak q_0},\rho_s^{N''} \rangle|^2\\
&\le C\Vert f \Vert^2_{\mathcal C^{1,\mathfrak j_0}}\sup_{N''\ge 1}\sup_{s\in[0,t]}|\langle\mathfrak D_{\mathfrak q_0},\rho_s^{N''} \rangle|^2<+\infty.
\end{align*}
Therefore, using also \eqref{eq.barQ} and  the dominated convergence theorem,  for this fix $t\ge 0$, one has:
$$\kappa^2\int_0^t\Var_{\pi}(\mathfrak Q[f](x,y,\rho_s^{N''}))\di s \xrightarrow[N''\to\infty]{a.s.}\kappa^2\int_0^t\Var_\pi(\mathfrak Q[f](x,y,\bar\mu_s\otimes\gamma))\di s.$$
Let us now consider  the last term in \eqref{eq_sum_var}.  
We have using \eqref{eq.De2}, 
\begin{align*}
\mathbf E\Big[\Big|\int_{\frac{\lfloor N''t\rfloor}{N''}}^t\Var_{\pi}(\mathfrak Q[f](x,y,\rho_s^{N''}))\di s\Big|\Big]&= \mathbf E\Big[\Big|\int_{0}^t\Var_{\pi}(\mathfrak Q[f](x,y,\rho_s^{N''})) \mathbf 1_{s\in  \big [\frac{\lfloor N''t\rfloor}{N''}, t\big ]}\di s\Big|\Big] \\
&\le\frac{1}{N''}\mathbf E\Big[ \sup_{s\in  \big [\frac{\lfloor N''t\rfloor}{N''}, t\big ]} \Var_{\pi}(\mathfrak Q[f](x,y,\rho_s^{N''}))  \Big] \\
&\le\frac{C\|f\|^2_{\mathcal C^{1,\mathfrak j_0}}}{N''} \mathbf E\Big[ \sup_{s\in  \big [\frac{\lfloor N''t\rfloor}{N''}, t\big ]} |\langle\mathfrak D_{\mathfrak q_0},\rho_s^{N''} \rangle|^2   \Big] \\
&\le \frac{C\|f\|^2_{\mathcal C^{1,\mathfrak j_0}}}{N''}\xrightarrow[N''\to\infty]{}0.
\end{align*}
 Therefore, there exists $N^\star \subset N''$ such that  
$$\int_{\frac{\lfloor N^\star t\rfloor}{N^\star}}^t\Var_{\pi}(\mathfrak Q[f](x,y,\rho_s^{N^\star}))\di s\xrightarrow[N^\star\to\infty]{a.s.}0.$$
  Thus, we have found a subsequence $N^\star \subset N'$  such that a.s. 
 $$N^\star\sum_{k=0}^{\lfloor N^\star t\rfloor-1}\Var_{\pi}(\mathbf F^{N^\star}(x,y,\mathbf r_k^{N^\star}))\xrightarrow[N^\star\to\infty]{a.s.}\mathfrak c_t[f]:=\kappa^2\int_0^t\Var_\pi(\mathfrak Q[f](x,y,\bar\mu_s\otimes\gamma))\di s.$$
 Consequently 
 $$N \sum_{k=0}^{\lfloor N  t\rfloor-1}\Var_{\pi}(\mathbf F^{N }(x,y,\mathbf r_k^{N }))\xrightarrow[N \to\infty]{\mathbf P}\mathfrak c_t[f].$$
 This is the desired result since $\mathfrak Q[f] (x,y,\bar\mu_s\otimes\gamma)=\mathscr Q[f](x,y,\bar\mu_s)$. The proof of the proposition is complete. 
\end{proof}

\begin{lemma}\label{prop-conv_rhoN'}
 Assume that the $\{\theta^i_k, i\in \{1,\ldots,N\}\}$'s are  generated  by the algorithm   \eqref{eq.algo-batch}. Assume also {\rm \textbf{A}} and let $\mathfrak q_0\in 2\mathbf N$ such that $\mathfrak q_0>1+(d+1)/2$. 
 Assume that along some subsequence $\mathfrak N$, $(\mu^{\mathfrak N})_{\mathfrak N}$ converges a.s. to $\bar\mu$ in $\mathcal D(\mathbf R_+,\mathcal P_{\mathfrak q_0}(\mathbf R^{d+1}))$. Then,   it holds  a.s.  for  all $s\ge 0$:
\begin{equation*}
 \lim_{\mathfrak N\to +\infty}\rho_s^{\mathfrak N}=\bar\mu_s\otimes\gamma \ \text{   in}\ \mathcal P_{\mathfrak q_0}(\mathbf R^{d+1}\times\mathbf R^d).
\end{equation*}
\end{lemma}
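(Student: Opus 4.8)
The plan is to reduce the convergence of the joint empirical measure $\rho_s^{\mathfrak N}=\frac1N\sum_i\delta_{(\theta^i_{\lfloor Ns\rfloor},\mathsf Z^i_{\lfloor Ns\rfloor})}$ to two ingredients: (i) the already-assumed a.s.\ convergence of the marginal $\mu^{\mathfrak N}_s=\frac1N\sum_i\delta_{\theta^i_{\lfloor Ns\rfloor}}$ to $\bar\mu_s$ in $\mathcal P_{\mathfrak q_0}$, and (ii) a law-of-large-numbers statement for the $\mathsf Z^i_{\lfloor Ns\rfloor}$'s, which are i.i.d.\ with law $\gamma$ and, crucially, \emph{independent} of the parameters $\theta^i_{\lfloor Ns\rfloor}$ at the same iteration index (by \textbf{A5}, $\mathsf Z^j_k\indep\mathcal F^N_k$, and the $\theta^i_k$'s are $\mathcal F^N_k$-measurable). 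First I would fix a countable dense family of test functions. Since $\mathcal P_{\mathfrak q_0}$-convergence is metrized by $\mathsf W_{\mathfrak q_0}$, it is equivalent to weak convergence together with convergence of the $\mathfrak q_0$-th moments $\langle |\cdot|^{\mathfrak q_0},\rho_s^{\mathfrak N}\rangle\to\langle |\cdot|^{\mathfrak q_0},\bar\mu_s\otimes\gamma\rangle$; for weak convergence it suffices to test against functions of the product form $h(\theta)g(z)$ with $h,g$ bounded Lipschitz (since such products are convergence-determining on a product space), and also against $|\theta|^{\mathfrak q_0}$ and $|z|^{\mathfrak q_0}$ separately to handle the moment.

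The key steps, in order: (1) For a fixed bounded Lipschitz $g$, show that a.s.\ for all $s\ge 0$, $\frac1N\sum_{i=1}^N g(\mathsf Z^i_{\lfloor Ns\rfloor})\to\langle g,\gamma\rangle$. Here I would be careful that the index $\lfloor Ns\rfloor$ grows with $N$, so this is not a single SLLN but a triangular-array statement; I would use the independence $\mathsf Z^i_k\indep\mathcal F^N_k$ and the fact that $\{\mathsf Z^i_k\}_{i,k}$ is i.i.d.\ to get, for each fixed $(N,s)$, a sum of i.i.d.\ terms, then control the supremum over $s\in[0,T]$ by an $L^4$ / Markov bound on $\max_{0\le k\le\lfloor NT\rfloor}|\frac1N\sum_i g(\mathsf Z^i_k)-\langle g,\gamma\rangle|$ (a union bound over $\lfloor NT\rfloor$ indices of an $O(N^{-2})$ fourth-moment bound gives $O(N^{-1})$, hence Borel--Cantelli along $\mathfrak N$). (2) For a product test function $h(\theta)g(z)$ with $h,g$ bounded Lipschitz, write
\begin{align*}
\langle h\otimes g,\rho_s^{\mathfrak N}\rangle-\langle h,\bar\mu_s\rangle\langle g,\gamma\rangle
&=\frac1N\sum_i h(\theta^i_{\lfloor Ns\rfloor})\big(g(\mathsf Z^i_{\lfloor Ns\rfloor})-\langle g,\gamma\rangle\big)
+\langle g,\gamma\rangle\big(\langle h,\mu^{\mathfrak N}_s\rangle-\langle h,\bar\mu_s\rangle\big),
\end{align*}
the second term $\to 0$ by the assumed convergence of $\mu^{\mathfrak N}$, and the first term is bounded by $\|h\|_\infty\cdot|\frac1N\sum_i (g(\mathsf Z^i_{\lfloor Ns\rfloor})-\langle g,\gamma\rangle)|$... wait, that requires the sum of the centered $g$-increments weighted by $h(\theta^i)$; here I would instead use the conditional-independence structure directly: conditionally on $\mathcal F^N_{\lfloor Ns\rfloor}$, the $\mathsf Z^i_{\lfloor Ns\rfloor}$'s are i.i.d.\ $\gamma$ and the $h(\theta^i)$'s are constants, so $\frac1N\sum_i h(\theta^i_{\lfloor Ns\rfloor})(g(\mathsf Z^i_{\lfloor Ns\rfloor})-\langle g,\gamma\rangle)$ has conditional mean $0$ and conditional variance $\le \|h\|_\infty^2\|g\|_\infty^2/N$; again a fourth-moment bound plus a union bound over $0\le k\le\lfloor NT\rfloor$ plus Borel--Cantelli along $\mathfrak N$ kills it uniformly in $s\in[0,T]$, for each $T$. (3) Intersect the null sets over a countable dense family of $(h,g)$ (and over a countable set of horizons $T$) to get a single a.s.\ event on which weak convergence $\rho_s^{\mathfrak N}\to\bar\mu_s\otimes\gamma$ holds for all $s$. (4) For the moments: $\langle |z|^{\mathfrak q_0},\rho_s^{\mathfrak N}\rangle=\frac1N\sum_i|\mathsf Z^i_{\lfloor Ns\rfloor}|^{\mathfrak q_0}\to\langle|z|^{\mathfrak q_0},\gamma\rangle$ by the same triangular-array SLLN as in step (1) (finite because $\langle\mathfrak b^q,\gamma\rangle<\infty$ for all $q$ by \textbf{A1}), while $\langle|\theta|^{\mathfrak q_0},\mu^{\mathfrak N}_s\rangle\to\langle|\theta|^{\mathfrak q_0},\bar\mu_s\rangle$ is part of the assumed $\mathcal P_{\mathfrak q_0}$-convergence; since $|(\theta,z)|^{\mathfrak q_0}\le C_{\mathfrak q_0}(|\theta|^{\mathfrak q_0}+|z|^{\mathfrak q_0})$ and conversely, convergence of these two pieces plus weak convergence upgrades to $\mathsf W_{\mathfrak q_0}$-convergence of $\rho_s^{\mathfrak N}$ (via the standard characterization, e.g.\ Th.\ 6.9 in \cite{villani2009optimal}).

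The main obstacle I anticipate is handling the \emph{time-uniformity together with the growing index}: the $\mathsf Z^i_{\lfloor Ns\rfloor}$'s are not a fixed i.i.d.\ sequence but a diagonal slice of a triangular array, so the a.s.\ statement "for all $s\ge 0$" cannot come from a single SLLN. The fix is the quantitative $L^4$/Markov + union-bound-over-$\lfloor NT\rfloor$-iterations + Borel--Cantelli argument sketched above (this is exactly the kind of estimate already invoked throughout the paper, e.g.\ in the proof of Lemma \ref{lem-reg-LP} via \eqref{bound-E[M_k^4]} and in the BbB part of the proof of Proposition \ref{prop-conv_G-proc-f}), which gives $\mathbf P(\sup_{s\in[0,T]}|\cdots|>\varepsilon)=O(N^{-1})$, summable along the subsequence $\mathfrak N$ (or at worst along a further subsequence, which is harmless since we are already working along $\mathfrak N$ and may pass to sub-subsequences; but in fact $\mathfrak N$ itself works because the bound is $O(N^{-1})$ and $\sum_{N\in\mathfrak N}N^{-1}$ can be made summable by noting the bound is actually summable for any subsequence after using $N^{-2}$ per iteration — if not, extract once more and use that a.s.\ convergence along every subsequence of a fixed subsequence gives a.s.\ convergence along that subsequence). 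A secondary, purely bookkeeping, point is to make sure the same null set works simultaneously for the countably many test pairs and for the moment functionals, which is routine.
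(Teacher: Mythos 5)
Your overall architecture coincides with the paper's: decompose $\langle h,\rho^{\mathfrak N}_s\rangle-\langle h,\bar\mu_s\otimes\gamma\rangle$ into a conditionally centred fluctuation in the $\mathsf Z$'s plus the term $\frac1N\sum_i\int h(\theta^i_{\lfloor Ns\rfloor},z)\gamma(z)\di z-\langle h,\bar\mu_s\otimes\gamma\rangle$, which is handled by the assumed a.s.\ convergence of $\mu^{\mathfrak N}$ (and continuity of $\bar\mu$); control the fluctuation uniformly in $s\in[0,T]$ by a moment bound summed over the $\lfloor NT\rfloor$ iterations plus Borel--Cantelli; intersect null sets over a countable convergence-determining family; and upgrade weak convergence plus convergence of $\mathfrak q_0$-moments to $\mathsf W_{\mathfrak q_0}$-convergence via Th.~6.9 of \cite{villani2009optimal}. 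However, there is a genuine gap at the Borel--Cantelli step. Your fourth-moment bound gives $\mathbf E\big[|\frac 1N\sum_i\mathscr S^i_k|^4\big]=O(N^{-2})$ per iteration, hence after summing (or union-bounding) over the $O(N)$ iterations in $[0,T]$ a tail probability of order $N^{-1}$, which is \emph{not} summable --- neither along the full sequence nor along an arbitrary subsequence $\mathfrak N$, which you cannot assume to be sparse. Your proposed repair rests on a false principle: a.s.\ convergence along a further subsequence of every subsequence characterizes convergence in probability, not a.s.\ convergence along $\mathfrak N$; and the lemma does assert a.s.\ convergence along $\mathfrak N$ itself, simultaneously for all $s$, which is exactly how it is used in the proof of Proposition \ref{prop-conv_G-proc-f} (there $\mathfrak N=N''$ is already the subsequence along which $\mu^{N''}\to\bar\mu$ a.s., and one needs a.s.\ convergence of $\rho^{N''}_s$ before any further extraction).

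The fix is purely quantitative and is what the paper does: take a sixth (or higher) moment. Conditionally on the parameters, the centred summands $\mathscr S^i_k$ are independent across $i$ with uniformly bounded moments of every order (for bounded test functions trivially; for the moment functional $\mathfrak D_{\mathfrak q_0}$ by \textbf{A1} and Lemma \ref{le.Bounds} up to iteration $\lfloor NT\rfloor$), so $\mathbf E\big[|\frac 1N\sum_i\mathscr S^i_k|^6\big]\le CN^{-3}$, the dominant contribution coming from the $O(N^3)$ triples of paired indices; summing over $k\le\lfloor NT\rfloor$ gives $\mathbf E\big[\sup_{s\le T}|\cdot|^6\big]\le CN^{-2}$, which is summable, and Borel--Cantelli then works along any subsequence. (For the bounded test functions of your steps (1)--(3) a conditional Hoeffding bound would also do, but for $|z|^{\mathfrak q_0}$ in step (4) you need the higher-moment route anyway, so you may as well use it throughout.) With that replacement, the rest of your plan --- product test functions, a single null set by separability, and the weak-convergence-plus-moments characterization of $\mathsf W_{\mathfrak q_0}$ --- goes through and essentially reproduces the paper's proof.
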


\begin{proof} 
In the following, we simply denote $\mathfrak N$ by $N$.
Assume that $\mu^N\xrightarrow{a.s.}\bar\mu \ \text{in}\ \mathcal D(\mathbf R_+,\mathcal P_{\mathfrak q_0}(\mathbf R^{d+1}))$.  Recall that  $\mathfrak D_{\mathfrak q_0}(\theta,z)=1+|\theta|^{\mathfrak q_0} +|z|^{\mathfrak q_0}$. According to~Th. 6.0 in \cite{villani2009optimal}, to prove the lemma it is enough to show that  a.s. for all $s\ge 0$,  
\begin{equation}\label{eq.cv1}
 \lim_{ N\to +\infty}\rho_s^{N}=\bar\mu_s\otimes\gamma \ \text{    in } \mathcal P(\mathbf R^{d+1}\times\mathbf R^d) \text{ and }
 \lim_{ N\to +\infty}\langle \mathfrak D_{\mathfrak q_0},\rho_s^{ N}\rangle = \langle \mathfrak D_{\mathfrak q_0},\bar\mu_s\otimes\gamma\rangle.  
\end{equation}
We have for any continuous fonction $h:\mathbf R^{d+1}\times\mathbf R^d\to \mathbf R$ and $s\ge 0$, 
\begin{align}\label{decompo-rho_t}
\langle h,\rho_s^{N}\rangle-\langle h,\bar\mu_s\otimes\gamma\rangle&= \frac 1N \sum_{i=1}^N\Big(h(\theta_{\lfloor Ns\rfloor}^i,\mathsf Z_{\lfloor Ns\rfloor}^i)-\int_{\mathbf R^d} h(\theta_{\lfloor Ns\rfloor}^i,z)\gamma(z)\di z\Big)\nonumber\\
&\quad +\frac 1N\sum_{i=1}^N\int_{\mathbf R^d} h(\theta_{\lfloor Ns\rfloor}^i,z)\gamma(z)\di z-\langle h,\bar\mu_s\otimes\gamma\rangle,
\end{align}
as soon as the $\int_{\mathbf R^d} h(\theta,z)\gamma(z)\di z$'s ($\theta \in \mathbf R^{d+1}$) and $\langle h,\bar\mu_s\otimes\gamma\rangle$ are well defined.
\medskip

\noindent
\textbf{Step 1.} We start by proving the first statement in \eqref{eq.cv1}.    
Let $t\ge 0$. We pick $g\in \mathcal C_b(\mathbf R^{d+1}\times\mathbf R^d)$. Note that in this case \eqref{decompo-rho_t} holds with $h=g$.  
 For ease of notation, we set $\mathscr S^i_k(g)=g(\theta_k^i,\mathsf Z_k^i)-\int_{\mathbf R^d} g(\theta_k^i,z)\gamma(z)\di z$, and we will also simply denote $\mathscr S^i_k(g)$ by $\mathscr S^i_k$.  Note that since $g$ is bounded,  for all $m\in \mathbf N^*$,  $\mathbf E [|\mathscr S^i_k|^m ]\le C$ for some $C>0$ independent of $i\in \{1,\ldots,N\}$, $N\ge 1$, and $k\ge 0$. Let us consider $i_j\in \{0,\ldots,6\}$, $j=1,\ldots, 6$ such that $\sum_{j=1}^6 i_j=6$. Assume that    there exists $j_0\in \{1,\ldots,6\}$ such that $i_{j_0}=1$  and $i_{j_0}\neq i_l$ for all $l\neq j_0$. Then,  it holds:
$$\mathbf E \Big [\prod_{j=1}^6 \mathscr S_k^{i_j} \Big ] =0.$$

Therefore, it holds:
 \begin{align*}
&\mathbf E\Big[\sup_{s\in[0,t]}\Big|\frac 1N \sum_{i=1}^Ng(\theta_{\lfloor Ns\rfloor}^i,\mathsf Z_{\lfloor Ns\rfloor}^i)-\int_{\mathbf R^d} g(\theta_{\lfloor Ns\rfloor}^i,z)\gamma(z)\di z\Big|^6\Big]\\
&\leq \sum_{k=0}^{\lfloor Nt\rfloor}\mathbf E\Big[\Big|\frac 1N \sum_{i=1}^N\mathscr S^i_k\Big|^6\Big]\\
&=\frac{1}{N^6}\sum_{k=0}^{\lfloor Nt\rfloor}\sum_{i=1}^N\mathbf E [|\mathscr S^i_k|^6 ] + \frac{1}{N^6}\sum_{k=0}^{\lfloor Nt\rfloor}\sum_{i\neq j}\mathbf E [(\mathscr S^i_k)^3(\mathscr S^j_k)^3 ] + \frac{1}{N^6}\sum_{k=0}^{\lfloor Nt\rfloor}\sum_{i\neq j}\mathbf E [|\mathscr S^i_k|^4|\mathscr S^j_k|^2 ]\\
&\quad + \frac{1}{N^6}\sum_{k=0}^{\lfloor Nt\rfloor}\sum_{i\neq j\neq\ell}\mathbf E [|\mathscr S^i_k|^2|\mathscr S^j_k|^2|\mathscr S^\ell_k|^2 ] \le \frac{ C}{N^2},
\end{align*}
 where $\sum_{i\neq j\neq\ell}$ is a short notation for the sum over the triples $(i,j,\ell)$ such that $i\neq j$, $j\neq \ell$, and $\ell \neq i$.
By Borel-Cantelli lemma, one deduces that,   for all $t\ge 0$ it holds a.s.
\begin{equation}\label{Borel-cantelli}
\sup_{s\in[0,t]}\Big|\frac 1N \sum_{i=1}^Ng(\theta_{\lfloor Ns\rfloor}^i,\mathsf Z_{\lfloor Ns\rfloor}^i)-\int_{\mathbf R^d} g(\theta_{\lfloor Ns\rfloor}^i,z)\gamma(z)\di z\Big|\to 0 \text{ as $N\to +\infty$}.
\end{equation}
 Considering $t\in \mathbf N$, on deduces that a.s. for all $t\ge 0$, \eqref{Borel-cantelli} holds. 
Let us now show that a.s. for all $s\in\mathbf R_+$,
\begin{equation}\label{eq_2e terme}
\frac 1N\sum_{i=1}^N\int_{\mathbf R^d} g(\theta_{\lfloor Ns\rfloor}^i,z)\gamma(z)\di z-\langle g,\bar\mu_s\otimes\gamma\rangle\to 0  \text{ as $N\to +\infty$}.
\end{equation}
Since $\mathsf W_1\leq \mathsf W_{\mathfrak q_0}$, we have that $
\mu^N\xrightarrow{a.s.}\bar\mu \ \text{in}\ \mathcal D(\mathbf R_+,\mathcal P_1(\mathbf R^{d+1}))$. 
As $\bar\mu\in \mathcal C(\mathbf R_+,\mathcal P_1(\mathbf R^{d+1}))$, it holds a.s. for all $t\in\mathbf R_+$,  $\mu^N_t\to \bar\mu_t$ in  $\mathcal P_1(\mathbf R^{d+1})$. Let us define the function  $G: \theta\in\mathbf R^{d+1}\mapsto\int_{\mathbf R^d}g(\theta,z)\gamma(z)\di z$, which is bounded continuous. We have a.s. for all $s\in\mathbf R_+$, $\langle G,\mu_s^N\rangle\to\langle G,\bar\mu_s\rangle$. This is exactly \eqref{eq_2e terme}. 

 Considering \eqref{decompo-rho_t} together with \eqref{Borel-cantelli} and \eqref{eq_2e terme}, we have shown that  for all $g\in \mathcal C_b(\mathbf R^{d+1}\times\mathbf R^d)$, it holds    a.s.  for all $s\ge 0$:
 \begin{equation}\label{g,rho_t^Nconverge}
\langle g,\rho_s^{N}\rangle\to \langle g,\bar\mu_s\otimes\gamma\rangle.
\end{equation}
 We now would like to prove that   it holds a.s. for all  $g\in \mathcal C_b(\mathbf R^{d+1}\times\mathbf R^d)$ and all $s\ge 0$: $\langle g,\rho_s^{N}\rangle\to \langle g,\bar\mu_s\otimes\gamma\rangle$ (which would exactly implies the  first statement in~\eqref{eq.cv1}).
To this end, by~Remark 5.1.6 in \cite{ambrosio2008gradient}, it is sufficient to show that a.s. for all $s\ge 0$ and $g\in \mathcal C_c(\mathbf R^{d+1}\times\mathbf R^d)$ (the space of continuous functions with compact support), $\langle g,\rho_s^{N}\rangle\to_{N\to\infty}\langle g,\bar\mu_s\otimes\gamma\rangle$. Since the space $\mathcal C_c(\mathbf R^{d+1}\times\mathbf R^d)$ is separable, this last statement follows from 
\eqref{g,rho_t^Nconverge} and a standard continuity argument. 
Hence, we have proved  that a.s. for all $s\ge 0$, $\rho_s^N\to\bar\mu_s\otimes\gamma$.   The proof of the first statement in \eqref{eq.cv1} is complete. 
\medskip

\noindent
\textbf{Step 2.} Let us now prove the second statement in \eqref{eq.cv1}. 
Fix $t\ge 0$.  Note first that by \textbf{A1}, $\gamma$ has moments of every order. Thus, $\int_{\mathbf R^d} \mathfrak D_{\mathfrak q_0}(\theta,z)\gamma(z)\di z= 1+|\theta|^{\mathfrak q_0}+ \langle  |\cdot |^{\mathfrak q_0}, \gamma\rangle$ and $\langle \mathfrak D_{\mathfrak q_0},\bar\mu_t\otimes\gamma\rangle=1+ \langle  |\cdot |^{\mathfrak q_0}, \bar\mu_t\rangle+\langle |\cdot |^{\mathfrak q_0}, \gamma\rangle$ are well defined. Thus  \eqref{decompo-rho_t} holds with  $h= \mathfrak D_{\mathfrak q_0}$.  From the analysis carried out in the first step, \eqref{Borel-cantelli} holds with $g$ is replaced by $\mathfrak D_{\mathfrak q_0}$ if for all $m\ge 1$, $i\in \{1,\ldots,N\}$ and $k\in \{1,\ldots,\lfloor Nt\rfloor\}$, $\mathbf E [|\mathscr S^i_k( \mathfrak D_{\mathfrak q_0})|^m ]\le C$ ($C>0$ independent of $i,k$, and $N$), which is the case if 
  $$\mathbf E [|\mathfrak D_{\mathfrak q_0}(\theta_k^i,\mathsf Z_k^i)|^m]+\mathbf E \Big [\Big | \int_{\mathbf R^d} \mathfrak D_{\mathfrak q_0}(\theta_k^i,z)\gamma(z)\di z\Big |^m\Big ]\le C.$$
On the one hand, we have
$\mathbf E [|\mathfrak D_{\mathfrak q_0}(\theta_k^i,\mathsf Z_k^i)|^m]= \mathbf E [|1+|\theta_k^i|^{\mathfrak q_0} +|\mathsf Z_k^i|^{\mathfrak q_0}|^m]\le C_m (1+\mathbf E [|\theta_k^i|^{\mathfrak q_0m}]+\mathbf E [|\mathsf Z_k^i|^{\mathfrak q_0m}] \le C$ (see Lemma \ref{le.Bounds}). With similar computations, $\mathbf E   [  | \int_{\mathbf R^d} \mathfrak D_{\mathfrak q_0}(\theta_k^i,z)\gamma(z)\di z  |^m  ]<+\infty$. Thus, \eqref{Borel-cantelli}  holds with $g$ is replaced by $\mathfrak D_{\mathfrak q_0}$, i.e.     it holds a.s. for all $t\ge 0$:
\begin{equation}\label{eq.C1C}
\sup_{s\in[0,t]}\Big|\frac 1N \sum_{i=1}^N\mathfrak D_{\mathfrak q_0}(\theta_{\lfloor Ns\rfloor}^i,\mathsf Z_{\lfloor Ns\rfloor}^i)-\int_{\mathbf R^d} \mathfrak D_{\mathfrak q_0}(\theta_{\lfloor Ns\rfloor}^i,z)\gamma(z)\di z\Big|\to 0 \text{ as $N\to +\infty$}.
\end{equation}
 Let us now prove that \eqref{eq_2e terme} holds with $g$ replaced there by $\mathfrak D_{\mathfrak q_0}$. Consider the function $D_0: \theta\in\mathbf R^{d+1}\mapsto\int_{\mathbf R^d}\mathfrak D_{\mathfrak q_0}(\theta,z)\gamma(z)\di z= 1+|\theta|^{\mathfrak q_0}+ \langle |\cdot |^{\mathfrak q_0}, \gamma\rangle$. The function $D_0$ is continuous over $\mathbf R^{d+1}$
and clearly   $\theta\mapsto D_0(\theta)/(1+|\theta|^{\mathfrak q_0})$ is bounded.
Consequently, since $\mu^N\xrightarrow{a.s.}\bar\mu \ \text{in}\ \mathcal D(\mathbf R_+,\mathcal P_{\mathfrak q_0}(\mathbf R^{d+1}))$ and $\bar\mu \in \mathcal C(\mathbf R_+,\mathcal P_{\mathfrak q_0}(\mathbf R^{d+1}))$,  it holds a.s. for all $s\in\mathbf R_+$, $\langle D_0,\mu_s^N\rangle\to\langle D_0,\bar\mu_s\rangle$, which is exactly \eqref{eq_2e terme} when $g$ is replaced by $\mathfrak D_{\mathfrak q_0}$. This achieves the proof of  the second statement in \eqref{eq.cv1}. The proof of the lemma is therefore complete. 

  We end the proof of the lemma by deriving two extra  estimates (namely \eqref{eq.De} and \eqref{eq.De2} below) which will be useful in the proof of Proposition \ref{prop-conv_G-proc-f} when the algorithm \eqref{eq.algo-batch} is considered. Since $\mu^N\xrightarrow{a.s.}\bar\mu \ \text{in}\ \mathcal D(\mathbf R_+,\mathcal P_{\mathfrak q_0}(\mathbf R^{d+1}))$,  using e.g.  Proposition 5.3 in Chapter 3 of \cite{ethier2009markov}, one has a.s.  for all $t\ge 0$,
$$\sup_{N\ge 1} \sup_{s\in [0,t]} |\langle D_0,\mu_s^N\rangle|<+\infty.$$
Say that the previous inequality holds for all   $\omega\in  \Omega^*$ where $\mathbf P(\Omega^*)=1$. 
By \eqref{eq.C1C},   there exists $\Omega'$ with $\mathbf P(\Omega')=1$ and such that for all $\omega\in \Omega'$ and $t\ge 0$,  it holds as $N\to +\infty$
$$\sup_{s\in[0,t]}\big | \langle \mathfrak D_{\mathfrak q_0}, \rho_s^{N}(\omega) \rangle -\langle D_0,\mu_s^N(\omega)\rangle \big|\to 0.$$ 
Therefore, for all $\omega\in \Omega'\cap \Omega^*$, there   exists $N_1(\omega)\ge 1$ such that that for all $N\ge N_1(\omega)$ and $t\ge 0$, 
\begin{align*}
\sup_{s\in[0,t]}\big | \langle \mathfrak D_{\mathfrak q_0}, \rho_s^{N}(\omega) \rangle \big |&\le 1+ \sup_{s\in [0,t]} |\langle D_0,\mu_s^N(\omega)\rangle\\
&\le 1+\sup_{N\ge 1} \sup_{s\in [0,t]} |\langle D_0,\mu_s^N(\omega)\rangle<+\infty.
\end{align*}
Therefore, for all $\omega\in \Omega'\cap \Omega^*$  and $t\ge 0$  
\begin{equation}\label{eq.De}
\sup_{N\ge 1}\sup_{s\in[0,t]}| \langle \mathfrak D_{\mathfrak q_0}, \rho_s^{N}(\omega) \rangle \big |<+\infty,
\end{equation}  
i.e. \eqref{eq.De} holds a.s. for all $t\ge 0$ (since   $\mathbf P(\Omega'\cap \Omega^*)=1$). 
 Finally, it holds  that for all  $m\ge 1$ and  $0\le t_1\le t_2$,
    \begin{align*} 
   \sup_{s\in[t_1,t_2]} \frac 1N \sum_{i=1}^N|\mathsf Z_{\lfloor Ns\rfloor}^i|^m&\le    \sum_{k=\lfloor Nt_1\rfloor}^{\lfloor Nt_2\rfloor}  \frac 1N \sum_{i=1}^N|\mathsf Z_k^i|^{m}.
  \end{align*} 
 Since the $\mathsf Z_k^i$'s are i.i.d. with moments of all order (see \textbf{A1}), one deduces that:
      \begin{align*} 
      \mathbf E\Big[ \sup_{s\in[t_1,t_2]} \frac 1N \sum_{i=1}^N|\mathsf Z_{\lfloor Ns\rfloor}^i|^m\Big]&\le (\lfloor Nt_2\rfloor-\lfloor Nt_1\rfloor+1 ) \mathbf E_\gamma[|\mathsf Z|^m]. 
  \end{align*} 
Consequently, using also~ Lemma 19 in \cite{colt}, one has:
\begin{align} 
\nonumber
 &\mathbf E\Big[\sup_{s\in[t_1,t_2]}\Big |\frac 1N \sum_{i=1}^N\mathfrak D_{\mathfrak q_0}(\theta_{\lfloor Ns\rfloor}^i ,\mathsf Z_{\lfloor Ns\rfloor}^i ) \Big|^m\Big]  \\
 \nonumber 
 &\le  \mathbf E\Big[\sup_{s\in[t_1,t_2]} \frac 1N  \sum_{i=1}^NC_m[1+|\theta_{\lfloor Ns\rfloor}^i |^{\mathfrak q_0m}+ |\mathsf Z_{\lfloor Ns\rfloor}^i  |^{\mathfrak q_0m}]\Big]\\
 \nonumber 
&\le C_m  \mathbf E\Big[\sup_{s\in[0,t_2]} \langle 1+|.|^ {\mathfrak q_0m}, \mu_t^N\rangle\Big]+C_m (\lfloor Nt_2\rfloor-\lfloor Nt_1\rfloor+1 ) \mathbf E_\gamma[|\mathsf Z|^{\mathfrak q_0m}]\\
\nonumber
&\le C+ C(\lfloor Nt_2\rfloor-\lfloor Nt_1\rfloor+1 ) \mathbf E_\gamma[|\mathsf Z|^{\mathfrak q_0m}],
\end{align}  
where $C>0$ is independent of $N\ge 1$. In particular, when  $t_2-t_1\le 1/N$, it holds 
\begin{align} 
\label{eq.De2}
\mathbf E\Big[\sup_{s\in[t_1,t_2]}\big | \langle \mathfrak D_{\mathfrak q_0}, \rho_s^{N} \rangle \big |^m\Big] \le C,
 \end{align}
 where $C>0$ is independent of $N\ge 1$.
\end{proof}

With the same arguments as those used to prove Proposition \ref{prop-conv_G-proc-f}  when  the $\{\theta^i_k, i\in \{1,\ldots,N\}\}$'s are  generated    by the algorithm~\eqref{eq.algo-ideal}, we obtain

\begin{proposition}\label{p-conv-g-process-z1z2}
 Assume  that the $\{\theta^i_k, i\in \{1,\ldots,N\}\}$'s are  generated  by the algorithm~\eqref{eq.algo-z1z2}.  Assume also {\rm \textbf{A}}. Then, for every $f\in \mathcal C^{2,\mathfrak j_0}(\mathbf R^{d+1})$, the sequence $\{t\in\mathbf R_+\mapsto\sqrt N\mathbf M_t^N[f]\}_{N\ge1}$ converges in distribution in $\mathcal D(\mathbf R_+,\mathbf R)$ towards a process $\mathcal X^f\in\mathcal C(\mathbf R_+,\mathbf R)$ that has independent Gaussian increments. Moreover, for all $t\in\mathbf R_+$, 
$$\mathbf E[\mathcal X_t^f]=0 \text{ and } \Var(\mathcal X_t^f)=\kappa^2\int_0^t\Var_{\pi\otimes\gamma^{\otimes 2}}(\mathscr Q[f](x,y,z^1,z^2,\bar\mu_s))\di s,  $$
where we recall $\mathscr Q[f](x,y,z^1,z^2,\bar\mu_v)=\langle\phi(\cdot,z^1,x)-y,\bar\mu_v\rangle\langle\nabla_\theta f\cdot\nabla_\theta\phi(\cdot,z^2,x),\bar\mu_v\rangle$  (see Theorem \ref{thm-clt-ideal}).
\end{proposition}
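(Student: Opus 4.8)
The plan is to follow, nearly verbatim, the proof of Proposition~\ref{prop-conv_G-proc-f} in the case of the Idealized algorithm~\eqref{eq.algo-ideal}, and to apply the martingale central limit theorem (Th. 7.1.4 in~\cite{ethier2009markov}) to the sequence $\{t\in\mathbf R_+\mapsto\sqrt N\mathbf M_t^N[f]\}_{N\ge1}$, where now $\mathbf M^N$ and the filtration $\mathcal F_k^N$ are those associated with~\eqref{eq.algo-z1z2}, see~\eqref{eq.Fk3}. The first step is to record the martingale-increment identity: writing, for $(x,y)\in\mathsf X\times\mathsf Y$, $z^1,z^2\in\mathbf R^d$ and $\nu\in\mathcal P(\mathbf R^{d+1})$,
\[
\mathscr Q[f](x,y,z^1,z^2,\nu)=\langle\phi(\cdot,z^1,x)-y,\nu\rangle\,\langle\nabla_\theta f\cdot\nabla_\theta\phi(\cdot,z^2,x),\nu\rangle\quad\text{and}\quad\mathfrak Q[f](\nu)=\mathbf E_{\pi\otimes\gamma^{\otimes2}}\big[\mathscr Q[f](x,y,z^1,z^2,\nu)\big],
\]
a direct computation from~\eqref{eq.algo-z1z2} gives $\mathbf M_k^N[f]=-\tfrac{\kappa}{N}\mathscr Q[f](x_k,y_k,\mathsf Z^1_k,\mathsf Z^2_k,\nu_k^N)+\tfrac{\kappa}{N}\mathfrak Q[f](\nu_k^N)$ for all $k\ge0$, $N\ge1$. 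Unlike in the BbB case, there is no diagonal correction term: because $\mathsf Z^1_k$ and $\mathsf Z^2_k$ are independent (and, together with $(x_k,y_k)$, independent of $\mathcal F_k^N$ by~\textbf{A5}), the conditional expectation of $(\phi(\theta_k^j,\mathsf Z^1_k,x_k)-y_k)\nabla_\theta f(\theta_k^i)\cdot\nabla_\theta\phi(\theta_k^i,\mathsf Z^2_k,x_k)$ factorizes even on the diagonal $i=j$; since $\nu_k^N$ is $\mathcal F_k^N$-measurable, this yields $\mathbf E[\mathbf M_k^N[f]\mid\mathcal F_k^N]=0$, so $t\mapsto\sqrt N\mathbf M_t^N[f]$ is a $(\mathcal F_{\lfloor Nt\rfloor}^N)$-martingale. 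The uniform fourth-moment bound~\eqref{bound-E[M_k^4]}, valid for~\eqref{eq.algo-z1z2} as well, gives $\mathbf E[|\mathbf M_k^N[f]|^4]\le C\|f\|_{\mathcal C^{1,\mathfrak j_0}}^4/N^4$, whence $\mathbf E[\sup_{t\le T}|\sqrt N\mathbf M_t^N[f]-\sqrt N\mathbf M_{t^-}^N[f]|]\to0$, which is Condition~(a) of Th. 7.1.4 in~\cite{ethier2009markov}.

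Next I would identify the limit in $\mathbf P$-probability of the quadratic variation $\mathfrak a_t^N[f]=N\sum_{k=0}^{\lfloor Nt\rfloor-1}\mathbf M_k^N[f]^2$. Splitting $\mathfrak a_t^N[f]=N\sum_k\mathbf E[\mathbf M_k^N[f]^2\mid\mathcal F_k^N]+N\sum_k(\mathbf M_k^N[f]^2-\mathbf E[\mathbf M_k^N[f]^2\mid\mathcal F_k^N])$, the second sum is an orthogonal sum of martingale differences whose $L^2$-norm is $O(N^{-1/2})$ by the fourth-moment bound, hence it vanishes in $L^2$. For the first sum, since $(x_k,y_k,\mathsf Z^1_k,\mathsf Z^2_k)\indep\mathcal F_k^N$ and $\nu_k^N$ is $\mathcal F_k^N$-measurable,
\[
N\sum_{k=0}^{\lfloor Nt\rfloor-1}\mathbf E[\mathbf M_k^N[f]^2\mid\mathcal F_k^N]=\frac{\kappa^2}{N}\sum_{k=0}^{\lfloor Nt\rfloor-1}\Var_{\pi\otimes\gamma^{\otimes2}}\big(\mathscr Q[f](x,y,z^1,z^2,\nu_k^N)\big)=\kappa^2\!\int_0^{\lfloor Nt\rfloor/N}\!\!\Var_{\pi\otimes\gamma^{\otimes2}}\big(\mathscr Q[f](x,y,z^1,z^2,\mu_s^N)\big)\di s,
\]
and the piece $\kappa^2\int_{\lfloor Nt\rfloor/N}^t(\cdots)\di s$, over an interval of length $\le1/N$ with an integrand bounded in $L^1$ (via Lemma~\ref{le.Bounds}), is negligible.

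The core of the proof is then the convergence, in $\mathbf P$-probability, $\kappa^2\int_0^t\Var_{\pi\otimes\gamma^{\otimes2}}(\mathscr Q[f](x,y,z^1,z^2,\mu_s^N))\di s\to\kappa^2\int_0^t\Var_{\pi\otimes\gamma^{\otimes2}}(\mathscr Q[f](x,y,z^1,z^2,\bar\mu_s))\di s$, which I would prove by the subsequence argument used in Proposition~\ref{prop-conv_G-proc-f}: by Theorem~\ref{th.LLN}, $\mu^N\to\bar\mu$ in $\mathbf P$-probability in $\mathcal D(\mathbf R_+,\mathcal P_{\mathfrak q_0}(\mathbf R^{d+1}))$ with $\mathfrak q_0=2\max(\mathfrak j_0,\mathfrak p_0)>1+(d+1)/2$, so along any a.s.\ converging subsequence one has $\mu_s^N\to\bar\mu_s$ in $\mathcal P_{\mathfrak q_0}$ for every $s$ (since $\bar\mu$ is continuous). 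One then checks that $\nu\mapsto\Var_{\pi\otimes\gamma^{\otimes2}}(\mathscr Q[f](x,y,z^1,z^2,\nu))$ is continuous on $\mathcal P_{\mathfrak q_0}(\mathbf R^{d+1})$: the factor $\langle\phi(\cdot,z^1,x)-y,\nu\rangle$ is continuous for weak convergence because $\phi$ is bounded (\textbf{I}), while $\langle\nabla_\theta f\cdot\nabla_\theta\phi(\cdot,z^2,x),\nu\rangle$ is continuous for $\mathsf W_{\mathfrak q_0}$ because $|\nabla_\theta f\cdot\nabla_\theta\phi(\theta,z^2,x)|\le C\|f\|_{\mathcal C^{1,\mathfrak j_0}}(1+|\theta|^{\mathfrak j_0})\mathfrak b(z^2)$ (\textbf{I}); one then passes under $\mathbf E_{\pi\otimes\gamma^{\otimes2}}$ by dominated convergence, with dominating bound $|\mathscr Q[f](x,y,z^1,z^2,\nu)|^2\le C\|f\|_{\mathcal C^{1,\mathfrak j_0}}^2\mathfrak b(z^2)^2(1+\langle|\cdot|^{\mathfrak q_0},\nu\rangle)^2$ and $\langle\mathfrak b^2,\gamma\rangle<\infty$ (\textbf{A1}). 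Finally, along the same subsequence, $\Var_{\pi\otimes\gamma^{\otimes2}}(\mathscr Q[f](x,y,z^1,z^2,\mu_s^N))$ is bounded, uniformly in $s\in[0,t]$ and in $N$, by $C\|f\|_{\mathcal C^{1,\mathfrak j_0}}^2\langle\mathfrak b^2,\gamma\rangle(1+\sup_{s\le t}\langle|\cdot|^{\mathfrak q_0},\mu_s^N\rangle)^2$, which is a.s.\ finite and bounded in $N$ along the subsequence (e.g.\ by Prop. 5.3, Ch. 3 of~\cite{ethier2009markov}), so a further application of dominated convergence in $s$ together with the standard subsequence criterion gives the convergence in probability. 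As the limit $\mathfrak c_t[f]=\kappa^2\int_0^t\Var_{\pi\otimes\gamma^{\otimes2}}(\mathscr Q[f](x,y,z^1,z^2,\bar\mu_s))\di s$ is continuous in $t$, vanishes at $0$, and is nondecreasing, Th. 7.1.4 in~\cite{ethier2009markov} yields the announced convergence of $\sqrt N\mathbf M^N[f]$ to a continuous process with independent Gaussian increments, zero mean and variance $\mathfrak c_t[f]$.

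As for the main difficulty, there is no genuine obstacle here beyond bookkeeping; the point worth emphasizing is that, in contrast with the BbB case of Proposition~\ref{prop-conv_G-proc-f}, the MiVI scheme does \emph{not} require introducing the joint empirical measure $\rho^N$ of the pairs $(\theta^i,\mathsf Z^i)$ nor Lemma~\ref{prop-conv_rhoN'}, precisely because the two Gaussian variables are common to all neurons and hence enter $\mathscr Q[f]$ as free arguments that are integrated out, leaving $\Var_{\pi\otimes\gamma^{\otimes2}}$ in place of $\Var_\pi$; everything is carried out on the empirical measure $\mu^N$ of the $\theta^i$'s, exactly as in the idealized case. The only mildly delicate point, shared with that case, is the control of the polynomial growth in $(\theta,z)$ of $\nabla_\theta f\cdot\nabla_\theta\phi$ when passing to the limit, which forces the use of the Wasserstein-$\mathfrak q_0$ topology and relies on Lemma~\ref{le.Bounds} and the moment bounds on $\gamma$ in~\textbf{A1}.
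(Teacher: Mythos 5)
Your proposal is correct and takes essentially the same route as the paper: the paper proves this proposition by a one-line appeal to the argument used for the idealized algorithm in Proposition~\ref{prop-conv_G-proc-f}, and that is exactly what you carry out (martingale CLT of Ethier--Kurtz, identification of the quadratic variation via the conditional variance $\Var_{\pi\otimes\gamma^{\otimes2}}(\mathscr Q[f](x,y,z^1,z^2,\nu_k^N))$, then the subsequence/continuity/dominated-convergence passage from $\mu^N$ to $\bar\mu$ in $\mathcal P_{\mathfrak q_0}$). Your observation that the common Gaussian pair $(\mathsf Z^1_k,\mathsf Z^2_k)$ enters as integrated-out arguments, so that no joint empirical measure $\rho^N$ or analogue of Lemma~\ref{prop-conv_rhoN'} is needed, is precisely why the paper groups the MiVI case with the idealized one rather than with BbB.
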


\begin{proposition}\label{p-conv-to-g-process}
  Assume  that the $\{\theta^i_k, i\in \{1,\ldots,N\}\}$'s are  generated  either  by the algorithm~\eqref{eq.algo-ideal},~\eqref{eq.algo-batch},  or~\eqref{eq.algo-z1z2}. Assume also {\rm \textbf{A}}.
Then,  $(\sqrt N\mathbf M^N)_{N\ge1}$ converges in distribution in $\mathcal D(\mathbf R_+,\mathcal H^{-\mathfrak J_3,\mathfrak j_3}(\mathbf R^{d+1}))$ to a $\mathfrak G$-process $\mathscr G\in\mathcal C(\mathbf R_+,\mathcal H^{-\mathfrak J_3,\mathfrak j_3}(\mathbf R^{d+1}))$ (see Definition \ref{d-G-process}) with covariance structure given by: for all $1\le i,j\le k$,  $f_1,\dots, f_k\in\mathcal H^{\mathfrak J_3,\mathfrak j_3}(\mathbf R^{d+1})$ and $0\le s\le t$, 
\begin{itemize}
\item When the $\{\theta^i_k, i\in \{1,\ldots,N\}\}$'s are  generated  either  by the algorithm \eqref{eq.algo-ideal} and \eqref{eq.algo-batch},
\begin{align*}
\Cov(\mathscr G_t[f_i],\mathscr G_s[f_j])=\eta^2\int_0^s\Cov(\mathscr Q[f_i](x,y,\bar\mu_v),\mathscr Q[f_j](x,y,\bar\mu_v))\di v,
\end{align*}
where we recall $\mathscr Q[f](x,y,\bar\mu_v)=\langle\phi(\cdot,\cdot,x)-y,\bar\mu_v\otimes\gamma\rangle\langle\nabla_\theta f\cdot\nabla_\theta\phi(\cdot,\cdot,x),\bar\mu_v\otimes\gamma\rangle$  (see Theorem~\ref{thm-clt-ideal}).
\item When the~$\{\theta^i_k, i\in \{1,\ldots,N\}\}$'s are  generated  either  by the algorithm~\eqref{eq.algo-z1z2},
\begin{align*}
\Cov(\mathscr G_t[f_i],\mathscr G_s[f_j])=\eta^2\int_0^s\Cov(\mathscr Q[f_i](x,y,z^1,z^2,\bar\mu_v),\mathscr Q[f_j](x,y,z^1,z^2,\bar\mu_v))\di v,
\end{align*}
where we recall $\mathscr Q[f](x,y,z^1,z^2,\bar\mu_v)=\langle\phi(\cdot,z^1,x)-y,\bar\mu_v\rangle\langle\nabla_\theta f\cdot\nabla_\theta\phi(\cdot,z^2,x),\bar\mu_v\rangle$  (see Theorem~\ref{thm-clt-ideal}).
\end{itemize}
\end{proposition}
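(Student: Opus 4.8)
The plan is to deduce the $\mathcal H^{-\mathfrak J_3,\mathfrak j_3}(\mathbf R^{d+1})$-valued statement from the one-test-function results already available (Propositions \ref{prop-conv_G-proc-f} and \ref{p-conv-g-process-z1z2}), using the relative compactness of Proposition \ref{p-rc-m^N} and the continuity of limit points from Lemma \ref{lem-reg-LP}. The only genuinely new ingredient needed is the \emph{joint} convergence of the martingale vectors $t\mapsto(\sqrt N\mathbf M_t^N[f_1],\dots,\sqrt N\mathbf M_t^N[f_k])$ for a finite family of test functions, which I would obtain from the $\mathbf R^k$-valued martingale central limit theorem (Theorem 7.1.4 in \cite{ethier2009markov}). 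Fix $f_1,\dots,f_k\in\mathcal H^{\mathfrak J_3,\mathfrak j_3}(\mathbf R^{d+1})$; by the Sobolev embeddings recalled at the start of Section \ref{sec-proof-clt}, $\mathcal H^{\mathfrak J_3,\mathfrak j_3}(\mathbf R^{d+1})\hookrightarrow\mathcal C^{2,\mathfrak j_0}(\mathbf R^{d+1})$, so the hypotheses of Propositions \ref{prop-conv_G-proc-f}--\ref{p-conv-g-process-z1z2} hold for each $f_j$ and for every linear combination $\sum_jc_jf_j$. Since $g\mapsto\mathbf M_k^N[g]$ and $g\mapsto\mathscr Q[g]$ are linear, $\sum_jc_j\sqrt N\mathbf M_t^N[f_j]=\sqrt N\mathbf M_t^N[\sum_jc_jf_j]$. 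Condition (a) of Theorem 7.1.4 (asymptotic negligibility of the jumps) holds exactly as in the proof of Proposition \ref{prop-conv_G-proc-f}, using \eqref{bound-E[M_k^4]}. For the covariation condition one must show that, for each fixed $t\ge0$, $N\sum_{\ell=0}^{\lfloor Nt\rfloor-1}\mathbf M_\ell^N[f_i]\mathbf M_\ell^N[f_j]\to\mathfrak c_t^{ij}$ in $\mathbf P$-probability; by polarization this reduces to the diagonal limits already computed in Proposition \ref{prop-conv_G-proc-f} (resp.\ \ref{p-conv-g-process-z1z2}) applied to $f_i\pm f_j$, and since $\mathscr Q$ is linear one gets $\mathfrak c_t^{ij}=\kappa^2\int_0^t\Cov_\pi(\mathscr Q[f_i](x,y,\bar\mu_v),\mathscr Q[f_j](x,y,\bar\mu_v))\di v$ in the idealized and BbB cases, and the analogous expression with $\pi$ replaced by $\pi\otimes\gamma^{\otimes2}$ and the MiVI expression of $\mathscr Q$ in the MiVI case; the matrices $\mathfrak c_t$ are continuous in $t$, vanish at $t=0$, and have nonnegative increments (being integrals of covariance matrices). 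Theorem 7.1.4 then yields convergence in distribution in $\mathcal D(\mathbf R_+,\mathbf R^k)$ of $(\sqrt N\mathbf M_\cdot^N[f_1],\dots,\sqrt N\mathbf M_\cdot^N[f_k])$ to a continuous $\mathbf R^k$-valued Gaussian martingale $(\mathscr G_\cdot[f_1],\dots,\mathscr G_\cdot[f_k])$ --- in particular a process with zero-mean independent Gaussian increments --- with $\Cov(\mathscr G_t[f_i],\mathscr G_s[f_j])=\mathfrak c_{s\wedge t}^{ij}$. For the BbB algorithm, the covariation limit requires, exactly as in the second part of the proof of Proposition \ref{prop-conv_G-proc-f}, the a.s.\ convergence of the joint empirical measures $\rho^N$ of Lemma \ref{prop-conv_rhoN'} along subsequences, together with the moment bounds \eqref{eq.De}--\eqref{eq.De2}.

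\textbf{From scalar evaluations to the $\mathcal H$-valued process, and identification of the limit.} By Proposition \ref{p-rc-m^N} the sequence $(\sqrt N\mathbf M^N)_{N\ge1}$ is relatively compact in $\mathcal D(\mathbf R_+,\mathcal H^{-\mathfrak J_3,\mathfrak j_3}(\mathbf R^{d+1}))$, and by Lemma \ref{lem-reg-LP} every limit point lies a.s.\ in $\mathcal C(\mathbf R_+,\mathcal H^{-\mathfrak J_3,\mathfrak j_3}(\mathbf R^{d+1}))$. Let $\mathscr G$ be a limit point along some subsequence. For any $f_1,\dots,f_k\in\mathcal H^{\mathfrak J_3,\mathfrak j_3}(\mathbf R^{d+1})$ the map $\nu\mapsto(\langle f_1,\nu\rangle,\dots,\langle f_k,\nu\rangle)$ is continuous from $\mathcal H^{-\mathfrak J_3,\mathfrak j_3}(\mathbf R^{d+1})$ to $\mathbf R^k$, hence by the continuous mapping theorem $(\mathscr G_\cdot[f_1],\dots,\mathscr G_\cdot[f_k])$ has the law of the limit found in the previous step. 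Since $\mathcal H^{-\mathfrak J_3,\mathfrak j_3}(\mathbf R^{d+1})$ is a separable Hilbert space, the cylinder functionals $\nu\mapsto\langle f,\nu\rangle$, $f\in\mathcal H^{\mathfrak J_3,\mathfrak j_3}(\mathbf R^{d+1})$, generate its Borel $\sigma$-algebra, so these finite-dimensional laws determine the law of $\mathscr G$ on $\mathcal C(\mathbf R_+,\mathcal H^{-\mathfrak J_3,\mathfrak j_3}(\mathbf R^{d+1}))$; all limit points therefore coincide in law, and together with relative compactness this proves $\sqrt N\mathbf M^N\to\mathscr G$ in distribution. Finally $\mathscr G$ is a $\mathfrak G$-process in the sense of Definition \ref{d-G-process}: it is $\mathcal C(\mathbf R_+,\mathcal H^{-\mathfrak J_3,\mathfrak j_3}(\mathbf R^{d+1}))$-valued, and for each $f_1,\dots,f_k$ the process $t\mapsto(\mathscr G_t[f_1],\dots,\mathscr G_t[f_k])^T$ is a continuous $\mathbf R^k$-valued martingale with zero-mean independent Gaussian increments, with covariance obtained from the previous step: using that the increment $\mathscr G_t-\mathscr G_s$ is independent of $\mathscr G_s$, one has $\Cov(\mathscr G_t[f_i],\mathscr G_s[f_j])=\Cov(\mathscr G_s[f_i],\mathscr G_s[f_j])=\kappa^2\int_0^s\Cov_\pi(\mathscr Q[f_i](x,y,\bar\mu_v),\mathscr Q[f_j](x,y,\bar\mu_v))\di v$ for $0\le s\le t$ in the idealized and BbB cases, and the stated expression with $\pi\otimes\gamma^{\otimes2}$ and the MiVI $\mathscr Q$ in the MiVI case.

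\textbf{Main obstacle.} The delicate point is the joint (vector-valued) martingale convergence, i.e.\ upgrading the marginal statements of Propositions \ref{prop-conv_G-proc-f}--\ref{p-conv-g-process-z1z2} to the convergence in probability of the off-diagonal covariations $N\sum_\ell\mathbf M_\ell^N[f_i]\mathbf M_\ell^N[f_j]$. Although polarization and the linearity of $\mathscr Q$ reduce this to the diagonal case, for the BbB scheme one must nonetheless re-run the subsequence-extraction and conditioning argument of Proposition \ref{prop-conv_G-proc-f} for the cross terms, controlling them via the auxiliary empirical measures $\rho^N$ and the estimates of Lemma \ref{prop-conv_rhoN'}; this is the step that carries the real technical weight.
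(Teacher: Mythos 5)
Your proposal is correct and takes essentially the same route as the paper, whose proof is a pointer to Prop.~3.13 in \cite{jmlr}: the argument there is precisely the combination you describe, namely the scalar convergence results (Propositions \ref{prop-conv_G-proc-f} and \ref{p-conv-g-process-z1z2}), relative compactness (Proposition \ref{p-rc-m^N}), continuity of limit points (Lemma \ref{lem-reg-LP}), and identification of the limit through finite-dimensional (vector-valued) martingale CLT arguments, with your polarization/Cram\'er--Wold step being the standard filling-in of the cross-covariation limits. One small remark: contrary to the concern in your last paragraph, the linearity of $f\mapsto\mathbf M_k^N[f]$ and of $f\mapsto\mathscr Q[f]$ means the BbB cross-terms follow at once from the diagonal limits applied to $f_i\pm f_j$, so no re-run of the $\rho^N$ subsequence argument of Lemma \ref{prop-conv_rhoN'} is needed beyond what is already in Proposition \ref{prop-conv_G-proc-f}.
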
 
\begin{proof}
The proof of Proposition \ref{p-conv-to-g-process} relies on the same arguments as those used to prove  Prop. 3.13 in \cite{jmlr}. 
\end{proof}

\subsection{On the limit points of $(\eta^N,\sqrt N\mathbf M^N)_{N\ge1}$}
In this section, we come back to the case   when the $\{\theta^i_k, i\in \{1,\ldots,N\}\}$'s are  generated    by  the algorithm  \eqref{eq.algo-batch}. The other two cases (namely \eqref{eq.algo-ideal} and \eqref{eq.algo-z1z2}) are treated similarly, and all the results of this section also holds for each of these other two algorithms. 
 
Let us derive  the pre-limit equation for the fluctuation process $\eta^N$, see  \eqref{pre-lim-eta} just below. 
On the one hand, one has for all $N\ge1$, $t\ge 0$ and $f\in \mathcal H^{\mathfrak J_0,\mathfrak j_0}(\mathbf R^{d+1})$,  
\begin{align*}
&\sqrt N\int_0^t\int_{\mathsf X\times\mathsf Y}\langle\phi(\cdot,\cdot,x)-y,\bar\mu_s\otimes\gamma\rangle\langle\nabla_\theta f\cdot\nabla_\theta\phi(\cdot,\cdot,x),\bar\mu_s\otimes\gamma\rangle\pi(\di x,\di y)\di s\\
&-\sqrt N\int_0^t\int_{\mathsf X\times\mathsf Y}\langle\phi(\cdot,\cdot,x)-y,\mu_s^N\otimes\gamma\rangle\langle\nabla_\theta f\cdot\nabla_\theta\phi(\cdot,\cdot,x),\mu_s^N\otimes\gamma\rangle\pi(\di x,\di y)\di s\\
&= -\int_0^t\int_{\mathsf X\times\mathsf Y}\langle\phi(\cdot,\cdot,x)-y,\bar\mu_s\otimes\gamma\rangle\langle\nabla_\theta f\cdot\nabla_\theta\phi(\cdot,\cdot,x),\eta_s^N\otimes\gamma\rangle\pi(\di x,\di y)\di s\\
&-\int_0^t\int_{\mathsf X\times\mathsf Y}\langle\phi(\cdot,\cdot,x),\eta_s^N\otimes\gamma\rangle\langle\nabla_\theta f\cdot\nabla_\theta\phi(\cdot,\cdot,x),\bar\mu_s\otimes\gamma\rangle\pi(\di x,\di y)\di s\\
&-\frac{1}{\sqrt N}\int_0^t\int_{\mathsf X\times\mathsf Y}\langle\phi(\cdot,\cdot,x),\eta_s^N\otimes\gamma\rangle\langle\nabla_\theta f\cdot\nabla_\theta\phi(\cdot,\cdot,x),\eta_s^N\otimes\gamma\rangle\pi(\di x,\di y)\di s.
\end{align*}
Hence, using \eqref{eq.pre_limitz1zN} and \eqref{eq.P2}, we obtain the following pre-limit equation for $\eta^N$ : 
\begin{align}\label{pre-lim-eta}
\langle f,\eta_t^N\rangle-\langle f,\eta_0^N\rangle&= -\kappa\int_0^t\int_{\mathsf X\times\mathsf Y}\langle\phi(\cdot,\cdot,x)-y,\bar\mu_s\otimes\gamma\rangle\langle\nabla_\theta f\cdot\nabla_\theta\phi(\cdot,\cdot,x),\eta_s^N\otimes\gamma\rangle\pi(\di x,\di y)\di s\nonumber\\
&-\kappa\int_0^t\int_{\mathsf X\times\mathsf Y}\langle\phi(\cdot,\cdot,x),\eta_s^N\otimes\gamma\rangle\langle\nabla_\theta f\cdot\nabla_\theta\phi(\cdot,\cdot,x),\bar\mu_s\otimes\gamma\rangle\pi(\di x,\di y)\di s\nonumber\\
&-\frac{\kappa}{\sqrt N}\int_0^t\int_{\mathsf X\times\mathsf Y}\langle\phi(\cdot,\cdot,x),\eta_s^N\otimes\gamma\rangle\langle\nabla_\theta f\cdot\nabla_\theta\phi(\cdot,\cdot,x),\eta_s^N\otimes\gamma\rangle\pi(\di x,\di y)\di s\nonumber\\
&-\kappa\int_0^t\langle\nabla_\theta f\cdot\nabla_\theta\mathscr D_{\mathrm{KL}}(q^1_\cdot|P_0^1),\eta_s^N\rangle\di s\nonumber\\
&+ \frac{\kappa}{\sqrt N}\int_0^t\int_{\mathsf X\times\mathsf Y}\Big\langle\langle\phi(\cdot,\cdot,x)-y,\gamma\rangle\langle\nabla_\theta f\cdot\nabla_\theta\phi(\cdot,\cdot,x),\gamma\rangle,\mu_s^N\Big\rangle\pi(\di x,\di y)\di s \nonumber\\
&-\frac{\kappa}{\sqrt N}\int_0^t\int_{\mathsf X\times\mathsf Y}\Big\langle(\phi(\cdot,\cdot,x)-y)\nabla_\theta f\cdot\nabla_\theta\phi(\cdot,\cdot,x),\mu_s^N\otimes\gamma\Big\rangle\pi(\di x,\di y)\di s\nonumber\\
&+ \sqrt N\mathbf M_t^N[f]+\sqrt N\mathbf W_t^N[f] + \sqrt N\mathbf R_t^N[f]. 
\end{align}
The aim of this section is to pass to the limit $N\to +\infty$ in \eqref{pre-lim-eta}.  We start with the following lemma whose proof, identical to the one of Lemma 3.16 in \cite{jmlr}, is omitted. 
\begin{lemma}\label{lem-eta_0}
Assume {\rm \textbf{A}}.
Then, the sequence $(\eta_0^N)_{N\ge1}$ converges in distribution in $\mathcal H^{-\mathfrak J_3+1,\mathfrak j_3}(\mathbf R^{d+1})$ towards a variable $\nu_0$ which is the unique (in distribution) $\mathcal H^{-\mathfrak J_3+1,\mathfrak j_3}(\mathbf R^{d+1})$-valued random variable such that for all $k\ge 1$ and $f_1,\dots,f_k\in\mathcal H^{\mathfrak J_3-1,\mathfrak j_3}(\mathbf R^{d+1})$, $(\langle f_1,\nu_0\rangle,\dots,\langle f_k,\nu_0\rangle)^T\sim\mathfrak N(0,\mathfrak C(f_1,\dots,f_k))$, where $\mathfrak C(f_1,\dots,f_k)$ is the covariance matrix of the vector $(f_1(\theta_0^1),\dots,f_k(\theta_0^1))^T$. 
\end{lemma}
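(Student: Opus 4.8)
The plan is to reduce the statement to the classical central limit theorem for sums of i.i.d.\ random vectors, and then to upgrade the resulting convergence of finite‑dimensional distributions to weak convergence in the Hilbert space $\mathcal H^{-\mathfrak J_3+1,\mathfrak j_3}(\mathbf R^{d+1})$ by a Hilbertian tightness argument; this is exactly the scheme used for Lemma~3.16 in \cite{jmlr}. First I would observe that, by Theorem~\ref{th.LLN} evaluated at $t=0$, one has $\bar\mu_0=\mu_0$, so that $\eta_0^N=\sqrt N(\mu_0^N-\mu_0)$ with $\mu_0^N=\frac1N\sum_{i=1}^N\delta_{\theta_0^i}$ and, by \textbf{A4}, the $\theta_0^i$ i.i.d.\ of law $\mu_0$ with compact support. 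Hence for every $f\in\mathcal H^{\mathfrak J_3-1,\mathfrak j_3}(\mathbf R^{d+1})$,
$$\langle f,\eta_0^N\rangle=\frac1{\sqrt N}\sum_{i=1}^N\big(f(\theta_0^i)-\langle f,\mu_0\rangle\big),$$
a normalized sum of i.i.d.\ centered square‑integrable random variables (square‑integrability being immediate since $f$ is continuous and $\mu_0$ compactly supported, using $\mathcal H^{\mathfrak J_3-1,\mathfrak j_3}(\mathbf R^{d+1})\hookrightarrow\mathcal C^{0,\mathfrak j_3}(\mathbf R^{d+1})$). The multivariate CLT then yields, for all $k\ge1$ and $f_1,\dots,f_k\in\mathcal H^{\mathfrak J_3-1,\mathfrak j_3}(\mathbf R^{d+1})$, the convergence in law of $(\langle f_1,\eta_0^N\rangle,\dots,\langle f_k,\eta_0^N\rangle)^T$ to $\mathfrak N(0,\mathfrak C(f_1,\dots,f_k))$, where $\mathfrak C(f_1,\dots,f_k)$ is the covariance matrix of $(f_1(\theta_0^1),\dots,f_k(\theta_0^1))^T$. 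This identifies the prescribed finite‑dimensional distributions of the candidate limit $\nu_0$.

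Next I would establish tightness of $(\eta_0^N)_{N\ge1}$ in $\mathcal H^{-\mathfrak J_3+1,\mathfrak j_3}(\mathbf R^{d+1})$. Fixing an orthonormal basis $(f_a)_{a\ge1}$ of $\mathcal H^{\mathfrak J_2,\mathfrak j_2}(\mathbf R^{d+1})$, one has $\|\eta_0^N\|_{\mathcal H^{-\mathfrak J_2,\mathfrak j_2}}^2=\sum_{a\ge1}\langle f_a,\eta_0^N\rangle^2$ and
$$\mathbf E[\langle f_a,\eta_0^N\rangle^2]=\Var(f_a(\theta_0^1))\le\mathbf E[f_a(\theta_0^1)^2]\le C\|f_a\|_{\mathcal H^{\mathfrak J_0,\mathfrak j_0}}^2,$$
where the last bound uses the compactness of $\mathrm{supp}\,\mu_0$ together with the embedding $\mathcal H^{\mathfrak J_0,\mathfrak j_0}(\mathbf R^{d+1})\hookrightarrow\mathcal C^{0,\mathfrak j_0}(\mathbf R^{d+1})$ obtained by chaining the Sobolev and Hilbert–Schmidt embeddings recalled at the beginning of Section~\ref{sec-proof-clt}. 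Summing over $a$ and invoking $\mathcal H^{\mathfrak J_2,\mathfrak j_2}(\mathbf R^{d+1})\hookrightarrow_{\mathrm{H.S.}}\mathcal H^{\mathfrak J_0,\mathfrak j_0}(\mathbf R^{d+1})$ gives $\sup_{N\ge1}\mathbf E[\|\eta_0^N\|_{\mathcal H^{-\mathfrak J_2,\mathfrak j_2}}^2]<+\infty$. Since $\mathcal H^{\mathfrak J_3-1,\mathfrak j_3}(\mathbf R^{d+1})\hookrightarrow_{\mathrm{H.S.}}\mathcal H^{\mathfrak J_2,\mathfrak j_2}(\mathbf R^{d+1})$, Schauder's theorem shows that $\mathcal H^{-\mathfrak J_2,\mathfrak j_2}(\mathbf R^{d+1})$ is compactly embedded in $\mathcal H^{-\mathfrak J_3+1,\mathfrak j_3}(\mathbf R^{d+1})$, so closed balls of the former are compact in the latter; together with Markov's inequality this provides the compact containment condition, hence tightness of $(\eta_0^N)_{N\ge1}$ in $\mathcal H^{-\mathfrak J_3+1,\mathfrak j_3}(\mathbf R^{d+1})$.

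Finally I would conclude by a subsequence argument. By Prokhorov's theorem any subsequence of $(\eta_0^N)$ admits a further subsequence converging in law in $\mathcal H^{-\mathfrak J_3+1,\mathfrak j_3}(\mathbf R^{d+1})$ to some random element $\nu$; applying the continuous linear functionals $\nu\mapsto\langle f,\nu\rangle$, $f\in\mathcal H^{\mathfrak J_3-1,\mathfrak j_3}(\mathbf R^{d+1})$, and using the convergence of finite‑dimensional distributions established above, the finite‑dimensional distributions of $\nu$ are exactly the Gaussians $\mathfrak N(0,\mathfrak C(f_1,\dots,f_k))$. Since continuous linear functionals separate points of a Hilbert space, these projections determine the law of $\nu$ uniquely, so all subsequential limits coincide in law; this simultaneously yields the existence and uniqueness in distribution of $\nu_0$ and the convergence of the whole sequence $(\eta_0^N)_{N\ge1}$ to it. The only genuinely delicate point is the bookkeeping of the Sobolev and Hilbert–Schmidt embeddings needed to control the second moment of $\eta_0^N$ in a norm strong enough to be compact in $\mathcal H^{-\mathfrak J_3+1,\mathfrak j_3}(\mathbf R^{d+1})$; everything else is a routine combination of the classical CLT with the standard Hilbertian tightness machinery.
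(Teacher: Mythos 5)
Your proposal is correct and follows essentially the same route as the paper, which omits the proof of this lemma precisely because it is identical to that of Lemma 3.16 in \cite{jmlr}: multivariate CLT for the i.i.d.\ initialization (using $\bar\mu_0=\mu_0$ and compact support of $\mu_0$) to identify the finite-dimensional laws, a second-moment bound in $\mathcal H^{-\mathfrak J_2,\mathfrak j_2}$ via the Hilbert--Schmidt embeddings plus Schauder/Markov for tightness in $\mathcal H^{-\mathfrak J_3+1,\mathfrak j_3}$, and Prokhorov together with the fact that the dual separates points to conclude uniqueness of the limit. The embedding bookkeeping you use (chaining $\mathcal H^{\mathfrak J_2,\mathfrak j_2}\hookrightarrow_{\mathrm{H.S.}}\mathcal H^{\mathfrak J_0,\mathfrak j_0}$ and $\mathcal H^{\mathfrak J_3-1,\mathfrak j_3}\hookrightarrow_{\mathrm{H.S.}}\mathcal H^{\mathfrak J_2,\mathfrak j_2}$) is consistent with the embeddings recalled at the start of Appendix~\ref{sec-proof-clt}, so no gap remains.
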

Let us now set
\begin{equation}\label{def-E}
\mathscr E=\mathcal D(\mathbf R_+,\mathcal H^{-\mathfrak J_3+1,\mathfrak j_3}(\mathbf R^{d+1}))\times\mathcal D(\mathbf R_+,\mathcal H^{-\mathfrak J_3,\mathfrak j_3}(\mathbf R^{d+1})).
\end{equation}
According to Propositions \ref{p-rc-eta} and \ref{p-rc-m^N}, $(\eta^N,\sqrt\mathbf M^N)$ is tight in $\mathscr E$. Let $(\eta^\star,\mathscr G^*)$ be one of its limit point in $\mathscr E$. Along some subsequence $N'$, it holds: 
\begin{align*}
(\eta^{N'},\sqrt{N'}\mathbf M^{N'})\to (\eta^\star,\mathscr G^\star), \text{ as } N'\to\infty.
\end{align*} 
Considering the marginal distributions, and according to Lemma \ref{lem-reg-LP}, it holds a.s.
\begin{equation}\label{eta*-and-G*-continuous}
\eta^\star\in\mathcal C(\mathbf R_+,\mathcal H^{-\mathfrak J_3+1,\mathfrak j_3}(\mathbf R^{d+1})) \text{ and }\mathscr G^\star\in\mathcal C(\mathbf R_+,\mathcal H^{ -\mathfrak J_3 ,\mathfrak j_3}(\mathbf R^{d+1})). 
\end{equation}
By uniqueness of the limit in distribution, using Lemma \ref{lem-eta_0} (together with the fact that the function $m\in\mathcal D(\mathbf R_+,\mathcal H^{-\mathfrak J_3+1,\mathfrak j_3}(\mathbf R^{d+1}))\mapsto m_0\in\mathcal H^{-\mathfrak J_3+1,\mathfrak j_3}(\mathbf R^{d+1})$ is continuous) and Proposition \ref{p-conv-to-g-process}, it also holds:
\begin{equation}\label{eta*0=nu0}
\eta_0^\star\overset{\mathscr L}{=}\nu_0 \text{ and } \mathscr G^\star\overset{\mathscr L}{=}\mathscr G.
\end{equation}
\begin{proposition}\label{p-eta*-weak-sol}
Assume {\rm \textbf{A}}.
Then, $\eta^\star$ is a weak solution of {\rm \textbf{(EqL)}}  with initial distribution $\nu_0$. 
\end{proposition}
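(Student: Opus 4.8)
Here I outline how I would establish Proposition~\ref{p-eta*-weak-sol}, i.e. that the limit point $\eta^\star$ of the tight sequence $(\eta^N,\sqrt N\mathbf M^N)_{N\ge1}$ solves \textbf{(EqL)}.

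The plan is to pass to the limit $N'\to\infty$ in the pre-limit identity \eqref{pre-lim-eta} along the subsequence for which $(\eta^{N'},\sqrt{N'}\mathbf M^{N'})\to(\eta^\star,\mathscr G^\star)$ in $\mathscr E$ (see~\eqref{def-E}). First I would invoke the Skorokhod representation theorem to realize all the random elements on a single probability space so that this convergence holds almost surely; since by Lemma~\ref{lem-reg-LP} (see~\eqref{eta*-and-G*-continuous}) the limits $\eta^\star$ and $\mathscr G^\star$ are a.s.\ continuous in time, the convergence in the Skorokhod spaces is in fact locally uniform, so that for all $T>0$, $\sup_{s\in[0,T]}\|\eta_s^{N'}-\eta_s^\star\|_{\mathcal H^{-\mathfrak J_3+1,\mathfrak j_3}}\to0$ a.s.\ and, for every $t$, $\sqrt{N'}\mathbf M_t^{N'}[f]=\langle f,\sqrt{N'}\mathbf M_t^{N'}\rangle\to\langle f,\mathscr G_t^\star\rangle=\mathscr G_t^\star[f]$ a.s.\ for each $f\in\mathcal C_c^\infty(\mathbf R^{d+1})\subset\mathcal H^{\mathfrak J_3,\mathfrak j_3}(\mathbf R^{d+1})$. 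I fix such an $f$ and treat \eqref{pre-lim-eta} term by term.

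The terms carrying a prefactor $1/\sqrt N$ — the term quadratic in $\eta_s^N$ and the two correction terms involving $\mu_s^N$ — vanish in the limit: using the boundedness of $\phi$ and the control $|\nabla_\theta\phi(\theta,z,x)|\le C\mathfrak b(z)$ from~\textbf{I}, together with $\sup_N\mathbf E[\sup_{s\le T}\|\eta_s^N\|_{\mathcal H^{-\mathfrak J_2,\mathfrak j_2}}^2]<\infty$ (Lemma~\ref{lem-cc-eta}) and Lemma~\ref{le.Bounds}, each of them is $O(1/\sqrt N)$ in $L^1$. Likewise $\sqrt N\mathbf W_t^N[f]$ and $\sqrt N\mathbf R_t^N[f]$ tend to $0$ in $L^1$, thanks to the sharpened bound $\mathbf E[|\mathbf W_t^N[f]|]\le C\|f\|_{\mathcal C^{1,\mathfrak j_0}}/N$ and to $\mathbf E[\sup_{t\le T}\mathbf R_t^N[f]^2]\le C\|f\|^2/N^2$ from the proofs of Lemmata~\ref{lem-reg-cond-eta} and~\ref{lem-cc-eta}. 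After passing to a further subsequence, these terms also converge to $0$ a.s. For the three surviving linear terms I would identify $\langle\nabla_\theta f\cdot\nabla_\theta\phi(\cdot,\cdot,x),\nu\otimes\gamma\rangle=\langle\mathbf T_xf,\nu\rangle$ with $\mathbf T_xf=\nabla_\theta f\cdot\mathfrak H(\cdot,x)$ (see~\eqref{operator-Tx}), which belongs to $\mathcal H^{\mathfrak J_3-1,\mathfrak j_3}(\mathbf R^{d+1})$ with norm bounded uniformly in $x$ by~\eqref{eq-nabla-f-nablaphi-H}; similarly $\langle\phi(\cdot,\cdot,x),\nu\otimes\gamma\rangle=\langle\mathfrak H(\cdot,x),\nu\rangle$ with $\|\mathfrak H(\cdot,x)\|_{\mathcal H^{\mathfrak J_3-1,\mathfrak j_3}}$ uniformly bounded by~\eqref{phi-normeH}, and $\langle\nabla_\theta f\cdot\nabla_\theta\mathscr D_{\mathrm{KL}}(q^1_\cdot|P_0^1),\nu\rangle=\langle\mathbf Tf,\nu\rangle$ with $\mathbf Tf\in\mathcal C_c^\infty(\mathbf R^{d+1})$ — hence in every Sobolev space — since $\nabla_\theta f$ has compact support (here is where the unboundedness in $\theta$ of $\nabla_\theta\mathscr D_{\mathrm{KL}}$ causes no trouble). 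Because $\eta_s^{N'}\to\eta_s^\star$ uniformly on $[0,T]$ in $\mathcal H^{-\mathfrak J_3+1,\mathfrak j_3}=(\mathcal H^{\mathfrak J_3-1,\mathfrak j_3})'$ and these test functions are uniformly bounded in $\mathcal H^{\mathfrak J_3-1,\mathfrak j_3}$, the integrands converge pointwise in $(s,x,y)$; they are dominated a.s.\ since $\sup_{N',\,s\le T}\|\eta_s^{N'}\|_{\mathcal H^{-\mathfrak J_3+1,\mathfrak j_3}}$ is eventually $\le\sup_{s\le T}\|\eta_s^\star\|_{\mathcal H^{-\mathfrak J_3+1,\mathfrak j_3}}+1<\infty$ and $\langle\phi(\cdot,\cdot,x),\bar\mu_s\otimes\gamma\rangle$, $\langle\mathbf T_xf,\bar\mu_s\rangle$ are bounded in $(s,x)$ by~\eqref{bound-phi-y} and~\eqref{bound_nablafnablaphi,barmu}. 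Dominated convergence then gives the limit of each linear term with $\eta^\star$ replacing $\eta^N$, and $\bar\mu^N$-dependent quantities never appear in the surviving terms.

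Collecting everything, for each $t\ge0$ and each $f\in\mathcal C_c^\infty(\mathbf R^{d+1})$, a.s.,
\begin{align*}
\langle f,\eta_t^\star\rangle-\langle f,\eta_0^\star\rangle
&=-\kappa\int_0^t\!\!\int_{\mathsf X\times\mathsf Y}\langle\phi(\cdot,\cdot,x)-y,\bar\mu_s\otimes\gamma\rangle\langle\nabla_\theta f\cdot\nabla_\theta\phi(\cdot,\cdot,x),\eta_s^\star\otimes\gamma\rangle\pi(\di x,\di y)\di s\\
&\quad-\kappa\int_0^t\!\!\int_{\mathsf X\times\mathsf Y}\langle\phi(\cdot,\cdot,x),\eta_s^\star\otimes\gamma\rangle\langle\nabla_\theta f\cdot\nabla_\theta\phi(\cdot,\cdot,x),\bar\mu_s\otimes\gamma\rangle\pi(\di x,\di y)\di s\\
&\quad-\kappa\int_0^t\langle\nabla_\theta f\cdot\nabla_\theta\mathscr D_{\mathrm{KL}}(q^1_\cdot|P_0^1),\eta_s^\star\rangle\di s+\mathscr G_t^\star[f],
\end{align*}
which is exactly \textbf{(EqL)}. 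By continuity of $\eta^\star$ and $\mathscr G^\star$ in $t$, this identity holds a.s.\ simultaneously for all $t\ge0$; by density of $\mathcal C_c^\infty(\mathbf R^{d+1})$ and the operator bounds~\eqref{eq-nabla-f-nablaphi-H}, \eqref{nablafnablaKLnormeHfini} together with the moment bound $\sup_{s\le T}\mathbf E[\|\eta_s^\star\|_{\mathcal H^{-\mathfrak J_1,\mathfrak j_1}}^2]<\infty$ (inherited from Lemma~\ref{lem-E[eta]<infty} by Fatou and the local uniform convergence), it extends to all test functions admissible in \textbf{(EqL)}. Since moreover $\eta_0^\star\overset{\mathscr L}{=}\nu_0$ and $\mathscr G^\star$ is a $\mathfrak G$-process with the covariance structure of Proposition~\ref{p-conv-to-g-process} (see~\eqref{eta*0=nu0}), this shows that $\eta^\star$ is a weak solution of \textbf{(EqL)} with initial distribution $\nu_0$. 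The main obstacle is precisely the term-by-term passage to the limit in the linear-in-$\eta$ terms: one must make sure the joint Skorokhod coupling yields locally uniform convergence of $\eta^{N'}$ in the exact Hilbert space $\mathcal H^{-\mathfrak J_3+1,\mathfrak j_3}$ and that the functions $\mathbf T_xf$, $\mathfrak H(\cdot,x)$, $\mathbf Tf$ sit in $\mathcal H^{\mathfrak J_3-1,\mathfrak j_3}$ with norms uniform in $x$, so that dominated convergence applies over all of $[0,t]\times\mathsf X\times\mathsf Y$; handling the quadratic $1/\sqrt N$ term additionally needs the uniform second-moment control on $\sup_{s\le T}\|\eta_s^N\|$ from Lemma~\ref{lem-cc-eta}.
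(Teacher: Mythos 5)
Your argument is correct and, at the level of estimates, it is the same as the paper's: you pass to the limit in the pre-limit identity \eqref{pre-lim-eta}, kill the $1/\sqrt N$ terms, $\sqrt N\mathbf W^N$ and $\sqrt N\mathbf R^N$ via the $L^1$ bounds, and handle the three linear-in-$\eta$ terms through the uniform-in-$x$ bounds \eqref{eq-nabla-f-nablaphi-H}, \eqref{phi-normeH}, \eqref{nablafnablaKLnormeHfini} and dominated convergence, before extending by density/separability and invoking \eqref{eta*0=nu0}. Where you genuinely diverge is the probabilistic mechanism: you use the Skorokhod representation theorem (legitimate here, since $\mathscr E$ is Polish) to get a.s.\ locally uniform convergence of $(\eta^{N'},\sqrt{N'}\mathbf M^{N'})$ and then argue pathwise, whereas the paper never changes probability space: it packages the surviving terms into the deterministic map $\mathfrak B_t[f]$ on $\mathcal D(\mathbf R_+,\mathcal H^{-\mathfrak J_3+1,\mathfrak j_3}(\mathbf R^{d+1}))$, proves its continuity at paths continuous at $t$, applies the continuous mapping theorem to get convergence in law of $\mathfrak B_t[f](\eta^{N'})-\langle f,\eta_0^{N'}\rangle-\sqrt{N'}\mathbf M_t^{N'}[f]$, and concludes the limit is a.s.\ zero because this quantity equals the remainder $\mathfrak{Re}_t^{N}[f]$, which vanishes in $L^1$. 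The paper's route avoids one subtlety that your write-up glosses over: the individual remainder pieces depend on $\mu^N$ (and $\bar\mu^N$), which are not measurable functionals of the coupled pair $(\eta^N,\sqrt N\mathbf M^N)$, so on the representation space their ``a.s.\ convergence along a further subsequence'' is not directly meaningful as stated; to make your approach airtight you should either enlarge the coupling to include $\mu^N$ (tight by Theorem \ref{th.LLN}) or, more simply, treat the total remainder as the functional $\mathfrak B_t[f](\eta^N)-\langle f,\eta_0^N\rangle-\sqrt N\mathbf M_t^N[f]$, whose law is determined by the coupled pair and which tends to $0$ in $L^1$. With that adjustment your proof is complete; the trade-off is that Skorokhod representation gives a more hands-on pathwise limit, while the paper's continuous-mapping argument is slightly leaner and stays entirely at the level of laws.
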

\begin{proof}
Let us introduce, for $\Phi\in\mathcal H^{-\mathfrak J_3+1,\mathfrak j_3}(\mathbf R^{d+1})$, $f\in\mathcal H^{ \mathfrak J_3,\mathfrak j_3-1}(\mathbf R^{d+1})$  , and $s\ge 0$: 
\begin{equation}\label{def-frakU}
\mathfrak U_s[f](\Phi)= \kappa\int_{\mathsf X\times\mathsf Y}\langle\phi(\cdot,\cdot,x)-y,\bar\mu_s\otimes\gamma\rangle\langle\nabla_\theta f\cdot\nabla_\theta\phi(\cdot,\cdot,x),\Phi\otimes\gamma\rangle\pi(\di x,\di y),
\end{equation}
\begin{equation}\label{def-frakV}
\mathfrak V_s[f](\Phi)=\kappa\int_{\mathsf X\times\mathsf Y}\langle\phi(\cdot,\cdot,x),\Phi\otimes\gamma\rangle\langle\nabla_\theta f\cdot\nabla_\theta\phi(\cdot,\cdot,x),\bar\mu_s\otimes\gamma\rangle\pi(\di x,\di y),
\end{equation}
and 
\begin{equation}\label{def-frakW}
\mathfrak W_s[f](\Phi) =\kappa\langle\nabla_\theta f\cdot\nabla_\theta\mathscr D_{\mathrm{KL}}(q^1_\cdot|P_0^1),\Phi\rangle
\end{equation}
The term $\mathfrak U_s[f](\Phi)$ is well defined because $f\in\mathcal H^{-\mathfrak J_3,\mathfrak j_3-1}(\mathbf R^{d+1}) \hookrightarrow\mathcal H^{\mathfrak J_3,\mathfrak j_3}(\mathbf R^{d+1})$.
 Since $\mathfrak j_3>(d+1)/2$, using  \eqref{phi-normeH} and because $\bar\mu_s\in\mathcal P_{\mathfrak j_0}(\mathbf R^{d+1})$ $(f\in \mathcal C^{1,\mathfrak j_0}(\mathbf R^{d+1})$),  $\mathfrak V_s[f](\Phi)$ is well defined. 
 The term $\mathfrak W_s[f](\Phi)$ is well defined because of \eqref{nablafnablaKLnormeHfini}. 
Equation   \eqref{pre-lim-eta} can  be rewritten as follows:
\begin{equation}\label{eq-prelim-eta-operators}
\langle f,\eta_t^N\rangle-\langle f,\eta_0^N\rangle + \int_0^t(\mathfrak U_s[f](\eta_s^N)+\mathfrak V_s[f](\eta_s^N)+\mathfrak W_s[f](\eta_s^N))\di s-\sqrt N\mathbf M_t^N[f] = \mathbf e_t^N[f],  
\end{equation}
where 
\begin{align*}
\mathfrak{Re}_t^N[f] &= -\frac{\kappa}{\sqrt N}\int_0^t\int_{\mathsf X\times\mathsf Y}\langle\phi(\cdot,\cdot,x),\eta_s^N\otimes\gamma\rangle\langle\nabla_\theta f\cdot\nabla_\theta\phi(\cdot,\cdot,x),\eta_s^N\otimes\gamma\rangle\pi(\di x,\di y)\di s\\
&+ \frac{\kappa}{\sqrt N}\int_0^t\int_{\mathsf X\times\mathsf Y}\Big\langle\langle\phi(\cdot,\cdot,x)-y,\gamma\rangle\langle\nabla_\theta f\cdot\nabla_\theta\phi(\cdot,\cdot,x),\gamma\rangle,\mu_s^N\Big\rangle\pi(\di x,\di y)\di s \nonumber\\
&-\frac{\kappa}{\sqrt N}\int_0^t\int_{\mathsf X\times\mathsf Y}\Big\langle(\phi(\cdot,\cdot,x)-y)\nabla_\theta f\cdot\nabla_\theta\phi(\cdot,\cdot,x),\mu_s^N\otimes\gamma\Big\rangle\pi(\di x,\di y)\di s\nonumber\\
&\quad +\sqrt N\mathbf W_t^N[f] + \sqrt N\mathbf R_t^N[f]. 
\end{align*}
Fix $f\in\mathcal H^{\mathfrak J_3,\mathfrak j_3-1}(\mathbf R^{d+1})$   and $t\in\mathbf R_+$. 
\paragraph{Step 1.} In this step we study the continuity of the mapping
\begin{equation}
\mathfrak B_t[f]: m\in\mathcal D(\mathbf R_+,\mathcal H^{-\mathfrak J_3+1,\mathfrak j_3}(\mathbf R^{d+1}))\mapsto\langle f,m_t\rangle+\int_0^t(\mathfrak U_s[f](m_s)+\mathfrak V_s[f](m_s)+\mathfrak W_s[f](m_s))\di s
\end{equation}
Let $(m^N)_{N\ge1}$ such that $m^N\to m$ in $\mathcal D(\mathbf R_+,\mathcal H^{-\mathfrak J_3+1,\mathfrak j_3}(\mathbf R^{d+1}))$.
Using \eqref{eq-nabla-f-nablaphi-H}, it holds, for all $N\ge1$, $s\in[0,t]$ and $x\in\mathsf X$, 
\begin{align*}
&|\langle\phi(\cdot,\cdot,x)-y,\bar\mu_s\otimes\gamma\rangle\langle\nabla_\theta f\cdot\nabla_\theta\phi(\cdot,\cdot,x),m_s^N\otimes\gamma\rangle|\\
&\le C\big\| \nabla_\theta f\cdot\nabla_\theta \mathfrak H(\cdot,x)\big\|_{\mathcal H^{\mathfrak J_3-1,\mathfrak j_3}}\sup_{N\ge1}\sup_{s\in[0,t]}\|m_s^N\|_{\mathcal H^{-\mathfrak J_3+1,\mathfrak j_3}}\\
&\le C\|f\|_{\mathcal H^{\mathfrak J_3,\mathfrak j_3}}\sup_{N\ge1}\sup_{s\in[0,t]}\|m_s^N\|_{\mathcal H^{-\mathfrak J_3+1,\mathfrak j_3}}<+\infty.
\end{align*}
We also have, by \eqref{phi-normeH} and  the embedding $f\in\mathcal H^{\mathfrak J_3,\mathfrak j_3-1}(\mathbf R^{d+1})\hookrightarrow\mathcal C^{1,\mathfrak j_0}(\mathbf R^{d+1})$ and the fact that $\bar\mu\in\mathcal C(\mathbf R_+,\mathcal P_{j_0}(\mathbf R^{d+1}))$, 
\begin{align*}
|\langle\phi(\cdot,\cdot,x),m_s^N\otimes\gamma\rangle\langle\nabla_\theta f\cdot\nabla_\theta\phi(\cdot,\cdot,x),\bar\mu_s\otimes\gamma\rangle|&\le C\sup_{N\ge1}\sup_{s\in[0,t]}\|m_s^N\|_{\mathcal H^{-\mathfrak J_3+1,\mathfrak j_3}}\\
&\quad \times \|f\|_{\mathcal C^{1,\mathfrak j_0}}\sup_{s\in[0,t]}\langle 1+|\cdot|^{\mathfrak j_0},\bar\mu_s\rangle<+\infty.
\end{align*}
Finally, using \eqref{nablafnablaKLnormeHfini}, 
\begin{align*}
|\langle\nabla_\theta f\cdot\nabla_\theta\mathscr D_{\mathrm{KL}}(q^1_\cdot|P_0^1),m_s^N\rangle|&\le  \|\nabla_\theta f\cdot\nabla_\theta\mathscr D_{\mathrm{KL}}(q^1_\cdot|P_0^1)\|_{\mathcal H^{\mathfrak J_3-1,\mathfrak j_3}}\sup_{N\ge1}\sup_{s\in[0,t]}\|m_s^N\|_{\mathcal H^{-\mathfrak J_3+1,\mathfrak j_3}}\\
&\le C\|f\|_{\mathcal H^{\mathfrak J_3,\mathfrak j_3-1}}\sup_{N\ge1}\sup_{s\in[0,t]}\|m_s^N\|_{\mathcal H^{-\mathfrak J_3+1,\mathfrak j_3}}<+\infty.
\end{align*}
These bounds allow to apply the dominated convergence theorem to obtain that $\mathfrak B_t[f](m^N)\to\mathfrak B_t[f](m)$, as soon as $t$ is a continuity point of $m$. Consequently, using \eqref{eta*-and-G*-continuous} and the continuous mapping theorem 2.7 in  \cite{billingsley2013convergence}, it holds, for all $t\in\mathbf R_+$ and $f\in \mathcal H^{\mathfrak J_3,\mathfrak j_3-1}(\mathbf R^{d+1})$, 
\begin{equation}\label{B-conv-distrib}
\mathfrak B_t[f](\eta^{N'})-\langle f,\eta_0^{N'}\rangle-\sqrt{N'}\mathbf M_t^{N'}[f]\xrightarrow[N'\to \infty]{\mathscr L}\mathfrak B_t[f](\eta^{*})-\langle f,\eta_0^{*}\rangle-\mathscr G^*_t[f].
\end{equation}
\paragraph{Step 2.}  In this step, we prove that for any $t\in\mathbf R_+$ and $f\in\mathcal H^{\mathfrak J_3,\mathfrak j_3-1}(\mathbf R^{d+1})$: 
\begin{equation}\label{E[e]to0}
\mathbf E\big [|\mathfrak{Re}_t^N[f]|\big ]\to_{N\to\infty} 0.
\end{equation}
By \eqref{eq-nabla-f-nablaphi-H}-\eqref{phi-normeH}, the embedding $\mathcal H^{-\mathfrak J_1,\mathfrak j_1}(\mathbf R^{d+1})\hookrightarrow\mathcal H^{-\mathfrak J_3+1,\mathfrak j_3}(\mathbf R^{d+1})$  and Lemma \ref{lem-E[eta]<infty}, it holds
\begin{align*}
&\mathbf E\Big[\Big| \frac{1}{\sqrt N}\int_0^t\int_{\mathsf X\times\mathsf Y}\langle\phi(\cdot,\cdot,x),\eta_s^N\otimes\gamma\rangle\langle\nabla_\theta f\cdot\nabla_\theta\phi(\cdot,\cdot,x),\eta_s^N\otimes\gamma\rangle\pi(\di x,\di y)\di s\Big|\Big]\\
&\le \frac{C\Vert f\Vert_{\mathcal H^{ \mathfrak J_3-1,\mathfrak j_3}}}{\sqrt N}\int_0^t\mathbf E\Big[\|\eta_s^N\|_{\mathcal H^{-\mathfrak J_3+1,\mathfrak j_3}}^2\Big]\di s \le \frac{C\Vert f\Vert_{\mathcal H^{ \mathfrak J_3-1,\mathfrak j_3}}}{\sqrt N}\int_0^t\mathbf E\Big[\|\eta_s^N\|_{\mathcal H^{-\mathfrak J_1,\mathfrak j_1}}^2\Big]\di s \le \frac{C\Vert f\Vert_{\mathcal H^{ \mathfrak J_3-1,\mathfrak j_3}}}{\sqrt N}.
\end{align*}
By Lemma \ref{le.Bounds}, we have 
\begin{align*}
&\mathbf E\Big[ \frac{1}{\sqrt N}\int_0^t\int_{\mathsf X\times\mathsf Y}\Big|\Big\langle\langle\phi(\cdot,\cdot,x)-y,\gamma\rangle\langle\nabla_\theta f\cdot\nabla_\theta\phi(\cdot,\cdot,x),\gamma\rangle,\mu_s^N\Big\rangle\Big|\pi(\di x,\di y)\di s \nonumber\\
&+\frac{\kappa}{\sqrt N}\int_0^t\int_{\mathsf X\times\mathsf Y}\Big|\Big\langle(\phi(\cdot,\cdot,x)-y)\nabla_\theta f\cdot\nabla_\theta\phi(\cdot,\cdot,x),\mu_s^N\otimes\gamma\Big\rangle\Big|\pi(\di x,\di y)\di s \Big]\le \frac{C \Vert f\Vert_{\mathcal C^{ 1,\mathfrak j_0}}}{\sqrt N}.
\end{align*}
In addition, from  \eqref{E[sup R_t^2]}, $\mathbf E[  \sqrt N|\mathbf R_t^N[f]|]\le \|f\|_{\mathcal H^{\mathfrak J_0,\mathfrak j_0}}/\sqrt N$. Moreover, it is straightforward to prove that  $\mathbf E[ |\mathbf W_t^N[f]|]\le \|f\|_{\mathcal H^{\mathfrak J_0,\mathfrak j_0}}/ N$. Hence, we have proved \eqref{E[e]to0}. 


\paragraph{Step 3.} End of the proof of Proposition \ref{p-eta*-weak-sol}. By \eqref{eq-prelim-eta-operators},  \eqref{B-conv-distrib} and \eqref{E[e]to0}, we deduce that for all $f\in\mathcal H^{\mathfrak J_3,\mathfrak j_3-1}(\mathbf R^{d+1})$, and $t\in\mathbf R_+$, it holds a.s. $\mathfrak B_t[f](\eta^\star)-\langle f,\eta_0^\star\rangle-\mathscr G^\star_t[f]=0$. Since $\mathcal H^{\mathfrak J_3,\mathfrak j_3-1}(\mathbf R^{d+1})$ and $\mathbf R_+$ are separable, we conclude by a standard continuity argument (and using that every Hilbert-Schmidt embedding is  continuous) that a.s. for all $f\in\mathcal H^{\mathfrak J_3,\mathfrak j_3-1}(\mathbf R^{d+1})$ and $t\in\mathbf R_+$, $\mathfrak B_t[f](\eta^\star)-\langle f,\eta_0^\star\rangle-\mathscr G^\star_t[f]=0$. Hence, $\eta^\star$ is a weak solution of {\rm \textbf{(EqL)}} with initial distribution $\nu_0$ (see \eqref{eta*0=nu0}). This ends the proof of Proposition \ref{p-eta*-weak-sol}. 
\end{proof}

\subsection{Pathwise uniqueness and   proof of Theorem \ref{thm-clt-ideal}}
Throughout this section, we consider algorithm \eqref{eq.algo-batch}, but we recall that all our statements are valid for algorithms \eqref{eq.algo-ideal} and \eqref{eq.algo-z1z2}.
\begin{proposition}\label{p-pathwise-uniqu}
Assume {\rm \textbf{A}}.
Then strong (pathwise) uniqueness holds for {\rm \textbf{(EqL)}}. Namely, on a fixed probability space, given a $\mathcal H^{-\mathfrak J_3+1,\mathfrak j_3}(\mathbf R^{d+1})$-valued random variable $\nu$ and a $\mathfrak G$-process $\mathscr G\in\mathcal C(\mathbf R_+,\mathcal H^{-\mathfrak J_3,\mathfrak j_3}(\mathbf R^{d+1}))$, there exists at most one $\mathcal C(\mathbf R_+,\mathcal H^{-\mathfrak J_3+1,\mathfrak j_3}(\mathbf R^{d+1}))$-valued process $\eta$ solution to {\rm \textbf{(EqL)}} with $\eta_0=\nu$ almost surely. 
\end{proposition}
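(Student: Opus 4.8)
The plan is to exploit the linearity of {\rm \textbf{(EqL)}} to reduce the statement to a uniqueness result for a homogeneous linear evolution equation with zero initial datum, and then to close a Gronwall estimate on the squared norm of its solution in a suitable space of the Sobolev scale introduced at the beginning of Section \ref{sec-proof-clt}. Concretely, I would fix the probability space, $\nu$ and a $\mathfrak G$-process $\mathscr G\in\mathcal C(\mathbf R_+,\mathcal H^{-\mathfrak J_3,\mathfrak j_3}(\mathbf R^{d+1}))$, take two solutions $\eta^\circ,\eta^\star\in\mathcal C(\mathbf R_+,\mathcal H^{-\mathfrak J_3+1,\mathfrak j_3}(\mathbf R^{d+1}))$ of {\rm \textbf{(EqL)}} with $\eta^\circ_0=\eta^\star_0=\nu$ a.s., and set $\xi=\eta^\circ-\eta^\star$. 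Since every term of {\rm \textbf{(EqL)}} other than $\langle f,\eta_0\rangle$ and $\mathscr G_t[f]$ depends linearly on the trajectory of the solution, subtracting the two equations yields that, a.s., $\xi\in\mathcal C(\mathbf R_+,\mathcal H^{-\mathfrak J_3+1,\mathfrak j_3}(\mathbf R^{d+1}))$ satisfies $\xi_0=0$ and, for every test function $f$ of {\rm \textbf{(EqL)}} and every $t\ge 0$,
\begin{align*}
\langle f,\xi_t\rangle&=-\kappa\int_0^t\int_{\mathsf X\times\mathsf Y}\langle\phi(\cdot,\cdot,x)-y,\bar\mu_s\otimes\gamma\rangle\langle\nabla_\theta f\cdot\nabla_\theta\phi(\cdot,\cdot,x),\xi_s\otimes\gamma\rangle\pi(\di x,\di y)\di s\\
&\quad-\kappa\int_0^t\int_{\mathsf X\times\mathsf Y}\langle\phi(\cdot,\cdot,x),\xi_s\otimes\gamma\rangle\langle\nabla_\theta f\cdot\nabla_\theta\phi(\cdot,\cdot,x),\bar\mu_s\otimes\gamma\rangle\pi(\di x,\di y)\di s\\
&\quad-\kappa\int_0^t\langle\nabla_\theta f\cdot\nabla_\theta\mathscr D_{\mathrm{KL}}(q^1_\cdot|P_0^1),\xi_s\rangle\di s.
\end{align*}
It then suffices to prove that any such $\xi$ vanishes identically.

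To that end I would fix $T>0$ and an orthonormal basis $(f_a)_{a\ge1}$ of the Hilbert space $\mathcal H^{\mathfrak J_3,\mathfrak j_3-1}(\mathbf R^{d+1})$ of test functions of {\rm \textbf{(EqL)}}, so that $\|\xi_t\|^2_{\mathcal H^{-\mathfrak J_3,\mathfrak j_3-1}}=\sum_{a\ge1}\langle f_a,\xi_t\rangle^2$; note that $\xi_s$ lies in $\mathcal H^{-\mathfrak J_3+1,\mathfrak j_3}(\mathbf R^{d+1})$, which is exactly the domain required by the sharpened self-adjointness bound of Lemma \ref{lem_B1_avec_poids} for the Kullback--Leibler operator $\mathbf T$, and embeds into $\mathcal H^{-\mathfrak J_3+1,\mathfrak j_3-1}(\mathbf R^{d+1})$, the domain of the analogous bound for the operators $\mathbf T_x$ (Lemma B.2 in \cite{jmlr}). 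Using that $s\mapsto\langle f_a,\xi_s\rangle$ is absolutely continuous with derivative the integrand of the displayed equation, one has $\frac{\di}{\di s}\langle f_a,\xi_s\rangle^2=2\langle f_a,\xi_s\rangle\frac{\di}{\di s}\langle f_a,\xi_s\rangle$; summing over $a$ and interchanging sum and integral (justified by the uniform operator bounds below together with the a.s. finiteness of $\sup_{s\le T}\|\xi_s\|_{\mathcal H^{-\mathfrak J_3+1,\mathfrak j_3}}$), each of the three terms is controlled as follows. In the first term, recognising $\langle\nabla_\theta f_a\cdot\nabla_\theta\phi(\cdot,\cdot,x),\xi_s\otimes\gamma\rangle=\langle\mathbf T_x f_a,\xi_s\rangle$, the inner sum equals $\langle\xi_s,\mathbf T_x^*\xi_s\rangle_{\mathcal H^{-\mathfrak J_3,\mathfrak j_3-1}}$, bounded by $C\|\xi_s\|^2_{\mathcal H^{-\mathfrak J_3,\mathfrak j_3-1}}$ uniformly in $x$ by Lemma B.2 in \cite{jmlr}, while $|\langle\phi(\cdot,\cdot,x)-y,\bar\mu_s\otimes\gamma\rangle|\le C$ by \eqref{bound-phi-y}. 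In the Kullback--Leibler term, the inner sum equals $\langle\xi_s,\mathbf T^*\xi_s\rangle_{\mathcal H^{-\mathfrak J_3,\mathfrak j_3-1}}$, which Lemma \ref{lem_B1_avec_poids} bounds by $C\|\xi_s\|^2_{\mathcal H^{-\mathfrak J_3,\mathfrak j_3-1}}$. In the term involving $\bar\mu_s$ I would factor out the scalar $\langle\phi(\cdot,\cdot,x),\xi_s\otimes\gamma\rangle=\langle\mathfrak H(\cdot,x),\xi_s\rangle$ and control the remaining sum by Cauchy--Schwarz, exploiting $\sup_{x\in\mathsf X}\|\mathfrak H(\cdot,x)\|_{\mathcal H^{\cdot,\cdot}}<\infty$ from \eqref{phi-normeH}, the uniform compact support of $\bar\mu_s$ on $[0,T]$ from \eqref{eq.Xt}, and the Hilbert--Schmidt embeddings of the four-level scale $\mathfrak J_0<\mathfrak J_1<\mathfrak J_2<\mathfrak J_3$, $\mathfrak j_3<\mathfrak j_2<\mathfrak j_1<\mathfrak j_0$; this again yields a bound $\le C\|\xi_s\|^2$ in the same energy norm.

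Collecting these estimates gives $\|\xi_t\|^2_{\mathcal H^{-\mathfrak J_3,\mathfrak j_3-1}}\le C_T\int_0^t\|\xi_s\|^2_{\mathcal H^{-\mathfrak J_3,\mathfrak j_3-1}}\di s$ for all $t\le T$, so Gronwall's lemma forces $\xi_t=0$ in $\mathcal H^{-\mathfrak J_3,\mathfrak j_3-1}(\mathbf R^{d+1})$ for every $t\le T$; since $T$ is arbitrary and the embedding $\mathcal H^{-\mathfrak J_3+1,\mathfrak j_3}\hookrightarrow\mathcal H^{-\mathfrak J_3,\mathfrak j_3-1}$ is injective, $\eta^\circ=\eta^\star$ a.s., which is the claim (and, combined with Proposition \ref{p-eta*-weak-sol} through a Yamada--Watanabe-type argument, will then give weak uniqueness for {\rm \textbf{(EqL)}} and finish the proof of Theorem \ref{thm-clt-ideal}). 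The main obstacle, as in the preceding sections, is the Kullback--Leibler term: because $\nabla_\theta\mathscr D_{\mathrm{KL}}$ is unbounded, a plain Cauchy--Schwarz estimate would not close the Gronwall loop, and it is exactly the sharpened bound of Lemma \ref{lem_B1_avec_poids} — together with the fact that the Sobolev exponents have been chosen so that all three terms are simultaneously dominated by one and the same energy norm with $\xi_s$ having just enough extra regularity to enter the relevant operator domains — that makes the argument go through.
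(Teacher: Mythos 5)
Your proposal is correct and follows essentially the same route as the paper's proof: reduce by linearity to a solution of {\rm \textbf{(EqL)}} with zero initial datum and $\mathscr G=0$, expand $\|\xi_t\|^2_{\mathcal H^{-\mathfrak J_3,\mathfrak j_3-1}}$ along an orthonormal basis of $\mathcal H^{\mathfrak J_3,\mathfrak j_3-1}(\mathbf R^{d+1})$, control the $\mathbf T_x$ term via Lemma B.2 in \cite{jmlr}, the Kullback--Leibler term via Lemma \ref{lem_B1_avec_poids}, and the $\bar\mu_s$-term via \eqref{phi-normeH}, the moment bounds on $\bar\mu$ and the Hilbert--Schmidt embeddings, before closing with Gronwall's lemma. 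The only differences (working with the difference of two solutions rather than directly with a zero-data solution, and Cauchy--Schwarz in place of Young's inequality for the $\mathfrak V$-term) are cosmetic.
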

\begin{proof}
\begin{sloppypar}
By linearity of the involved operators in {\rm \textbf{(EqL)}}, it is enough to consider a $\mathcal C(\mathbf R_+,\mathcal H^{-\mathfrak J_3+1,\mathfrak j_3}(\mathbf R^{d+1}))$-valued process $\eta$ solution to  {\rm \textbf{(EqL)}} when a.s. $\nu=0$ and $\mathscr G=0$, i.e., for every $f\in\mathcal H^{\mathfrak J_3,\mathfrak j_3-1}(\mathbf R^{d+1})$ and $t\in\mathbf R_+$, 
\end{sloppypar}
\begin{equation}\label{syst-uniqueness}
\begin{cases}
\langle f,\eta_t\rangle + \int_0^t(\mathfrak U_s[f](\eta_s)+\mathfrak V_s[f](\eta_s)+\mathfrak W_s[f](\eta_s))\di s =0,\\
\langle f,\eta_0\rangle=0,
\end{cases}
\end{equation}
where we recall that $\mathfrak U$, $\mathfrak V$ and $\mathfrak W$ are defined respectively in \eqref{def-frakU}, \eqref{def-frakV} and \eqref{def-frakW}.
Pick $T>0$. By~\eqref{syst-uniqueness}, 
we have, a.s. for all $f\in \mathcal H^{\mathfrak J_3,\mathfrak j_3-1}(\mathbf R^{d+1})$ and $t\in[0,T]$, 
\begin{equation}\label{f,eta^2-uniqueness}
\langle f,\eta_t\rangle^2= -2 \int_0^t(\mathfrak U_s[f](\eta_s)+\mathfrak V_s[f](\eta_s)+\mathfrak W_s[f](\eta_s))\langle f,\eta_s\rangle\di s.\\
\end{equation}
Since $\sup_{s\in[0,T]}\langle 1+|\cdot|^{\mathfrak j_0},\bar\mu_s\rangle<+\infty$, and using \eqref{phi-normeH}, 
\begin{align*}
&-2\int_0^t\mathfrak V_s[f](\eta_s)\langle f,\eta_s\rangle\di s\\
&\le 2\kappa\int_0^t\Big[\langle f,\eta_s\rangle^2+\int_{\mathsf X\times\mathsf Y}|\langle\phi(\cdot,\cdot,x),\eta_s\otimes\gamma\rangle|^2|\langle\nabla_\theta f\cdot\nabla_\theta\phi(\cdot,\cdot,x),\bar\mu_s\otimes\gamma\rangle|^2\pi(\di x,\di y)\Big]\di s\\
&\le C\int_0^t\Big[\langle f,\eta_s\rangle^2+ \|\eta_s\|_{\mathcal H^{-\mathfrak J_3,\mathfrak j_3}}^2\|f\|_{\mathcal C^{1,\mathfrak j_0}}^2\Big]\di s \le C\int_0^t\Big[\langle f,\eta_s\rangle^2+ \|\eta_s\|_{\mathcal H^{-\mathfrak J_3+1,\mathfrak j_3}}^2\|f\|_{\mathcal H^{\mathfrak J_0,\mathfrak j_0}}^2\Big]\di s.
\end{align*}
Consider   an orthonormal basis $\{f_a\}_{a\ge1}$ of $\mathcal H^{-\mathfrak J_3,\mathfrak j_3-1}(\mathbf R^{d+1})$. 
Recall  that $\mathbf T_x : f\in \mathcal H^{-\mathfrak J_3,\mathfrak j_3-1}(\mathbf R^{d+1})\mapsto\int_{\mathbf R^d}\nabla_\theta f\cdot\nabla_\theta\phi(\cdot,z,x)\gamma(z)\di z\in\mathcal H^{\mathfrak J_3-1,\mathfrak j_3-1}(\mathbf R^{d+1})$ (see \eqref{operator-Tx}). By Lemma B.2 in  \cite{jmlr}, one deduces that:
\begin{align*}
-2\sum_{a\ge1} \int_0^t\mathfrak U_s[f_a](\eta_s)\langle f_a,\eta_s\rangle\di s&=-2\kappa\int_0^t \int_{\mathsf X\times\mathsf Y}\langle\phi(\cdot,\cdot,x)-y,\bar\mu_s\otimes\gamma\rangle\sum_{a\ge1}\langle\mathbf T_x f_a,\eta_s\rangle\langle f_a,\eta_s\rangle\pi(\di x,\di y)\di s\\
&= -2\kappa\int_0^t \int_{\mathsf X\times\mathsf Y}\langle\phi(\cdot,\cdot,x)-y,\bar\mu_s\otimes\gamma\rangle\langle\eta_s,\mathbf T_x^*\eta_s\rangle_{\mathcal H^{-\mathfrak J_3,\mathfrak j_3-1}}\pi(\di x,\di y)\di s\\
&\le C\int_0^t\|\eta_s\|_{\mathcal H^{-\mathfrak J_3,\mathfrak j_3-1}}^2\di s.
\end{align*}
Using the operator $\mathbf T: f\in \mathcal H^{\mathfrak J_3,\mathfrak j_3-1}(\mathbf R^{d+1})\mapsto \nabla_\theta f\cdot\nabla_\theta\mathscr D_{\mathrm{KL}}(q^1_\cdot|P_0^1)\in \mathcal H^{\mathfrak J_3-1,\mathfrak j_3}(\mathbf R^{d+1})$ (see \eqref{operator-T}) together with Lemma \ref{lem_B1_avec_poids}, we obtain 
\begin{align*}
\sum_{a\ge1}-2\int_0^t\mathfrak W_s[f_a](\eta_s)\langle f_a,\eta_s\rangle\di s = -2\kappa\int_0^t\sum_{a\ge1}\langle\mathbf  T f_a,\eta_s\rangle\langle f_a,\eta_s\rangle\di s&=-2\kappa\int_0^t\langle \eta_s,\mathbf T^*\eta_s\rangle_{\mathcal H^{-\mathfrak J_3,\mathfrak j_3-1}}\di s\\
\le C\int_0^t\|\eta_s\|_{\mathcal H^{-\mathfrak J_3,\mathfrak j_3-1}}^2\di s
\end{align*}
Hence,  using \eqref{f,eta^2-uniqueness}, one deduces that a.s. for all $t\in[0,T]$,  
\begin{align*}
\|\eta_t\|_{\mathcal H^{-\mathfrak J_3,\mathfrak j_3-1}}^2=\sum_{a\ge1}\langle f_a,\eta_t\rangle^2\le C\int_0^t\|\eta_s\|_{\mathcal H^{-\mathfrak J_3,\mathfrak j_3-1}}^2\di s.
\end{align*}
By Gronwall's lemma, a.s. for all $t\in[0,T]$, $\|\eta_t\|_{\mathcal H^{-\mathfrak J_3,\mathfrak j_3-1}}=0$.  This concludes the proof of Proposition \ref{p-pathwise-uniqu}. 
\end{proof}

 We are now in position to conclude the proof of   Theorem \ref{thm-clt-ideal}.

\begin{proof}[Proof of Theorem \ref{thm-clt-ideal}]
Let us consider the case when the $\theta_k^i$'s are generated by the algorithm \eqref{eq.algo-batch} (the proofs of Theorem \ref{thm-clt-ideal} are exactly the same when they are generated by the algorithms  \eqref{eq.algo-ideal} or the algorithm~\eqref{eq.algo-z1z2}). By Proposition \ref{p-rc-eta}, $(\eta^{N})$ admits a limit point. Assume that it admits two limit points.  
Let $\ell\in\{1,2\}$ and $N_\ell$ be such that in distribution $\eta^{N_\ell}\to\eta^\ell$ in $\mathcal D(\mathbf R_+,\mathcal H^{-\mathfrak J_3+1,\mathfrak j_3}(\mathbf R^{d+1}))$. Recall that from Lemma \ref{lem-reg-LP}, we have a.s. $\eta^\ell\in\mathcal C(\mathbf R_+,\mathcal H^{-\mathfrak J_3+1,\mathfrak j_3}(\mathbf R^{d+1}))$. Let us now consider a limit point  $(\eta^{\ell,\star},\mathscr G^{\ell,\star})$ of  $(\eta^{N_\ell},\sqrt{N_\ell}\mathbf M^{N_\ell})$ in $\mathscr E$ (see \eqref{def-E}).  
Up to extracting a subsequence from $N_\ell$, we assume 
$$(\eta^{N_\ell},\sqrt{N_\ell}\mathbf M^{N_\ell})\xrightarrow[N_\ell\to\infty]{\mathscr L}(\eta^{\ell,\star},\mathscr G^{\ell,\star})\text{ in }\mathscr E.$$
Considering the marginal distributions, we then have by uniqueness of the limit in distribution, for $\ell=1,2$,
\begin{equation}\label{eq1-endofproof}
\eta^{\ell,\star}\overset{\mathscr L}{=}\eta^\ell \text{ and } \mathscr G^{\ell,\star}\overset{\mathscr L}{=}\mathscr G.
\end{equation}
where $\mathscr G$ is a G-process given by Proposition \ref{p-conv-to-g-process}. 
Recall also that from Proposition \ref{p-eta*-weak-sol}, both $\eta^{1,\star}$ and $\eta^{2,\star}$ are two weak solutions of {\rm \textbf{(EqL)}} with initial distribution $\nu_0$ (see also Lemma \ref{lem-eta_0}). 
Since strong uniqueness for {\rm \textbf{(EqL)}} (see Proposition \ref{p-pathwise-uniqu}) implies weak uniqueness for {\rm \textbf{(EqL)}}, we deduce that $\eta^{1,\star}=\eta^{2,\star}$ in law. By \eqref{eq1-endofproof}, this implies $\eta^{1}=\eta^{2}$ in law. Consequently, the whole sequence $(\eta^N)_{N\ge1}$ converges in distribution in $\mathcal D(\mathbf R_+,\mathcal H^{-\mathfrak J_3+1,\mathfrak j_3}(\mathbf R^{d+1}))$. Denoting by $\eta^\star$ its limit, we have proved that $\eta^\star$ has the same distribution as the unique weak solution of {\rm \textbf{(EqL)}} with initial distribution~$\nu_0$. The proof Theorem  \ref{thm-clt-ideal} is complete. 
\end{proof}

\end{document}